\tikzset{
    double color fill/.code 2 args={
        \pgfdeclareverticalshading[%
        tikz@axis@top,tikz@axis@middle,tikz@axis@bottom%
        ]{diagonalfill}{100bp}{%
            color(0bp)=(tikz@axis@bottom);
            color(50bp)=(tikz@axis@bottom);
            color(50bp)=(tikz@axis@middle);
            color(50bp)=(tikz@axis@top);
            color(100bp)=(tikz@axis@top)
        }
        \tikzset{shade, left color=#1, right color=#2, shading=diagonalfill}
    }
}
\newtheorem{theorem}{Theorem}
\newtheorem{lemma}[theorem]{Lemma}
\newtheorem{proposition}[theorem]{Proposition}
\newtheorem{remark}[theorem]{Remark}
\numberwithin{equation}{section}
\numberwithin{theorem}{section}
\newtheorem{fact}[theorem]{Fact}
\renewcommand\log{\ln}
\newcommand\minf{m_{\mathrm{inf}}}
\newcommand\malg{m_{\mathrm{alg}}}
\newcommand\mLD{m_{\mathrm{LD}}}
\newcommand\cinf{c_{\mathrm{inf}}}
\newcommand\calg{c_{\mathrm{alg}}}
\newcommand\cLDCC{c^{\mathrm{CC}}_{\mathrm{LD}}}
\newcommand\cLDB{c^{\mathrm{B}}_{\mathrm{LD}}}
\renewcommand{\epsilon}{\eps}
\renewcommand{\vec}[1]{\boldsymbol{#1}}
\newcommand\KL[2]{D_{\mathrm{KL}}({{#1}\,\|\,{#2}})}
\newcommand\SIGMA{\vec\sigma}
\newcommand\TAU{\vec\tau}
\newcommand\cE{\mathcal{E}}
\newcommand\cN{\mathcal{N}}
\newcommand\cS{\mathcal{S}}
\newcommand\cW{\mathcal{W}}
\def\cR{{\mathcal R}}
\def\cE{{\mathcal E}}
\newcommand\eps{\varepsilon}
\newcommand\NN{\mathbb{N}}
\newcommand\Erw{\mathbb{E}}
\newcommand{\Po}{{\rm Poi}}
\newcommand{\Bin}{{\rm Bin}}
\newcommand{\Be}{{\rm Ber}}
\newcommand\bc[1]{\left({#1}\right)}
\newcommand\cbc[1]{\left\{{#1}\right\}}
\newcommand\brk[1]{\left\lbrack{#1}\right\rbrack}
\newcommand\abs[1]{\left|{#1}\right|}
\newcommand\RR{\mathbb{R}}
\newcommand\Lem{Lemma}
\newcommand{\supp}[1]{\mathrm{supp}(#1)}
\def\G{{\vec G}}
\def\pr{{\mathbb P}}
\newcommand{\remove}[1]{}
\newcommand{\vecGamma}{\vec{\Gamma}}
\newcommand{\one}{V_1}
\newcommand{\zero}{V_0}
\newcommand{\zerominus}{V_0^-}
\newcommand{\zeroplus}{V_0^+}
\newcommand{\be}{\begin{equation}}
    \newcommand{\bel}[1]{\begin{equation}\lab{#1}\ }
        \newcommand{\ee}{\end{equation}}
    \newcommand{\bea}{\begin{eqnarray}}
        \newcommand{\eea}{\end{eqnarray}}
    \newcommand{\bean}{\begin{eqnarray*}}
        \newcommand{\eean}{\end{eqnarray*}}
\pgfplotsset{compat=1.14}
\newcommand{\QQ}{\mathbb{Q}}
\newcommand{\PP}{\mathbb{P}}
\DeclareMathOperator*{\EE}{\mathbb{E}}
\DeclareMathOperator*{\V}{\mathrm{Var}}
\DeclareMathOperator*{\prr}{\mathrm{Pr}}
\DeclareMathOperator*{\Var}{\mathrm{Var}}
\newcommand{\One}{\mathbbm{1}}
\newcommand{\Unif}{\mathrm{Unif}}
\newcommand{\Cov}{\mathrm{Cov}}
\newcommand{\Binom}{\mathrm{Binom}}
\renewcommand{\Binom}{\Bin}
\newcommand{\factorgraphgt}{
\begin{figure}[h]
\centering
\begin{minipage}[t]{0.3 \textwidth}
\begin{tikzpicture}[scale=0.9]

\node[circle, draw, minimum width=0.7cm, fill=red!50] (x1) at (-2,4.5) {$x_1$};
\node[circle, draw, minimum width=0.7cm, fill=red!50] (x2) at (-0.93,4.5) {$x_2$};
\node[circle, draw, minimum width=0.7cm, fill=blue!50] (x3) at (0.14,4.5) {$x_3$};
\node[circle, draw, minimum width=0.7cm, fill=blue!50] (x4) at (1.21,4.5) {$x_4$};
\node[circle, draw, minimum width=0.7cm, fill=blue!50] (x5) at (2.28,4.5) {$x_5$}; 
\node[circle, draw, minimum width=0.7cm, fill=blue!50] (x6) at (3.35,4.5) {$x_6$};
\node[circle, draw, minimum width=0.7cm, fill=blue!50] (x7) at (4.42,4.5) {$x_7$};
\node[circle, draw, minimum width=0.7cm, fill=blue!50] (x8) at (5.5,4.5) {$x_8$};

\node[rectangle, draw, minimum width=0.8cm, minimum height=0.8cm, fill=red!50] (a1) at (-2,2.0) {$a_1$};
\node[rectangle, draw, minimum width=0.8cm, minimum height=0.8cm, fill=red!50] (a2) at (-0.5,2.0) {$a_2$};
\node[rectangle, draw, minimum width=0.8cm, minimum height=0.8cm, fill=red!50] (a3) at (1,2.0) {$a_3$};
\node[rectangle, draw, minimum width=0.8cm, minimum height=0.8cm, fill=red!50] (a4) at (2.5,2.0) {$a_4$};
\node[rectangle, draw, minimum width=0.8cm, minimum height=0.8cm,fill=blue!50] (a5) at (4,2.0) {$a_5$};
\node[rectangle, draw, minimum width=0.8cm, minimum height=0.8cm,fill=blue!50] (a6) at (5.5,2.0) {$a_6$};

\path[draw] (x1) -- (a1);
\path[draw] (x3) -- (a1);
\path[draw] (x5) -- (a1);

\path[draw] (x2) -- (a2);
\path[draw] (x6) -- (a2);
\path[draw] (x4) -- (a2);

\path[draw] (x3) -- (a2);
\path[draw] (x4) -- (a2);
\path[draw] (x4) -- (a3);
\path[draw] (x5) -- (a5);
\path[draw] (x6) -- (a2);
\path[draw] (x6) -- (a3);
\path[draw] (x7) -- (a6);

\path[draw] (x7) -- (a4);
\path[draw] (x7) -- (a5);
\path[draw] (x6) -- (a4);
\path[draw] (x5) -- (a6);
\path[draw] (x4) -- (a4);
\path[draw] (x3) -- (a3);
\path[draw] (x1) -- (a3);
\path[draw] (x1) -- (a4);
\path[draw] (x2) -- (a3);
\path[draw] (x2) -- (a4);
\path[draw] (x8) -- (a4);
\path[draw] (x8) -- (a3);
\path[draw] (x8) -- (a6);

\end{tikzpicture}
\end{minipage}
\hspace{3cm}
 \begin{minipage}[t]{0.3 \textwidth}
      \begin{tikzpicture}[scale=0.9]

\node[circle, draw, minimum width=0.7cm, fill=red!50] (x1) at (-2,4.5) {$x_1$};
\node[circle, draw, minimum width=0.7cm, fill=red!50] (x2) at (-0.93,4.5) {$x_2$};
\node[circle, draw, minimum width=0.7cm, fill=blue!50] (x3) at (0.14,4.5) {$x_3$};
\node[circle, draw, minimum width=0.7cm, fill=blue!50] (x4) at (1.21,4.5) {$x_4$};
\node[circle, draw, minimum width=0.7cm, fill=blue!50] (x6) at (3.35,4.5) {$x_6$};

\node[rectangle, draw, minimum width=0.8cm, minimum height=0.8cm, fill=red!50] (a1) at (-2,2.0) {$a_1$};
\node[rectangle, draw, minimum width=0.8cm, minimum height=0.8cm, fill=red!50] (a2) at (-0.5,2.0) {$a_2$};
\node[rectangle, draw, minimum width=0.8cm, minimum height=0.8cm, fill=red!50] (a3) at (1,2.0) {$a_3$};
\node[rectangle, draw, minimum width=0.8cm, minimum height=0.8cm, fill=red!50] (a4) at (2.5,2.0) {$a_4$};

\path[draw] (x1) -- (a1);
\path[draw] (x3) -- (a1);

\path[draw] (x2) -- (a2);
\path[draw] (x6) -- (a2);
\path[draw] (x4) -- (a2);

\path[draw] (x3) -- (a2);
\path[draw] (x4) -- (a2);
\path[draw] (x4) -- (a3);

\path[draw] (x6) -- (a2);
\path[draw] (x6) -- (a3);

\path[draw] (x6) -- (a4);

\path[draw] (x4) -- (a4);
\path[draw] (x3) -- (a3);
\path[draw] (x1) -- (a3);
\path[draw] (x1) -- (a4);
\path[draw] (x2) -- (a3);
\path[draw] (x2) -- (a4);

\end{tikzpicture}
\end{minipage}
\caption{\small The bipartite factor graph representing a group testing instance. Circles represent individuals while squares represent tests. The colour of circle/square indicates \emph{infected / positive} in red and \emph{uninfected / negative} in blue. The left figure shows an instance of $\G_{GT}$ while the right figure shows the corresponding instance of $\G'_{GT}$ where individuals in negative tests have already been classified and removed.}
\label{gt_factor_graph}
\end{figure}
}
\title{Statistical and Computational Phase Transitions\\ in Group Testing}
\author[1]{Amin Coja-Oghlan\thanks{Email: \textit{amin.coja-oghlan@tu-dortmund.de}. Supported by DFG grant CO 646/3 and DFG grant FOR 2975.}}
\author[1]{Oliver Gebhard\thanks{Email: \textit{oliver.gebhard@tu-dortmund.de}. Supported by DFG grant CO 646/3.}}
\author[1]{Max Hahn-Klimroth\thanks{Email: \textit{maximilian.hahnklimroth@tu-dortmund.de}. Supported by DFG grant FOR 2975.}}
\author[2]{\mbox{Alexander S.\ Wein}\thanks{Email: \textit{awein@cims.nyu.edu}. Supported by NSF grants CCF-2007443 and CCF-2106444. Part of this work was done while the author was visiting the Simons Institute for the Theory of Computing. Part of this work was done while the author was with the Courant Institute at NYU, partially supported by NSF grant DMS-1712730 and by the Simons Collaboration on Algorithms and Geometry.
}}
\author[3]{Ilias Zadik\thanks{Email: \textit{izadik@mit.edu}. Supported by the Simons-NSF grant DMS-2031883 on the Theoretical Foundations of Deep Learning and the Vannevar Bush Faculty Fellowship ONR-N00014-20-1-2826. Part of this work was done while the author was visiting the Simons Institute for the Theory of Computing. Part of this work was done while the author was with the Center for Data Science at NYU, supported by a Moore-Sloan CDS postdoctoral fellowship. }}
\affil[1]{Department of Computer Science, TU Dortmund}
\affil[2]{Algorithms and Randomness Center, Georgia Tech}
\affil[3]{Department of Mathematics, MIT}
\date{}
\begin{document}

\maketitle

\begin{abstract}
We study the \emph{group testing} problem where the goal is to identify a set of $k$ infected individuals carrying a rare disease within a population of size $n$, based on the outcomes of pooled tests which return positive whenever there is at least one infected individual in the tested group. We consider two different simple random procedures for assigning individuals to tests: the \emph{constant-column design} and \emph{Bernoulli design}.
Our first set of results concerns the fundamental \emph{statistical} limits. For the constant-column design, we give a new information-theoretic lower bound which implies that the proportion of correctly identifiable infected individuals undergoes a sharp ``all-or-nothing'' phase transition when the number of tests crosses a particular threshold. For the Bernoulli design, we determine the precise number of tests required to solve the associated detection problem (where the goal is to distinguish between a group testing instance and pure noise), improving both the upper and lower bounds of Truong, Aldridge, and Scarlett (2020).
For both group testing models, we also study the power of \emph{computationally efficient} (polynomial-time) inference procedures. We determine the precise number of tests required for the class of \emph{low-degree polynomial algorithms} to solve the detection problem. This provides evidence for an inherent \emph{computational-statistical} gap in both the detection and recovery problems at small sparsity levels. Notably, our evidence is contrary to that of Iliopoulos and Zadik (2021), who predicted the absence of a computational-statistical gap in the Bernoulli design.\footnote{Accepted for presentation at the Conference on Learning Theory (COLT) 2022.}
\end{abstract}

\tableofcontents

\newpage
\section{Introduction}\label{sec_introduction}

Motivated by the ongoing COVID-19 pandemic \cite{Mutesa_2021,McMahan_2012} but also a growing algorithmic and information-theoretic literature \cite{Aldridge_2019}, in this work we focus on the \emph{group (or pooled) testing model}. Introduced by \cite{Dorfman_1943}, group testing is concerned with finding a subset of $k$ individuals carrying a rare disease within a population of size $n$. One is equipped with a procedure that allows for testing groups of individuals such that a test returns positive if (and only if) at least one infected individual is contained in the tested group. The ultimate goal is to find a pooling procedure and a (time-efficient) algorithm such that inference of the infection status of all individuals is conducted with as few tests as possible. Furthermore, group testing has found its way into various real-world applications such as DNA sequencing \cite{Kwang_2006,Ngo_2000}, protein interaction experiments \cite{Mourad_2013,Thierry_2006} and machine learning \cite{Emad_2015}.  

As carrying out a test is often time-consuming, many real-world applications call for fast identification schemes. As a consequence, recent research focuses on \emph{non-adaptive} pooling schemes, i.e., all tests are conducted in parallel \cite{Scarlett_2016,Aldridge_2019_2,Coja_2019,Coja_2021,Iliopoulos_2021}. On top of this, naturally the testing scheme is required to be simple as well. Two of the most well-established and simple non-adaptive group testing designs are the \emph{Bernoulli design} and the \emph{constant-column design} (for a survey, see \cite{Aldridge_2019}). The Bernoulli design is a randomised pooling scheme under which each individual participates in each test with a fixed probability $q$ independently of everything else \cite{Scarlett_2016}. In the constant-column design \cite{Aldridge_2016,Coja_2019}, each individual independently chooses a fixed number $\Delta$ of tests uniformly at random. We remark that the \emph{spatially coupled design} of~\cite{Coja_2021} may be an attractive choice in practice because it admits information-theoretically optimal inference with a computationally efficient algorithm. In this paper our focus will be on the two simpler designs (Bernoulli and constant-column), which may be favorable due to their simplicity and also serve as a testbed for studying computational-statistical gaps.

In this work, we take the number of infected individuals to scale \textit{sublinearly} in the population size as is typical in group testing tasks, that is $k = n^{\theta+o(1)}$ for a fixed constant $\theta\in (0,1)$. This regime is mathematically interesting and is also the one most suitable for modelling the early stages of an epidemic in the context of medical testing \cite{wang2011evolution}. In the two group testing models, we study two different inference tasks (defined formally in Section~\ref{sec:def}): (a) \emph{approximate recovery}, where the goal is to achieve almost perfect correlation with the set of infected individuals, and (b) \emph{weak recovery}, where the goal is to achieve positive correlation with the set of infected individuals. The task of \emph{exact recovery} has also been studied (see~\cite{Coja_2019}) but will not be our focus here.

Recently, there has been substantial work on the information-theoretic limits of group testing \cite{Chan_2011,Aldridge_2014,Coja_2019,Coja_2021,Truong_2020}. An interesting recent discovery is that for the Bernoulli group testing model there exists a critical threshold $\minf :=  (\ln 2)^{-1} k \log(n/k)$ such that when the number of tests $m$ satisfies $m \ge (1+\epsilon) \minf$ for any fixed $\epsilon > 0$ there is a (brute-force) algorithm that can approximately recover the infected individuals, but when $m \le (1-\epsilon) \minf$ no algorithm (efficient or not) can even weakly recover the infected individuals. This sharp phase transition, known as the \emph{All-or-Nothing (AoN) phenomenon}, was first proven by \cite{Truong_2020} for $\theta=0$ (that is, $k=n^{o(1)}$) and then proven for all $\theta \in [0,1)$ by \cite{nilesweedCOLT21}. This sharp phenomenon has been established recently in many other sparse Generalized Linear Models (GLMs), starting with sparse regression \cite{ReevesPhenom}. \emph{Our first main result} (Theorem~\ref{Main_Theorem}) establishes the AoN phenomenon in the constant-column group testing model for any $\theta \in (0,1)$, occurring at the same information-theoretic threshold $\minf$ as in the Bernoulli model. To our knowledge, this is the first instance where AoN has been established for a GLM where the samples (tests) are not independent (see Section~\ref{sec_related_work} for further discussion).

An emerging but less understood direction is to study the algorithmic thresholds of the group testing models. In both group testing models, the best known polynomial-time algorithm achieves approximate recovery only under the statistically suboptimal condition $m \ge (1+\epsilon)\malg$ where $\malg := (\ln 2)^{-1} \minf$. For the constant-column design, the algorithm achieving this is Combinatorial Orthogonal Matching Pursuit (COMP)~\cite{Chan_2011,Chan_2014}, which simply outputs all individuals who participate in no negative tests. For the Bernoulli design, the algorithm achieving $\malg$ is called Separate Decoding \cite{scarlett2018near}, which outputs all individuals who participate in no negative tests and ``sufficiently many'' positive tests (above some threshold). These results raise the question of whether better algorithms exist, or whether there is an inherent \emph{computational-statistical gap}. Starting from the seminal work of~\cite{BR-reduction}, conjectured gaps between the power of all estimators and the power of all \emph{polynomial-time} algorithms have appeared recently throughout many high-dimensional statistical inference problems. While we do not currently have tools to prove complexity-theoretic hardness of statistical problems, there are various forms of ``rigorous evidence'' for hardness that can be used to justify these computational-statistical gaps, including average-case reductions (see e.g.~\cite{secret-leakage}), sum-of-squares lower bounds (see e.g.~\cite{sos-survey}), and others.

In the Bernoulli group testing model, the recent work of \cite{Iliopoulos_2021} suggested (but did not prove) that a polynomial-time Markov Chain Monte Carlo (MCMC) method can achieve approximate recovery all the way down to the information-theoretic threshold (that is, using only $\minf$ tests). The evidence for this is based on first-moment Overlap Gap Property calculations and numerical simulations. The Overlap Gap Property is a landscape property originating in spin glass theory, which has been repeatedly used to offer evidence for the performance of local search and MCMC methods in inference problems, as initiated by \cite{gamarnikzadik}. A significant motivation for the present work is to gain further insight into the existence or not of such a computational-statistical gap for both the constant-column and Bernoulli designs. Our approach is based on the well-studied \emph{low-degree likelihood ratio} (discussed further in Section~\ref{sec:low-deg}), which is another framework for understanding computational-statistical gaps.

In line with most existing results using the low-degree framework, we consider a \emph{detection} (or \emph{hypothesis testing}) formulation of the problem. In our case, this amounts to the task of deciding whether a given group testing instance was actually drawn from the group testing model with $k$ infected individuals, or whether it was drawn from an appropriate ``null'' model where the test outcomes are random coin flips (containing no information about the infected individuals). \textit{Our second set of results} is that for both the constant-column and Bernoulli designs, we pinpoint the precise low-degree detection threshold $\mLD = \mLD(k,n)$ (which is different for the two designs) in the following sense: when the number of tests exceeds this threshold, there is a polynomial-time algorithm that provably achieves \emph{strong detection} (that is, testing with $o(1)$ error probability); on the other hand, if the number of tests lies below the threshold, all \emph{low-degree algorithms} provably fail to \emph{separate} the two distributions (as defined in Section~\ref{sec:low-deg}). This class of low-degree algorithms captures the best known poly-time algorithms for many high-dimensional testing tasks (including those studied in this paper), and so our result suggests inherent computational hardness of detection below the threshold $\mLD$. For the exact thresholds, see Theorem~\ref{thm:detect_LD_CC} for the constant-column design and Theorem~\ref{thm:detect_LD_Bern} for Bernoulli design.

Since approximate recovery is a harder problem than detection (this is formalized in Appendix~\ref{sec:reductions}), our results also suggest that approximate recovery is computationally hard below $\mLD$. Since $\mLD$ exceeds $\minf$ for sufficiently small $\theta$ (see Figure~\ref{fig_bound}), this suggests the presence of a computational-statistical gap for the recovery problem (in both group testing models). Notably, our evidence is contrary to that of \cite{Iliopoulos_2021}, who suggested the absence of a comp-stat gap in the Bernoulli model for all $\theta \in (0,1)$.

Finally, \textit{our third set of results} is to identify the precise \emph{statistical} (information-theoretic) threshold for detection in the Bernoulli design (commonly referred to in the statistics literature as the \emph{detection boundary}); see Theorem~\ref{thm:detect_IT_B}.

Our main results are summarized by the phase diagrams in Figure~\ref{fig_bound}.

\subsection{Relation to Prior Work}\label{sec_related_work}

\paragraph{Detection in the Bernoulli design}

To our knowledge, the only existing work on the detection boundary in group testing is \cite{Truong_2020}, which focused on the Bernoulli design. They gave a detection algorithm and an information-theoretic lower bound which did not match. In this work we pinpoint the precise information-theoretic detection boundary by improving both the algorithm and lower bound (Theorem~\ref{thm:detect_IT_B}). The new algorithm involves counting the number of individuals who participate in no negative tests and ``sufficiently many'' positive tests (above some carefully chosen threshold). The lower bound of \cite{Truong_2020} is based on a second moment calculation, and our improved lower bound uses a \emph{conditional} second moment calculation (which conditions away a rare ``bad'' event).

Strictly speaking, our detection problem differs from the one studied by \cite{Truong_2020} because our detection problem takes place on ``pre-processed'' graphs where the negative tests have been removed (see Section~\ref{sec:def}), but we show in Appendix~\ref{sec:pre-post-comp} that our results can be transferred to their setting.

\paragraph{All-or-Nothing phenomenon}
The All-or-Nothing (AoN) phenomenon was originally proven in the context of sparse regression with an i.i.d.\ Gaussian measurement matrix \cite{gamarnikzadik, ReevesCAMSAP, ReevesPhenom}, and was later established for (a) various other Generalized Linear Models (GLMs) such as Bernoulli group testing \cite{Truong_2020,nilesweedCOLT21} and the Gaussian Perceptron \cite{luneau20, nilesweedCOLT21}, (b) variants of sparse principal component analysis \cite{barbier2020allornothing, NilZad20}, and (c) graph matching models \cite{wu2021settling}. In all of the GLM cases, a key assumption behind all such proofs is that the samples (or tests in the case of Bernoulli group testing) are independent. This sample independence gives rise to properties similar to the I-MMSE formula \cite{guo2005mutual}, which can then be used to establish the AoN phenomenon by simply bounding the KL divergence between the planted model and an appropriate null model.

In the present work, we establish AoN for the constant-column group testing model which is a GLM where the samples (tests) are \emph{dependent}. Despite this barrier, we manage to prove this result by following a more involved but direct argument, which employs a careful conditional second moment argument alongside a technique from the study of random CSPs known as the ``planting trick" originally used in the context of random $k$-SAT \cite{Achlioptas_2008}. A more detailed proof outline is given in Section~\ref{sec:roadmap}.

\paragraph{Low-degree lower bounds}

Starting from the work of \cite{pcal,sam-thesis,sos-detecting,HS-bayesian}, lower bounds against the class of ``low-degree polynomial algorithms'' (defined in Section~\ref{sec:low-deg}) are a common form of concrete evidence for computational hardness of statistical problems (see~\cite{lowdeg-notes} for a survey). In this paper we apply this framework to the detection problems in both group testing models, with a few key differences from prior work. For the Bernoulli design, the standard tool---the \emph{low-degree likelihood ratio}---does not suffice to establish sharp low-degree lower bounds, and we instead need a \emph{conditional} variant of this argument that conditions away a rare ``bad'' event. While such arguments are common for information-theoretic lower bounds, this is (to our knowledge) the first setting where a conditional low-degree argument has been needed, along with the concurrent work~\cite{fp} on sparse regression. Our result for the constant-column design is (to our knowledge) the first example of a low-degree lower bound where the null distribution does not have independent coordinates. For both group testing models, the key insight to make these calculations tractable is a ``low-overlap second moment calculation,'' which is explained in Section~\ref{sec:ld-background} (particularly \ref{sec:restricted}).

\paragraph{Comparison with \cite{Iliopoulos_2021}} Perhaps the most relevant work, in terms of studying the computational complexity of group testing, is the recent work of \cite{Iliopoulos_2021} which focuses on the Bernoulli design. The authors provide simulations and first-moment Overlap Gap Property (OGP) evidence that a polynomial-time ``local" MCMC method can approximately recover the infected individuals for any statistically possible number of tests $m \ge (1+\epsilon)\minf$ and any $\theta \in (0,1)$. However, proving this remains open.

In contrast, our present work shows that at least when $\theta>0$ is small enough no low-degree polynomial algorithm can even solve the easier detection task for some number of tests strictly above $\minf$. Given the low-degree framework's track record of capturing the best known algorithmic thresholds for a wide variety of statistical problems, this casts some doubts on the prediction of \cite{Iliopoulos_2021}. However, our results do not formally imply failure of the MCMC method (which is not a low-degree algorithm) and the failure of low-degree algorithms is only known to imply the failure of MCMC methods for the class of Gaussian additive models~\cite{fp}. Our results ``raise the stakes'' for proving statistical optimality of the MCMC method, as this would be a significant counterexample to optimality of low-degree algorithms for statistical problems.

\subsection*{Notation}

We will consider the limit $n \to \infty$. Some parameters (e.g.\ $\theta, c$) will be designated as ``constants'' (fixed, not depending on $n$) while others (e.g.\ $k$) will be assumed to scale with $n$ in a prescribed way. Asymptotic notation $o(\cdot), O(\cdot), \omega(\cdot), \Omega(\cdot)$ pertains to this limit (unless stated otherwise), i.e., this notation may hide factors depending on constants such as $\theta, c$. We use $\tilde{O}(\cdot)$ and $\tilde{\Omega}(\cdot)$ to hide a factor of $(\log n)^{O(1)}$. An event is said to occur \emph{with high probability} if it has probability $1-o(1)$, and \emph{overwhelming probability} if it has probability $1-n^{-\omega(1)}$.

\section{Getting Started}\label{sec:prelims}

\subsection{Group Testing Setup and Objectives}\label{sec:def}

We will consider two different group testing models. The following basic setup pertains to both.

\paragraph{Group testing} We first fix two constants $\theta \in (0,1)$ and $c>0.$ A group testing instance is generated as follows.  There are $n$ individuals $x_1, \ldots, x_n$ out of which exactly $k = n^{\theta+o(1)}$ are infected. There are $m = (c+o(1)) k \log(n/k)$ tests $a_1, \ldots, a_m$.

For each test, a particular subset of the individuals is chosen to participate in that test, according to one of the two designs (constant-column or Bernoulli) described below. The assignment of individuals to tests can be expressed by a bipartite graph (see Figure~\ref{gt_factor_graph}). The \emph{ground-truth} ${\SIGMA} \in \cbc{0,1}^n$ is drawn uniformly at random among all binary vectors of length $n$ and Hamming weight $k$. We say individual $x_i$ is infected if and only if $\SIGMA_i = 1$. We denote the sequence of test results by $\hat \SIGMA \in \cbc{0,1}^m$, where $\hat \SIGMA_j$ is equal to one if and only if the $j$-th test contains at least one infected individual.

\medskip

\noindent We consider two different schemes for assigning individuals to tests, which are defined below.

\paragraph{Constant-column design} In the \emph{constant column weight design} (also called the \emph{random regular design}), every individual independently chooses a set of exactly $\Delta = (c+o(1)) \log(2) \log(n/k)$ tests to participate in, uniformly at random from the $\binom{m}{\Delta}$ possibilities.

\paragraph{Bernoulli design} In the \emph{Bernoulli design}, every individual participates in each test independently with probability $q := \nu/k$ where $\nu = \ln 2 + o(1)$ is the solution to $(1-\nu/k)^k = 1/2$ so that each test is positive with probability exactly $1/2$.

\medskip

\noindent We remark that the parameter $\nu$ (in the Bernoulli design) and the constant $\ln(2)$ in the definition of $\Delta$ (in the constant-column design) could have been treated as free tuning parameters. To simplify matters, we have chosen to fix these values so that roughly half the tests are positive (maximizing the ``information content'' per test), but we expect our results could be readily extended to the general case.

We will be interested in the task of recovering the ground truth $\SIGMA$. Two different notions of success are considered, as defined below.

\paragraph{Approximate recovery} An algorithm is said to achieve \emph{approximate recovery} if, given input $(\G_{GT}, \hat\SIGMA, k)$, it outputs a binary vector $\TAU \in \{0,1\}^n$ with the following guarantee: $\frac{\langle \TAU, \SIGMA \rangle }{ \|\TAU\|_2 \|\SIGMA\|_2}=1-o(1)$ with probability $1-o(1)$.

\medskip

\noindent Equivalently, approximate recovery means the number of false positive and false negatives are both $o(k)$.

\paragraph{Weak recovery} An algorithm is said to achieve \emph{weak recovery} if, given input $(\G_{GT}, \hat\SIGMA, k)$, it outputs a binary vector $\TAU \in \{0,1\}^n$  with the following guarantee: with probability $1-o(1)$, $\frac{\langle \TAU, \SIGMA \rangle }{ \|\TAU\|_2 \|\SIGMA\|_2}=\Omega(1)$.

\medskip

\paragraph{Pre-processing via COMP} Note that in both models we can immediately classify any individual who participates in a negative test as uninfected. Therefore, the first step in any recovery algorithm should be to pre-process the graph by removing all negative tests and their adjacent individuals. (We sometimes refer to this pre-processing step as COMP because it is the main step of the COMP algorithm of~\cite{Chan_2011,Chan_2014}, which simply performs this pre-processing step and then reports all remaining individuals as infected.) The resulting graph is denoted $\G'_{GT}$ (see Figure \ref{gt_factor_graph}). We let $N$ denote the number of remaining individuals and let $M$ denote the number of remaining tests. We use $\SIGMA' \in \{0,1\}^N$ to denote the indicator vector for the infected individuals. Note that after pre-processing, all remaining tests are positive and so $\hat\SIGMA$ can be discarded.

\factorgraphgt

In addition to recovery, we will also consider an easier hypothesis testing task. Here the goal is to distinguish between a (``planted'') group testing instance and an unstructured (``null'') instance. We now define this testing model for both group testing designs. The input is an $(N,M)$-bipartite graph, representing a group testing instance that has already been pre-processed as described above.

\paragraph{Constant-column design (testing)} Let $N = N_n$ and $M = M_n$ scale as $N = n^{1 - (1-\theta)c (\ln 2)^2 + o(1)}$ and $M = (c/2 + o(1)) k \ln(n/k)$; this choice is justified below. Consider the following distributions over $(N,M)$-bipartite graphs (encoding adjacency between $N$ individuals and $M$ tests).
\begin{itemize}
    \item Under the null distribution $\QQ$, each of the $N$ individuals participates in exactly $\Delta$ (defined above) tests, chosen uniformly at random.
    \item Under the planted distribution $\PP$, a set of $k$ infected individuals out of $N$ is chosen uniformly at random. Then a graph is drawn from $\QQ$ conditioned on having at least one infected individual in every test.
\end{itemize}

\paragraph{Bernoulli design (testing)}
Let $N = N_n$ and $M = M_n$ scale as $N = n^{1 - (1-\theta) \frac{c}{2} \ln 2 + o(1)}$ and $M = (c/2 + o(1)) k \ln(n/k)$; this choice is justified below. Consider the following distributions over $(N,M)$-bipartite graphs (encoding adjacency between $N$ individuals and $M$ tests).
\begin{itemize}
    \item Under the null distribution $\QQ$, each of the $N$ individuals participates in each of the $M$ tests with probability $q$ (defined above) independently.
    \item Under the planted distribution $\PP$, a set of $k$ infected individuals out of $N$ is chosen uniformly at random. Then a graph is drawn from $\QQ$ conditioned on having at least one infected individual in every test.
\end{itemize}

\noindent Note that in the pre-processed group testing graph $\G'_{GT}$, the dimensions $N,M$ are random variables. For the testing problems above, we will instead think of $N,M$ as deterministic functions of $n$, which are allowed to vary arbitrarily within some range (due to the $o(1)$ terms). The specific scaling of $N, M$ is chosen so that the actual dimensions of $\G'_{GT}$ obey this scaling with high probability (see e.g.\ \cite{Coja_2019,Iliopoulos_2021}). Furthermore, the planted distribution $\PP$ is precisely the distribution of $\G'_{GT}$ conditioned on the dimensions $N,M$.

We now define two different criteria for success in the testing problem.

\paragraph{Strong detection} An algorithm is said to achieve \emph{strong detection} if, given input $(\G,k)$ with $\G$ drawn from either $\QQ$ or $\PP$ (each chosen with probability $1/2$), it correctly identifies the distribution ($\QQ$ or $\PP$) with probability $1-o(1)$.

\paragraph{Weak detection} An algorithm is said to achieve \emph{weak detection} if, given input $(\G,k)$ with $\G$ drawn from either $\QQ$ or $\PP$ (each chosen with probability $1/2$), it correctly identifies the distribution ($\QQ$ or $\PP$) with probability $1/2 + \Omega(1)$.

\medskip

\noindent We will establish a formal connection between the testing and recovery problems: any algorithm for approximate recovery can be used to solve strong detection (see Appendix~\ref{sec:reductions} for exact statements).

\subsection{Hypothesis Testing and the Low-Degree Framework}
\label{sec:low-deg}

Following \cite{HS-bayesian,sos-detecting,sam-thesis}, we will study the class of \emph{low-degree polynomial algorithms} as a proxy for computationally-efficient algorithms (see also~\cite{lowdeg-notes} for a survey). Considering the hypothesis testing setting, suppose we have two (sequences of) distributions $\PP = \PP_n$ and $\QQ = \QQ_n$ over $\RR^p$ for some $p = p_n$. Since our testing problems are over $(N,M)$-bipartite graphs, we will set $p = NM$ and take $\PP,\QQ$ to be supported on $\{0,1\}^p$ (encoding the adjacency matrix of a graph). A \emph{degree-$D$ polynomial algorithm} is simply a multivariate polynomial $f: \RR^p \to \RR$ of degree (at most) $D$ with real coefficients (or rather, a sequence of such polynomials $f = f_n$). In our case, since the inputs will be binary, the polynomial can be multilinear without loss of generality. In line with prior work, we define two different notions of ``success'' for polynomial-based tests as follows.

\paragraph{Strong/weak separation}

A polynomial $f: \RR^p \to \RR$ is said to \emph{strongly separate} $\PP$ and $\QQ$ if
\begin{equation}\label{eq:strong-sep}
\sqrt{\max\left\{\Var_\PP[f], \Var_\QQ[f]\right\}} = o\left(\left|\EE_\PP[f] - \EE_\QQ[f]\right|\right).
\end{equation}
Also, a polynomial $f: \RR^p \to \RR$ is said to \emph{weakly separate} $\PP$ and $\QQ$ if
\begin{equation}\label{eq:weak-sep}
\sqrt{\max\left\{\Var_\PP[f], \Var_\QQ[f]\right\}} = O\left(\left|\EE_\PP[f] - \EE_\QQ[f]\right|\right).
\end{equation}

\medskip

\noindent These are natural sufficient conditions for strong/weak detection: note that by Chebyshev's inequality, strong separation immediately implies that strong detection can be achieved by thresholding the output of $f$; also, by a less direct argument, weak separation implies that weak detection can be achieved using the output of $f$~\cite[Proposition~6.1]{fp}.

Perhaps surprisingly, it has now been established that for a wide variety of ``high-dimensional testing problems'' (including planted clique, sparse PCA, community detection, tensor PCA, and many others), the class of degree-$O(\log p)$ polynomial algorithms is precisely as powerful as the best known polynomial-time algorithms (e.g.\ \cite{sk-cert,ding2019subexponential,sam-thesis,sos-detecting,HS-bayesian,lowdeg-notes}). One explanation for this is that such polynomials can capture powerful algorithmic frameworks such as spectral methods (see~\cite{lowdeg-notes}, Theorem~4.4). Also, lower bounds against low-degree algorithms imply failure of all \emph{statistical query algorithms} (under mild assumptions)~\cite{sq-ld} and have conjectural connections to the \emph{sum-of-squares hierarchy} (see e.g.~\cite{sos-detecting,sam-thesis}). While there is no guarantee that a degree-$O(\log p)$ polynomial can be computed in polynomial time, the success of such a polynomial still tends to coincide with existence of a poly-time algorithm.

In light of the above, \emph{low-degree lower bounds} (i.e., provable failure of all low-degree algorithms to achieve strong/weak separation) is commonly used as a form of concrete evidence for computational hardness of statistical problems. In line with prior work, we will aim to prove hardness results of the following form.

\paragraph{Low-degree hardness} If no degree-$D$ polynomial achieves strong (respectively, weak) separation for some $D = \omega(\log p)$, we say ``strong (resp., weak) detection is low-degree hard''; this suggests that strong (resp., weak) detection admits no polynomial-time algorithm and furthermore requires runtime $\exp(\tilde\Omega(D))$ where $\tilde\Omega$ hides factors of $\log p$.

\medskip

\noindent In this paper, we will establish low-degree hardness of group testing models in certain parameter regimes. While the implications for all polynomial-time algorithms are conjectural, these results identify apparent computational barriers in group testing that are analogous to those in many other problems. As a result, we feel there is unlikely to be a polynomial-time algorithm in the low-degree hard regime, at least barring a major algorithmic breakthrough.\footnote{Strictly speaking, we should perhaps only conjecture computational hardness for a slightly noisy version of group testing (say where a small constant fraction of test results are changed at random) because some ``noiseless'' statistical problems admit a poly-time algorithm in regimes where low-degree polynomials fail; see e.g.\ Section~1.3 of~\cite{lll} for discussion.} Throughout the rest of this paper we focus on proving low-degree hardness as a goal of inherent interest, and refer the reader to the references mentioned above for further discussion on how low-degree hardness should be interpreted.

\section{Main Results}\label{sec_contribution}

We now formally state our main results on statistical and computational thresholds in group testing, which are summarized in Figure~\ref{fig_bound}. Throughout, recall that we fix the scaling regime $k = n^{\theta+o(1)}$ and $m = (c+o(1)) k \ln(n/k)$ for constants $\theta \in (0,1)$ and $c > 0$. Our objective is to characterize the values of $(\theta,c)$ for which various group testing tasks are ``easy'' (i.e., poly-time solvable), ``hard'' (in the low-degree framework), and (information-theoretically) ``impossible.''

\subsection{Constant-Column Design}
\label{sec:cc-results}

Our first set of results pertains to the constant-column design, as defined in Section~\ref{sec:def}.

\paragraph{Weak recovery: All-or-Nothing phenomenon}

We start by focusing on the information-theoretic limits of weak recovery in the constant-column design. We show that the AoN phenomenon occurs at the critical constant $\cinf = 1/\ln 2$, i.e., at the critical number of tests $\minf=(\ln 2)^{-1} k\ln (n/k)$. It was known previously that when $c>1/\ln 2$, one can approximately recover (as defined in Section~\ref{sec:def}) the infected individuals via a brute-force algorithm \cite{Coja_2019, Coja_2021}. It was also known that when $c<1/\ln 2$, one \emph{cannot} approximately recover the infected individuals (see \cite{Aldridge_2019}). We show that in fact a much stronger lower bound holds: when $c<1/\ln 2$, no algorithm can even achieve \emph{weak} recovery. 

\begin{theorem}\label{Main_Theorem}
Consider the constant-column design with any fixed $\theta \in (0,1)$. If $c < \cinf := 1/\ln 2$ then every algorithm (efficient or not) taking input $(\G_{GT}, \hat\SIGMA, k)$ and returning a binary vector $\TAU \in \{0,1\}^n$ must satisfy $\frac{\langle \TAU, \SIGMA \rangle }{ \|\TAU\|_2 \|\SIGMA\|_2}=o(1)$ with probability $1-o(1)$. In particular, weak recovery is impossible.
\end{theorem}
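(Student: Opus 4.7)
The plan is to show that, below the threshold, the posterior $\Pr[\SIGMA = \cdot \mid \G_{GT}, \hat{\SIGMA}]$ is essentially uniform over the set $\cS$ of weight-$k$ configurations consistent with all test outcomes, in the sense that a sample $\TAU$ from the posterior has normalized overlap $|\TAU \cap \SIGMA|/k = o(1)$ with high probability. Since a Bayes-optimal estimator can be taken to be a posterior sample, this immediately rules out any algorithm reaching $\langle \TAU, \SIGMA\rangle/(\|\TAU\|_2 \|\SIGMA\|_2) = \Omega(1)$. To quantify posterior concentration, for each $\rho \in [0,1]$ let $Z_\rho$ denote the number of weight-$k$ vectors $\TAU \in \cS$ with $|\TAU \cap \SIGMA| = \rho k$, so the posterior probability of overlap $\rho$ equals $Z_\rho / \sum_{\rho'} Z_{\rho'}$. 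It suffices to prove that, for any fixed $\eps > 0$, $\sum_{\rho \ge \eps} Z_\rho = o(\sum_{\rho < \eps} Z_\rho)$ with high probability.

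The centerpiece is a \emph{conditional second moment} computation for $Z_\rho$. Working directly in the planted constant-column model is awkward because the $\Delta$-regular choices made by individuals couple the tests. To circumvent this, I would invoke the \emph{planting trick} of Achlioptas--Coja-Oghlan: replace the planted model by a null distribution under which a constant-column graph is sampled without reference to $\SIGMA$ and then reweighted by the indicator of $\SIGMA$'s consistency. The two models differ by a Radon--Nikodym derivative proportional to $Z_1$ (the number of configurations consistent with the random instance), and after conditioning on a high-probability event $\cE$ --- fixing the proportion of positive tests at $1/2 + o(1)$ and the degree sequence of the post-COMP graph $\G'_{GT}$ at its typical value --- $Z_1$ concentrates and the derivative becomes controllable. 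On $\cE$, the moment computations factorize over tests and individuals in a tractable way.

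On $\cE$ the first moment $\EE[Z_\rho]$ factors into an entropic term (the count of weight-$k$ vectors at overlap $\rho k$ with $\SIGMA$) and a probabilistic term (the chance that each test remains consistent when $\SIGMA$ is replaced by $\TAU$). A direct calculation shows that for $c < 1/\ln 2$ the product is dominated by $\rho$ of order $k/n$. The second moment $\EE[Z_\rho^2]$ then decomposes as a sum over ordered pairs $(\TAU_1, \TAU_2)$ indexed by the three pairwise overlaps among $\SIGMA, \TAU_1, \TAU_2$; the key step is a Laplace-type analysis showing that the sum is dominated, uniquely, by the fully-symmetric saddle where all overlaps are near $k/n$. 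Paley--Zygmund applied to $Z_\rho$ for each relevant $\rho$, together with a union bound over $\rho$, yields the desired concentration and hence the near-uniformity of the posterior.

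The hardest part will be the conditional second moment. Because the constant-column design induces dependencies that the Bernoulli design does not, the conditioning event $\cE$ must be chosen finely enough to kill atypical structures that would otherwise inflate $\EE[Z_\rho^2]$ --- for instance, tests containing anomalously few or many infected individuals after pre-processing, or pairs $(\TAU_1,\TAU_2)$ whose joint incidence structure induces an atypical number of triply-covered tests --- yet coarsely enough to retain $\Pr(\cE) = 1 - o(1)$ in the planted model. The emergence of the sharp threshold $c = 1/\ln 2$ reflects the balance between the prior entropy $k\ln(n/k)$ and the information content $m \ln 2$ per test (maximized when each test is positive with probability $1/2$), which match precisely when $m = \minf$.
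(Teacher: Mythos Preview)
Your high-level skeleton --- reduce to the posterior, observe it is uniform over the solution set, and show that almost all solutions have $o(k)$ overlap with the ground truth --- matches the paper exactly. The reduction to $\sum_{\rho\ge\eps}Z_\rho=o(Z)$ w.h.p.\ is also the paper's Proposition~5.2. Where you diverge is in the mechanism for establishing this.

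You propose a second moment / Paley--Zygmund argument on $Z_\rho$ \emph{in the planted model}, which forces you into a three-vector overlap analysis ($\SIGMA,\TAU_1,\TAU_2$). The paper sidesteps this entirely via two observations. First, the planting trick in the form actually needed here is not ``control the Radon--Nikodym derivative by showing $Z$ concentrates,'' but the one-line inequality $\PP_\Delta(Z\le\eps\,\EE_{\QQ_\Delta}Z)\le\eps$ (Lemma~5.3), which gives the w.h.p.\ lower bound on the denominator with no second-moment work at all. Second, the change-of-measure identity $\EE_{\PP_\Delta}[Z_{\SIGMA}(G,\alpha)]=\EE_{\QQ_\Delta}[Z(G,\alpha)]/\EE_{\QQ_\Delta}[Z(G)]$ (Lemma~5.4) converts the planted first moment directly into a \emph{null-model} two-vector quantity, so the whole argument collapses to showing $\EE_{\QQ_\Delta}[Z(G,\alpha)]=o\bigl((\EE_{\QQ_\Delta}Z)^2\bigr)$ for $\alpha\ge\delta$ --- a two-way, not three-way, overlap comparison. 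Your route would work in principle but is strictly harder.

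On the technical side, your plan to ``condition on $\cE$ so that moments factorize over tests'' is the right instinct but underspecified: fixing the degree sequence alone does not make the constant-column probabilities factor. The paper's concrete device is to pass to a regularized \emph{configuration-model} null $\QQ_{\Delta,\Gamma}^\star$ (tests forced to have degree exactly $\Gamma=N\Delta/M$, multi-edges allowed) where the moments admit closed forms via an auxiliary i.i.d.\ product space and a Bayes/local-limit argument (Lemmas~6.1--6.2), and then to transfer back to $\QQ_\Delta$ up to $\exp(o(k\Delta))$ errors. This is where the real work lies, and your proposal does not yet identify a comparable handle. Finally, a small slip: the typical overlap is of order $k^2/N$ in the reduced instance, not $k/n$.
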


\noindent Combined with the prior work mentioned above, this establishes the All-or-Nothing phenomenon, namely:
\begin{itemize}
    \item If $c>\cinf$ and $m=(c+o(1))k\log(n/k)$ then \textit{approximate} recovery is \textit{possible}.
    \item If $c<\cinf$ and $m=(c+o(1))k\log(n/k)$ then \textit{weak} recovery is \textit{impossible}.
\end{itemize}

As mentioned in the Introduction, the only algorithms known to achieve approximate recovery with the statistically optimal number of tests $\minf$ do not have polynomial runtime \cite{Coja_2019, Coja_2021}. As a tool for studying this potential computational-statistical gap (and out of independent interest), we next turn our attention to the easier \emph{detection} task. We will return to discuss the implications for hardness of the recovery problem later.

\paragraph{Detection boundary and low-degree methods} 

We first pinpoint the precise ``low-degree'' threshold $\cLDCC = \cLDCC(\theta)$ (where the superscript indicates ``constant-column'') for detection: above this threshold we prove that a new poly-time algorithm achieves strong detection; below this threshold we prove that all low-degree polynomial algorithms fail to achieve weak separation, giving concrete evidence for hardness (see Section~\ref{sec:low-deg}). As a sanity check for the low-degree lower bound, we also verify that low-degree algorithms indeed succeed at strong separation above the threshold (specifically, this is achieved by a degree-2 polynomial that computes the empirical variance of the test degrees).

\begin{theorem}\label{thm:detect_LD_CC}
Consider the constant-column design (testing variant) with parameters $\theta \in (0,1)$ and $c > 0$. Define
\begin{equation}\label{eq:cLDCC}
\cLDCC = \begin{cases} \frac{1}{(\log 2)^2} \bc{1-\frac{\theta}{2(1-\theta)}} & \text{if } 0 < \theta < 2/3, \\ 0 & \text{if } 2/3 \le \theta < 1. \end{cases}    
\end{equation}
\begin{itemize}
    \item[(a)] (Easy) If $c > \cLDCC$, there is a degree-2 polynomial achieving strong separation, and a polynomial-time algorithm achieving strong detection.
    
    \item[(b)] (Hard) If $c < \cLDCC$ then there is a $D = n^{\Omega(1)}$ such that any degree-$D$ polynomial fails to achieve weak separation. (This suggests that weak detection requires runtime $\exp(n^{\Omega(1)})$.)
\end{itemize}
\end{theorem}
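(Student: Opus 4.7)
The plan is to handle parts (a) and (b) separately, using rather different techniques.

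\textbf{Part (a).} I would exhibit the degree-$2$ polynomial $f(\G) := \sum_{j=1}^{M} d_j^{2}$, where $d_j = \sum_{i=1}^{N} X_{ij}$ is the degree of test $j$ (and $X_{ij}$ is the indicator that individual $i$ participates in test $j$). Since $\sum_j d_j = N\Delta$ is deterministic under $\QQ$, this statistic is essentially the empirical variance of test degrees. Expanding $f$ into monomials in $X_{ij}$ and using that the individuals' subsets $S_i \in \binom{[M]}{\Delta}$ are i.i.d.\ uniform under $\QQ$ gives closed-form expressions for $\EE_\QQ[f]$ and $\Var_\QQ[f]$ (with the non-trivial moments coming only from pairs/quadruples on the same individual, since cross-individual terms factor). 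Under $\PP$, the conditioning that every test is covered by the size-$k$ infected set biases each $d_j$ upward; computing $\EE_\PP[f]$ and $\Var_\PP[f]$ via a mild inclusion-exclusion on the coverage event yields a calculable shift $|\EE_\PP[f] - \EE_\QQ[f]|$. The claim is then that this shift dominates $\sqrt{\max(\Var_\QQ[f], \Var_\PP[f])}$ precisely when $c > \cLDCC$ (the exponent $\theta/(2(1-\theta))$ in \eqref{eq:cLDCC} arises from comparing $N = n^{1-(1-\theta)c(\ln 2)^2 + o(1)}$ with the scale of $M$), yielding strong separation and hence, by thresholding, a polynomial-time test achieving strong detection.

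\textbf{Part (b).} The plan is to use the low-degree likelihood ratio (LDLR) method: to rule out weak separation by degree-$D$ polynomials for some $D = n^{\Omega(1)}$, it suffices to prove $\|L_{\le D}\|_\QQ^{2} = 1 + o(1)$, where $L = d\PP/d\QQ$ and $L_{\le D}$ is its $\QQ$-orthogonal projection onto polynomials of degree at most $D$. Writing $L$ as the mixture $L(\G) = \EE_{\SIGMA'}\bigl[\One\{\G \text{ is } \SIGMA'\text{-covering}\}/p\bigr]$ with $p := \Pr_\QQ[\text{a fixed size-}k\text{ set covers every test}]$, the (full) second moment becomes
\[
\EE_\QQ[L^{2}] = \EE_{\SIGMA,\SIGMA'}\!\left[\frac{\Pr_\QQ[\G \text{ covered by both } \SIGMA \text{ and } \SIGMA']}{p^{2}}\right].
\]
Per-individual independence under $\QQ$ makes the inner probability factor across tests and individuals, yielding a dependence on $(\SIGMA,\SIGMA')$ only through the overlap $s := |\SIGMA \cap \SIGMA'|$: inclusion-exclusion gives $\Pr_\QQ[\text{one test covered by both}] = 1 - 2(1-\Delta/M)^{k} + (1-\Delta/M)^{2k-s}$, whose $M$-th power becomes a tractable function of $s/k$ and $c$.

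The main obstacle is twofold: (i) the full second moment diverges due to rare $(\SIGMA,\SIGMA')$ pairs with anomalously large overlap (or related atypical events), and (ii) $\QQ$ does not have independent coordinates, since the test-indicators of a single individual are coupled by the constraint $|S_i| = \Delta$, which precludes a direct application of the standard character-based LDLR machinery. For (i), the plan is to employ a \emph{low-overlap second moment}: replace $L$ by $\tilde L := \EE_{\SIGMA'}\bigl[\One\{\G \text{ covering}\}\cdot\One\{\cG(\SIGMA',\G)\}/\tilde p\bigr]$ for a good event $\cG$ that excludes atypical joint configurations (e.g.\ unusually many very-low-degree tests that could amplify high-overlap contributions), and show $\|L_{\le D} - \tilde L_{\le D}\|_\QQ = o(1)$ using $\Pr[\cG^{c}] = o(1)$; for $\tilde L$, the contribution from $s=0$ is $1 + o(1)$ by the concentration of $s = |\SIGMA \cap \SIGMA'|$ around $k^{2}/N$. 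For (ii), I would expand $\tilde L_{\le D}$ in the orthogonal basis adapted to $\QQ$, i.e.\ a tensor product over individuals of Johnson-scheme polynomials on $\binom{[M]}{\Delta}$ (legitimate since $\QQ$ factors over individuals even when it does not factor over $(i,j)$ pairs), and bound each basis coefficient using only the per-individual product structure. The overlap-$s$ contribution decays roughly like $(k^{2}/N)^{s}$ modified by the $(1-\Delta/M)^{-s}$-type factor coming from the inclusion-exclusion above, and summing over $s = 1,\ldots,O(D)$ yields $o(1)$ precisely when $c < \cLDCC$, thereby pinning down the threshold \eqref{eq:cLDCC}. The most delicate step will be choosing $\cG$ narrowly enough to tame the second moment while still well-approximating $L$ in $L^{2}(\QQ)$ projection at degree $D$, the analogue in the non-product setting of the ``conditioning away a bad event'' strategy of~\cite{fp}.
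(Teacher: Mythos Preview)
Your approach matches the paper's: the paper uses $T(X)=\sum_j(\Gamma_j - N\Delta/M)^2$, which differs from your $\sum_j d_j^2$ by a constant since $\sum_j d_j = N\Delta$ is deterministic. The analysis in the paper decomposes $\Gamma_j=Z_j+W_j$ into non-infected and infected contributions and controls the infected part via a balls-into-bins/Poisson comparison rather than inclusion--exclusion, but the statistic and the threshold it produces are the same.

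\textbf{Part (b).} The broad LDLR/overlap strategy is right, but two of your key steps break down or are misdirected for the constant-column design.

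First, and most importantly, the factorization ``whose $M$-th power becomes a tractable function of $s/k$ and $c$'' is incorrect here. Your single-test formula $1 - 2(1-\Delta/M)^k + (1-\Delta/M)^{2k-s}$ is fine, but in the constant-column design the coverage events for different tests are \emph{not} independent: each individual's choice is a size-$\Delta$ subset without replacement, so e.g.\ $\Pr[a,b\notin S_i]=\binom{M-2}{\Delta}/\binom{M}{\Delta}\neq(1-\Delta/M)^2$. You cannot take the $M$-th power. The paper bypasses any computation of the joint coverage probability by an elegant monotonicity argument: writing $\langle L_u,L_{u'}\rangle_\QQ = \Pr(E(u')\mid E(u))/\Pr(E(u))$ and observing that both events are monotone in $|\cN(u\cap u')|$, it shows that this ratio is at most $1/\Pr(B(u,u'))$, where $B$ is the event that the $\Delta|u\cap u'|$ edges out of $u\cap u'$ hit distinct tests. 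A simple collision-count bound then gives $\langle L_u,L_{u'}\rangle_\QQ \le (1-\ell^2\Delta^2/M)^{-1}$ with $\ell=\langle u,u'\rangle$, which is what drives the threshold.

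Second, the conditioning you propose (replacing $L$ by $\tilde L$ via a good event $\cG$) is unnecessary for this design; the paper explicitly reserves that device for the Bernoulli case. What \emph{is} needed---and what your sketch does not articulate---is the positivity step: for every pair $u,u'$ one has $\langle L_u^{\le D},L_{u'}^{\le D}\rangle_\QQ \le \langle L_u,L_{u'}\rangle_\QQ$, proved by a symmetry argument in the Johnson-scheme basis (any $H_S$ with $V(S)\not\subseteq u\cap u'$ contributes zero, and otherwise the two expectations coincide, so every term in the expansion is nonnegative). This is what lets you drop the $\le D$ projection on the low-overlap part $\cR_{\le\delta}$ and reduce to the full second moment there; the high-overlap part $\cR_{>\delta}$ is handled by a crude $M^{O(D)}$ bound on basis-function values times the tiny overlap probability. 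Your plan of ``bounding each basis coefficient'' directly, without this positivity reduction, would be significantly harder to execute.
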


\noindent We remark that when $\theta \ge 2/3$, the problem is ``easy'' for any constant $c > 0$ (and perhaps even for some sub-constant scalings for $c$, although we have not attempted to investigate this).

\paragraph{Hardness of Recovery}

Above, we have given evidence for hardness of detection below the threshold $\cLDCC$. We also show in Appendix~\ref{sec:reductions} that recovery is a formally harder problem than detection: any poly-time algorithm for approximate recovery can be made into a poly-time algorithm for strong detection, succeeding for the same parameters $\theta,c$. These two results together give evidence for hardness of \emph{recovery} below $\cLDCC$ via a two-step argument: our low-degree hardness for detection leads us to conjecture that there is no poly-time algorithm for detection below $\cLDCC$, and this conjecture (if true) formally implies that there is no poly-time algorithm for approximate recovery below $\cLDCC$. (However, our results do not formally imply failure of \emph{low-degree} algorithms for \emph{recovery}.) Notably, it turns out that $\cLDCC$ exceeds $\cinf$ for some values of $\theta$ (namely $0 < \theta < 1 + \frac{1}{2 \ln 2 - 3} \approx 0.38$), revealing a possible-but-hard regime for recovery (Region I in Figure~\ref{fig_bound}).

Since the recovery problem might be strictly harder than testing, our results do not pinpoint a precise computational threshold for recovery (even conjecturally). However, one case where we do pinpoint the computational recovery threshold is in the limit $\theta \to 0$: here, the thresholds $\cLDCC$ and $\calg$ coincide, that is, our low-degree hardness result for detection matches the best known poly-time algorithm for recovery (COMP). This suggests that for small $\theta$, the COMP algorithm is optimal among poly-time methods (for approximate recovery).

An interesting open question is to resolve the low-degree threshold for \emph{recovery}, in the style of \cite{ld-rec}. However, it is not clear that their techniques immediately apply here.

\begin{figure}
\begin{minipage}[t][][b]{0.99 \textwidth}
\begin{center}
\begin{tikzpicture}[scale=0.85]
\begin{axis}[
axis lines = left,
xlabel = $\theta$,
ylabel = {$c$},
xtick={0, 0.66, 1},
xticklabels={$0$,  $\frac{2}{3}$, 1},
ytick={2.081368981, 1.442695041},
yticklabels={$\log^{-2}2$,$\log^{-1}2$, $(2\log^22)^{-1}$, $((1+\log2)\log 2)^{-1}$},
ymin = 0,
ymax = 2.3,
xmin = 0,
xmax=1.1,
height=5.75cm,
width=15cm,
legend columns=4,
legend style={at={(0.66,1.1)}, font=\large}
]
\addplot [
name path=dd,
domain=0:1, 
samples=500, 
color=cyan,
style={ultra thick},
solid
]
{ (1)/( ln(2)^2)};
\addplot [
name path=counting,
domain=0:1, 
samples=100, 
color=red,
style={ultra thick},
solid
]
{(1)/(ln(2))};
\addplot [
name path=lowdegree,
domain=0:0.99, 
samples=100, 
color=black,
style={ultra thick},
dashed
]
{(1-(x/(2*(1-x))))/(ln(2)*ln(2))};
\addplot [
name path=border_up,
domain=0:1, 
samples=10, 
color=black,
style={thin},
draw opacity=0.00,
dashed
]
{1/(ln(2)*ln(2))+0.01};
\addplot [
name path=border_low,
domain=0:1, 
samples=10, 
color=black,
style={ultra thick},
draw opacity=0.00,
dashed
]
{0.001};
\addplot [
name path=trivial,
domain=0.66:1, 
samples=100, 
color=black,
style={ultra thick},
dashed
]
{0.017};
\node[] at (axis cs: 0.75,1.75) {II};
\node[] at (axis cs: 0.1,1.75) {I};
\node[] at (axis cs: 0.1,0.3) {IV};
\node[] at (axis cs: 0.75,0.3) {III};
\path[name path=axis] (axis cs:0,0) -- (axis cs:1,0);
\addlegendentry{$\calg$}
\addlegendentry{$\cinf$}
\addlegendentry{$\cLDCC$}
\end{axis}
\end{tikzpicture}
\end{center}
\end{minipage}
\hfill
\begin{minipage}[t][][b]{0.99 \linewidth}
\begin{center}
\begin{tikzpicture}[scale=0.85]
\begin{axis}[
axis lines = left,
xlabel = $\theta$,
ylabel = {$c$},
xtick={0, 0.5, 1},
xticklabels={$0$,  $\frac{1}{2}$, 1},
ytick={2.081368981, 1.442695041},
yticklabels={$\log^{-2}2$,$\log^{-1}2$, $(2\log^22)^{-1}$, $((1+\log2)\log 2)^{-1}$},
ymin = 0,
ymax = 2.3,
xmin = 0,
xmax=1.1,
height=5.75cm,
width=15cm,
legend columns=4,
legend style={at={(0.66,1.1)}, font=\large}
]

\addplot [
name path=dd,
domain=0:1, 
samples=500, 
color=cyan,
style={ultra thick},
solid
]
{ (1)/( ln(2)*ln(2))};

\addplot [
name path=counting,
domain=0:1, 
samples=100, 
color=red,
style={ultra thick},
solid
]
{(1)/(ln(2))};

\addplot [
name path=lowdegree,
domain=0.19:0.5, 
samples=100, 
color=black,
style={ultra thick},
dashed
]
{(1-x/(1-x))/ln(2)};
\addplot [
name path=lowdegree,
domain=0:0.18, 
samples=100, 
color=black,
style={ultra thick},
dashed
]
{2.1*exp(-5.6*x)+2.081*x};

\addplot [
name path=trivial,
domain=0.5:1, 
samples=100, 
color=black,
style={ultra thick},
dashed
]
{0.01};

\node[] at (axis cs: 0.75,1.63) {II};
\node[] at (axis cs: 0.02,1.63) {I};
\node[] at (axis cs: 0.04,0.3) {IV};
\node[] at (axis cs: 0.75,0.3) {III};

\path[name path=axis] (axis cs:0,0) -- (axis cs:1,0);
\addlegendentry{$\calg$}
\addlegendentry{$\cinf$}
\addlegendentry{$\cLDB$}
\end{axis}
\end{tikzpicture}
\end{center}
\end{minipage}
\caption{\small Phase transitions in the constant-column (left) and Bernoulli (right) designs, in $(\theta,c)$ space where $k = n^{\theta + o(1)}$ and $m = (c+o(1)) k \ln(n/k)$. Recovery is possible above the red line and impossible below it. Polynomial-time recovery is only known above the blue line. Detection is achievable in polynomial time above the dotted line and (low-degree) hard below it. In Region I, detection and recovery are both possible-but-hard. In Region II, detection is easy and recovery is possible, but it is open whether recovery is easy or hard. In Region III, detection is easy and recovery is impossible. In Region IV, recovery is impossible; we expect detection is also impossible, and this is proven for the Bernoulli design only. Above the blue line, detection and recovery are both easy. See Section~\ref{sec_contribution} for the formal statements.
}

\label{fig_bound}
\end{figure}
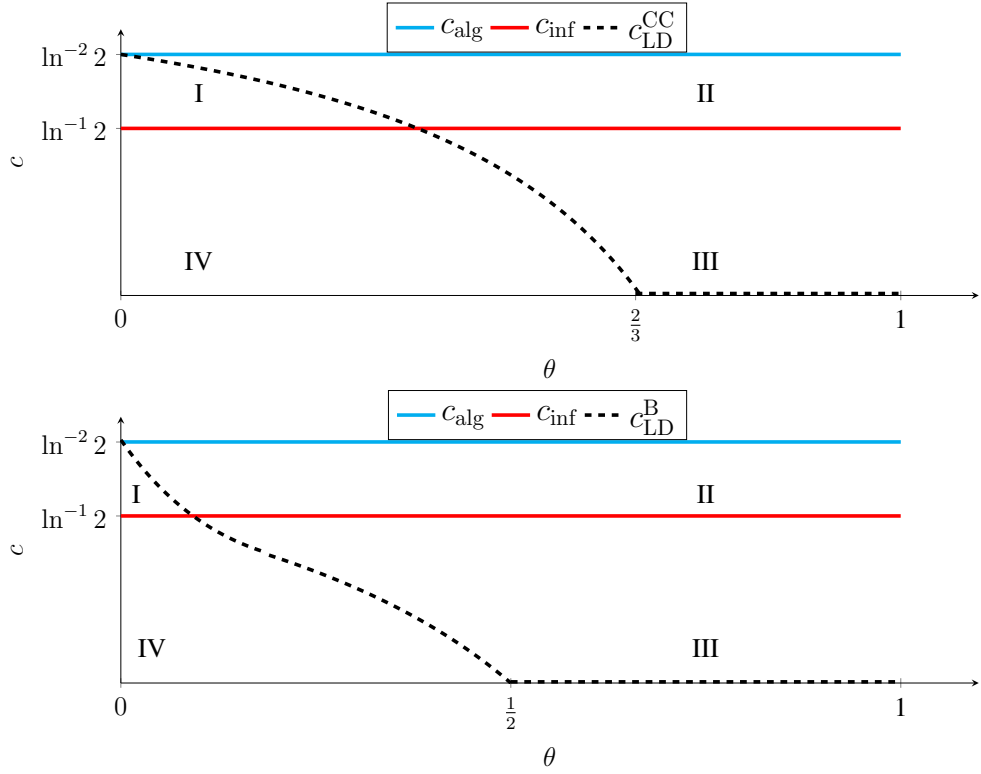

\subsection{Bernoulli Design}

Our second set of our results pertains to the Bernoulli design as defined in Section \ref{sec:def}. As always, we fix the scaling regime $k = n^{\theta+o(1)}$ and $m = (c+o(1)) k \ln(n/k)$ for constants $\theta \in (0,1)$ and $c > 0$.

\paragraph{Detection boundary and low-degree methods}

We will determine both the statistical and low-degree thresholds for detection. The thresholds are more complicated than in the constant-column design and involve the \emph{Lambert $W$ function}: for $x \ge -\frac{1}{e}$, define $W_0(x)$ to be the unique $y \ge -1$ satisfying $ye^y = x$. We begin with the low-degree threshold.

\begin{theorem}\label{thm:detect_LD_Bern}
Consider the Bernoulli design (testing variant) with parameters $\theta \in (0,1)$ and $c > 0$. Define
\begin{equation}\label{eq:cLDB}
\cLDB =
\begin{cases}
-\frac{1}{\ln^2 2} W_0(-\exp(-\frac{\theta}{1-\theta} \ln 2 - 1)) & \text{if } 0 < \theta < \frac{1}{2}(1 - \frac{1}{4 \ln2 - 1}), \\
\frac{1}{\ln 2} \cdot \frac{1-2\theta}{1-\theta} & \text{if } \frac{1}{2}(1 - \frac{1}{4 \ln2 - 1}) \le \theta < \frac{1}{2}, \\
0 & \text{if } \frac{1}{2} \le \theta < 1. \end{cases}    
\end{equation}
\begin{itemize}
    \item[(a)] (Easy) If $c > \cLDB$, there is a degree-$O(\log n)$ polynomial achieving strong separation, and a polynomial-time algorithm achieving strong detection.
    
    \item[(b)] (Hard) If $c < \cLDB$ then any degree-$o(k)$ polynomial fails to achieve weak separation. (This suggests that weak detection requires runtime $\exp(\tilde\Omega(k))$.)
\end{itemize}
\end{theorem}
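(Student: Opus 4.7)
For part (a), I would analyze the degree statistic $f(\G) = \sum_{i=1}^N \binom{d_i}{T}$, where $d_i$ is the test-degree of individual $i$ and $T = \alpha \ln n$ for a constant $\alpha$ to be optimized. This is a polynomial of degree $T = O(\log n)$ in the edge indicators and is clearly poly-time computable. Under $\QQ$, the degrees $d_i$ are i.i.d.\ $\Bin(M,q)$; under $\PP$, each of the $k$ infected individuals has degree conditionally distributed as $\Bin(M,2q)$ (because conditioning on each test being covered doubles the marginal inclusion probability of any given infected individual, using $(1-q)^k = 1/2$), while uninfected individuals remain $\Bin(M,q)$. The mean gap is therefore $\approx k \cdot \Pr[\Bin(M,2q) \ge T]$, the null variance is $\approx N \cdot \Pr[\Bin(M,q) \ge T]$, and evaluating these binomial tails via Stirling yields an exponent condition for strong separation. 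Optimizing $\alpha$ produces a transcendental equation whose solution is given by the Lambert-$W$ function in the small-$\theta$ regime and by an explicit algebraic expression in the intermediate regime, matching the piecewise formula~\eqref{eq:cLDB}. Chebyshev's inequality then upgrades strong separation to strong detection.

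For part (b), the plan is a conditional low-degree likelihood ratio argument. An exact Fourier expansion over the $NM$ independent Bernoulli edges yields the closed form
\begin{equation*}
\|L^{\le D}\|^2 = \EE_{\SIGMA, \SIGMA'}\sum_{s=0}^D \binom{M t}{s}\left(\frac{q}{1-q}\right)^s, \qquad t = |\SIGMA \cap \SIGMA'|,
\end{equation*}
with $\SIGMA, \SIGMA'$ independent uniform $k$-subsets of $[N]$; this reduces to the bare second moment $\EE_t[2^{Mt/k}]$ at $D=\infty$ via $(1-q)^k = 1/2$. The untruncated second moment is too large because of atypically large overlap events, so following the roadmap in the introduction, I would define a good event $\cE$ on the observation space (for instance bounding the maximum individual degree in the pre-processed graph, or equivalently the number of tests in which any single individual can participate), set $\tilde\PP = \PP(\,\cdot\,\mid\cE)$, and verify $\|\PP - \tilde\PP\|_{\mathrm{TV}} = o(1)$ via standard concentration on the relevant degree statistics under $\PP$. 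For the restricted measure, the truncated sum above becomes a restricted integral over overlap values $t$, and a saddle-point analysis will show that for every $c < \cLDB$ and $D = o(k)$ one has $\|\tilde L^{\le D}\|^2 = 1 + o(1)$. This, together with the TV closeness, precludes weak separation of $\PP$ and $\QQ$ by any degree-$o(k)$ polynomial.

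The main obstacle is calibrating the good event $\cE$ so that the conditioning is simultaneously mild enough to preserve $\PP$ in total variation and sharp enough to match the algorithmic threshold $\cLDB$ without losing constants. The two regimes in~\eqref{eq:cLDB} correspond to two distinct saddle points in the joint optimization over $(t,s)$: in the Lambert-$W$ regime the dominant contribution comes from an overlap $t$ scaling polynomially in $k$ with the truncation index $s$ pinned at the boundary $s=D$, while in the linear regime the dominant $s$ is strictly interior. Translating each saddle to the exact critical constant in~\eqref{eq:cLDB}, and checking that the two algorithmic recipes (the degree-statistic algorithm in part (a) and the low-degree lower bound) meet at the same piecewise threshold rather than leaving a gap, is the crux of the matching bound and will require the same Lambert-$W$ computation to appear from both directions.
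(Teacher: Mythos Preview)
For part (a), both you and the paper use a degree-based statistic, but the executions differ. The paper counts individuals with degree above a threshold $d=2tqM$ and then approximates the indicator $\One_{d_i\ge d}$ by an $O(\log n)$-degree polynomial via Lagrange interpolation. Your statistic $\sum_i\binom{d_i}{T}$ is already a polynomial, but your analysis sketch is off: $\EE\binom{d_i}{T}=\binom{M}{T}p^T$, not a binomial tail probability, so the mean gap and variance require a different calculation than the one you describe. You also need to handle the dependence among infected-individual degrees under $\PP$ (the paper shows they are negatively correlated).

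The substantive gap is in part~(b). Your closed form
\[
\|L^{\le D}\|^2 \;=\; \EE_{\ell}\sum_{s=0}^{D}\binom{M\ell}{s}\Bigl(\tfrac{q}{1-q}\Bigr)^{s},\qquad \ell=|\SIGMA\cap\SIGMA'|,
\]
is correct for the \emph{unconditional} $\PP$ (one checks $\langle L_u,h_S\rangle=\pm(q/(1-q))^{|S|/2}$ when $V(S)\subseteq u$ and $0$ otherwise). But this formula by itself only yields the linear regime: the $\ell=1$ term already contributes $(k^2/N)(1-q)^{-M}=n^{2\theta-1+(1-\theta)c\ln 2+o(1)}$, and this is unaffected by truncation at $s\le D$ once $D\gg\log n$, since the dominant $s$ at $\ell=1$ is $\approx Mq=\Theta(\log n)$. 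So no choice of $D=o(k)$ pins the saddle to $s=D$ for small overlap; the degree cutoff only kills the large-$\ell$ tail and cannot produce the Lambert-$W$ threshold.

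To reach the Lambert-$W$ regime one must condition, and here your plan breaks: after replacing $\PP$ by $\tilde\PP=\PP(\cdot\mid\cE)$, the Fourier coefficients $\langle\tilde L_u,h_S\rangle$ no longer admit the closed form above, so the truncated sum does not merely ``become a restricted integral over overlap values.'' The paper avoids computing the conditional Fourier coefficients altogether: it proves $\langle\tilde L_u^{\le D},\tilde L_{u'}^{\le D}\rangle_\QQ\le\langle\tilde L_u,\tilde L_{u'}\rangle_\QQ$ for every $u,u'$ by a symmetry argument, and then bounds the low-overlap piece of the \emph{conditional full} second moment $\langle\tilde L_u,\tilde L_{u'}\rangle_\QQ$. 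Conditioning on the max-degree event inserts an extra factor $\Pr[\Bin(M,r)\le\ell d]$ into the overlap-$\ell$ term, and optimizing the cutoff $d=2tqM$ over $t$ is what yields the Lambert-$W$ expression. Thus the two pieces of~\eqref{eq:cLDB} correspond to whether this conditioning is needed (small $\theta$) or not (larger $\theta$), not to whether the degree truncation $D$ binds.
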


\noindent We remark that $\cLDB$ is a continuous function of $\theta$ (see Figure~\ref{fig_bound}). The new algorithm that succeeds in the ``easy'' regime is based on counting the number of individuals whose degree (in the graph-theoretic sense) exceeds a particular threshold. For $\theta$ in the first case of \eqref{eq:cLDB}, the low-degree hardness result requires a conditional argument that conditions away a certain rare ``bad'' event; for $\theta$ in the second case of \eqref{eq:cLDB}, no conditioning is required and the resulting threshold matches the information-theoretic detection lower bound of \cite{Truong_2020}. We remark that the predicted runtime $\exp(\tilde\Omega(k))$ in the ``hard'' regime is essentially tight, matching the runtime of the brute-force algorithm up to log factors in the exponent.

Next, we determine the precise information-theoretic detection boundary. One (inefficient) detection algorithm is the brute-force algorithm for optimal \emph{recovery} (which can be made into a detection algorithm per Proposition~\ref{prop:reduc_bern} in Appendix~\ref{sec:reductions}). Another (efficient) detection algorithm is the low-degree algorithm from Theorem~\ref{thm:detect_LD_Bern} above. We show that for each $\theta \in (0,1)$, statistically optimal detection is achieved by the better of these two algorithms. Brute-force is better when $\theta < 1 - \frac{\ln 2}{2 \ln 2 - \ln\ln 2 - 1} \approx 0.079$, and otherwise low-degree is better.

\begin{theorem}\label{thm:detect_IT_B}
Consider the Bernoulli design (testing variant) with parameters $\theta \in (0,1)$ and $c > 0$. Let $\cinf := 1/\ln 2$ and define $\cLDB$ as in \eqref{eq:cLDB}.
\begin{itemize}
    \item[(a)] (Possible) If $c > \min\{\cinf,\cLDB\}$ then strong detection is possible.

    \item[(b)] (Impossible) If $c < \min\{\cinf,\cLDB\}$ then weak detection is impossible.
\end{itemize}
\end{theorem}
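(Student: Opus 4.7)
I would prove each case of $c > \min\{\cinf, \cLDB\}$ separately. When $c > \cinf = 1/\ln 2$, the brute-force MAP estimator achieves approximate recovery in the Bernoulli design (this is the ``positive'' half of the AoN phenomenon previously established for Bernoulli group testing), and combining it with the recovery-to-detection reduction provided by Proposition~\ref{prop:reduc_bern} in Appendix~\ref{sec:reductions} yields strong detection. When $c > \cLDB$, the ``Easy'' part of Theorem~\ref{thm:detect_LD_Bern} already provides an efficient polynomial-time algorithm achieving strong detection (namely the degree-based counting test that thresholds at the appropriate Lambert-$W$-determined level). Since $c > \min\{\cinf, \cLDB\}$ guarantees at least one of the two conditions, strong detection is achievable in either scenario.

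\textbf{Part (b), impossibility.} The plan is to prove $\TV(\PP, \QQ) = o(1)$, which rules out weak detection because the Bayes-optimal success probability for the testing problem is $\tfrac12 + \tfrac12 \TV(\PP, \QQ)$. To achieve this I would run a \emph{conditional} second moment argument. First, I would introduce a good event $\cE$ on graphs sampled from $\QQ$ that excludes graphs possessing anomalously many tests of unusually high degree --- precisely the rare outliers that make the unconditional likelihood ratio variance blow up just above the threshold --- and verify $\QQ(\cE^c) = o(1)$ together with $\PP(\cE^c) = o(1)$ (the latter by an absolute-continuity bound through the likelihood ratio). Second, I would expand the truncated second moment
\[
\EE_\QQ\!\left[\left(\tfrac{d\PP}{d\QQ}\right)^{\!2}\!\One_\cE(\G)\right]
= \EE_{\SIGMA,\SIGMA'}\!\left[\sum_\G \frac{\PP(\G\mid\SIGMA)\,\PP(\G\mid\SIGMA')\,\One_\cE(\G)}{\QQ(\G)}\right],
\]
where $\SIGMA, \SIGMA'$ are independent uniform $k$-subsets of $[N]$. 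By the Bernoulli product structure the inner expression factorises over the $M$ tests into a quantity depending only on the overlap $\ell := |\SIGMA \cap \SIGMA'|$. Using concentration of $\ell$ under the random $k$-subset prior and the structural constraint imposed by $\cE$, I would show that summing over overlaps $\ell$ yields $1 + o(1)$ precisely when $c < \min\{\cinf, \cLDB\}$, and conclude via $\TV(\PP,\QQ) \le \QQ(\cE^c) + \PP(\cE^c) + \sqrt{\EE_\QQ[(d\PP/d\QQ)^2 \One_\cE] - 1} = o(1)$.

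\textbf{Main obstacle.} The crux is engineering the event $\cE$ so that the truncated second moment is tight exactly at $c = \min\{\cinf, \cLDB\}$. On the $\cLDB$ branch (the middle regime of~(\ref{eq:cLDB}), where $\cLDB < \cinf$), the dominant divergence in an unconditional calculation comes from overlap configurations $(\SIGMA, \SIGMA')$ that spawn a small number of extremely high-degree tests; the threshold inside $\cE$ must be tuned --- presumably at the level dictated by the implicit $W_0$ optimisation that produces $\cLDB$ --- so that it kills exactly those atypical contributions while leaving the bulk of $\QQ$ intact. On the complementary $\cinf$ branch (small $\theta$), the overlap contributions are already controlled and an essentially unconditional second moment should suffice; the work there is merely to check that the same $\cE$ does no harm. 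Handling both regimes within a single calibrated event, and thereby matching the low-degree threshold of Theorem~\ref{thm:detect_LD_Bern}(b) in the one case and the AoN-style threshold $\cinf$ in the other, is what should sharpen the second-moment lower bound of~\cite{Truong_2020}.
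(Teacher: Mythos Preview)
Your Part~(a) is correct and matches the paper exactly.

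For Part~(b), your high-level plan---a conditional second moment---is right and is what the paper does, but two specific choices in your proposal would not work as stated.

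First, the conditioning event is aimed at the wrong object. In the Bernoulli design the signal lives in the \emph{individual} degrees (infected individuals have marginal edge probability $2q$ rather than $q$), not in the test degrees; it is the constant-column design where the test-degree variance carries the signal. The paper's good event is $A=\{$every \emph{infected} individual has degree at most $d=2tqM\}$, for a constant $t>1$ tuned so that $\PP(A)=1-o(1)$ (this is exactly where the Lambert-$W$ optimisation enters, via the condition $c(\ln 2)(t\ln t-t+1)>\theta/(1-\theta)$). Crucially, this event depends on the latent infected set, not just on the graph. That signal-dependent formulation is what makes the overlap calculation go through: conditioning on $A(u,X)\cap A(u',X)$ caps the total degree in $u\cap u'$ at $\ell d$, which in turn caps the number of tests that can be covered by $u\cap u'$ and yields an extra factor $\prr(\mathrm{Bin}(M,r)\le \ell d)$ in the small-$\ell$ contribution. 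Your signal-agnostic event on test degrees does not produce this cap.

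Second, your identification of which regime requires conditioning is inverted. The unconditional second moment already gives impossibility whenever $1-c\ln 2>\theta/(1-\theta)$, i.e.\ exactly up to the \emph{middle} (linear) branch of $\cLDB$; this is the \cite{Truong_2020} bound. Conditioning is needed precisely to push the threshold from that linear curve up to $\min\{\cinf,\cLDB\}$, and this gap is nonempty for small $\theta$ (both on the $\cinf$ branch and on the Lambert-$W$ branch of $\cLDB$). Concretely, the paper splits by $c$: for $c\le 1/(2\ln^2 2)$ the unconditional bound suffices; for $1/(2\ln^2 2)<c<1/\ln 2$ one chooses $t=1/(c\ln^2 2)$ in the conditioning and combines the resulting small-overlap bound with a direct large-overlap bound valid for all $c<1/\ln 2$.
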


\paragraph{Hardness of Recovery}

Similarly to the constant-column design, our low-degree hardness results suggest hardness of recovery below the threshold $\cLDB$ (see the discussion in Section~\ref{sec:cc-results}). This suggests a possible-but-hard regime for recovery (namely Region I in Figure~\ref{fig_bound}) in the Bernoulli design, for sufficiently small $\theta$ (namely $\theta < 1 - \frac{\ln 2}{2 \ln 2 - \ln\ln 2 - 1} \approx 0.079$). As discussed in the Introduction, this is contrary to the evidence of \cite{Iliopoulos_2021}, who predicted the absence of a computational-statistical gap for all $\theta \in (0,1)$.

\section{Background on Constant-Column Group Testing} \label{appendix_intro_conc}

\subsection{General Setting}

Recall that, in the underlying group testing instance, we start with $n$ individuals out of which $k=n^{\theta}$ for fixed $\theta\in (0,1)$ are infected, and conduct
$$m = c k \log \bc{\frac{n}{k}} = c k (1 - \theta) \log n$$ parallel tests. We assume throughout that $c$ is fixed with $0 < c < \log^{-2}(2)$. (Strictly speaking we should write e.g.\ $k = n^{\theta + o(1)}$ due to integrality concerns, but for ease of notation we will drop these $o(1)$ terms.)

Let $\G_{GT}=(V_{GT}\cup F_{GT},E_{GT})$ be a random bipartite graph with $\abs{F_{GT}}=m$ \emph{factor} nodes $(a_1,...,a_m)$ representing the tests and $\abs{V_{GT}}=n$ \emph{variable} nodes $(x_1,...,x_n)$ representing the individuals. Each individual independently chooses to participate in exactly $\Delta=c\log(2)\log(n/k)$ tests, chosen uniformly at random from the $\binom{m}{\Delta}$ possibilities. If $x_i$ participates in test $a_j$, this is indicated by an edge between $x_i$ and $a_j$. As usual, $\partial a_j$ or $\partial x_i$ denotes the neighbourhood of a vertex in $\G_{GT}$.

We let $\SIGMA \in\{0,1\}^n$ denote the ground-truth vector encoding the infection status of each individual, uniformly chosen from all binary vectors of length $n$ and Hamming weight $k$. Given $\G_{GT}$, we let $\hat{\SIGMA} \in \{0,1\}^m$ denote the sequence of test results, that is
$$ \hat{\SIGMA}_a = \One\cbc{  \partial a \cap \cbc{x : \SIGMA(x) = 1} \neq \emptyset }.$$

We introduce a partition of the set of individuals into the following parts. We denote by $\zero(\G_{GT})$ the set of uninfected and by $\one(\G_{GT})$ the set of infected individuals, formally
\begin{align*}
    \zero(\G_{GT}) = \cbc{x \in V_{GT}: \SIGMA(x) = 0} \quad \text{and} \quad \one(\G_{GT}) = \cbc{ x \in V_{GT} : \SIGMA(x) = 1 }. 
\end{align*}
Those individuals appearing in a negative test are \emph{hard fields} and denoted by $\zerominus(\G_{GT})$ while the set $\zeroplus( \G_{GT} )$ consists of \emph{disguised} uninfected individuals, that is uninfected individuals that only appear in positive tests:
\begin{align*}
     &\zerominus(\G_{GT}) = \cbc{ x \in \zero(\G_{GT}): \exists a \in \partial x: \hat \SIGMA_a = 0  }\quad \\
     \text{and} \quad &\zeroplus(\G_{GT}) = \zero(\G_{GT}) \setminus \zerominus(\G_{GT}).  
\end{align*}

\noindent As previously mentioned, it is a straightforward task to identify those individuals that participate in a negative test and classify them as non-infected. Let $\vec m_0$ denote the number of tests rendering a negative result. 
\begin{lemma}[see \cite{Gebhard_2020}, Lemmas A.4 \& B.4]\label{size_m_0}
With high probability $1-o(1)$, we have
$$\vec{m_0}=\frac{m}{2} \pm O(\sqrt{m}\log^2(n)) \quad \text{and} \quad \abs{  \zeroplus( \G_{GT}) } = \bc{1 \pm n^{-\Omega(1)}} n^{1-(1-\theta)c\log^2(2)}.$$
\end{lemma}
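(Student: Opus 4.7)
The plan is to treat both quantities as sums of indicator variables, compute expectations carefully via the constant-column design, and then establish concentration. For $\vec{m_0}$ we rely on a bounded-differences argument, and for $|\zeroplus(\G_{GT})|$ we use the second moment method.

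First I would handle $\vec{m_0}$. A key structural fact is that $\vec{m_0}$ depends only on the neighbourhoods $(\partial x)_{x \in \one(\G_{GT})}$ of the $k$ infected individuals, which are i.i.d.\ uniform $\Delta$-subsets of $[m]$ conditional on $\SIGMA$. For any fixed test $a$,
\[
\Pr[\hat\SIGMA_a = 0 \mid \SIGMA] \;=\; \left(\frac{\binom{m-1}{\Delta}}{\binom{m}{\Delta}}\right)^{\!k} \;=\; \left(1-\frac{\Delta}{m}\right)^{\!k}.
\]
With the parameter choice $\Delta = c\log(2)\log(n/k)$ and $m = c\,k\log(n/k)$, we have $k\Delta/m = \log 2$, so this probability is $\tfrac{1}{2}(1+O(1/k))$, giving $\mathbb{E}[\vec{m_0}\mid\SIGMA] = m/2 + O(m/k)$. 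For concentration, McDiarmid's inequality applies because swapping one infected individual's $\Delta$-subset alters at most $2\Delta$ test outcomes: we obtain $|\vec{m_0}-\mathbb{E}[\vec{m_0}\mid\SIGMA]| = O(\sqrt{k}\,\Delta\sqrt{\log n})$ with overwhelming probability, which is well within the stated $O(\sqrt{m}\log^{2}n)$ window.

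For $|\zeroplus(\G_{GT})|$ the expectation is obtained by fixing an uninfected $x$, conditioning on $\partial x$, and applying inclusion-exclusion over subsets $S \subseteq \partial x$ forced to be negative. Since the infected individuals' neighbourhoods are independent,
\[
\Pr[\text{all tests in }\partial x\text{ positive} \mid x\in\zero] \;=\; \sum_{s=0}^{\Delta}(-1)^s \binom{\Delta}{s}\, p_s^{\,k}, \qquad p_s := \frac{\binom{m-s}{\Delta}}{\binom{m}{\Delta}}.
\]
A Taylor expansion shows $p_s^{\,k} = 2^{-s}(1+O(s/k))$ uniformly for $s \le \Delta$, and a careful bound on the alternating-sign sum (matching it to the telescoping identity $\sum_s(-1)^s\binom{\Delta}{s}2^{-s} = 2^{-\Delta}$) yields $(1+o(1))\cdot 2^{-\Delta}$. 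Summing over the $(1\pm o(1))n$ uninfected individuals produces $\mathbb{E}[|\zeroplus|] = (1+o(1))\,n^{1-(1-\theta)c\log^2(2)}$. For variance, I would compute $\mathbb{E}[|\zeroplus|^2]$ by expanding over pairs $(x_1,x_2)$ and splitting according to the overlap $r = |\partial x_1 \cap \partial x_2|$. The same inclusion-exclusion calculation on $2\Delta - r$ distinct tests yields $\Pr[x_1,x_2 \in \zeroplus] \approx 2^{-2\Delta}\,\mathbb{E}[2^{r}]$, and a direct generating-function argument gives $\mathbb{E}[2^{r}] \le (1+\Delta/m)^{\Delta} = 1 + O(\Delta^2/m)$. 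Since $\Delta^2/m = \Theta(\log n/k) = n^{-\theta + o(1)}$, this leads to $\mathrm{Var}(|\zeroplus|)/\mathbb{E}[|\zeroplus|]^{2} \le n^{-\theta + o(1)} + 1/\mathbb{E}[|\zeroplus|]$, both terms being $n^{-\Omega(1)}$ because $c < \log^{-2}(2)$ forces $1-c(1-\theta)\log^{2}(2) > \theta > 0$. Chebyshev's inequality then delivers the claimed multiplicative $(1\pm n^{-\Omega(1)})$ concentration with probability $1-o(1)$.

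The main obstacle is controlling the inclusion-exclusion approximations to sufficient precision: for the expectation we need the sign-alternating sum to really collapse to $(1+o(1))2^{-\Delta}$ rather than picking up blow-ups like $(3/2)^{\Delta} = n^{\Theta(1)}$ from bounding each term's error separately, and for the second moment we need an honest upper bound on $\mathbb{E}[2^{r}]$ that is genuinely $1+o(1)$. Both issues are handled by writing $p_s^{\,k} = 2^{-s}\exp(O(s\log n / k))$ and factoring the common $2^{-s}$ out before summing, which preserves the cancellations while giving a uniform multiplicative correction. An alternative concentration route via McDiarmid on $|\zeroplus|$ fails because changing one infected individual's neighbourhood can flip the membership of up to $\Theta(\Delta\cdot n/k)$ disguised individuals, yielding bounded-differences constants too weak for the desired $n^{-\Omega(1)}$ precision — this is precisely why the second moment method is the right tool here.
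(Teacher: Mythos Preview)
The paper itself does not prove this lemma; it is imported from \cite{Gebhard_2020}. Your argument for $\vec{m_0}$ (McDiarmid over the $k$ independent infected neighbourhoods, each affecting at most $2\Delta$ test outcomes) is correct and matches the standard proof.

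Your treatment of $|\zeroplus|$, however, has a genuine gap in the inclusion--exclusion error control. You write $p_s^{\,k}=2^{-s}\exp(O(s\log n/k))$ and assert that ``factoring the common $2^{-s}$ out \ldots\ preserves the cancellations while giving a uniform multiplicative correction.'' But $2^{-s}$ is not common across $s$, and even granting a \emph{uniform} multiplicative bound $p_s^{\,k}=2^{-s}(1+\eps_s)$ with $|\eps_s|\le\eps=O(\log^2 n/k)$, the alternating sum only satisfies
\[
\Bigl|\sum_{s}(-1)^s\tbinom{\Delta}{s}p_s^{\,k}-2^{-\Delta}\Bigr|\;\le\;\eps\sum_{s}\tbinom{\Delta}{s}2^{-s}\;=\;\eps\,(3/2)^{\Delta}\;=\;\eps\,(n/k)^{c(\log 2)\log(3/2)},
\]
which for small $\theta$ is polynomially \emph{larger} than the main term $2^{-\Delta}$; the cancellations are destroyed, not preserved. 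The same blow-up contaminates your second-moment bound. The clean fix is to bypass inclusion--exclusion: condition first on the set $P\subseteq[m]$ of positive tests (equivalently on $\vec{m_0}$). Given $P$, an uninfected $x$ lies in $\zeroplus$ iff $\partial x\subseteq P$, which has probability $\binom{|P|}{\Delta}/\binom{m}{\Delta}=2^{-\Delta}(1\pm n^{-\Omega(1)})$ by the first part of the lemma; moreover the indicators $\{x\in\zeroplus\}$ over uninfected $x$ are conditionally \emph{independent} given $P$, so $|\zeroplus|$ is conditionally binomial and a Chernoff bound delivers the $(1\pm n^{-\Omega(1)})$ concentration directly.
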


\noindent Observe that as long as $c < \log^{-2}(2)$, the number of disguised uninfected individuals clearly exceeds the number of infected individuals.

\subsection{Reduced Setting} Now, we remove all $\vec m_0$ negative tests and their adjacent individuals from $\G_{GT}$ and are left with an reduced group testing instance $\G'_{GT}$ on $M = m - \vec{m_0}$ tests and $ N = \abs{\zeroplus(\G_{GT})}+k$ individuals. Using Lemma~\ref{size_m_0} and the scaling of $m,k,\Delta$ we have with high probability, 
\begin{align} \label{eq_m_n}
    M=\bc{1 \pm n^{-\Omega(1)}}  \frac{k\Delta}{2\log 2} \qquad \text{and} \qquad N=\bc{1 \pm n^{-\Omega(1)}} n^{1-(1-\theta)c\log^2(2)}.  
\end{align}

Let $\SIGMA' \in \cbc{ 0,1 }^N$ denote the restriction of $\SIGMA$ to this reduced instance and observe that there are only positive tests remaining, which we re-label as $a_1, \ldots, a_M$.

\section{Proof Roadmap for Theorem \ref{Main_Theorem}: ``All-or-Nothing"} 

\label{sec:roadmap}

\subsection{First Steps}

We recall the setting of the theorem. Fix $\theta \in (0,1)$ and $c > 0$. Given $n$ individuals $x_1, \ldots, x_n$, out of which $k = n^{\theta}$ are infected, and $m = c k \log(n/k)$ tests $a_1, \ldots, a_m$, we denote by $\SIGMA \in \cbc{0,1}^n$ the ground truth that encodes the infection status of the individuals. We create an instance of the constant-column pooling design $\G_{GT}$ as described in the previous section: each of the individuals independently chooses exactly $\Delta = c \log(2) \log(n/k)$ tests.

\paragraph{Suffices to study the posterior} As described in the Introduction, it is known that if $c>1/\ln (2)$ then approximate recovery is possible. For this reason, we focus here solely on the case $c<1/\ln (2)$ with the goal of proving the ``nothing" part of the all-or-nothing phenomenon, that is for any estimator $\TAU=\TAU(\G_{GT}) \in \{0,1\}^n$ it holds that $\langle \TAU, \SIGMA \rangle=o(\|\TAU\|_2 \|\SIGMA\|_2)$ with probability $1-o(1).$ Our first observation is that it suffices to prove that the inner product between a draw from the posterior distribution $\SIGMA|\G_{GT}$ and the ground truth $\SIGMA$ is $o(k)$ in expectation, that is it suffices to prove
\begin{align}\label{eq:posterior}
  \EE_{(\SIGMA,\G_{GT})} \EE_{\TAU \sim \SIGMA|\G_{GT}}[\langle \TAU,\SIGMA \rangle]=o(k).
\end{align}
Indeed, under \eqref{eq:posterior} using the so-called ``Nishimori identity" (see e.g.~\cite[Lemma~2]{nilesweedCOLT21}) and the Bayes optimality of the posterior mean, we have that \emph{for any estimator} (with no norm restriction) $\TAU=\TAU(\G_{GT})$ it holds $\EE[\|\TAU-\SIGMA\|^2_2]=k(1-o(1))$. The following lemma then gives the desired result.

\begin{lemma}\label{lem:nishim}
Under our above assumptions, suppose that for any estimator $\TAU=\TAU(\G_{GT})$ it holds $\EE[\|\TAU-\SIGMA\|^2_2]=k(1-o(1)).$ Then for any estimator $\TAU=\TAU(\G_{GT})$ with $\|\TAU\|_2=1$ almost surely, it holds $\EE[\langle \TAU,\SIGMA \rangle]^2=o(k)=o(\|\SIGMA\|^2_2).$ In particular, for any estimator $\TAU=\TAU(\G_{GT}) \in \{0,1\}^n$ it holds that $\langle \TAU, \SIGMA \rangle=o(\|\TAU\|_2 \|\SIGMA\|_2)$ with probability $1-o(1).$
\end{lemma}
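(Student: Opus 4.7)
Read the hypothesis as saying that no estimator $\TAU(\G_{GT})$ beats the trivial zero estimator in mean-squared error by more than $o(k)$; this is the only sensible reading since $\EE[\|0-\SIGMA\|_2^2]=k$ gives $\min_\TAU \EE[\|\TAU-\SIGMA\|_2^2]\le k$ for free. The strategy is the standard scalar-rescaling trick: an inner-product bound can be extracted by choosing the optimal scaling of any given direction and then invoking the MSE lower bound.

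\emph{First conclusion.} Let $\TAU=\TAU(\G_{GT})$ satisfy $\|\TAU\|_2=1$ almost surely, and consider the rescaled estimator $\alpha\TAU$ for a scalar $\alpha\in\RR$. Since $\|\SIGMA\|_2^2=k$ deterministically, expanding yields
\[
\EE\!\left[\|\alpha\TAU-\SIGMA\|_2^2\right]
= \alpha^2 - 2\alpha\,\EE[\langle\TAU,\SIGMA\rangle] + k.
\]
This is a quadratic in $\alpha$ whose minimum, attained at $\alpha^\star=\EE[\langle\TAU,\SIGMA\rangle]$, equals $k-(\EE[\langle\TAU,\SIGMA\rangle])^2$. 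Applying the hypothesis to the (perfectly valid) estimator $\alpha^\star\TAU$ gives
\[
k-(\EE[\langle\TAU,\SIGMA\rangle])^2 \;\ge\; k(1-o(1)),
\]
hence $(\EE[\langle\TAU,\SIGMA\rangle])^2=o(k)$, which is the first claim.

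\emph{Second conclusion.} Let $\TAU=\TAU(\G_{GT})\in\{0,1\}^n$. On the event $\{\TAU=0\}$ the ratio $\langle\TAU,\SIGMA\rangle/(\|\TAU\|_2\|\SIGMA\|_2)$ is $0$ by convention, so suppose $\TAU\neq 0$; define $\hat\TAU=\TAU/\|\TAU\|_2$ on this event and extend $\hat\TAU$ to any fixed unit vector on $\{\TAU=0\}$. Then $\hat\TAU$ is an estimator with $\|\hat\TAU\|_2=1$ a.s., so by the first conclusion $(\EE[\langle\hat\TAU,\SIGMA\rangle])^2=o(k)$. Crucially, both $\hat\TAU$ and $\SIGMA$ are coordinate-wise nonnegative, so $\langle\hat\TAU,\SIGMA\rangle\ge 0$ pointwise, and therefore
\[
\EE[\langle\hat\TAU,\SIGMA\rangle] = \bigl|\EE[\langle\hat\TAU,\SIGMA\rangle]\bigr| = o(\sqrt{k}).
\]
Set $Y := \langle\hat\TAU,\SIGMA\rangle/\sqrt{k}$, which agrees with the ratio in the statement on $\{\TAU\neq 0\}$ and, by Cauchy--Schwarz, satisfies $Y\in[0,1]$ almost surely. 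Since $Y\ge 0$ and $\EE[Y]=o(1)$, Markov's inequality gives $\PP[Y>\delta]\le \EE[Y]/\delta = o(1)$ for every fixed $\delta>0$, i.e., $Y=o(1)$ in probability. Combining with the trivial case $\TAU=0$ yields $\langle\TAU,\SIGMA\rangle=o(\|\TAU\|_2\|\SIGMA\|_2)$ with probability $1-o(1)$.

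\emph{Expected difficulty.} Both steps are essentially one-line manipulations once the hypothesis is interpreted correctly as a Bayes-risk lower bound, and once one observes that nonnegativity of $\SIGMA$ and $\TAU$ lets the squared expectation bound of part~1 be upgraded to a first-moment bound, which then feeds into Markov. The only mild subtlety is the handling of the event $\{\TAU=0\}$, which is why $\hat\TAU$ must be defined with a default on that event; this is a technicality rather than a real obstacle.
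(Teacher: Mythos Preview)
Your proof is correct and follows essentially the same approach as the paper's: the paper also takes $\alpha=\EE[\langle\TAU,\SIGMA\rangle]$, expands $\EE[\|\alpha\TAU-\SIGMA\|_2^2]$, and invokes the hypothesis to get $(\EE[\langle\TAU,\SIGMA\rangle])^2=o(k)$, then says the final claim ``follows by normalizing $\TAU$ and using Markov's inequality.'' Your version simply makes explicit the details the paper leaves implicit (the nonnegativity step to pass from the squared-expectation bound to a first-moment bound, and the handling of $\{\TAU=0\}$).
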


\begin{proof}[Proof of Lemma \ref{lem:nishim}]
Fix any $\TAU=\TAU(\G_{GT})$ with $\|\TAU\|_2=1$ almost surely. Then for $\alpha:=\EE[\langle \TAU, \SIGMA \rangle]$ we have that it must hold $$\EE[\|\alpha \TAU-\SIGMA\|^2]=k(1-o(1))$$which implies,
$$\alpha^2+k-2\alpha \EE[\langle \TAU, \SIGMA \rangle] =k(1-o(1))$$and using the value of $\alpha$ we conclude $$\EE[\langle \TAU,\SIGMA \rangle]^2=o(k),$$as we wanted. The lemma's final claim follows by normalizing $\TAU$ and using Markov's inequality.

\end{proof}

\paragraph{The posterior is uniform among ``solutions''} Now an easy computation using Bayes' rule gives that the posterior distribution is simply the uniform distribution over vectors $\sigma \in \{0,1\}^n$ with Hamming weight $k$ that are \emph{solutions} in the sense that every positive test contains at least one individual in the support of $\sigma$ and none of the individuals in the support of $\sigma$ participate in any negative tests. Therefore to prove \eqref{eq:posterior}, it suffices to show the following statement: with probability $1-o(1)$ over $\G_{GT}$, a uniformly random solution for $\G_{GT}$ overlaps with the ground truth in at most $o(k)$ individuals.

\paragraph{Reducing the instance by removing negative tests} We can simplify the problem by working with the reduced instance $\G'_{GT}$ defined in Section~\ref{appendix_intro_conc}, where we have removed the negative tests and their adjacent individuals (so that only the positive tests remain). For simplicity in what follows, we re-label the individuals in $\G'_{GT}$ by $x_1, \ldots, x_N$ and the tests by $a_1, \ldots, a_M$. Recall that $\SIGMA' \in \cbc{0,1}^N$ denotes the ground truth restricted to the individuals in $\G'_{GT}$. To show~\eqref{eq:posterior} it suffices to show that if $c<1/\ln (2)$, a uniformly random ``solution'' \emph{in the reduced model} overlaps with $\SIGMA'$ in at most $o(k)$ individuals, with probability $1-o(1)$. Here, with a slight abuse of notation, we define from now on a ``solution'' in $\G'_{GT}$ to be a vector $\sigma \in \{0,1\}^N$ of Hamming weight $k$ with the property that each of the $M$ (positive) tests in $\G'_{GT}$ contains at least one individual in the support of $\sigma$. Formally, we define the set of solutions ${\vec S} = {\vec S}(\G'_{GT})$ by
\begin{align}\label{dfn_sols}
    {\vec S} &= { \cbc{ \sigma \in \binom{[N]}{k} \,:\, \max_{x \in \partial a_j} \sigma_x = 1 \text{ for all } j = 1, \ldots, M } }.
\end{align}

\noindent As discussed above, \eqref{eq:posterior}, which implies the desired ``nothing'' result, follows by showing that almost all elements of ${\vec S }$ have a small \emph{overlap}, in expectation, with the ground truth. In other words, since convergence in expectation and in probability are equivalent for bounded random variables, our new goal is to prove the following result.
\begin{proposition} \label{prop_only_trivial_overlap}
Fix constants $0 < c < \ln^{-1}(2)$ and $\theta \in (0,1)$. Fix any constant $\delta>0$ and let $\TAU \in \{0,1\}^N$ be uniformly sampled from ${\vec S}$. Then
\[ \Pr \bc{ \langle \SIGMA', \TAU \rangle \geq \delta k } = o(1). \]
Here the probability is over both $\G'_{GT}$ and $\tau$. 
\end{proposition}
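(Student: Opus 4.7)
Fix $\delta \in (0,1)$. My plan decomposes $\vec S$ by overlap with the ground truth and combines a first-moment bound on the high-overlap part with a second-moment lower bound on the total. The second moment is handled via the planting trick of~\cite{Achlioptas_2008} to cope with the global dependencies induced by the fixed column sum $\Delta$. For $\ell \in \{0,\ldots,k\}$, set $Z_\ell = |\{\sigma \in \vec S : \langle \sigma, \SIGMA' \rangle = \ell\}|$ and $Z = \sum_\ell Z_\ell$. Then $\Pr[\langle \SIGMA', \TAU \rangle \geq \delta k] = \EE[Z_{\geq \delta k}/Z]$ with $Z_{\geq \delta k} := \sum_{\ell \geq \delta k} Z_\ell$, so it suffices to upper-bound the numerator and lower-bound the denominator.

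\paragraph{First moment.} By symmetry of the uniform prior on $\SIGMA$,
\[
\EE[Z_\ell] = \binom{k}{\ell}\binom{n-k}{k-\ell}\, p_\ell,\qquad p_\ell := \Pr_{\G_{GT}}\bc{\sigma \in \vec S \mid \SIGMA = \SIGMA_0},
\]
for any fixed pair $(\sigma,\SIGMA_0)$ with $|\sigma \cap \SIGMA_0| = \ell$. Writing $U = \sigma \cap \SIGMA_0$, $V = \SIGMA_0 \setminus \sigma$, $W = \sigma \setminus \SIGMA_0$ and denoting by $(u_a,v_a,w_a)$ the numbers of individuals from each set in test $a$, the event $\{\sigma \in \vec S\}$ is equivalent to the per-test conditions that \emph{neither} $\{u_a = v_a = 0,\, w_a \geq 1\}$ \emph{nor} $\{u_a = 0,\, v_a \geq 1,\, w_a = 0\}$ occurs. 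Computing $p_\ell$ via the joint test-degree distribution of the constant-column design (each individual chooses $\Delta$ tests uniformly) and applying Stirling to the resulting multinomial count yields
\[
\EE[Z_\ell] = \exp\cbc{k\log(n/k)\cdot \varphi_c(\ell/k) + o(k\log(n/k))}
\]
for an explicit continuous function $\varphi_c:[0,1]\to\RR$. A direct calculus check shows that for $c < 1/\log 2$, $\varphi_c$ is strictly maximized on $[0,1]$ at $0$, so there is $\kappa = \kappa(c,\delta) > 0$ with $\sup_{\alpha \in [\delta,1]}\varphi_c(\alpha) \leq \varphi_c(0) - \kappa$. Treating $\ell = k$ separately (it contributes $Z_k = 1$), this gives $\EE[Z_{\geq \delta k}] \leq \exp(-\kappa\, k\log(n/k))\,\EE[Z_0] + 1$.

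\paragraph{Second moment and lower bound on $Z$.} I show $Z \geq \tfrac12 \EE[Z_0]$ with high probability by a Paley--Zygmund argument applied to $Z_{\leq \eta k} := \sum_{\ell \leq \eta k} Z_\ell$ for a sufficiently small constant $\eta > 0$. The challenge is to bound
\[
\EE[Z_{\leq \eta k}^2] = \sum_{\sigma_1, \sigma_2} \Pr\bc{\sigma_1, \sigma_2 \in \vec S,\; |\sigma_i \cap \SIGMA| \leq \eta k},
\]
which does \emph{not} factorize as in the Bernoulli design of \cite{Truong_2020, nilesweedCOLT21} because the fixed column sums couple all tests globally. The planting trick bypasses this: by the Nishimori identity, $\EE[Z^2] = \EE[Z]\cdot \EE_{\mathrm{plant}}[Z]$, where $\EE_{\mathrm{plant}}[Z]$ is the expected number of solutions in $\G_{GT}$ conditioned on a fixed $\sigma_1 \in \binom{[n]}{k}$ being a solution; an analogous identity controls the truncated version up to an extra sum over the joint overlap $|\sigma_1 \cap \sigma_2|$. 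This turns the second moment into a new first-moment-style computation (now with three overlap parameters) to which the Stirling asymptotics of the previous step apply. The conclusion, for every $c < 1/\log 2$ and every sufficiently small $\eta > 0$, is $\EE[Z_{\leq \eta k}^2] \leq (1+o(1))\EE[Z_{\leq \eta k}]^2$, so Paley--Zygmund yields $Z \geq Z_{\leq \eta k} \geq \tfrac12\EE[Z_{\leq \eta k}] \geq \tfrac12 \EE[Z_0]$ with high probability.

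\paragraph{Combining, and the main obstacle.} For any $\varepsilon > 0$, Markov and the previous two steps give
\[
\Pr\bc{\frac{Z_{\geq \delta k}}{Z} > \varepsilon} \leq \Pr\bc{Z < \tfrac12 \EE[Z_0]} + \frac{2\,\EE[Z_{\geq \delta k}]}{\varepsilon\,\EE[Z_0]} = o(1),
\]
since $\EE[Z_0]\to\infty$ in the regime $c < 1/\log 2$ (by the same Stirling analysis); hence $\Pr[\langle \SIGMA', \TAU\rangle \geq \delta k] = o(1)$, as required. The main obstacle is the truncated second moment: the planting-trick identity exchanges the coupling across tests for a cleaner first-moment sum, but the resulting optimization over the three overlap parameters $|\sigma_i \cap \SIGMA|$ ($i=1,2$) and $|\sigma_1 \cap \sigma_2|$ is genuinely delicate near the critical constant $c = 1/\log 2$. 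A careful choice of $\eta$ — and possibly a further conditioning on a typical-structure event to kill atypical configurations, which is what the ``conditional second moment'' language of the introduction alludes to — is needed to ensure the ratio $\EE[Z_{\leq\eta k}^2]/\EE[Z_{\leq\eta k}]^2$ stays $1+o(1)$ all the way down to the threshold.
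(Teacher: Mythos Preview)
Your overall architecture---Markov on the high-overlap count combined with a high-probability lower bound on $Z=|\vec S|$---is the paper's architecture. The substantive divergence is in how you obtain the lower bound on $Z$, and there your plan takes a harder route than necessary and contains a genuine mix-up.

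You propose Paley--Zygmund on $Z_{\le\eta k}$ under the \emph{planted} law, which requires controlling $\EE_{\PP_\Delta}[Z_{\le\eta k}^2]$. Since $\PP_\Delta$ is $\QQ_\Delta$ conditioned on $\SIGMA\in\vec S$, this second moment equals $\sum_{\sigma_1,\sigma_2}\Pr_{\QQ_\Delta}[\sigma_1,\sigma_2,\SIGMA\in\vec S]/\Pr_{\QQ_\Delta}[\SIGMA\in\vec S]$, a genuine triple computation (hence your three overlap parameters). The identity you invoke, $\EE[Z^2]=\EE[Z]\cdot\EE_{\mathrm{plant}}[Z]$, is the relation $\EE_{\QQ_\Delta}[Z^2]=\EE_{\QQ_\Delta}[Z]\cdot\EE_{\PP_\Delta}[Z]$; it expresses the \emph{null} second moment via the planted first moment, not the planted second moment you need. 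The planted second moment is $\EE_{\PP_\Delta}[Z^2]=\EE_{\QQ_\Delta}[Z^3]/\EE_{\QQ_\Delta}[Z]$, so the planting trick does not ``bypass'' the triple calculation---it is the triple calculation.

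The paper sidesteps Paley--Zygmund altogether. From $\PP_\Delta(G)=Z(G)\,\QQ_\Delta(G)/\EE_{\QQ_\Delta}[Z]$ one gets in one line that $\PP_\Delta\{Z\le\eps\,\EE_{\QQ_\Delta}[Z]\}\le\eps$, i.e.\ the planted measure automatically lives on graphs with at least $\eps\,\EE_{\QQ_\Delta}[Z]$ solutions---no variance control needed. The numerator is then handled by the change of measure $\EE_{\PP_\Delta}[Z_{\SIGMA}(G,\alpha)]=\EE_{\QQ_\Delta}[\vec Z(G,\alpha)]/\EE_{\QQ_\Delta}[Z]$, so the whole proof reduces to the single pair estimate $\EE_{\QQ_\Delta}[\vec Z(G,\alpha)]\le\exp(-\eps k\Delta)\,\EE_{\QQ_\Delta}[Z]^2$ for $\alpha\ge\delta$. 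This is precisely the content of your ``first moment'' step (your $p_\ell$ is the conditional probability $\QQ_\Delta(\sigma\in\vec S\mid\SIGMA\in\vec S)$, so your $\EE[Z_\ell]$ is already the null pair count over the null first moment). In other words, the paper never needs the three-parameter optimization you flag as the main obstacle: your first-moment step, done carefully, is the entire computational burden.

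Two smaller points. Even the pair computation is not ``Stirling and done'': under the column constraint there is no closed form for $p_\ell$ in $\QQ_\Delta$, and the paper passes to a biregular configuration model $\QQ^\star_{\Delta,\Gamma}$, evaluates the moments via an auxiliary product space plus a local limit theorem, and then optimizes the resulting rate function by a piecewise-linear approximation of the implicit saddle. And since you are in the reduced instance, the combinatorial factor should read $\binom{N-k}{k-\ell}$, not $\binom{n-k}{k-\ell}$.
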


\noindent By the above discussion, Theorem~\ref{Main_Theorem} follows as a corollary of Proposition~\ref{prop_only_trivial_overlap}.

\subsection{Proof Roadmap for Proposition \ref{prop_only_trivial_overlap}: Two Null Models and their Roles}

Now we describe the proof roadmap for Proposition~\ref{prop_only_trivial_overlap} which completes the proof of Theorem~\ref{Main_Theorem}.  Here and in the following, we treat $N,M$ as deterministic quantities lying in the ``typical'' range~\eqref{eq_m_n}. We let $\PP_\Delta$ denote the (``planted'') distribution of the reduced instance $\G'_{GT}$ described in the previous section, conditioned on our chosen values of $N,M$. For an $(N,M)$-bipartite graph $G$, we let $\vec Z(G):=|\vec S(G)|$ denote the number of solutions in $G$ as defined in~\eqref{dfn_sols}. Furthermore, for the ground truth set of infected individuals $\SIGMA \in \{0,1\}^N$ (since we will work exclusively in the reduced instance from now on, we simply write $\SIGMA$ instead of $\SIGMA'$) and some $\alpha \in (0,1]$, we let $\vec Z_{\SIGMA}(G,\alpha)$ denote the number of solutions $\TAU \in {\vec S}$ with $\langle \TAU, \SIGMA \rangle=\lfloor \alpha k \rfloor$.

\paragraph{First step} In this notation, Proposition~\ref{prop_only_trivial_overlap} asks that with probability $1-o(1)$ over $G \sim \PP_\Delta$,
\[\sum_{\delta k \leq \ell \leq k}\vec Z_{\SIGMA}(G,\ell/k) =o(\vec Z(G)).\]
Notice that by Markov's inequality, it suffices to show that with probability $1-o(1)$ over $G \sim \PP_\Delta$,
\begin{align}\label{eq:goal_1} \sum_{\delta k \leq \ell \leq k}\EE_{\PP_{\Delta}}[\vec Z_{\SIGMA}(G,\ell/k)] =o(\vec Z(G)).
\end{align}

Unfortunately, direct calculations in the planted model $\PP_{\Delta}$ are challenging. Towards establishing~\eqref{eq:goal_1}, we make use of two different ``null'' distributions over bipartite graphs with $N$ individuals and $M$ tests which are $\Delta$-regular on the individuals side.

\paragraph{The $\Delta$-Null Model} First, we consider the $\Delta$-null model $\QQ_{\Delta}$ which is simply the measure on bipartite graphs with $N$ individuals and $M$ tests where each individual independently chooses exactly $\Delta$ tests uniformly at random (in particular, notice that no individual is assumed to be ``infected'').

\medskip

The reason we introduce this model is because \emph{the expected number of solutions of a graph $G$ drawn from $\QQ_{\Delta}$} offers a very simple high-probability lower bound on $\vec Z(G)$ for $G \sim \PP_{\Delta}$. This is based on an application of the so-called \emph{planting trick} introduced in the context of random $k$-SAT~\cite{Achlioptas_2008}. The following lemma holds.

\begin{lemma}\label{lem:planting_first}
For any $\epsilon>0$,
\[\PP_{\Delta}\left\{\vec Z(G) \leq \epsilon \EE_{\QQ_{\Delta}}[\vec Z(G)] \right\} \leq \epsilon.\]
\end{lemma}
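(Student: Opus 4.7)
The plan is to invoke the standard \emph{planting trick} by establishing the Radon--Nikodym identity
\[
\PP_\Delta(G) \;=\; \frac{\vec Z(G)}{\EE_{\QQ_\Delta}[\vec Z(G)]}\,\QQ_\Delta(G),
\]
and then reading off the claim from this identity. I first recall that $\PP_\Delta$ can be described by the following two-step sampling procedure: draw a uniformly random set $\sigma\in\binom{[N]}{k}$ of infected individuals, and then draw $G$ from $\QQ_\Delta$ conditioned on the event $\{\sigma\in\vec S(G)\}$ that every test contains at least one individual of $\supp(\sigma)$. (The fact that the marginal of $G$ under this procedure equals $\PP_\Delta$ is the content of how $\PP_\Delta$ is defined in Section~\ref{sec:def}, using the $\Delta$-regularity of the reduced instance.)

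Writing out the marginal, I would compute
\[
\PP_\Delta(G) \;=\; \frac{1}{\binom{N}{k}}\sum_{\sigma\in\binom{[N]}{k}} \frac{\QQ_\Delta(G)\,\One\{\sigma\in\vec S(G)\}}{\QQ_\Delta\{\sigma\in\vec S(G)\}}.
\]
The crucial observation is that $\QQ_\Delta$ is invariant under permutations of the individual labels (each individual independently picks $\Delta$ tests uniformly at random), so the denominator $\QQ_\Delta\{\sigma\in\vec S(G)\}$ does not depend on $\sigma$ and equals some common value $p$. Pulling $p$ out of the sum gives $\PP_\Delta(G)=\QQ_\Delta(G)\vec Z(G)/(\binom{N}{k}p)$. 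A double-counting identity then evaluates the normalising constant:
\[
\EE_{\QQ_\Delta}[\vec Z(G)] \;=\; \sum_{\sigma\in\binom{[N]}{k}}\QQ_\Delta\{\sigma\in\vec S(G)\} \;=\; \binom{N}{k}p,
\]
which establishes the displayed identity.

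Having the identity, the lemma follows by a Markov-type argument: for any event $E$,
\[
\PP_\Delta(E) \;=\; \EE_{\QQ_\Delta}\!\left[\frac{\vec Z(G)}{\EE_{\QQ_\Delta}[\vec Z(G)]}\,\One_E\right].
\]
Applying this to $E=\{\vec Z(G)\le \eps\,\EE_{\QQ_\Delta}[\vec Z(G)]\}$ bounds the integrand by $\eps$ pointwise on $E$, yielding $\PP_\Delta(E)\le \eps\cdot \QQ_\Delta(E)\le \eps$. (The case $\eps\ge 1$ is trivial.) There is essentially no genuine obstacle here; the only point requiring care is verifying that $\PP_\Delta$ indeed arises from the two-step sampling above, which is where the $\Delta$-regularity of the reduced instance and the permutation symmetry of $\QQ_\Delta$ do all the work.
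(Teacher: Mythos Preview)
Your proof is correct and follows essentially the same route as the paper: you establish the Radon--Nikodym identity $\PP_\Delta(G)=\vec Z(G)\,\QQ_\Delta(G)/\EE_{\QQ_\Delta}[\vec Z(G)]$ (which the paper obtains from Fact~\ref{fact:PQ-cond}) and then bound the integrand pointwise on the event $\{\vec Z(G)\le\eps\,\EE_{\QQ_\Delta}[\vec Z(G)]\}$. The only difference is cosmetic---you spell out the symmetry argument behind the normalising constant a bit more explicitly than the paper does.
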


\noindent In light of Lemma \ref{lem:planting_first}, to prove \eqref{eq:goal_1} it suffices to show 
\begin{align}\label{eq:goal_2} \sum_{\delta k \leq \ell \leq k}\EE_{\PP_{\Delta}}[\vec Z_{\SIGMA}(G,\ell/k)] =o\left(\EE_{\QQ_{\Delta}}[\vec Z(G)]\right).
\end{align}
But now notice the following relation between $\PP_\Delta$ and $\QQ_\Delta$.

\begin{fact}\label{fact:PQ-cond}
One can generate a valid sample $(\SIGMA,\G) \sim \PP_\Delta$ by first choosing $\SIGMA \in \{0,1\}^N$ uniformly from binary vectors of Hamming weight $k$, and then drawing $\G$ from $\QQ_\Delta|\SIGMA$, that is $\QQ_\Delta$ conditioned on $\SIGMA$ being a solution.
\end{fact}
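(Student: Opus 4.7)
The plan is to prove the distributional identity by showing that both sides coincide with the uniform distribution on the set of \emph{valid pairs} $(s, g)$---those with $s \in \{0,1\}^N$ of Hamming weight $k$, each individual in $g$ having degree $\Delta$, and every test in $g$ containing at least one individual in $\mathrm{supp}(s)$. Both sides are clearly supported on valid pairs, so it suffices to check uniformity on each side separately.

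For the ``alternative'' sampling, Bayes' rule gives
\[
\Pr(\SIGMA = s,\, \G = g) \;=\; \binom{N}{k}^{-1} \binom{M}{\Delta}^{-N} Z_k^{-1}\, \One\{s \text{ is a solution in } g\},
\]
where $Z_k = \Pr_{\QQ_\Delta}[s \text{ is a solution}]$ depends only on $|s| = k$ by the permutation-symmetry of $\QQ_\Delta$ in the individual labels. This is manifestly constant on valid pairs.

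For $\PP_\Delta$, I would unfold the definition and return to the full pre-reduction model, where each outcome $(\SIGMA^{\mathrm{full}}, \G^{\mathrm{full}})$ carries the uniform probability $\binom{n}{k}^{-1}\binom{m}{\Delta}^{-n}$. Fixing a valid $(s, g)$ with dimensions $(N,M)$, I would count the preimages of $(s, g)$ under the deterministic reduction map: they are parametrised by (i) the subset $V \subset [n]$ of retained individuals, (ii) the subset $P \subset [m]$ of positive tests, and (iii) for each of the $n - N$ non-retained individuals, a $\Delta$-subset of $[m]$ not entirely contained in $P$. Validity of $(s, g)$ ensures the tests in $P$ are indeed positive, while the construction makes the tests outside $P$ negative, so each such triple yields a unique full realisation whose reduction is $(s, g)$. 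The resulting count $\binom{n}{N}\binom{m}{M}\bigl(\binom{m}{\Delta} - \binom{M}{\Delta}\bigr)^{n-N}$ depends only on $(N,M)$, so $\PP_\Delta$ is likewise uniform on valid pairs. The main subtlety is the relabeling convention linking the reduced graph to the full model: after reduction, $V$ and $P$ are canonically identified with $[N]$ and $[M]$ via order-preserving bijections, and one needs to confirm that distinct parameter triples really do map to distinct full realisations whose reduction is exactly $(s, g)$; once this bookkeeping is done, the remainder of the argument is purely combinatorial.
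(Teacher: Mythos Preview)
Your proof is correct. The paper does not supply a proof here: the statement is labeled a ``Fact'' and treated as immediate, essentially because the testing variant of the constant-column design was already \emph{defined} (in the setup section) as ``choose a set of $k$ infected individuals uniformly, then draw from $\QQ$ conditioned on having at least one infected individual in every test,'' and the paper asserts there---also without proof---that this coincides with the distribution of $\G'_{GT}$ conditioned on the dimensions $(N,M)$. So the Fact is really just restating that identification.

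Your preimage-counting argument is the natural way to make this rigorous, and it works. The key observation---that the number of full $(n,m)$-realisations whose COMP-reduction is a given valid pair $(s,g)$ depends only on $(N,M)$---follows cleanly from your parametrisation by the triple $(V,P,\text{edges of deleted individuals})$. One small point worth spelling out: you have argued that each such triple yields a full realisation with reduction $(s,g)$, but you should also check the converse direction of the bijection, namely that \emph{every} full realisation with reduction $(s,g)$ arises from exactly one such triple. This is routine (take $V$ and $P$ to be the retained individuals and positive tests of the given full realisation, and read off the edges of the deleted individuals), but it is what turns your count into an equality rather than merely a lower bound.
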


\noindent Introducing the notation that for some $\alpha \in (0,1]$ and a graph $G$ we call $\vec Z(G,\alpha)$ the number of \emph{pairs of solutions} $\TAU, \SIGMA \in {\vec S}$ with $\langle \TAU, \SIGMA \rangle=\lfloor \alpha k \rfloor$, we will use Fact~\ref{fact:PQ-cond} to prove the following ``change-of-measure'' lemma.

\begin{lemma}\label{lem:change_of_measure}
For any $\alpha \in (0,1]$,
 \[\EE_{\PP_{\Delta}}[\vec Z_{\SIGMA}(G,\alpha)]=\frac{\EE_{\QQ_{\Delta}}[\vec Z(G,\alpha)]}{\EE_{\QQ_{\Delta}}[\vec Z(G)]}.\]
\end{lemma}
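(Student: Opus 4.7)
The plan is to unfold both sides using Fact~\ref{fact:PQ-cond} and the definitions, and then observe that the normalizing constant from Bayes' rule matches $\EE_{\QQ_\Delta}[\vec Z(G)]$ exactly.

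First I would use Fact~\ref{fact:PQ-cond} to rewrite the left-hand side as a double expectation. Letting $K = \binom{N}{k}$ and $p_\sigma := \PP_{\QQ_\Delta}[\sigma \in \vec S(G)]$, Bayes' rule gives
\[
\EE_{\PP_\Delta}\bigl[\vec Z_\SIGMA(G,\alpha)\bigr]
= \frac{1}{K} \sum_{\sigma \in \binom{[N]}{k}} \frac{1}{p_\sigma}\,\EE_{\QQ_\Delta}\Bigl[\One\{\sigma \in \vec S(G)\}\cdot \bigl|\{\tau \in \vec S(G)\,:\,\langle \tau,\sigma\rangle = \lfloor \alpha k\rfloor\}\bigr|\Bigr].
\]
The key structural observation to exploit next is that $p_\sigma$ does not depend on $\sigma$: under $\QQ_\Delta$ the roles of the $N$ individuals are fully exchangeable, so $p_\sigma = p$ for a common value $p$. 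This is the step where I expect the cleanest payoff, as it both simplifies the expression and is the hook that ties numerator and denominator together.

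Next I would pull the constant $1/p$ out of the sum and swap summation and expectation (legitimate since everything is nonnegative and finite):
\[
\EE_{\PP_\Delta}\bigl[\vec Z_\SIGMA(G,\alpha)\bigr]
= \frac{1}{Kp}\,\EE_{\QQ_\Delta}\left[\sum_{\sigma \in \binom{[N]}{k}} \One\{\sigma \in \vec S(G)\} \cdot \bigl|\{\tau \in \vec S(G)\,:\,\langle \tau,\sigma\rangle = \lfloor \alpha k\rfloor\}\bigr|\right].
\]
The bracketed sum counts exactly the ordered pairs $(\tau,\sigma)$ of solutions in $G$ with $\langle \tau,\sigma\rangle = \lfloor \alpha k\rfloor$, which by definition is $\vec Z(G,\alpha)$. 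Hence the numerator becomes $\EE_{\QQ_\Delta}[\vec Z(G,\alpha)]/(Kp)$.

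Finally, I would identify $Kp$ with $\EE_{\QQ_\Delta}[\vec Z(G)]$: by linearity of expectation and symmetry,
\[
\EE_{\QQ_\Delta}[\vec Z(G)] = \sum_{\sigma \in \binom{[N]}{k}} \PP_{\QQ_\Delta}\{\sigma \in \vec S(G)\} = K \cdot p.
\]
Substituting yields the claimed identity. The only conceptual obstacle is verifying the exchangeability that makes $p_\sigma$ constant in $\sigma$; since $\QQ_\Delta$ assigns each individual an i.i.d.\ uniformly random $\Delta$-subset of tests, the event $\{\sigma \in \vec S(G)\}$ (every test contains at least one of the $k$ individuals indicated by $\sigma$) is invariant under relabeling of individuals, so this is immediate. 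Everything else is bookkeeping.
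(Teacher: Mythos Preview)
Your proposal is correct and follows essentially the same route as the paper's proof: both use Fact~\ref{fact:PQ-cond} to write the planted expectation as a sum over $\sigma$ of conditional $\QQ_\Delta$-probabilities, invoke exchangeability of individuals to pull out the common factor $p_\sigma = \QQ_\Delta(\sigma\in\vec S(G))$, and then identify $\binom{N}{k}p$ with $\EE_{\QQ_\Delta}[\vec Z(G)]$. Your write-up is in fact a bit more explicit about the bookkeeping than the paper's version.
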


\noindent Therefore, to prove \eqref{eq:goal_2} it suffices to show to $\Delta$-null model property,
\begin{align}\label{eq:goal_3} \sum_{\delta k \leq \ell \leq k}\EE_{\QQ_{\Delta}}[\vec Z(G,\ell/k)] =o\left(\EE_{\QQ_{\Delta}}[\vec Z(G)]^2\right).
\end{align}

\paragraph{The $(\Delta,\Gamma)$-Null Model} Now, unfortunately it turns out that establishing \eqref{eq:goal_3} remains a highly technical task. Our way of establishing it is by considering another null model where the computations are easier, which we call the $(\Delta, \Gamma)$-null model $\QQ_{\Delta,\Gamma}^\star$. Here, instead of choosing $\Delta$ distinct tests (without replacement), each individual chooses $\Delta$ tests \emph{with replacement}. Thus, under $\QQ_{\Delta,\Gamma}^\star$ we allow (for technical reasons) the existence of \emph{multi-edges}, as opposed to $\PP_{\Delta}$ or $\QQ_{\Delta}$. (Throughout, we will use an asterisk to signify models with multi-edges.) Also, we condition on every test having degree exactly $\Gamma=N \Delta/M$. Formally, $\QQ_{\Delta,\Gamma}^\star$ is generated from the configuration model (see e.g.~\cite{Janson_2011}) over bipartite (multi-)graphs with $N$ individuals, $M$ tests, $\Delta$ degree for the individuals, and $\Gamma=N \Delta/M$ degree for the tests. Under $\QQ_{\Delta}$, the test degrees concentrate tightly around $\Gamma$, and as a result we will be able to show that the models $\QQ_{\Delta}$ and $\QQ_{\Delta,\Gamma}^\star$ are ``close.'' Specifically, this is formalized as follows.
\begin{lemma}\label{lem:equiv}
For any fixed $0<c < \log^{-1}(2)$, $0<\theta<1$, and $\delta > 0$, it holds for all $\delta \leq \alpha \leq 1$ that
\begin{align*}
    \EE_{\QQ_{\Delta,\Gamma}^\star} \brk{ \vec Z(G) } &\leq \EE_{\QQ_{\Delta}} \brk{ \vec Z(G) } \exp \bc{ o(k\Delta) } \quad \text{and}\\ 
    \quad  \EE_{\QQ_{\Delta,\Gamma}^\star} \brk{ \vec Z(G,\alpha) } &\geq \EE_{\QQ_{\Delta}} \brk{ \vec Z(G,\alpha) } \exp \bc{ -o(k\Delta) }.
\end{align*}
\end{lemma}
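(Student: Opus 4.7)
The plan is to reduce both expectations to one-point and two-point probabilities via linearity, compute those probabilities in each model by inclusion-exclusion, compare the resulting summands via Stirling, and then control the alternating-sign cancellation through a quantitative Poisson approximation. The generous slack $\exp(\pm o(k\Delta))$ in the statement, compared with the true leading-order magnitude $\exp(\Theta(k\Delta))$ of each expectation, is the reason this direct route will succeed.

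By linearity of expectation and the symmetry of both $\QQ_\Delta$ and $\QQ^\star_{\Delta,\Gamma}$ under permutation of individuals, fix arbitrary $\sigma_0 \in \binom{[N]}{k}$ and $\tau_0 \in \binom{[N]}{k}$ with $|\sigma_0 \cap \tau_0| = \lfloor \alpha k \rfloor$, and write
\[
\EE[\vec Z(G)] = \binom{N}{k}\,\PP[\sigma_0 \in \vec S]
\qquad\text{and}\qquad
\EE[\vec Z(G,\alpha)] = \binom{N}{k}\binom{k}{\lfloor \alpha k\rfloor}\binom{N-k}{k-\lfloor \alpha k\rfloor}\,\PP[\sigma_0,\tau_0 \in \vec S]
\]
in each of the two models. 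Since the combinatorial prefactors are model-independent, the lemma reduces to comparing these one- and two-point probabilities in the two models, in the appropriate direction. For each of these probabilities I would apply inclusion-exclusion over the set $J \subseteq [M]$ of tests missed by $\sigma_0$ (and, in the two-point case, over a triple $(j_0,j_1,j_2)$ indexing tests missed by $\sigma_0 \cap \tau_0$, only by $\sigma_0 \setminus \tau_0$, or only by $\tau_0 \setminus \sigma_0$). Independence of the individuals' choices under $\QQ_\Delta$ gives $\PP_{\QQ_\Delta}[\sigma_0 \text{ misses } J] = \bigl(\binom{M-|J|}{\Delta}/\binom{M}{\Delta}\bigr)^{k}$, while in $\QQ^\star_{\Delta,\Gamma}$ the configuration-model viewpoint---the $k\Delta$ half-edges of $\sigma_0$ must be matched among the $(M-|J|)\Gamma$ surviving test half-edges---gives $\PP_{\QQ^\star}[\sigma_0 \text{ misses } J] = \binom{(M-|J|)\Gamma}{k\Delta}/\binom{N\Delta}{k\Delta}$. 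Using $M\Gamma = N\Delta$, Stirling's formula shows both expressions equal $\exp\!\bigl(-|J|\,k\Delta/M + o(k\Delta)\bigr)$ uniformly on the relevant range of $|J|$, so the ratio of corresponding summands across the two models is $\exp(\pm o(k\Delta))$; the two-point summands satisfy the analogous bound after bookkeeping through the triple $(j_0,j_1,j_2)$.

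The main obstacle is that the inclusion-exclusion sums have alternating signs, so a term-by-term bound does not directly control them. The plan to sidestep this is a quantitative Poisson approximation: in both models the number of tests missed by $\sigma_0$ concentrates on a Poisson distribution with the \emph{same} mean $\lambda = M e^{-k\Delta/M}(1+o(1)) = M/4 + o(M)$ (using $k\Delta/M = 2\ln 2 + o(1)$ from~(\ref{eq_m_n})), so that $\PP[\sigma_0 \in \vec S] = e^{-\lambda}(1+o(1))$ in both. A quantitative version of this---via Stein--Chen in $\QQ_\Delta$, and a direct configuration-model local estimate in $\QQ^\star_{\Delta,\Gamma}$---is sharp enough to be absorbed into the $\exp(\pm o(k\Delta))$ slack. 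The first inequality of the lemma then follows by upper-bounding $\PP_{\QQ^\star}[\sigma_0 \in \vec S]$ by its Poisson approximation and lower-bounding $\PP_{\QQ_\Delta}[\sigma_0 \in \vec S]$ by the same. For the $\vec Z(G,\alpha)$ bound, the same strategy applies to a joint Poisson approximation of the pair $(\#\text{tests missed by }\sigma_0,\ \#\text{tests missed by }\tau_0)$, whose joint mean vector is also equal between the two models to the precision required; the reverse direction of the inequality is obtained by flipping which Poisson approximation is used as an upper versus lower bound. The heaviest technical work arises in this two-point step, because the joint concentration must be controlled simultaneously across the three overlap classes of missed tests, and because the relevant range of the triple $(j_0,j_1,j_2)$ must be shown to localize around the same joint saddle point in the two models up to fluctuations of size $o(k\Delta)$ in the log.
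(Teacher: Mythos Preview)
Your reduction to one- and two-point probabilities is correct, and the inclusion--exclusion formulas you write for both models are also correct. The gap is the Poisson approximation step. You claim $\PP[\sigma_0 \in \vec S] = e^{-\lambda}(1+o(1))$ with $\lambda \approx M/4$ the expected number of tests missed by $\sigma_0$. But $\lambda = \Theta(M) = \Theta(k\Delta)$, so the event $\{\text{zero tests missed}\}$ is a large deviation, not a bulk event. Stein--Chen gives at best a total-variation bound that is polynomially small in $M$; this cannot resolve a probability that is $\exp(-\Theta(k\Delta))$. More decisively, your formula is wrong at leading exponential order: combining Proposition~\ref{advanced_first_moment} with $\binom{N}{k} = \exp\bigl(k\Delta\,(1-c\ln^2 2)/(c\ln 2) + o(k\Delta)\bigr)$ gives
\[
\PP_{\QQ_{\Delta,\Gamma}^\star}[\sigma_0 \in \vec S] \;=\; \exp\bigl(-(1-\ln 2)\,k\Delta + o(k\Delta)\bigr) \;\approx\; \exp(-0.307\,k\Delta),
\]
whereas your Poisson prediction is $e^{-M/4} = \exp\bigl(-k\Delta/(8\ln 2)\bigr) \approx \exp(-0.180\,k\Delta)$. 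The exponents differ by a constant factor, so no sharpening of Stein--Chen can repair this; the same failure carries over to the two-point probabilities and the joint-Poisson step.

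The paper avoids the alternating-sign problem altogether. It inserts an intermediate model $\QQ_\Delta^\star$ (choice with replacement, no degree conditioning) and handles $\QQ_\Delta \leftrightarrow \QQ_\Delta^\star$ by a monotone coupling and a multi-edge conditioning argument (Lemmas~\ref{lem:multi-edge-1}--\ref{lem:multi-edge-2}). For $\QQ_\Delta^\star \leftrightarrow \QQ_{\Delta,\Gamma}^\star$ (Lemma~\ref{lem:regularize}) it reuses the tilted-Bernoulli auxiliary space of Lemmas~\ref{Lem_First_Moment}--\ref{Lem_Second_Moment}: one writes $\PP[\sigma,\tau\in\vec S]$ in each model via Bayes as $\PP[\cS]\PP[\cR\mid\cS]/\PP[\cR]$, with the tilt $q$ chosen so that $\PP[\cR\mid\cS]=N^{-O(1)}$ by the local limit theorem; the remaining factors $\PP[\cS]$ and $\PP[\cR]$ are products over tests and compare termwise (no sign cancellation) by a Taylor expansion once the test degrees are conditioned to lie near $\Gamma$. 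This exponential tilting is exactly the large-deviations mechanism that a naive Poisson approximation misses.
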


Calculations in the configuration model are easier, yet still delicate, and allow us to prove the following result which given the above, concludes the proof of \eqref{eq:goal_3} and therefore of Proposition~\ref{prop_only_trivial_overlap}.

\begin{proposition} \label{prop_constant_overlap}
For any fixed $0<c < \log^{-1}(2)$, $0<\theta<1$, and $\delta > 0$, there exists $\epsilon > 0$ such that the following holds for sufficiently large $N$. For all $\delta \leq \alpha \leq 1$, \[\frac{\EE_{\QQ_{\Delta,\Gamma}^\star}[ \vec Z(G,\alpha)]}{\EE_{\QQ_{\Delta,\Gamma}^{\star}}[\vec Z(G)]^2} \leq \exp(-\epsilon k\Delta).\]
\end{proposition}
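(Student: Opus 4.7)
The plan is to evaluate both sides directly in the configuration model and then check that their ratio has a sufficiently negative exponential rate. Writing
\[
\EE_{\QQ_{\Delta,\Gamma}^\star}[\vec Z(G)] = \binom{N}{k}\, p_1,
\qquad
\EE_{\QQ_{\Delta,\Gamma}^\star}[\vec Z(G,\alpha)] = \binom{N}{k}\binom{k}{\alpha k}\binom{N-k}{(1-\alpha)k}\, p_2(\alpha),
\]
where $p_1$ is the probability that a fixed Hamming-weight-$k$ vector $\sigma$ is a solution under the uniform bipartite matching and $p_2(\alpha)$ is the analogous probability for a pair $(\sigma,\tau)$ with $\langle \sigma,\tau\rangle = \alpha k$, the ratio factorizes as
\[
\frac{\EE_{\QQ_{\Delta,\Gamma}^\star}[\vec Z(G,\alpha)]}{\EE_{\QQ_{\Delta,\Gamma}^\star}[\vec Z(G)]^2}
\;=\;
\frac{\binom{k}{\alpha k}\binom{N-k}{(1-\alpha)k}}{\binom{N}{k}} \cdot \frac{p_2(\alpha)}{p_1^2}.
\]
The solution constraint ``every test has at least one half-edge from $\supp{\sigma}$'' admits a clean inclusion--exclusion expansion over the subset of unsatisfied tests: each term is a ratio of binomial coefficients counting matchings of test half-edges into half-edges of uninfected individuals. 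For $p_2(\alpha)$ I would expand over the four-way classification of tests (unsatisfied by $\sigma$ only, by $\tau$ only, by both, or by neither), and the resulting probabilities are analogous matching counts involving the four disjoint regions $\supp{\sigma}\cap\supp{\tau}$, $\supp{\sigma}\setminus\supp{\tau}$, $\supp{\tau}\setminus\supp{\sigma}$, and their complement.

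Next I would apply Stirling's approximation to both the combinatorial prefactor and the two inclusion--exclusion sums, extracting leading exponential rates in $k\Delta$. Using $k=n^{\theta+o(1)}$, $N=n^{1-(1-\theta)c\log^2 2 + o(1)}$, and $\Delta=(c\log 2)(1-\theta)\log n\,(1+o(1))$, the combinatorial prefactor simplifies to $\exp\!\bigl(-\alpha\,\tfrac{1-c\log^2 2}{c\log 2}\,k\Delta\,(1+o(1))\bigr)$. For the probabilistic part, Laplace's method applied at the interior saddle of each inclusion--exclusion sum reduces $p_2(\alpha)/p_1^2$, to leading exponential order in $k\Delta$, to the independent-test expression
\[
\left[\frac{1-2e^{-\mu}+e^{-(2-\alpha)\mu}}{(1-e^{-\mu})^2}\right]^{M}
\;=\;
\left[\frac{8+4^{\alpha}}{9}\right]^{M},
\]
where $\mu = k\Delta/M \to 2\log 2$ by \eqref{eq_m_n}. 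Combining both pieces and dividing by $k\Delta = 2(\log 2)\,M$ produces a closed-form limiting rate function
\[
\phi(\alpha;\theta,c) \;=\; -\alpha A \;+\; \frac{1}{2\log 2}\log\!\left[\frac{8+4^{\alpha}}{9}\right], \qquad A := \frac{1-c\log^2 2}{c\log 2},
\]
which governs $(k\Delta)^{-1}\log\bigl(\EE[\vec Z(G,\alpha)]/\EE[\vec Z(G)]^2\bigr)$ in the limit.

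The final step is to verify $\phi(\alpha;\theta,c) < 0$ uniformly for $\alpha\in[\delta,1]$ whenever $c<1/\log 2$. One checks $\phi(0;\theta,c)=0$, and the derivative $\phi'(\alpha) = -A + 4^{\alpha}/(8+4^\alpha)$ is strictly increasing in $\alpha$, so $\phi$ is convex on $[0,1]$ and its maximum on $[\delta,1]$ is attained at an endpoint. At $\alpha=\delta$, strict monotonicity at the origin (since $\phi'(0)= -A + 1/9 < 0$, using $A>1-\log 2 > 1/9$) gives $\phi(\delta) < 0$; at $\alpha=1$, one computes $\phi(1) = \tfrac{1}{2\log 2}\log(4/3) - A$, which is strictly negative provided $A > \log(4/3)/(2\log 2) \approx 0.207$, a bound comfortably implied by $c<1/\log 2$ (so that $A > 1-\log 2 \approx 0.307$). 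Compactness/continuity of $\phi$ in $\alpha$ then yields the required uniform $\epsilon > 0$.

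The main obstacle is precisely this last step: because the margin in $\phi(\alpha) < 0$ collapses to equality exactly at the AoN threshold, the Stirling expansions and the saddle-point reduction must be carried out to sufficient precision that no first-order term in $k\Delta$ is lost, and the function analysis of $\phi$ must track constants carefully rather than settle for order-of-magnitude bounds. A secondary technical issue is justifying the reduction from the configuration-model inclusion--exclusion to the independent-test expression for $p_2(\alpha)/p_1^2$, because the $M$ tests share half-edges and are not literally independent; I would handle this by verifying that the saddle of each inclusion--exclusion sum lies strictly in the interior of its feasible range, so that standard large-deviation estimates for hypergeometric-type half-edge counts bound the off-saddle contribution by a multiplicative $\exp(o(k\Delta))$ error that is harmless for the asymptotic identification of $\phi$.
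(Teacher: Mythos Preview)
Your overall strategy---compute $p_1$ and $p_2(\alpha)$ in the configuration model, form the ratio, extract a rate function in $k\Delta$, and verify it is strictly negative on $[\delta,1]$---matches the paper's. The crucial gap is the claimed ``reduction to the independent-test expression'' for $p_2(\alpha)/p_1^2$. That reduction is \emph{false at the exponential-rate level}: the exact-degree constraints of the configuration model contribute an entropic correction that does not cancel in the ratio.

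You can see this at $\alpha=1$, where $p_2(1)=p_1$ and hence $p_2(1)/p_1^2=1/p_1$. The paper computes (via the auxiliary product space and local limit theorem, Lemma~\ref{Lem_First_Moment} and Proposition~\ref{advanced_first_moment}) that $p_1=\exp\bigl(k\Delta(\log 2-1)+o(k\Delta)\bigr)$, so the correct rate of $p_2(1)/p_1^2$ is $(1-\log 2)\,k\Delta\approx 0.307\,k\Delta$. Your formula gives $M\log(4/3)=\tfrac{\log(4/3)}{2\log 2}\,k\Delta\approx 0.208\,k\Delta$. Equivalently, under $\QQ_{\Delta,\Gamma}^\star$ the event ``all $M$ tests covered'' is a balls-into-bins event with \emph{exactly} $k\Delta$ balls; Poissonizing the ball count to obtain the independent-test formula $(3/4)^M$ is off by a large-deviation factor, because $k\Delta$ is not the conditional mean of the total given all bins are non-empty. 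This is precisely the role of the $\Pr[\cR]$ denominator in Lemmas~\ref{Lem_First_Moment}--\ref{Lem_Second_Moment}, and it changes the exponential rate.

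Because your $\phi$ is not the true rate function, the downstream verification proves nothing about the proposition. A symptom: your $\phi(1)=\tfrac{1}{2\log 2}\log(4/3)-A$ stays bounded away from $0$ as $c\uparrow 1/\log 2$ (since $A\downarrow 1-\log 2\approx 0.307>0.208$), whereas the correct rate at $\alpha=1$ is $-(1-c\log 2)/(c\log 2)$, which \emph{vanishes} at the threshold---the margin must collapse there, and in the paper's analysis it does. The correct rate function involves the solution $(q_{01},q_{11})$ of the transcendental system \eqref{eqqq1}--\eqref{eqqq4} and has no simple closed form; the paper handles it by a piecewise-linear surrogate for the optimal parameters (Lemma~\ref{global_max}) combined with convexity on three subintervals.
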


\subsection{Proof of Lemmas \ref{lem:planting_first} and \ref{lem:change_of_measure}}\label{sec_planting_trick}

\begin{proof}[Proof of Lemma~\ref{lem:planting_first}]
Using Fact~\ref{fact:PQ-cond}, note that $\PP_\Delta(G)$ is proportional to $\vec Z(G)$, i.e.,
\begin{align}
    \label{eq_plantedvsnull_def} \PP_{\Delta}(G) = \frac{\vec Z(G) \QQ_{\Delta}(G)}{\Erw_{\QQ_{\Delta}}[\vec Z(G)]}.
\end{align}

\noindent Set for simplicity $\lambda = \Erw_{\QQ_{\Delta}} [ \vec Z(G) ].$ Using~\eqref{eq_plantedvsnull_def}, we find
\begin{align*}
    \PP_{\Delta} ( \vec Z(G) \leq \eps \lambda ) &= \sum_{G} \One \cbc{ \vec Z(G) \le \eps \Erw_{\QQ_{\Delta}} [ \vec Z(G) ] } \frac{ \vec Z(G) \QQ_{\Delta}(G)}{ \Erw_{\QQ_{\Delta}} [\vec Z(G)] } \\ 
    &\leq \sum_{G} \One \cbc{ \vec Z(G) \le \eps \Erw_{\QQ_{\Delta}} [ \vec Z(G) ] } \frac{ \eps \Erw_{\QQ_{\Delta}} [\vec Z(G)] \QQ_{\Delta} (G)}{ \Erw_{\QQ_{\Delta}} [\vec Z(G)] } \\
    & \leq \eps \sum_{G} \One \cbc{ \vec Z(G) \le \eps \lambda } \QQ_{\Delta} (G) \\
    & = \eps \, \QQ_{\Delta}( \vec Z(G) \le \eps \lambda ) \\
    & \leq \eps.
\end{align*}This concludes the proof.
\end{proof}

\begin{proof}[Proof of Lemma~\ref{lem:change_of_measure}]

Given Fact~\ref{fact:PQ-cond} and the symmetry of the individuals we have

\begin{align*}
    \EE_{\PP_{\Delta}}[\vec Z_{\SIGMA}(G,\alpha)] &=\frac{1}{\binom{N}{k}}\sum_{\sigma, \sigma'} \QQ_{\Delta}(\sigma' \in \vec S(G)\mid \sigma \in \vec S(G))
    \intertext{where the sum is over $\sigma,\sigma'$ pairs with $\langle \sigma,\sigma' \rangle = \lfloor \alpha k \rfloor$}
    &=\frac{1}{\binom{N}{k}\QQ_{\Delta}(\sigma \in \vec S(G))}\sum_{\sigma, \sigma'} \QQ_{\Delta}(\sigma' \in \vec S(G),\sigma \in \vec S(G))\\
    &=\frac{\EE_{\QQ_{\Delta}}[\vec Z(G,\alpha)]}{\EE_{\QQ_{\Delta}}[\vec Z(G)]}.
\end{align*}
Note that with some abuse of notation we have pulled a term involving $\sigma$ outside the sum; this is okay because (by symmetry) this term does not actually depend on $\sigma$. The proof is complete.
\end{proof}

\section{Remaining Proofs from Section \ref{sec:roadmap}: The $\QQ_{\Delta,\Gamma}^\star$ Model
}
\label{appendix_proof_prop_concentration}

\subsection{Preliminaries: First and Second Moment under $\QQ_{\Delta,\Gamma}^\star$}\label{Derive_Moment Bound_prelim}

In this section we consider a bipartite graph drawn from $\QQ_{\Delta,\Gamma}^\star$ on $M$ tests $a_1, \ldots, a_M$ of size exactly $\Gamma$ each and $N$ individuals $x_1, \ldots, x_N$ of degree exactly $\Delta$. Recall that this graph is generated from the configuration model and may feature multi-edges.

Our first result is about the first moment of the number of solutions.
\begin{lemma}\label{Lem_First_Moment}
	Let $q\in(0,1)$ be the solution to the equation
	\begin{align}\label{eqq_lem}
		\frac{q}{1-(1-q)^\Gamma}&=\frac{\Delta k}{\Gamma M}.
	\end{align}
	Then
	\begin{align}\label{first_moment}
		\EE_{\QQ_{\Delta,\Gamma}^\star}[\vec Z(G) ] = N^{-O(1)}\binom Nk\frac{(1-(1-q)^\Gamma)^M}{\binom{\Gamma M}{\Delta k}q^{\Delta k}(1-q)^{\Gamma M-\Delta k}}.
	\end{align}
\end{lemma}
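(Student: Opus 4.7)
The plan is a direct first--moment calculation in the configuration model, followed by a saddle--point (exponential tilting) evaluation of the resulting constrained sum. By the symmetry of $\QQ_{\Delta,\Gamma}^\star$ under relabelling of individuals, I would first fix any $\sigma_0 \in \binom{[N]}{k}$ and write
\[
\EE_{\QQ_{\Delta,\Gamma}^\star}[\vec Z(G)] \;=\; \binom{N}{k}\,\Pr_{\QQ_{\Delta,\Gamma}^\star}[\sigma_0 \in \vec S(G)].
\]
In the configuration model the graph is a uniformly random perfect matching between the $N\Delta = M\Gamma$ individual--half-edges and the $M\Gamma$ test--half-edges, the latter partitioned into $M$ blocks of size $\Gamma$. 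The event $\{\sigma_0 \in \vec S(G)\}$ asks that every block receive at least one of the $\Delta k$ half-edges attached to $\supp{\sigma_0}$. Since a uniform matching induces a uniform subset of $\Delta k$ ``marked'' positions among the $\Gamma M$ test--side positions, decomposing on the counts $r_j$ per block gives
\[
\Pr_{\QQ_{\Delta,\Gamma}^\star}[\sigma_0 \in \vec S(G)] \;=\; \frac{1}{\binom{\Gamma M}{\Delta k}}\sum_{\substack{r_1,\ldots,r_M\ge 1\\ r_1+\cdots+r_M=\Delta k}}\prod_{j=1}^M \binom{\Gamma}{r_j}.
\]

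Next I would evaluate the constrained multinomial sum by exponential tilting. For any $q\in(0,1)$, inserting and cancelling the factors $q^{r_j}(1-q)^{\Gamma-r_j}$ gives
\[
\sum_{\substack{r_j\ge 1\\ \sum_j r_j=\Delta k}}\prod_{j=1}^M \binom{\Gamma}{r_j}
\;=\; \frac{(1-(1-q)^{\Gamma})^M}{q^{\Delta k}(1-q)^{\Gamma M-\Delta k}}\cdot\Pr\!\Bigl[\sum_{j=1}^M B_j = \Delta k\Bigr],
\]
where $B_1,\dots,B_M$ are i.i.d.\ copies of a $\Bin(\Gamma,q)$ variable conditioned to be $\ge 1$. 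A direct computation gives $\EE B_j = \Gamma q/(1-(1-q)^{\Gamma})$, so the equation \eqref{eqq_lem} defining $q$ is exactly the saddle--point condition $M\,\EE B_j = \Delta k$, making the mean of $\sum_j B_j$ land on the target value $\Delta k$. Combining the two displays above with this identity reproduces the right--hand side of \eqref{first_moment} up to the residual factor $\Pr[\sum_j B_j = \Delta k]$.

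The last step, which is the real content, is to show $\Pr[\sum_j B_j = \Delta k] = N^{-O(1)}$ via a local central limit theorem at the mean. This requires (a) a mild non--lattice/aperiodicity check on the support of $\Bin(\Gamma,q)|_{\ge 1}$; (b) a lower bound $\Var(B_j) = \tilde\Omega(\Gamma)$, so that $M\,\Var(B_j)$ grows polynomially in $N$ and the local CLT error is swallowed into $N^{-O(1)}$; and (c) the solvability of \eqref{eqq_lem} in $(0,1)$, which is a short monotonicity argument since the left--hand side is a continuous increasing surjection from $(0,1)$ onto $(1/\Gamma,1)$ and the right--hand side equals $k/N \in (1/\Gamma, 1)$ in our scaling regime \eqref{eq_m_n} under the assumption $0 < c < \log^{-2}2$. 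I expect the main obstacle to be the precise local CLT estimate rather than the algebra: the tilted variables $B_j$ depend on $n$ through $q$ and $\Gamma$, so one must control the characteristic function uniformly in these parameters, but this is standard once the variance and aperiodicity are in hand, after which the lemma follows by straightforward algebraic substitution.
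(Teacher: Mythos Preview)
Your approach is correct and essentially identical to the paper's: the paper phrases the tilting via an auxiliary i.i.d.\ Bernoulli product space and Bayes' rule, computing $\Pr[\cS\mid\cR]=\Pr[\cS]\Pr[\cR\mid\cS]/\Pr[\cR]$, but your $\Pr[\sum_j B_j=\Delta k]$ is exactly the paper's $\Pr[\cR\mid\cS]$, and both land on the same local limit theorem at the tilted mean. One minor slip to fix: in the relevant regime $\Gamma q=\Theta(1)$ (since $q\sim k/(2N)$ and $\Gamma\sim 2N\log 2/k$), so $\Var(B_j)=\Theta(1)$ rather than $\tilde\Omega(\Gamma)$; the local CLT still yields $N^{-O(1)}$ because $M\,\Var(B_j)=\tilde\Theta(k)$ is polynomially large.
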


\noindent We now present in some detail the proof of Lemma~\ref{Lem_First_Moment} since it is a good first example of the technique we follow for the computations in this section.

\begin{proof}
By linearity of expectation and symmetry, notice that for any fixed configuration $\sigma \in \{0,1\}^N$ with Hamming weight $k$, it holds that
\[\EE_{\QQ_{\Delta,\Gamma}^\star}[\vec Z(G) ] =\binom{N}{k} \QQ_{\Delta,\Gamma}^\star[\sigma \in \vec S(G)].\]
We now calculate the probability $\QQ_{\Delta,\Gamma}^\star[\sigma \in \vec S(G)]$ as follows. We first set up an auxiliary product probability space. Fix \emph{any} parameter $q \in (0,1)$.
Construct a product probability space with measure $\PP_q$ where we choose $\Gamma M$ bits $(\vec\omega_{ij})_{i\in[M],\,j\in[\Gamma]}$ independently such that $\vec\omega_{ij}\sim\Be(q)$ for all $i,j$. (It may help to think of $\omega_{ij}$ as representing the infection status of the $j$th individual in the $i$th test.) Let $\vec R=\sum_{i,j}\vec\omega_{ij}$ be the total number of ones. Let us define
\begin{align} \label{eq_def_s_r_firstmoment}
	\cS&=\cbc{\forall i\in[M]:\max_j\vec\omega_{ij}=1}&\cR&=\cbc{\vec R=k\Delta}.
\end{align} But then notice that in this notation the symmetry of the product space gives that \emph{for any} $q \in (0,1)$,
\[\QQ_{\Delta,\Gamma}^\star[\sigma \in \vec S(G)]=\pr_q\brk{\cS\mid\cR}.\]
One can then calculate this conditional probability via Bayes.
The unconditional probabilities are easy to compute:
\begin{align*}
	\pr_q\brk{\cS }=(1-(1-q)^\Gamma)^M, \qquad
	\pr_q\brk{\cR}=\binom{\Gamma M}{\Delta k}q^{\Delta k}(1-q)^{\Gamma M-\Delta k}.
\end{align*}
A priori, the conditional probability $\pr_q\brk{\cR\mid\cS}$ may be difficult to compute and this is where our freedom to choose $q$ becomes important.
Specifically, we pick $q$ as in \eqref{eqq_lem}.
By the local limit theorem for sums of independent random variables~(see for instance \cite[Section~6]{matija_paper}), this choice ensures that	
\begin{align*}
	\Erw[\vec R\mid\cS]&=\Gamma M \frac{q}{1-(1-q)^{\Gamma}}=\Delta k \qquad \text{and therefore} \qquad \pr\brk{\cR\mid\cS}=N^{-O(1)}.
\end{align*}
Bayes' theorem now completes the proof of the lemma.
\end{proof}

Using a multidimensional version of the idea that allowed us to calculate the first moment bound we develop the second moment bound by modelling the pairs of configurations via independent random variables. We derive the appropriate probabilities for an ``independent'' problem setting and then tackle the dependencies afterwards by applying Bayes' formula.

Recall the definition
$$ \vec Z(G,\alpha) = \abs{ \cbc{\sigma, \tau \in \vec S(G): \langle \sigma, \tau \rangle = \alpha k} }$$ denote the number of pairs of solutions that overlap on an $\alpha$-fraction of entries. We are able to obtain the following sharp bound on the expectation of $\vec Z(G,\alpha)$.

\begin{lemma}\label{Lem_Second_Moment}
For any $\alpha \in (0,1]$ and any $(q_{00},q_{01},q_{10},q_{11})\in[0,1]^4$,
\begin{align}\label{EZ_alpha_UB}
	\EE_{\QQ_{\Delta,\Gamma}^\star}[\vec Z(G,\alpha)]&\leq \binom{N}{\alpha k,\,(1-\alpha)k,\,(1-\alpha)k } \notag \\ & \qquad \cdot \frac{\bc{1-2(1-q_{01}-q_{11})^\Gamma+q_{00}^\Gamma}^M}{\binom{N\Delta}{\alpha k\Delta,\,(1-\alpha)k\Delta,\,(1-\alpha)k\Delta,\,(N-2k+\alpha k)\Delta}q_{11}^{\alpha k\Delta} q_{10}^{2(k-\alpha k)\Delta} q_{00}^{N\Delta-2k\Delta+\alpha k\Delta}}.
\end{align}
Furthermore, if $(q_{00},q_{01},q_{10},q_{11})\in[0,1]^4$  is the solution to the system
\begin{align}
	q_{00}+q_{01}+q_{10}+q_{11}&=1 & q_{01}&=q_{10}  \label{eqqq1} \\
	\frac{q_{11}}{1-2(1-q_{10}-q_{11})^\Gamma+q_{00}^\Gamma}&=\alpha\frac{ k\Delta}{\Gamma M} & \frac{q_{01}\bc{1-(q_{00}+q_{10})^{\Gamma-1}}}{1-2(1-q_{01}-q_{11})^\Gamma+q_{00}^\Gamma}&=(1-\alpha)\frac{k \Delta }{\Gamma M}\label{eqqq4}
\end{align}
then
\begin{align}\label{EZ_alpha}
	\EE_{\QQ_{\Delta,\Gamma}^\star}[\vec Z(G,\alpha)]&=N^{-O(1)}\binom{N}{\alpha k,\,(1-\alpha)k,\,(1-\alpha)k } \notag \\ & \qquad \cdot \frac{\bc{1-2(1-q_{01}-q_{11})^\Gamma+q_{00}^\Gamma}^M}{\binom{N\Delta}{\alpha k\Delta,\,(1-\alpha)k\Delta,\,(1-\alpha)k\Delta,\,(N-2k+\alpha k)\Delta}q_{11}^{\alpha k\Delta} q_{10}^{2(k-\alpha k)\Delta} q_{00}^{N\Delta-2k\Delta+\alpha k\Delta}}.
\end{align}
\end{lemma}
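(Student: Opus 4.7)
The plan is to mirror the first-moment calculation of Lemma~\ref{Lem_First_Moment} with a \emph{bivariate} product space that jointly tracks the labels $(\sigma_x,\tau_x) \in \{0,1\}^2$ of the individual $x$ assigned to each test half-edge. By linearity of expectation and the symmetry of the configuration model,
\begin{equation*}
\Erw_{\QQ_{\Delta,\Gamma}^\star}[\vec Z(G,\alpha)] = \binom{N}{\alpha k,\,(1-\alpha)k,\,(1-\alpha)k} \cdot \QQ_{\Delta,\Gamma}^\star[\sigma,\tau \in \vec S(G)]
\end{equation*}
for any fixed pair $(\sigma,\tau) \in \{0,1\}^N \times \{0,1\}^N$ of Hamming weight $k$ with $\langle \sigma,\tau\rangle = \alpha k$; it therefore suffices to estimate the right-hand probability.

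Next, for any parameters $(q_{00},q_{01},q_{10},q_{11}) \in [0,1]^4$ summing to $1$, I would introduce the product measure $\pr_q$ in which each of the $\Gamma M$ test half-edges $(i,j)$ is independently labelled by $\vec\omega_{ij}\in\{00,01,10,11\}$ with $\pr_q[\vec\omega_{ij}=ab] = q_{ab}$. Define the events
\begin{equation*}
\cS = \bigl\{\forall i \in [M]:\ \textstyle\max_j (\vec\omega_{ij})_1 = 1 \text{ and } \max_j (\vec\omega_{ij})_2 = 1\bigr\}
\end{equation*}
and $\cR = \bigl\{(\#00,\#01,\#10,\#11) = ((N-2k+\alpha k)\Delta,(1-\alpha)k\Delta,(1-\alpha)k\Delta,\alpha k\Delta)\bigr\}$. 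Conditional on $\cR$ the labels form a uniform random permutation of the prescribed multiset, which is precisely the configuration model, so
\begin{equation*}
\QQ_{\Delta,\Gamma}^\star[\sigma,\tau \in \vec S(G)] \;=\; \pr_q[\cS \mid \cR] \;=\; \frac{\pr_q[\cR \mid \cS]\,\pr_q[\cS]}{\pr_q[\cR]}.
\end{equation*}
The unconditional marginals are explicit: $\pr_q[\cR]$ is the multinomial probability in the denominator of~\eqref{EZ_alpha_UB}, and test-wise independence combined with inclusion-exclusion on the sub-events ``no $(1,\cdot)$-slot in test $i$'' and ``no $(\cdot,1)$-slot in test $i$'' yields
\begin{equation*}
\pr_q[\cS] = \bigl(1-(1-q_{10}-q_{11})^\Gamma - (1-q_{01}-q_{11})^\Gamma + q_{00}^\Gamma\bigr)^M,
\end{equation*}
which under the symmetry $q_{01}=q_{10}$ implicit in the statement collapses to the numerator of~\eqref{EZ_alpha_UB}. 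The upper bound~\eqref{EZ_alpha_UB} then follows for every admissible $(q_{ab})$ from the trivial bound $\pr_q[\cR \mid \cS] \le 1$.

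To upgrade this to the asymptotic equality~\eqref{EZ_alpha}, I would choose $(q_{ab})$ to be the solution of the system~\eqref{eqqq1}-\eqref{eqqq4}. The first two equations impose normalization and the $\sigma\leftrightarrow\tau$ symmetry; a direct one-slot calculation shows that~\eqref{eqqq4} is precisely the condition that $\Erw_q[\#(1,1) \mid \cS] = \alpha k\Delta$ and $\Erw_q[\#(0,1) \mid \cS] = (1-\alpha)k\Delta$. The non-obvious factor $1-(q_{00}+q_{10})^{\Gamma-1}$ in the second constraint comes from the fact that a $(0,1)$-slot already contributes to the ``$\tau$-covered'' half of $\cS$, so conditioning on $\cS$ forces some \emph{other} slot in the same test to carry the $\sigma$-bit $1$. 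With these means matched, the total count vector $\sum_{i=1}^{M} X_i$---where $X_i$ is the per-test vector of label counts under the conditional law on test $i$ being good---is a sum of $M$ i.i.d.\ bounded lattice-valued random vectors whose mean sits exactly at the target prescribed by $\cR$. A multidimensional local central limit theorem of the kind already invoked in the proof of Lemma~\ref{Lem_First_Moment} then yields $\pr_q[\cR \mid \cS] = N^{-O(1)}$, and substituting back into Bayes delivers~\eqref{EZ_alpha}. The principal technical obstacle is the one standard to such arguments: checking non-degeneracy of the covariance of $X_i$ on the rank-$3$ sublattice carved out by the total-count constraint, and verifying that~\eqref{eqqq1}-\eqref{eqqq4} admits an interior solution uniformly in $\alpha \in (0,1]$. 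Both reduce to a careful but elementary analysis of the rational map defined by the left-hand sides of~\eqref{eqqq4}, and the boundary case $\alpha=1$ is handled separately by reducing to Lemma~\ref{Lem_First_Moment}.
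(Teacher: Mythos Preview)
Your proposal is correct and follows essentially the same route as the paper: the same auxiliary product space on the $\Gamma M$ half-edges with labels in $\{00,01,10,11\}$, the same Bayes decomposition $\pr[\cS\mid\cR]=\pr[\cS]\pr[\cR\mid\cS]/\pr[\cR]$, the trivial bound $\pr[\cR\mid\cS]\le 1$ for~\eqref{EZ_alpha_UB}, and the mean-matching via~\eqref{eqqq1}--\eqref{eqqq4} plus a local limit theorem for~\eqref{EZ_alpha}. Your inclusion--exclusion derivation of $\pr[\cS]$ and your explicit flagging of the implicit $q_{01}=q_{10}$ symmetry needed for the stated formula are in fact slightly more careful than the paper's own write-up.
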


\begin{proof}
The multinomial coefficient simply counts assignments so that the pair of configurations has the correct overlap.
Hence, let us fix a pair $(\sigma,\tau)$ with overlap $\alpha$.
As before we employ an auxiliary probability space $(\vec\omega_{ij},\vec\omega_{ij}')_{i\in[M],\,j\in[\Gamma]}$ with independent entries drawn from the distribution $(q_{00},\ldots,q_{11})$, e.g., $q_{01}$ is the probability that $\vec\omega_{ij} = 0$ and $\vec\omega_{ij}' = 1$. (We think of $\vec\omega_{ij}$ as the infection status of the $j$th individual in the $i$th test under $\sigma$, and $\vec\omega_{ij}'$ is the same for $\tau$.) Let $\cS$ be the event that all tests are positive under both assignments and let $\cR$ be the event that
\begin{align*}
	\sum_{i,j}\vec\omega_{ij}=\sum_{i,j}\vec\omega_{ij}' =k\Delta \qquad \text{and} \qquad \sum_{i,j}\vec\omega_{ij}\vec\omega_{ij}' =\alpha k\Delta.
\end{align*}
Then
\begin{align*}
	\EE_{\QQ_{\Delta,\Gamma}^\star}[\vec Z(G,\alpha)]&=\binom N{\alpha k,\,(1-\alpha)k,\,(1-\alpha)k}\pr\brk{\cS\mid\cR}\\
	&=\binom N{\alpha k,\,(1-\alpha)k,\,(1-\alpha)k}\frac{\pr\brk{\cS}\pr\brk{\cR\mid\cS}}{\pr \brk{\cR}}.
\end{align*}

\noindent Once again we use Bayes' rule.
The unconditional probabilities are easy:
\begin{align*}
	\pr\brk{\cR}&=\binom{N\Delta}{\alpha k\Delta,\,(1-\alpha)k\Delta,\,(1-\alpha)k\Delta}q_{11}^{\alpha k\Delta} q_{10}^{2(k-\alpha k)\Delta} q_{00}^{N\Delta-2k\Delta+\alpha k\Delta},\\
	\pr\brk{\cS}&=\bc{1-2(1-q_{01}-q_{11})^\Gamma+q_{00}^\Gamma}^M.
\end{align*}Using the fact $\pr \brk{\cR \mid \cS}\leq 1$, we can conclude \eqref{EZ_alpha_UB}. Now we also claim that with the choice \eqref{eqqq1}-\eqref{eqqq4},
\begin{align*}
	\pr\brk{\cR\mid\cS}&=N^{-O(1)}.
\end{align*}
As before, this follows from the local limit theorem for sums of independent random variables, provided we can show
\begin{equation}\label{eq:ll-cond}
	\Erw\brk{\sum_{i,j}\vec\omega_{ij} \;\Bigg|\; \cS} = \Erw\brk{\sum_{i,j}\vec\omega_{ij}' \;\Bigg|\; \cS} = k\Delta, \qquad
	\Erw\brk{\sum_{i,j}\vec\omega_{ij}\vec\omega_{ij}' \;\Bigg|\; \cS}=\alpha k\Delta.
\end{equation}

\noindent The second equation in~\eqref{eq:ll-cond} is easy to compute because any test that contains a $(1,1)$ will instantly be satisfied under both assignments:
\begin{align*}
	\Erw\brk{\sum_{i,j}\vec\omega_{ij}\vec\omega_{ij}' \;\Bigg|\; \cS}&=
	\frac{\Gamma M q_{11}}{1-2(1-q_{01}-q_{11})^\Gamma+q_{00}^\Gamma}.
\end{align*}
For the first equation in~\eqref{eq:ll-cond}, it suffices to show
\[ \Erw\brk{\sum_{i,j}\vec\omega_{ij}-\vec\omega_{ij}\vec\omega_{ij}'\;\Bigg|\; \cS} = (1-\alpha)k\Delta. \]
If a test contains a $(1,0)$ then it still requires either a $(1,1)$ or a $(0,1)$ to be satisfied under the other assignment as well:
\begin{align*}
	\Erw\brk{\sum_{i,j}\vec\omega_{ij}-\vec\omega_{ij}\vec\omega_{ij}' \;\Bigg|\; \cS}&=
	\frac{\Gamma M q_{10}\bc{1-(q_{00}+q_{01})^{\Gamma-1}}}{1-2(1-q_{10}-q_{11})^\Gamma+q_{00}^\Gamma}.
\end{align*}

\noindent In any case, the choice \eqref{eqqq1}-\eqref{eqqq4} gives what we want.
\end{proof}

\subsection{Proof of Proposition~\ref{prop_constant_overlap}}

To prove Proposition~\ref{prop_constant_overlap}, we need to compare the first moment squared and (part of) the second moment expansion under $\QQ_{\Delta,\Gamma}^\star$. We begin with a bound on the first moment.

\subsubsection{Bound on First Moment}

As we have a multiplicative factor $\exp \bc{ o \bc{k \Delta} }$ of freedom, the result of the following proposition will suffice.

\begin{proposition}\label{advanced_first_moment}
It holds that
$$\EE_{\QQ^\star_{\Delta, \Gamma}}[\vec Z(G)] = \exp \bc{ o\bc{k \Delta} } \exp\bc{k\Delta \frac{1-c\log(2)}{c\log(2)}}.$$
\end{proposition}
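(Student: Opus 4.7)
The plan is to start from the exact expression in Lemma~\ref{Lem_First_Moment} and estimate each factor up to additive error $o(k\Delta)$ in the logarithm. The first step is to pin down the saddle-point parameter $q$ defined by $q/(1-(1-q)^\Gamma)=k\Delta/(\Gamma M)$. Since $\Gamma M=N\Delta$ by definition of $\Gamma$, the right-hand side equals $k/N$. From~\eqref{eq_m_n} we have $M=(1+o(1))k\Delta/(2\log 2)$, so $\Gamma=N\Delta/M=2\log(2)(N/k)(1+o(1))$, and because $c<\log^{-2}(2)$ we have $N/k=n^{(1-\theta)(1-c\log^2 2)}\to\infty$ polynomially, hence $\Gamma\to\infty$. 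Writing $\lambda:=q\Gamma$ and expanding $(1-q)^\Gamma=e^{-\lambda}(1+o(1))$, the defining equation rearranges to $\lambda/(1-e^{-\lambda})=k\Gamma/N=2\log(2)(1+o(1))$. A direct check shows that $\lambda=\log 2$ is the unique positive solution of $\lambda=2\log(2)(1-e^{-\lambda})$, so $q\Gamma\to\log 2$ and $(1-q)^\Gamma\to 1/2$.

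Taking logs of the expression in Lemma~\ref{Lem_First_Moment}, I would estimate the three non-trivial factors. By Stirling, and using the scalings $k\log(N/k)=(1-\theta)(1-c\log^2 2)k\log n$ and $k\Delta=c\log(2)(1-\theta)k\log n$, we have $\log\binom{N}{k}=k\Delta\cdot\tfrac{1-c\log^2 2}{c\log 2}(1+o(1))$. Next, $M\log(1-(1-q)^\Gamma)=M\log(1/2)(1+o(1))=-\tfrac{1}{2}k\Delta(1+o(1))$. For the denominator, recognize $\binom{\Gamma M}{\Delta k}q^{\Delta k}(1-q)^{\Gamma M-\Delta k}=\Pr[\Bin(\Gamma M,q)=\Delta k]$, which a large-deviation local limit theorem (see e.g.~\cite[\S 6]{matija_paper}), or equivalently Stirling applied to $\binom{\Gamma M}{\Delta k}$ and rewriting the binary entropy, evaluates as $\exp(-\Gamma M\,\KL{p_0}{q}+O(\log N))$ where $p_0:=\Delta k/(\Gamma M)=k/N$. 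From the saddle-point equation $p_0/q=1/(1-(1-q)^\Gamma)\to 2$, so expanding the KL divergence for $p_0,q\to 0$ with $p_0=2q(1+o(1))$ gives $\KL{p_0}{q}=p_0\log(p_0/q)-(p_0-q)+O(p_0^2)=q(2\log 2-1)(1+o(1))$; multiplying by $\Gamma M$ and using $\Gamma M q=M\log(2)(1+o(1))=\tfrac{1}{2}k\Delta(1+o(1))$ yields $\Gamma M\,\KL{p_0}{q}=k\Delta(\log 2-\tfrac{1}{2})(1+o(1))$.

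Summing the three contributions, and absorbing the $N^{-O(1)}$ prefactor (whose log is $O(\log N)=o(k\Delta)$ since $k\Delta=\Theta(k\log n)$ with $k=n^\theta$), I get
\[
\log\EE_{\QQ^\star_{\Delta,\Gamma}}[\vec Z(G)]=k\Delta\left(\tfrac{1-c\log^2 2}{c\log 2}-\tfrac{1}{2}+\log 2-\tfrac{1}{2}\right)(1+o(1))=k\Delta\cdot\tfrac{1-c\log 2}{c\log 2}(1+o(1)),
\]
by the identity $\tfrac{1-c\log^2 2}{c\log 2}+\log 2-1=\tfrac{1-c\log 2}{c\log 2}$, which is precisely the claimed bound. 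The principal obstacle is the denominator analysis: because $p_0=2q(1+o(1))$, the evaluation point $\Delta k$ sits far from the mean of $\Bin(\Gamma M,q)$, so a naive Gaussian local limit theorem at the mean does not apply; one must use the tilted/large-deviation form (equivalently, apply Stirling directly to $\binom{\Gamma M}{\Delta k}$ and recognize the KL exponent), together with careful continuity in $q$ to convert the $o(1)$ error from $q\Gamma\to\log 2$ into an $o(k\Delta)$ additive error in the log. Everything else is routine bookkeeping.
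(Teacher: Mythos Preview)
Your proof is correct and takes essentially the same approach as the paper: both start from Lemma~\ref{Lem_First_Moment}, pin down the saddle point $q\sim k/(2N)$ (your $q\Gamma\to\log 2$ is equivalent to the paper's $\hat q=(1\pm n^{-\Omega(1)})k/(2N)$ since $\Gamma\sim 2\log(2)N/k$), and then estimate each factor via Stirling up to $o(k\Delta)$ in the log. Your recognition of the denominator as $\Pr[\Bin(\Gamma M,q)=\Delta k]=\exp(-\Gamma M\,\KL{p_0}{q}+O(\log N))$ is a clean repackaging of the same Stirling computation the paper performs on $\binom{\Gamma M}{\Delta k}$ directly, but it is not a different method.
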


\begin{proof}
Our starting point is Lemma~\ref{Lem_First_Moment}. Recall $\Gamma M=N\Delta$. Define $d > 0$ such that $q = d \frac{k}{N}$ and recall that $\Gamma = \bc{2 \log 2 \pm n^{-\Omega(1)}} \frac{N}{k} $. Therefore \eqref{eqq_lem} is equivalent to
$$ 1 - \exp \bc{ -2d \log 2 \bc{1 \pm n^{-\Omega(1)}}} = d.$$
Therefore, the unique solution $\hat{q}$ to \eqref{eqq_lem} turns out to be
\begin{align}
    \label{eq_optimal_q} \hat q= \bc{1 \pm n^{-\Omega(1)}}\frac{k}{2N}.
\end{align}
Furthermore observe for the binomial coefficients needed in Lemma~\ref{Lem_First_Moment} that Stirling's formula (\Lem~\ref{stirling_approx}) implies 
\begin{align}\label{stirling_binomial}
\binom{N\Delta}{k\Delta}=(1+o(1))\frac{1}{\sqrt{2\pi k\Delta}}\bc{\frac{N e}{k}}^{k\Delta} \quad \text{and} \quad \binom{N}{k}=(1+o(1))\frac{1}{\sqrt{2\pi k}}\bc{\frac{Ne}{k}}^{k}.
\end{align}

\noindent Finally, recall the scaling 
\begin{align}\label{recall_M}
    M &= \bc{ 1 \pm N^{- \Omega(1)} } \frac{k \Delta}{2 \log(2)}.
\end{align}
The proposition follows from plugging \eqref{eq_optimal_q}, \eqref{stirling_binomial} and \eqref{recall_M} into \eqref{first_moment} from Lemma \ref{Lem_First_Moment}.
\end{proof}

\subsubsection{Bound on Second Moment}

We will bound the expression for $\Erw_{\QQ^\star_{\Delta, \Gamma}}[\vec Z(G, \alpha)]$ given in Lemma~\ref{Lem_Second_Moment}. Lemma~\ref{Lem_Second_Moment} yields 
\begin{align*}
	\EE_{\QQ^\star_{\Delta, \Gamma}}[\vec Z(G, \alpha)]&\leq \binom{N}{\alpha k,\,(1-\alpha)k,\,(1-\alpha)k } \notag \\ & \qquad \cdot \frac{\bc{1-2(1-q_{01}-q_{11})^\Gamma+q_{00}^\Gamma}^M}{\binom{N\Delta}{\alpha k\Delta,\,(1-\alpha)k\Delta,\,(1-\alpha)k\Delta,\,(N-2k+\alpha k)\Delta}q_{11}^{\alpha k\Delta} q_{10}^{2(k-\alpha k)\Delta} q_{00}^{N\Delta-2k\Delta+\alpha k\Delta}}.
\end{align*}

For $\alpha \in (0,1]$, define \[(q_{00}=q_{00}(\alpha),\,q_{01}=q_{01}(\alpha),\,q_{10}=q_{10}(\alpha),\,q_{11}=q_{11}(\alpha))\in[0,1]^4\]
to be the solution of \eqref{eqqq1}-\eqref{eqqq4}. Using the first two equations of \eqref{eqqq1}-\eqref{eqqq4} it suffices to only keep track of $q_{01}, q_{11}$ because $q_{00}, q_{10}$ are simple linear functions of them.

To this end, define
\begin{align*}
    G(\alpha,q_{01},q_{11})=k\Delta\Bigg(&\alpha\log(\alpha)+2(1-\alpha)\log(1-\alpha)-(2-\alpha)+(2-\alpha)\frac{1-c\log^2(2)}{c\log(2)}\\
    &+\frac{1}{2\log(2)}\log\bc{1-2(1-q_{01}-q_{11})^{\Gamma}+(1-2q_{01}-q_{11})^{\Gamma}}\\
    &-\alpha q_{11}-(2-\alpha)q_{01}\Bigg)\\
    &-(N\Delta-2k\Delta+\alpha k\Delta)\log(1-2q_{01}-q_{11}).
\end{align*}
By Stirling's formula this is, up to $o(k\Delta)$ additive error terms, equal to the exponential part of $\EE_{\QQ_{\Delta,\Gamma}^\star}[\vec Z(G,\alpha)]$ from Lemma~\ref{Lem_Second_Moment}. Indeed,
\begin{align}
    G(\alpha,q_{01},q_{11}) &=  o \bc{ \Delta k}+\log \Bigg( \binom{N}{\alpha k,\,(1-\alpha)k,\,(1-\alpha)k } \notag \\ & \quad \cdot \frac{\bc{1-2(1-q_{01}-q_{11})^\Gamma+q_{00}^\Gamma}^M}{\binom{N\Delta}{\alpha k\Delta,\,(1-\alpha)k\Delta,\,(1-\alpha)k\Delta,\,(N-2k+\alpha k)\Delta}q_{11}^{\alpha k\Delta} q_{10}^{2(k-\alpha k)\Delta} q_{00}^{N\Delta-2k\Delta+\alpha k\Delta}} \Bigg) . \label{eq_approx_g}
\end{align}
The purpose of this approximation is that the function $G$ can be analysed analytically.

\begin{lemma}\label{global_max}
For any $c<\log^{-1}(2)$ and any $\theta \in (0,1)$, there exists $\eps > 0$ such that for all $\dot \alpha \in (0,1]$,
$$G \Big (\dot \alpha,  q_{01}(\dot \alpha),  q_{11}(\dot \alpha) \Big)   <  (1 - \eps) k\Delta\frac{2(1-c\log(2))}{c\log(2)}.$$
\end{lemma}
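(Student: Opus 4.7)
The objective is to bound the function $\Psi(\alpha) := G(\alpha, q_{01}(\alpha), q_{11}(\alpha))/(k\Delta)$ strictly below $2\mathcal{F}$, where $\mathcal{F} := (1-c\log 2)/(c\log 2) > 0$ under the hypothesis $c < 1/\log 2$. This target matches twice the first-moment exponent from Proposition~\ref{advanced_first_moment}, so the lemma's conclusion will translate directly into $\EE_{\QQ_{\Delta,\Gamma}^\star}[\vec Z(G,\alpha)] \le \exp(-\varepsilon k\Delta)\,\EE_{\QQ_{\Delta,\Gamma}^\star}[\vec Z(G)]^2$, which is precisely what Proposition~\ref{prop_constant_overlap} demands.

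First, I would derive an asymptotic form of the transcendental system \eqref{eqqq1}--\eqref{eqqq4}. Since $\Gamma = (2\log 2 + o(1))N/k$ and $k\Delta/(\Gamma M) = k/N$, the parameterization $q_{01} = d_1(\alpha) k/N$ and $q_{11} = d_2(\alpha) k/N$ converts the system, in the limit $N \to \infty$, into a pair of coupled equations in $(d_1, d_2)$ involving $u := 2^{-2 d_1}$ and $v := 2^{-2 d_2}$. I would verify that this reduced system has a unique continuous solution $\alpha \mapsto (d_1(\alpha), d_2(\alpha))$ on $[0, 1]$ with boundary values $(1/2, 0)$ at $\alpha = 0$ (recovering the first-moment solution of Lemma~\ref{Lem_First_Moment}, since disjoint solution pairs behave essentially independently) and $(0, 1/2)$ at $\alpha = 1$ (where $\vec Z(G, 1) = \vec Z(G)$ collapses to the first moment). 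Substitution into $G$ via Stirling (as in \eqref{eq_approx_g}) then yields an explicit expression for $\Psi(\alpha)$ depending only on $\alpha$, $u(\alpha)$, $v(\alpha)$.

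Next, evaluate $\Psi$ at the endpoints. A direct computation gives $\Psi(1) = \mathcal{F}$, hence $\Psi(1) < 2\mathcal{F}$ since $\mathcal{F} > 0$. An analogous endpoint calculation with $(d_1,d_2) = (1/2,0)$, $(u,v) = (1/2,1)$, and $1 - 2 uv + u^2 v = 1/4$ yields $\Psi(0) = 2\mathcal{F} - 2\log 2$, where the $-2\log 2$ gap arises because the $\Gamma$-regularity of every test makes disjoint solution pairs strictly less likely than two independent copies. For the interior $\alpha \in (0,1)$, I would compute $d\Psi/d\alpha$ by implicit differentiation of the reduced system and show that any interior critical point of $\Psi$ has value strictly below $2\mathcal{F}$; alternatively, exhibit concavity of $\Psi$ after a suitable change of variables. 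Once pointwise strict inequality $\Psi(\alpha) < 2\mathcal{F}$ is verified on all of $[0, 1]$, continuity of $\Psi$ together with compactness produces the required uniform gap $\varepsilon > 0$.

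The main obstacle is that the asymptotic system for $(d_1, d_2)$ is only implicitly defined through coupled transcendental equations of the form $d_i = \Phi_i(2^{-2 d_1}, 2^{-2 d_2}; \alpha)$, so $\Psi$ admits no closed form and is not obviously concave. Establishing the strict pointwise inequality $\Psi(\alpha) < 2\mathcal{F}$ on the interior requires careful manipulation of the reduced constraints to simplify $d\Psi/d\alpha$ and a case analysis on critical-point locations. The hypothesis $c < 1/\log 2$ enters essentially here: not only through the positivity of $\mathcal{F}$, but also in quantitatively ruling out interior configurations where $\Psi$ could otherwise approach or exceed $2\mathcal{F}$.
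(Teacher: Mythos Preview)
Your endpoint computation at $\alpha=0$ is wrong, and this is not a cosmetic error: it undermines the entire compactness strategy. With $(d_1,d_2)=(1/2,0)$ one has $\cW=1-2\cdot\tfrac12+\tfrac14=\tfrac14$, and plugging into the asymptotic expression for $G/(k\Delta)$ gives
\[
\Psi(0)=2\log 2+\frac{2(1-c\log^2 2)}{c\log 2}-1+1-2=\frac{2}{c\log 2}-2=2\mathcal{F},
\]
not $2\mathcal{F}-2\log 2$. The heuristic about $\Gamma$-regularity making disjoint solution pairs rarer is misleading here: at the level of \emph{expectations} (which is what $G$ encodes via Lemma~\ref{Lem_Second_Moment}), $\EE[\vec Z(G,0)]$ really does match $\EE[\vec Z(G)]^2$ to leading exponential order. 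So the inequality $\Psi(\alpha)<2\mathcal{F}$ becomes an equality in the limit $\alpha\downarrow 0$, and you cannot extract a uniform $\varepsilon$ on $[0,1]$ by continuity and compactness. The statement only asks for $\dot\alpha\in(0,1]$, and the actual content of the lemma is that the gap opens up linearly in $\alpha$ near $0$; this requires computing the derivative (or a first-order Taylor expansion) at $\alpha=0$ and checking its sign, which is exactly the delicate step your proposal glosses over.

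The paper sidesteps your ``main obstacle'' (the implicit transcendental system) entirely. The key observation is the upper-bound part \eqref{EZ_alpha_UB} of Lemma~\ref{Lem_Second_Moment}: \emph{any} choice of $(q_{01},q_{11})$ yields a valid upper bound on $\EE[\vec Z(G,\alpha)]$, so one never needs the exact solution $(d_1(\alpha),d_2(\alpha))$. The paper simply plugs in an explicit piecewise-linear surrogate for $(x_0(\alpha),x_1(\alpha))$ on three subintervals of $(0,1]$, verifies convexity of the resulting explicit function on each piece, and checks the endpoint values. The $\alpha\to 0$ regime is handled by a first-order Taylor expansion showing the slope is strictly negative. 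Your implicit-differentiation route is in principle viable, but you would still have to confront the tight limit at $\alpha=0$ and control the interior without a closed form; the upper-bound trick makes both issues disappear.
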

\begin{proof}
As a first step, we need to determine $q_{01}, q_{11}$ from \eqref{eqqq1}-\eqref{eqqq4} for a general $\dot \alpha \in (0,1]$. We define $x_0, x_1 > 0$ such that
$$ q_{01}=x_0 \frac{k}{N} \qquad \text{and} \qquad q_{11}= x_1 \frac{k}{N} $$ and define
\begin{align*}
    \cW(x_0, x_1) = 1 - 2 \exp \bc{ -2 \log(2) (x_0 + x_1) } + \exp \bc{ -2 \log(2) (2 x_0 + x_1 ) }.
\end{align*}
This allows us to simplify \eqref{eqqq4} to
\begin{align}
    \alpha = \frac{x_1}{\cW(x_0, x_1)} \qquad \text{and} \qquad 1 - \alpha = \frac{x_0 \bc{1 - \exp \bc{ -2 \log(2) (x_0 + x_1) }}}{\cW(x_0, x_1)}. \label{cond_q}
\end{align}
If we plug in \eqref{cond_q} into the definition of $G$, we get
\begin{align}\label{expo}
&G(\alpha, q_{01}, q_{11}) \notag\\
&\quad =(1+o(1)) k \Delta\bc{\alpha \log\bc{\frac{\alpha}{x_1}}+2(1-\alpha)\log\bc{\frac{1-\alpha}{x_0}} +(2-\alpha)\frac{1-c\log^2(2)}{c\log(2)}} \\ \notag & \quad\qquad +k\Delta\bc{\frac{1}{2\log(2)}\log\bc{ \cW(x_0, x_1) } +(2x_0+x_1) -(2-\alpha)}.
\end{align}

While it is easy for a given $\dot \alpha$ to determine the solution $(\dot x_0, \dot x_1)$ of \eqref{cond_q} numerically, it seems impossible to come up with an analytic closed form expression. Fortunately, by the first part of Lemma \ref{Lem_Second_Moment} this is not necessary. Indeed, \emph{any} choice $(x_0, x_1)$ for a given $\dot \alpha$ renders an upper bound on \eqref{expo} as this is the leading order part of $\Erw_{\QQ^\star_{\Delta, \Gamma}}[Z(\G, \alpha)]$. Specifically, recall from \eqref{eq_approx_g} that $G(\alpha, q_{01}, q_{11})$ approximates the exponential part of $\Erw_{\QQ^\star_{\Delta, \Gamma}}[Z(\G)]$ up to an additive error of $o \bc{k \Delta}$.

We approximate $(\dot x_0, \dot x_1)$ by a piecewise linear function. Define the following partition of $(0,1)$:
\begin{align}\label{intervals}
    I_1 = \left(0, \frac{1}{4}\right],\; I_2 = \bc{\frac{1}{4}, \frac{85}{100}},\; I_3 = \left[\frac{85}{100}, 1\right).
\end{align}
We define
\begin{align}\label{estimator}
x_0(\alpha)=& 
\One_{\cbc{\alpha \in I_1}} \cdot \bc{-\frac{3}{5}\alpha+\frac{1}{2}} + \One_{\cbc{\alpha \in I_2}} \cdot \bc{\frac{1}{2} - \frac{3}{10\log 2} \alpha } + 
\One_{\cbc{\alpha \in I_3}} \cdot (1 - \alpha), \\[10pt]
x_1(\alpha)=& 
\One_{\cbc{\alpha \in I_1}} \cdot \frac{\alpha}{5} + 
\One_{\cbc{\alpha \in I_2}} \cdot \frac{\alpha}{5 \log 2} - 
\One_{\cbc{\alpha \in I_3}} \cdot \frac{16 \alpha - 11}{10}. 
\end{align}
For brevity, let
\begin{align}
F(\alpha) & = \bc{\alpha \log\bc{\frac{\alpha}{x_1}}+2(1-\alpha)\log\bc{\frac{1-\alpha}{x_0}} +(2-\alpha)\frac{1-c\log^2(2)}{c\log(2)}} \notag \\ & \quad + \bc{\frac{1}{2\log(2)}\log\bc{ \cW(x_0, x_1) } +(2x_0+x_1) -(2-\alpha)} \\
& = G \bc{\alpha, x_0 \frac{k}{N}, x_1 \frac{k}{N}} \frac{1 + o(1)}{k \Delta}. \label{eq_expo_f_def}
\end{align}
We will bound each piece of $F$ separately, with the goal of establishing the bound
\begin{equation}\label{eq:F-bound}
F(\alpha) < \frac{2 (1 - c \log(2))}{c \log(2)} \qquad \text{for all } \alpha \in (0,1].
\end{equation}
An illustration of the result of the considered cases can be found in Figure~\ref{fig:largeoverlap}.

\begin{figure}[ht]
    \centering
    \includegraphics[width=0.49\textwidth]{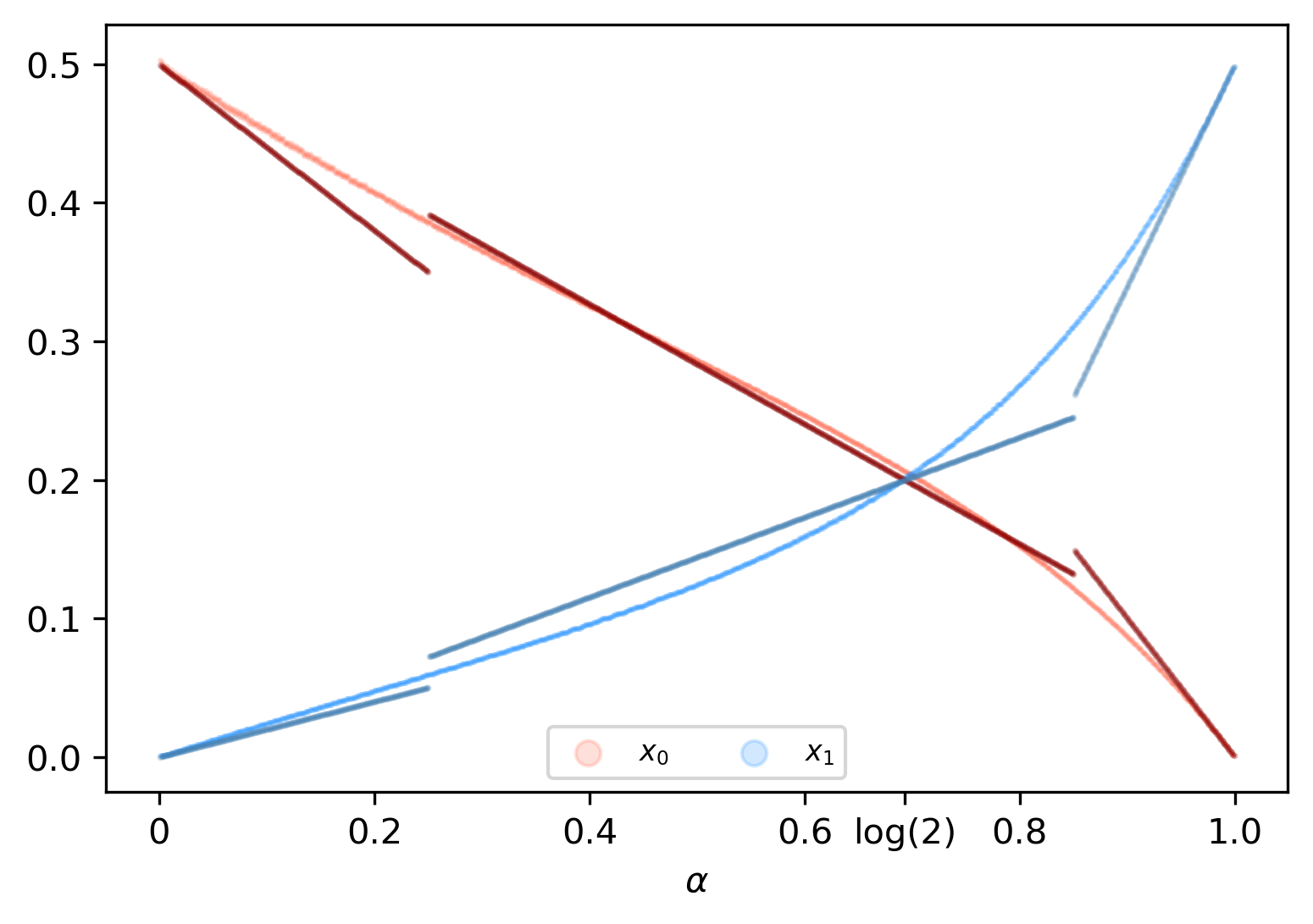} 
    \includegraphics[width=0.49\textwidth]{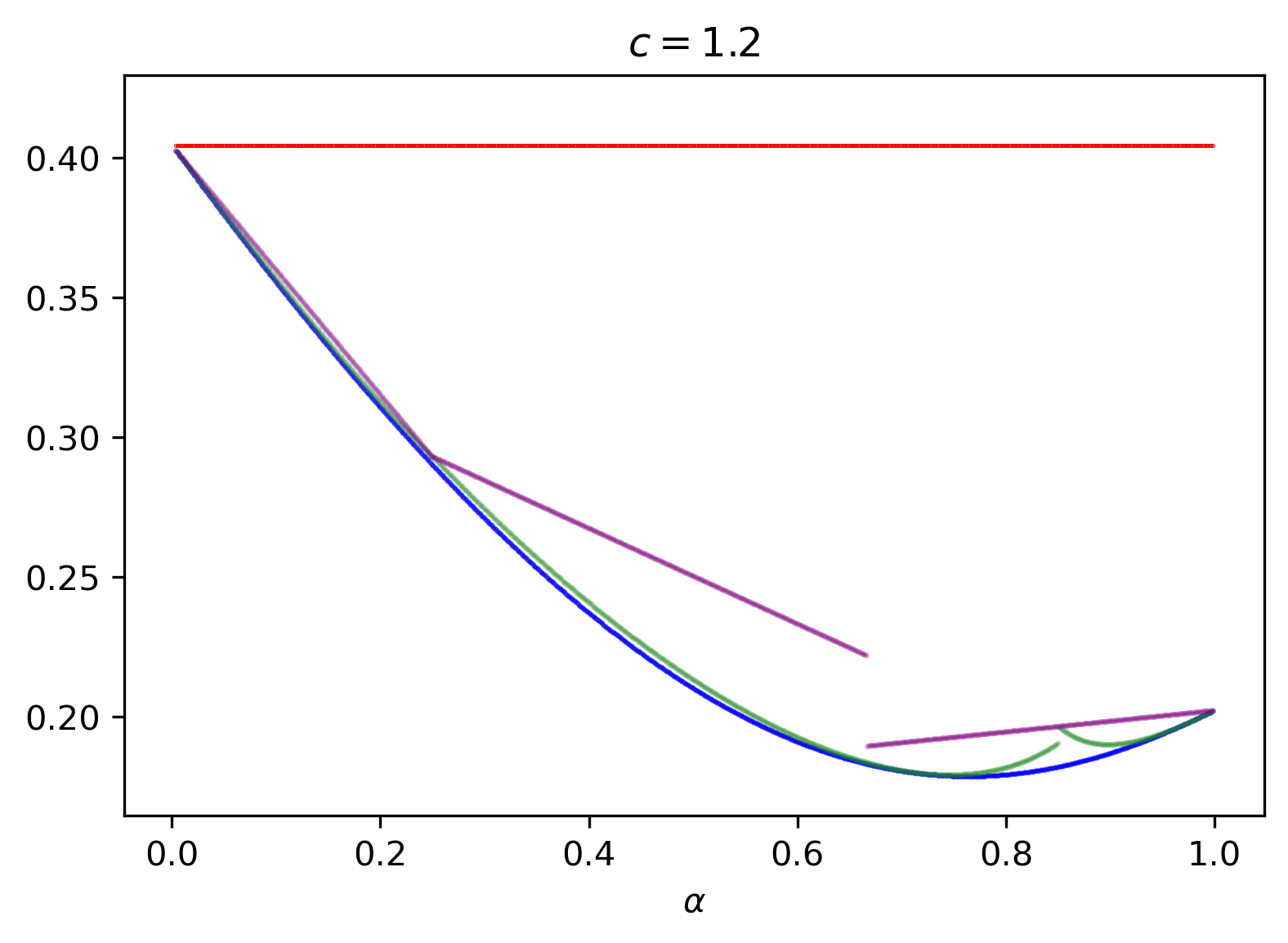} 
    \includegraphics[width=0.49\textwidth]{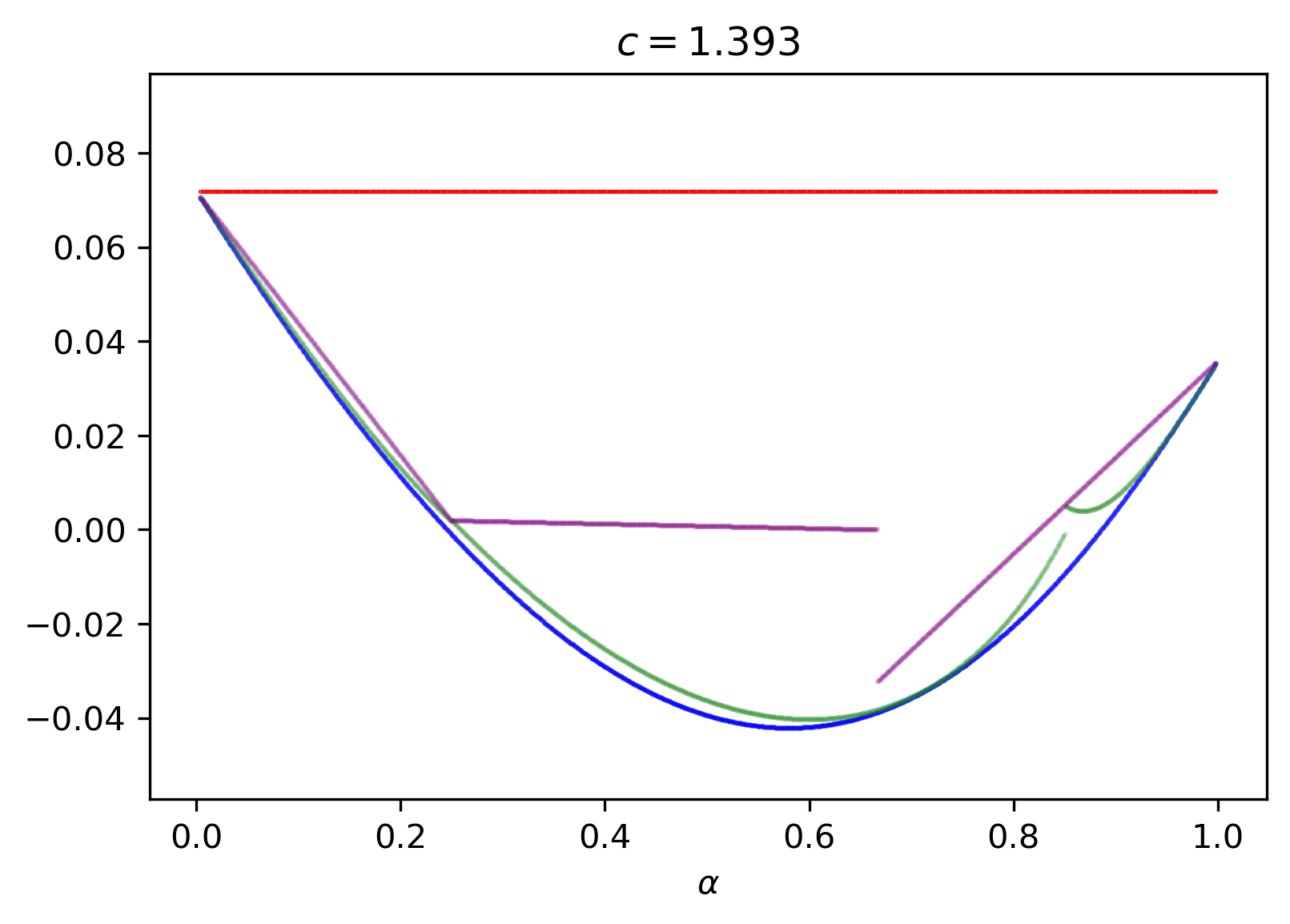}
    \includegraphics[width=0.49\textwidth]{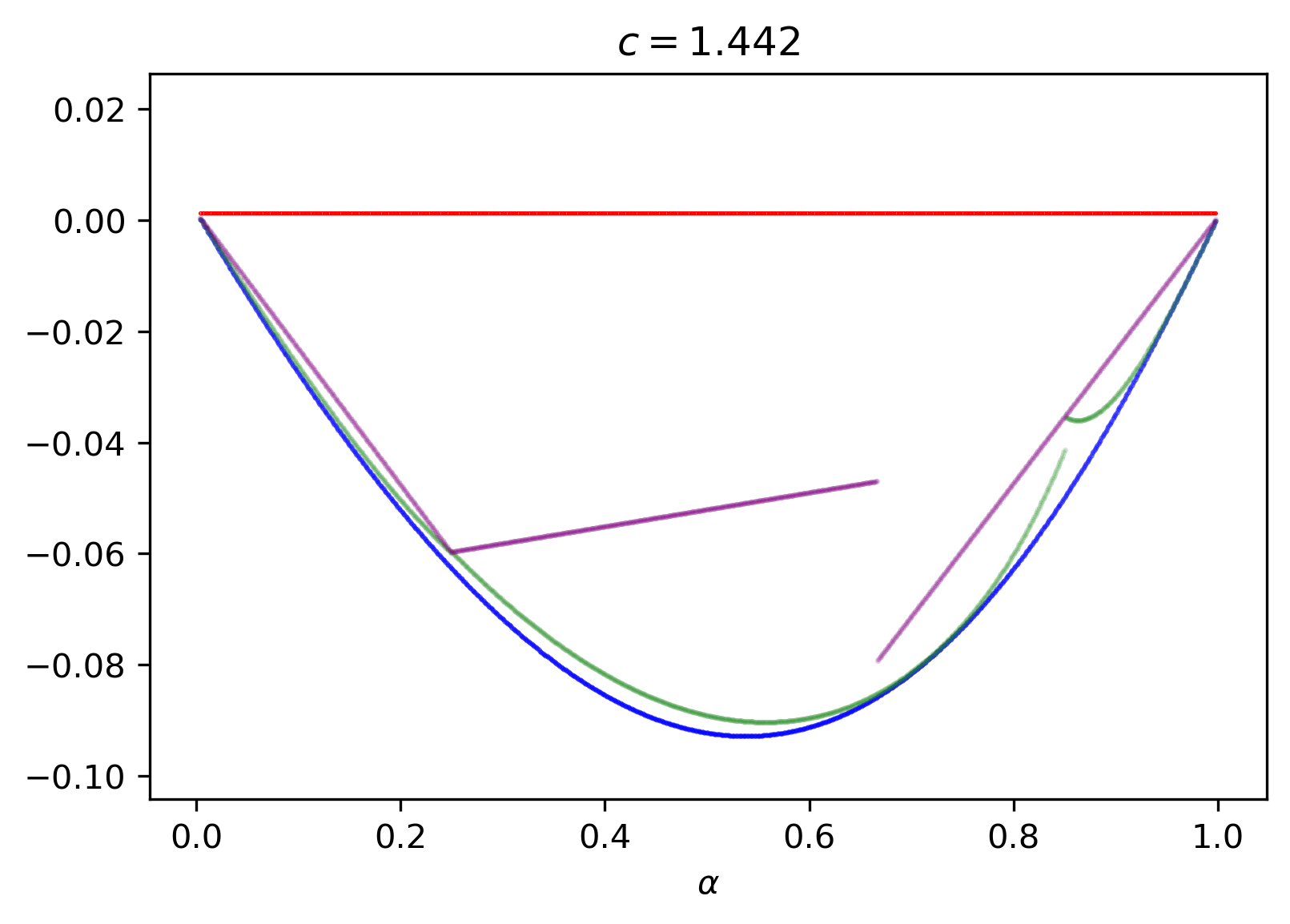}
    \caption{\small The first plot shows a numerical comparison between the optimal choices $(x_0, x_1)$ and our piece-wise linear approximation. The other plots show how the evaluation of $ G \bc{\alpha, x_0 \frac{k}{N}, x_1 \frac{k}{N}}$ varies between the numerically calculated optimal values (blue), the linear approximation of $(x_0, x_1)$ applied to $ G \bc{\alpha, x_0 \frac{k}{N}, x_1 \frac{k}{N}}$ (green) and the easily established upper bound on this quantity through convexity (purple) for different values of $c \in (0, \log^{-1}(2)]$. The red line equals $\frac{2 (1 - c \log(2))}{c \log(2)}$.}
    \label{fig:largeoverlap}
\end{figure}

\paragraph{Case $ \alpha \in I_1$ :}
In this case, \eqref{eq_expo_f_def} reads as 
\begin{align*}
    F(\alpha) &= \alpha\log(5)+2(1-\alpha)\log(1-\alpha)-2(1-\alpha)\log\bc{\frac{1}{2}-\frac{3}{5}\alpha}+(2-\alpha)\frac{1-c\log^2(2)}{c\log(2)}\\
    &\quad+\frac{1}{2\log(2)}\log\bc{1-2\exp\bc{-2\log(2)\bc{-\frac{2}{5}\alpha+\frac{1}{2}}}+\exp\bc{-2\log(2)(1-\alpha)}}-1.
\end{align*}
We find for any $c \in (0, \log^{-1}(2))$ that 
\begin{align*}
    \frac{ \partial^2 F }{ \partial \alpha^2 }
    & = \frac{2}{1 - \alpha} + \frac{0.72 (1 - \alpha)}{(-0.6\alpha + 0.5)^2} + \frac{2.4}{0.6\alpha - 0.5} - \frac{1}{2} \frac{ (2^{2\alpha - 1} - 1.6 \cdot 2^{0.8\alpha - 1.0})^2 \log(2) }{(2^{ 0.8\alpha } - 2^{2\alpha - 2} - 1)^2 } \\
    & \qquad -  \frac{\log(2)}{2} \cdot \frac{2^{2 \alpha} - 1.28 \cdot 2^{0.8\alpha - 1} } { (2^{0.8 \alpha} - 2^{2 \alpha - 2} - 1) } > 0
\end{align*}
which can be verified analytically (for illustration see Figure \ref{fig:second_deriv}). To see this we analyse two separate parts. On the one hand,
$$ \frac{2}{1 - \alpha} + \frac{0.72 (1 - \alpha)}{(-0.6\alpha + 0.5)^2} + \frac{2.4}{0.6\alpha - 0.5}>0.$$
On the other hand one can verify that the remainder satisfies
$$- \frac{\log(2)}{2}\bc{ \frac{ (2^{2\alpha - 1} - 1.6 \cdot 2^{0.8\alpha - 1.0})^2 }{(2^{ 0.8\alpha } - 2^{2\alpha - 2} - 1)^2 }+    \frac{2^{2 \alpha} - 1.28 \cdot 2^{0.8\alpha - 1} } { (2^{0.8 \alpha} - 2^{2 \alpha - 2} - 1) }}>0,$$
as 
$$(2^{2\alpha - 1} - 1.6 \cdot 2^{0.8\alpha - 1.0})^2+\bc{2^{2 \alpha} - 1.28 \cdot 2^{0.8\alpha - 1}}\bc{2^{0.8 \alpha} - 2^{2 \alpha - 2} - 1}<-\frac{1}{3}\alpha<0.$$
In particular, $\frac{ \partial^2 F }{ \partial \alpha^2 }$ does not depend on $c$ and is monotonically increasing on $I_1$. Therefore, $F$ is strictly convex on $I_1$, and so it suffices to verify~\eqref{eq:F-bound} at the endpoints of $I_1$. We will apply a first-order Taylor approximation to $F$ at $\alpha = 0$. Let $\tilde F$ be this approximation. The following holds by Taylor's theorem.
For any $\eps > 0$ there is $\delta > 0$ with the property that
\begin{align} \label{eq:taylor_firstcase}
    F(\alpha) \leq  (1 + \delta) \tilde F(\alpha) \qquad \text{for all } \alpha \in (0, \eps). 
\end{align}
We have
\begin{align*}
    \tilde F(\alpha) & =  \frac{{\left({\left(5 \, \log\left(5\right) \log\left(2\right) - 5 \, \log\left(2\right)^{2} - \log\left(2\right)\right)} \alpha - 10 \, \log\left(2\right)\right)} c - 5 \, \alpha + 10}{5 \, c \log\left(2\right)}.
\end{align*}
Therefore,
\begin{align*}
    \tilde F(\alpha) - \frac{2(1 - c \log(2))}{c \log(2)} & = \frac{{\left(5 \, \log\left(5\right) \log\left(2\right) - 5 \, \log\left(2\right)^{2} - \log\left(2\right)\right)} \alpha c - 5 \, \alpha}{5 \, c \log\left(2\right)}.
\end{align*}
Therefore, by \eqref{eq:taylor_firstcase} we only need to verify that there is that there is $\delta' > 0$ and $\alpha^\star > 0$ such that for all $\alpha \in (0, \alpha^\star)$ and $c < \log^{-1}(2)$, we have 
\begin{align*}
    {\left(5 \, \log\left(5\right) \log\left(2\right) - 5 \, \log\left(2\right)^{2} - \log\left(2\right)\right)} c - 5  < - \delta' (\alpha)^{-1}.
\end{align*}
As $\left(5 \, \log\left(5\right) \log\left(2\right) - 5 \, \log\left(2\right)^{2} - \log\left(2\right)\right) \approx 2.48$, the strongest requirement is given for $c=\log^{-1}(2)$ and is satisfied if $\alpha^\star > \delta'/1.4$.
Furthermore, it can be verified that
\begin{align*}
    \lim_{\alpha \to 0.25} F(\alpha) &=  \frac{\log\left(-\frac{1}{4} \, \sqrt{2} {\left(2 \, \sqrt{2} {\left(2^{\frac{1}{5}} - 1\right)} - 1\right)}\right)}{2 \, \log\left(2\right)} + \frac{7}{4 \, c \log\left(2\right)} + \frac{1}{4} \, \log\left(5\right) - \frac{7}{4} \, \log\left(2\right) + \frac{3}{2} \, \log\left(\frac{3}{4}\right) \\
    & \quad - \frac{3}{2} \, \log\left(\frac{7}{20}\right) - 1 
      < \frac{2 (1 - c \log(2))}{c \log(2)}
\end{align*}
for any $c \in (0, \log^{-1}(2))$, thus, \eqref{eq:F-bound} is satisfied on $I_1$.

\begin{figure}[ht]
    \centering
    \includegraphics[width = 0.75 \linewidth, height = 0.5 \linewidth]{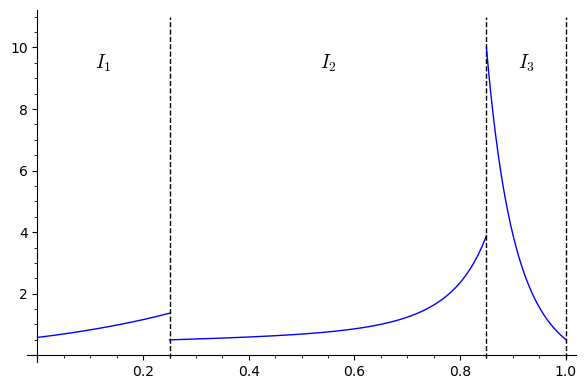}
    \caption{The piece-wise defined second derivative $\frac{\partial^2 F}{\partial \alpha^2}$ on the three intervals $I_1, I_2, I_3$. As could be seen analytically, it does not depend on $c$ but is a (piece-wise) continuous mapping of $\alpha$.}
    \label{fig:second_deriv}
\end{figure}

\paragraph{Case $\alpha \in I_2$ :}
We have
\begin{align*}
   F(\alpha)&=\alpha\log(\alpha)-\alpha\log(\alpha)+\alpha \log(5\log(2))\\
   &\quad+2(1-\alpha)\log(1-\alpha)-2(1-\alpha)\log\bc{0.5 - 0.3\cdot \frac{1}{\log(2)} \alpha}\\
   &\quad+\frac{1}{2\log(2)}\log\Bigg(1-2\exp\bc{-2\log(2)\bc{\frac{1}{2}-\frac{1}{10\log(2)}\alpha}}\\
   &\quad\hspace{3cm}+\exp\bc{-2\log(2)\bc{1-\frac{2}{5\log(2)}\alpha}}\Bigg)\\ &\quad+(2-\alpha)\frac{1-c\log^2(2)}{c\log(2)}+1-\frac{2}{5\log(2)}\alpha-2+\alpha.
\end{align*}
In this case, 
\begin{align*}
    \frac{ \partial^2 F }{ \partial \alpha^2 } & = \frac{2}{1 - \alpha} - \frac{1}{2} \cdot \frac{(0.8\cdot 2^{0.8 \alpha / \log(2) - 2} - 0.4\cdot 2^{0.2 \alpha/ \log(2) - 1})^2} {(2^{0.8 \alpha/ \log(2) - 2} - \exp \bc{ 0.2\alpha } + 1)^2 \log(2)} \\ & \quad + \frac{1}{2} \cdot \frac{0.64 \cdot 2^{0.8 \alpha/ \log(2) - 2} - 0.08 \cdot 2^{0.2 \alpha/ \log(2) - 1} }{(2^{0.8 \alpha/ \log(2) - 2} - \exp(0.2 \alpha) + 1)\log(2)} \\
    &\quad - \frac{1.2}{(-0.3 \alpha/ \log(2) + 0.5) \log(2)} - \frac{0.18 \alpha - 0.18}{(-0.3 \alpha/\log(2) + 0.5)^2 \log(2)^2)} > 0.
\end{align*}
We again verify this by analysing two separate parts. On the one hand one can verify that
\begin{align}\label{I_2_part_1}
    \frac{2}{1 - \alpha}- \frac{1.2}{(-0.3 \alpha/ \log(2) + 0.5) \log(2)} - \frac{0.18 \alpha - 0.18}{(-0.3 \alpha/\log(2) + 0.5)^2 \log(2)^2)}>0,
\end{align}
as this can be rearranged to   
$$\frac{9}{50}\alpha^2+\frac{1}{2}\bc{\log(2)+\frac{3}{5}}^2>0.$$
Now we turn to the second part which reads as follows:
\begin{align}\label{I_2_part_2}
    - \frac{1}{2\log(2)} \cdot \bc{\frac{(0.8\cdot 2^{0.8 \alpha / \log(2) - 2} - 0.4\cdot 2^{0.2 \alpha/ \log(2) - 1})^2} {(2^{0.8 \alpha/ \log(2) - 2} - \exp \bc{ 0.2\alpha } + 1)^2 }- \frac{0.64 \cdot 2^{0.8 \alpha/ \log(2) - 2} - 0.08 \cdot 2^{0.2 \alpha/ \log(2) - 1} }{(2^{0.8 \alpha/ \log(2) - 2} - \exp(0.2 \alpha) + 1)}}
\end{align}
Thus, we show that\begin{align*}
    &\Bigg((0.8\cdot 2^{0.8 \alpha / \log(2) - 2} - 0.4\cdot 2^{0.2 \alpha/ \log(2) - 1})^2\\
    &- \bc{0.64 \cdot 2^{0.8 \alpha/ \log(2) - 2} - 0.08 \cdot 2^{0.2 \alpha/ \log(2) - 1} }\bc{2^{0.8\alpha/\log(2)-2}-\exp(0.2\alpha)+1}\Bigg)<0.
\end{align*}

\noindent The assertion immediately follows as the latter product exceeds the quadratic expression for all $\alpha\in\big(\frac{1}{4},\frac{85}{100}\big]$ and all three parts are positive. Thus \eqref{I_2_part_2} is positive.

It follows that $\frac{ \partial^2 F }{ \partial \alpha^2 }$ is positive by combining our results of \eqref{I_2_part_1} and \eqref{I_2_part_2}. Thus we find $F(\alpha)$ to be strictly convex on $I_2$.
Furthermore, for $c \in (0, \log^{-1}(2))$, we find
\begin{align*}
    \lim_{\alpha \to 0.25} F(\alpha) &\leq -0.785 \log^{-1}(2) + 1.75/(c\log(2)) - 1.75\log(2) + 0.25\log(5\log(2)) \\ & \quad - 1.5\log((0.5\log(2) - 0.075)/\log(2)) - 1.18 < \frac{2 (1 - c \log(2))}{c \log(2)}, \qquad \text{and} \\
    \lim_{\alpha \to 0.85} F(\alpha) &\leq -0.92856 / \log(2) + 1.15/(c \log(2)) - 1.15 \log(2) + 0.85 \log(5 \log(2)) \\ & \quad - 0.3\log((0.5\log(2) - 0.255)/\log(2)) - 0.7191 
    < \frac{2 (1 - c \log(2))}{c \log(2)}.
\end{align*}

\paragraph{Case $\alpha \in I_3$ :}
In this case, $F$ evaluates to
\begin{align*}
F(\alpha) &= \alpha\log \bc{  \frac{10\alpha}{16\alpha - 11}} + \frac{3}{5} \alpha - (2-\alpha) \frac{c\log(2)^2 - 1}{c \log(2)} \\ & \quad + \frac{1}{2} \log\bc{2^{{4/5\alpha - 9/5}} - 2^{-6/5\alpha + 6/5} + 1} \log^{-1}(2) - \frac{11}{10}.
\end{align*}
Then we find the following for all $\alpha \in I_3$, which is easy to verify computationally (see Figure \ref{fig:second_deriv}):
\begin{align*}
    \frac{ \partial^2 F }{ \partial \alpha^2 } & =  -32 \, {\left(16 \, \alpha - 11\right)} {\left(\frac{1}{{\left(16 \, \alpha - 11\right)}^{2}} - \frac{16 \, \alpha}{{\left(16 \, \alpha - 11\right)}^{3}}\right)} \\
    & \quad + \frac{{\left(16 \, \alpha - 11\right)} {\left(\frac{1}{16 \, \alpha - 11} - \frac{16 \, \alpha}{{\left(16 \, \alpha - 11\right)}^{2}}\right)}}{\alpha} + \frac{16}{16 \, \alpha - 11} - \frac{256 \, \alpha}{{\left(16 \, \alpha - 11\right)}^{2}} \\
    & \quad - \frac{2 \, {\left(2^{\frac{4}{5} \, \alpha - \frac{4}{5}} + 3 \cdot 2^{-\frac{6}{5} \, \alpha + \frac{6}{5}} \right)}^{2} \log\left(2\right) }{25 \, {\left(2^{\frac{4}{5} \, \alpha - \frac{9}{5}} - 2^{-\frac{6}{5} \, \alpha + \frac{6}{5}} + 1\right)}^{2}} + \frac{2 \, {\left(2^{\frac{4}{5} \, \alpha + \frac{1}{5}} \log\left(2\right) - 9 \cdot 2^{-\frac{6}{5} \, \alpha + \frac{6}{5}} \log\left(2\right)\right)}}{25 \, {\left(2^{\frac{4}{5} \, \alpha - \frac{9}{5}} - 2^{-\frac{6}{5} \, \alpha + \frac{6}{5}} + 1\right)} }
    > 0.
\end{align*}
We now check that this inequality holds. First we simplify the polynomial part to
$$\frac{176}{(16\alpha-11)^2}-\frac{11}{\alpha(16\alpha-11)}.$$
Now we lower bound the non-polynomial part
$$h(\alpha)=- \frac{2 \, {\left(2^{\frac{4}{5} \, \alpha - \frac{4}{5}} + 3 \cdot 2^{-\frac{6}{5} \, \alpha + \frac{6}{5}} \right)}^{2} \log\left(2\right) }{25 \, {\left(2^{\frac{4}{5} \, \alpha - \frac{9}{5}} - 2^{-\frac{6}{5} \, \alpha + \frac{6}{5}} + 1\right)}^{2}} + \frac{2 \, {\left(2^{\frac{4}{5} \, \alpha + \frac{1}{5}} \log\left(2\right) - 9 \cdot 2^{-\frac{6}{5} \, \alpha + \frac{6}{5}} \log\left(2\right)\right)}}{25 \, {\left(2^{\frac{4}{5} \, \alpha - \frac{9}{5}} - 2^{-\frac{6}{5} \, \alpha + \frac{6}{5}} + 1\right)} }.$$
One can verify that this is negative and concave for $\alpha\in [85/100,1)$. Thus, one can derive the lower bound 
$$h(\alpha)>\frac{6751}{150}\alpha-\frac{148}{3}.$$
Therefore we get a lower bound 
$$\frac{ \partial^2 F }{ \partial \alpha^2 }>\frac{176}{(16\alpha-11)^2}-\frac{11}{\alpha(16\alpha-11)}+\frac{6751}{150}\alpha-\frac{148}{3}.$$
Standard calculus reveals that the minimum is strictly positive.\\\\
Again, this means $F(\alpha)$ is convex and it suffices to check the boundary.
It is easily verified that for $c \in (0, \log^{-1}(2))$,
\begin{align*}
    \lim_{\alpha \to 0.85} F(\alpha) &\leq -((1.15 \log(2)^2 - 0.41687 \log(2) + 0.5586)c - 1.15)/(c \log(2)) \\
    &< \frac{2 (1 - c \log(2))}{c \log(2)}, \qquad \text{and} \\
    \lim_{\alpha \to 1} F(\alpha) & = \frac{1 - c \log(2)}{c \log(2)} < \frac{2 (1 - c \log(2))}{c \log(2)}.
\end{align*}

\noindent Finally, the lemma follows from combination of the three cases. Indeed, this proves that there is an $\eps > 0$ such that for all $\alpha \in (0,1]$,
\begin{align*}
    \frac{1}{k \Delta} G(\alpha, q_{01}, q_{11}) = F(\alpha) & < (1 - \eps) \frac{2 (1 - c \log 2)}{c \log 2}
\end{align*}
as desired.
\end{proof}
\noindent Proposition~\ref{prop_constant_overlap}  now follows, since by Lemma \ref{Lem_Second_Moment} and Stirling's approximation,
\begin{align*}
    &\exp \bc{G(\alpha,q_{01},q_{11})}\\
    &\quad= \exp \bc{ o \bc{k \Delta} }  \binom{N}{\alpha k,\,(1-\alpha)k,\,(1-\alpha)k } \notag \\ & \qquad \cdot \frac{\bc{1-2(1-q_{01}-q_{11})^\Gamma+q_{00}^\Gamma}^M}{\binom{N\Delta}{\alpha k\Delta,\,(1-\alpha)k\Delta,\,(1-\alpha)k\Delta,\,(N-2k+\alpha k)\Delta}q_{11}^{\alpha k\Delta} q_{10}^{2(k-\alpha k)\Delta} q_{00}^{N\Delta-2k\Delta+\alpha k\Delta}}  \\
    &\quad\geq  \Erw_{\QQ^\star_{\Delta, \Gamma}}[Z(\G, \alpha)] \exp \bc{ o \bc{k \Delta} },
\end{align*}and then using Proposition \ref{advanced_first_moment} concludes the proof.

\subsection{Proof of Lemma \ref{lem:equiv}}\label{Sec_transfer}

We have two adjustments to take care of in order to transfer our results from $\QQ_{\Delta,\Gamma}^\star$ to $\QQ_{\Delta}$. First, the configuration model $\QQ_{\Delta,\Gamma}^\star$ may feature multi-edges, while $\QQ_{\Delta}$ does not. Second, under $\QQ_{\Delta,\Gamma}^\star$ we assume the test degrees to be regular. These two issues are handled in Sections~\ref{transfer_multiedges} and~\ref{transfer_regular}, respectively.

Our proof will pass from $\QQ_{\Delta,\Gamma}^\star$ to $\QQ_{\Delta}$ by way of a third null model $\QQ_{\Delta}^\star$ which is defined exactly like $\QQ_{\Delta}$ with the sole difference that now each individual chooses $\Delta$ tests \emph{with replacement} (i.e., multi-edges are possible).

Formally, the proof of Lemma~\ref{lem:equiv} follows immediately by combining Lemmas~\ref{lem:multi-edge-1},~\ref{lem:multi-edge-2}, and~\ref{lem:regularize} below.

\subsubsection{Existence of Multi-edges}\label{transfer_multiedges}

In this section we show how to compare important properties of $\QQ_{\Delta}$ and $\QQ_{\Delta}^\star$. Our first result concerns $\vec Z(G)$.

\begin{lemma}\label{lem:multi-edge-1}
We have
\[ \Erw_{ \mathbb{Q}_\Delta } \brk{ \vec Z (G) } \geq \Erw_{ \mathbb{Q}_\Delta^\star } \brk{ \vec Z (G) }. \]
\end{lemma}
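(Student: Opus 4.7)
The plan is to prove the inequality via a monotone coupling. By linearity of expectation and the symmetry of both models across individuals, we have $\Erw_{\QQ_\Delta}[\vec Z(G)] = \binom{N}{k}\Pr_{\QQ_\Delta}[\sigma_0 \in \vec S(G)]$ for any fixed $\sigma_0 \in \{0,1\}^N$ of Hamming weight $k$, and likewise for $\QQ_\Delta^\star$. The event $\{\sigma_0 \in \vec S(G)\}$ is that every test is covered by some infected individual, namely $\bigcup_{i \,:\, \sigma_0(i)=1} T_i = [M]$, where $T_i$ denotes the set of tests chosen by individual $i$ under $\QQ_\Delta$; write $T_i^\star$ for the set of distinct tests chosen by individual $i$ under $\QQ_\Delta^\star$. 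So it suffices to couple $\QQ_\Delta$ and $\QQ_\Delta^\star$ so that $T_i^\star \subseteq T_i$ for every infected $i$, since then $\bigcup_i T_i^\star \subseteq \bigcup_i T_i$ and the $\QQ_\Delta^\star$ cover event entails the $\QQ_\Delta$ cover event.

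Because both models are independent across individuals, it is enough to build the coupling for a single individual. The construction I propose is: first sample $T^\star$ as the set of distinct values among $\Delta$ i.i.d.\ uniform draws from $[M]$ (this is the $\QQ_\Delta^\star$ marginal); setting $s := |T^\star| \leq \Delta$, sample a uniformly random $(\Delta-s)$-subset $R$ of $[M] \setminus T^\star$ and put $T := T^\star \cup R$. By construction $T^\star \subseteq T$ and $|T| = \Delta$, so $T \in \binom{[M]}{\Delta}$; the only thing left to verify is that the marginal law of $T$ is uniform on $\binom{[M]}{\Delta}$, matching the $\QQ_\Delta$ marginal.

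I expect this to be the only nontrivial point, and I would handle it by a symmetry argument rather than a direct combinatorial identity. The full construction $(T^\star, R, T)$ is equivariant under the natural action of the symmetric group $S_M$ on $[M]$, so the law of $T$ is $S_M$-invariant; since $T$ almost surely lies in $\binom{[M]}{\Delta}$ and $S_M$ acts transitively on $\binom{[M]}{\Delta}$, every $\Delta$-subset must receive equal probability, and hence the law of $T$ is uniform on $\binom{[M]}{\Delta}$, as required. Composing these single-individual couplings independently across the $k$ infected individuals then completes the proof.

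The main (modest) obstacle is to find the right form of the coupling: a naive attempt to write down the conditional distribution of $T$ given $T^\star$ and verify the marginal of $T$ by direct computation leads to unwieldy identities involving surjection counts, whereas the $S_M$-symmetry shortcut bypasses these entirely. Once the single-individual coupling is in place, both the reduction to a fixed $\sigma_0$ and the passage from single-individual to joint coupling are one-line symmetry/independence arguments.
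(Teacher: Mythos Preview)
Your proposal is correct and is essentially the same monotone-coupling argument as the paper's: the paper simply says ``resample duplicate edges until no multi-edges remain,'' which is exactly your construction of adding $\Delta-|T^\star|$ fresh tests from $[M]\setminus T^\star$, carried out per individual. Your version is more explicit in two respects---you reduce to a single individual by symmetry/independence, and you justify that the resulting $T$ is uniform on $\binom{[M]}{\Delta}$ via the $S_M$-equivariance shortcut---whereas the paper leaves the correctness of the marginal implicit.
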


\begin{proof}
Given a sample $G^\star \sim \QQ^\star_\Delta$, we can produce a sample $G \sim \QQ_\Delta$ by resampling the duplicate edges until no multi-edges remain. This process can only increase the number of solutions: for every $\tau \in \vec S(G^\star)$, we also have $\tau \in \vec S(G)$.
\end{proof}

We also have the converse bound for $\vec Z(G,\alpha)$.

\begin{lemma}\label{lem:multi-edge-2}
For any fixed $0<c < \log^{-1}(2)$, $0<\theta<1$, and $0 < \delta \leq \alpha \leq 1$, 
\[ {\EE_{ \QQ_{\Delta} }[ \vec Z(G,\alpha)]} \leq {{\EE_{ \QQ_{\Delta}^\star }[ \vec Z(G,\alpha) ]}} \exp( o(k\Delta) ) .\]
\end{lemma}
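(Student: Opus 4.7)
My plan is to exploit the fact that $\QQ_\Delta$ is exactly $\QQ_\Delta^\star$ conditioned on the event $E$ that every individual has $\Delta$ \emph{distinct} neighbours (no multi-edges). By symmetry and linearity of expectation,
\[
\EE_{\QQ_\Delta}[\vec Z(G,\alpha)] = \binom{N}{\alpha k,(1-\alpha)k,(1-\alpha)k}\, p_\alpha, \qquad
\EE_{\QQ_\Delta^\star}[\vec Z(G,\alpha)] = \binom{N}{\alpha k,(1-\alpha)k,(1-\alpha)k}\, p_\alpha^\star,
\]
where $p_\alpha := \QQ_\Delta[\sigma,\tau\in\vec S(G)]$ and $p_\alpha^\star := \QQ_\Delta^\star[\sigma,\tau\in\vec S(G)]$ for any fixed $(\sigma,\tau)$ of Hamming weight $k$ with $\langle \sigma,\tau\rangle = \alpha k$. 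The lemma then reduces to proving $p_\alpha \le p_\alpha^\star \exp(o(k\Delta))$.

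The key observation is a locality/independence argument. Let $S := \mathrm{supp}(\sigma) \cup \mathrm{supp}(\tau)$, so $|S| = (2-\alpha)k$, and split $E = E_S \cap E_{S^c}$ according to whether the ``distinct edges'' requirement is for individuals in $S$ or outside. Under $\QQ_\Delta^\star$ the edges of different individuals are independent, so (i) the event $\{\sigma,\tau\in\vec S(G)\}$ depends only on edges incident to $S$ (since $\mathrm{supp}(\sigma),\mathrm{supp}(\tau)\subseteq S$), and (ii) $E_{S^c}$ is independent of both $E_S$ and $\{\sigma,\tau\in\vec S(G)\}$. This lets me discard the $E_{S^c}$ conditioning and then upper-bound the joint probability by the marginal:
\[
p_\alpha \;=\; \QQ_\Delta^\star[\sigma,\tau\in\vec S(G)\mid E_S\cap E_{S^c}] \;=\; \QQ_\Delta^\star[\sigma,\tau\in\vec S(G)\mid E_S] \;\le\; \frac{p_\alpha^\star}{\QQ_\Delta^\star[E_S]}.
\]

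It remains to estimate $\QQ_\Delta^\star[E_S]$. Since each individual independently chooses $\Delta$ tests with replacement, the probability of distinct choices is $\prod_{i=0}^{\Delta-1}(1-i/M) \ge \exp(-\Delta^2/M \cdot(1+o(1)))$, so $\QQ_\Delta^\star[E_S] \ge \exp(-(2-\alpha)k\Delta^2/M\cdot(1+o(1)))$. By \eqref{eq_m_n}, $M = \Theta(k\Delta)$, hence $\QQ_\Delta^\star[E_S] \ge \exp(-O(\Delta))$. Combined with $\Delta = O(\log n)$ and $k\Delta = n^{\theta+o(1)}\log n = \omega(\log n)$, we get $1/\QQ_\Delta^\star[E_S] = \exp(O(\Delta)) = \exp(o(k\Delta))$, which finishes the argument.

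The main conceptual hurdle is that the naive bound ``$\QQ_\Delta = \QQ_\Delta^\star[\,\cdot\mid E\,]$ hence $p_\alpha \le p_\alpha^\star/\QQ_\Delta^\star[E]$'' is \emph{too weak}: the global event $E$ involves all $N$ individuals, and since $N$ can substantially exceed $k^2$ for small $\theta$ and small $c$, one computes $1/\QQ_\Delta^\star[E] = \exp(\Theta(N\Delta/k))$, which is not $\exp(o(k\Delta))$ in the relevant regime $c<1/\ln 2$. The point of the locality argument is that one only pays the ``distinct edges'' price for the $(2-\alpha)k$ individuals that actually affect whether $\sigma,\tau$ are solutions; the other $N-(2-\alpha)k$ individuals contribute an independent factor to both $\QQ_\Delta^\star[E]$ and the distribution of $\{\sigma,\tau\in\vec S(G)\}$, which cancels. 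This is what reduces the correction factor from $\exp(\Theta(N\Delta/k))$ to the harmless $\exp(O(\Delta))$.
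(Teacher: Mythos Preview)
Your proof is correct and follows essentially the same approach as the paper: reduce to comparing the single-pair probabilities, observe that the ``solution'' event depends only on edges incident to $S=\mathrm{supp}(\sigma)\cup\mathrm{supp}(\tau)$, condition only on the no-multi-edge event restricted to $S$ (the paper calls this event $\cE$, your $E_S$), and then bound $\QQ_\Delta^\star[E_S]^{-1}=\exp(O(\Delta))=\exp(o(k\Delta))$. Your explicit decomposition $E=E_S\cap E_{S^c}$ and the independence justification are a slightly more detailed version of what the paper states in one line.
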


\begin{proof}
Fix an arbitrary pair $\sigma,\tau \in \{0,1\}^N$ with Hamming weight $k$ and overlap $\alpha k$. Using linearity of expectation,
$${\EE_{ \QQ_{\Delta} }[ \vec Z(G,\alpha) ]}=\binom{N}{(1-\alpha)k,\alpha k, \alpha k}
\QQ_{\Delta}(\sigma,\tau \in \cS(G))$$
and
$${\EE_{ \QQ^\star_{\Delta} }[ \vec Z(G,\alpha) ]}=\binom{N}{(1-\alpha)k,\alpha k, \alpha k}
\QQ^\star_{\Delta}(\sigma,\tau \in \cS(G)).$$

Therefore it suffices to show
\begin{align}\label{eq:uncond_goal1}
\QQ_{\Delta}(\sigma,\tau \in \cS(G)) \leq  \exp( o(k\Delta) ) \QQ^\star_{\Delta} (\sigma,\tau \in \cS(G)).
\end{align}

Under $\G \sim \QQ^\star_{\Delta}$, let $\cE$ denote the event that there are no multi-edges incident to individuals that have label $1$ under $\sigma$ or $\tau$ (or both). Notice that
\[ \QQ_{\Delta}(\sigma,\tau \in \cS(G) ) = \QQ^\star_{\Delta}(\sigma,\tau \in \cS(G) \mid \cE) \]
because the event $\{\sigma,\tau \in \cS(G)\}$ depends only the edges incident to individuals in the union of supports $\supp{\sigma} \cup \supp{\tau}$. One can directly bound the probability $\QQ_\Delta^\star(\cE_M) = k^{-O(1)} = \exp(o(k\Delta))$ as in the proof of Lemma~\ref{lem:infected-multi-edge}, and so we conclude~\eqref{eq:uncond_goal1}.
\end{proof}

\subsubsection{The Regularisation Process}\label{transfer_regular}

In Section~\ref{transfer_multiedges} we showed how to transfer results from $\QQ_{\Delta}^\star$ to $\QQ_{\Delta}$. In this section we show how to transfer results from $\QQ_{\Delta,\Gamma}^\star$ to $\QQ_\Delta^\star$. Namely, our goal is to establish the following result which (combined with Lemmas~\ref{lem:multi-edge-1} and~\ref{lem:multi-edge-2}) completes the proof of Lemma~\ref{lem:equiv}.

\begin{lemma}\label{lem:regularize}
For any fixed $\alpha \in (0,1]$,
\begin{align*}
    \EE_{\QQ^\star_{\Delta, \Gamma}}[\vec Z(G, \alpha)] = \EE_{\QQ^\star_{\Delta}}[\vec Z(G, \alpha)] \exp \bc{ o(k\Delta) }.
\end{align*}
In particular,
\begin{align*}
    \EE_{\QQ^\star_{\Delta, \Gamma}}[\vec Z(G)] = \EE_{\QQ^\star_{\Delta}}[\vec Z(G)] \exp \bc{ o(k\Delta) }.
\end{align*}
\end{lemma}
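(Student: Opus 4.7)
The key structural observation is that $\QQ^\star_{\Delta,\Gamma}$ coincides with $\QQ^\star_\Delta$ conditioned on every test having degree exactly $\Gamma$. Indeed, under $\QQ^\star_\Delta$ each of the $N\Delta$ individual half-edges picks a test independently and uniformly at random, so the test-degree sequence $\vec d = (d_a)_{a \in [M]}$ is multinomial with $N\Delta$ trials and uniform bin probabilities $1/M$, with mean $\Gamma$ and variance $\Gamma(1-1/M)$ per coordinate. Conditionally on $\vec d$, the multigraph follows the configuration model $\QQ^\star_{\Delta, \vec d}$ with (possibly non-uniform) test degrees $\vec d$. Consequently,
\[
\EE_{\QQ^\star_\Delta}[\vec Z(G, \alpha)] = \Erw_{\vec d \sim \QQ^\star_\Delta}\brk{\EE_{\QQ^\star_{\Delta, \vec d}}[\vec Z(G, \alpha)]},
\]
and the goal is to show that the inner expectation matches $\EE_{\QQ^\star_{\Delta,\Gamma}}[\vec Z(G,\alpha)]$ up to a multiplicative $\exp(o(k\Delta))$ factor for a typical $\vec d$, with a negligible atypical tail.

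First I would extend the product-space/Bayes argument behind Lemma~\ref{Lem_Second_Moment} to an arbitrary degree sequence $\vec d$ with $\sum_a d_a = N\Delta$. Replacing the product-space index set $[M] \times [\Gamma]$ by $[M] \times [d_a]$ and solving the analogue of \eqref{eqqq1}-\eqref{eqqq4} for a (now $\vec d$-dependent) saddle point $(q^{\vec d}_{00}, q^{\vec d}_{01}, q^{\vec d}_{10}, q^{\vec d}_{11})$, the local limit theorem delivers a formula identical to \eqref{EZ_alpha} except that the factor $(1 - 2(1-q_{01}-q_{11})^\Gamma + q_{00}^\Gamma)^M$ is replaced by $\prod_{a=1}^M [1 - 2(1-q^{\vec d}_{01}-q^{\vec d}_{11})^{d_a} + (q^{\vec d}_{00})^{d_a}]$; this reduces to Lemma~\ref{Lem_Second_Moment} when $d_a \equiv \Gamma$. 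Next I would Taylor-expand this expression around the regular point $\vec d = (\Gamma, \ldots, \Gamma)$. The first-order contribution vanishes exactly because $\sum_a(d_a - \Gamma) = 0$ (as $\sum d_a = N\Delta = M\Gamma$ is fixed). A direct calculation of the second derivatives of $\log[1 - 2(1-q)^d + (q')^d]$ at the saddle-point values $q^\Gamma_{st} = O(k/N)$, combined with implicit-differentiation bounds $|q^{\vec d}_{st} - q^\Gamma_{st}| = O(\max_a |d_a - \Gamma|/\Gamma)$ coming from the saddle-point equations, yields a quadratic remainder of magnitude $O((k/N)^2) \sum_a (d_a - \Gamma)^2$.

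Define the typical event $\cT = \{\sum_a(d_a - \Gamma)^2 \leq K \cdot N\Delta\}$ for a sufficiently large constant $K$. A $\chi^2$-type tail bound for the Pearson statistic $\sum_a(d_a - \Gamma)^2/\Gamma$ (valid because $\Gamma \to \infty$) gives $\QQ^\star_\Delta(\cT^c) \leq \exp(-\Omega(K k\Delta))$, while on $\cT$ the Taylor remainder is at most $O(K \cdot k^2 \Delta/N) = o(k\Delta)$ since $k = o(N)$ for all $\theta \in (0,1)$. Hence $\EE_{\QQ^\star_{\Delta, \vec d}}[\vec Z(G, \alpha)] = \EE_{\QQ^\star_{\Delta,\Gamma}}[\vec Z(G,\alpha)] \exp(o(k\Delta))$ for every $\vec d \in \cT$, and averaging over $\vec d \in \cT$ recovers the claimed identity up to the atypical tail. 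The tail is absorbed via the crude bound $\EE_{\QQ^\star_{\Delta, \vec d}}[\vec Z(G, \alpha)] \leq \binom{N}{k}^2 \leq \exp(O(k\Delta))$: for $K$ chosen large enough (a finite constant depending only on the constants implicit in Propositions~\ref{advanced_first_moment} and~\ref{prop_constant_overlap}), the atypical contribution is at most $\exp(-\Omega(K k\Delta) + O(k\Delta)) = \exp(-\omega(k\Delta)) \cdot \EE_{\QQ^\star_{\Delta,\Gamma}}[\vec Z(G,\alpha)]$. The ``in particular'' statement for $\vec Z(G)$ follows by specializing the same argument to the single-configuration setup of Lemma~\ref{Lem_First_Moment}. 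The main obstacle is executing the second-order Taylor expansion uniformly over $\cT$: one must use implicit differentiation of the saddle-point equations to control how $(q^{\vec d}_{st})$ varies with $\vec d$ and verify that all error terms collapse to the $(k/N)^2\sum_a(d_a-\Gamma)^2$ regime, and one must calibrate $K$ so that the $\chi^2$-tail offsets the crude $e^{O(k\Delta)}$ bound on $\vec Z(G,\alpha)$.
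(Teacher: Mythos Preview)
Your decomposition over test-degree sequences with a typical/atypical split is a natural alternative to the paper's route and shares the same Taylor-expansion core, but the specific remainder bound you state does not hold on your event $\cT$. The event $\cT=\{\sum_a(d_a-\Gamma)^2\le KN\Delta\}$ controls only the \emph{aggregate} deviation: on $\cT$ a single $d_a$ can lie as far as $\sqrt{KM\Gamma}$ from $\Gamma$, and since $M/\Gamma=\Theta(k^2\Delta/N)$ this exceeds $\Gamma$ itself whenever $k^2\Delta\gtrsim N$, which happens for a nonempty range of $(\theta,c)$ with $c<1/\ln 2$ (roughly $\theta\gtrsim 1/4$). For $d_a$ near zero one has $\phi(d)=1-2(1-q_{01}-q_{11})^{d}+q_{00}^{d}\approx d\,q_{11}$, so $\partial_d^2\log\phi$ is of order $(k/N)^2(\Gamma/d)^2$ rather than $(k/N)^2$; your claimed uniform quadratic remainder $O((k/N)^2)\sum_a(d_a-\Gamma)^2$ therefore fails on $\cT$. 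There is a real tension: tightening the event to control each $d_a$ individually (like the paper's $\cN$) repairs the Taylor step but leaves $\Pr(\cT^c)$ only $\exp(-\mathrm{polylog}\,n)$, far too weak to offset the crude $\binom{N}{k}^2=\exp(\Theta(k\Delta))$ bound on the tail. (Two smaller slips: with $K$ a fixed constant you cannot get $\exp(-\omega(k\Delta))$, only $\exp(-Ck\Delta)$; and to absorb the tail you need a \emph{lower} bound $\EE_{\QQ^\star_{\Delta,\Gamma}}[\vec Z(G,\alpha)]\ge\exp(-C_\alpha k\Delta)$, which comes from the exact formula~\eqref{EZ_alpha} and not from Propositions~\ref{advanced_first_moment}--\ref{prop_constant_overlap}, which only bound it from above.)

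The paper dissolves this tension by a change-of-measure identity (Lemma~\ref{lem:cond_N_invariant}) rather than a tail bound: defining a tilted law $\PP^\star_\alpha(G)\propto\vec Z(G,\alpha)\,\QQ^\star_\Delta(G)$, one has
\[
\frac{\EE_{\QQ^\star_\Delta}[\vec Z(G,\alpha)\mid\cN]}{\EE_{\QQ^\star_\Delta}[\vec Z(G,\alpha)]}=\frac{\PP^\star_\alpha(\cN)}{\QQ^\star_\Delta(\cN)}=1+o(1),
\]
so conditioning on the tight event $\cN=\{\max_a|d_a-\Gamma|\le\log^2 N\sqrt\Gamma\}$ costs nothing, and the comparison with $\QQ^\star_{\Delta,\Gamma}$ then runs only on $\cN$, where each $d_a=(1+o(1))\Gamma$ and your second-derivative estimate is legitimate. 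If you want to salvage the pure splitting approach, one option for the upper-bound direction is to fix the $q$'s at the regular saddle $q^\Gamma$, use the any-$q$ upper bound~\eqref{EZ_alpha_UB}, and then handle the (at most $O(KM/\Gamma)=O(Kk^2\Delta/N)$ many) indices with $d_a\notin[\Gamma/2,2\Gamma]$ by the crude estimate $|\log\phi(d_a)|\le O(\log n)$; their total contribution is then $O(k^2\Delta\log n/N)=o(k\Delta)$, but this is a genuinely different (and more delicate) argument than the uniform-Hessian one you wrote.
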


Before proving this lemma, we introduce some notation.
For $j \in [M]$, we use $\vecGamma_j$ to denote the random quantity $\abs{\partial a_j}$, i.e., the number of individuals in test $j$. For technical reasons we will need to condition on the following high-probability event which states that the test degrees are well concentrated.

\begin{lemma}\label{eq_conc_gamma}
With probability $1-o(1)$ over $G \sim \QQ_\Delta^*$,
\begin{align} \label{eq:def-N}
    \frac{N \Delta}{M} - \log^2(N) \sqrt{\frac{N \Delta}{M}} \leq \min_j \vec{\Gamma}_j \leq \max_j \vec \Gamma_j \leq \frac{N \Delta}{M} + \log^2(N) \sqrt{\frac{N \Delta}{M}}.
\end{align}
\end{lemma}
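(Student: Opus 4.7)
The plan is to compute the distribution of each test degree $\vec\Gamma_j$ exactly under $\QQ_\Delta^\star$, then apply a standard Chernoff/Bernstein bound followed by a union bound over the $M$ tests.

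First, I would observe that under $\QQ_\Delta^\star$ each of the $N$ individuals independently chooses $\Delta$ tests with replacement, uniformly at random from $[M]$. Fix a test $j$. For each individual $i$, let $X_i^{(j)}$ denote the number of times individual $i$ selects test $j$; then $X_i^{(j)} \sim \Bin(\Delta, 1/M)$, and the $X_1^{(j)},\ldots,X_N^{(j)}$ are mutually independent. Consequently
\[
\vec\Gamma_j \;=\; \sum_{i=1}^N X_i^{(j)} \;\sim\; \Bin(N\Delta, 1/M), \qquad \EE[\vec\Gamma_j] \;=\; \mu \;:=\; \frac{N\Delta}{M}.
\]

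Next, I would apply Bernstein's inequality to the binomial sum. Writing $t = \log^2(N)\sqrt{\mu}$, Bernstein gives
\[
\Pr\!\left[\,|\vec\Gamma_j - \mu| > t\,\right] \;\leq\; 2\exp\!\left(-\frac{t^2}{2\mu + (2/3)t}\right).
\]
To ensure the denominator is $O(\mu)$ (so the bound becomes $\exp(-\Omega(\log^4 N))$), I need $t = o(\mu)$, i.e.\ $\mu = \omega(\log^4 N)$. This is the only place where the scaling matters: from \eqref{eq_m_n} and $k = n^{\theta+o(1)}$, we have $\mu = N\Delta/M = \Theta(N/k) = n^{1 - \theta - (1-\theta)c\log^2(2) + o(1)}$, which grows polynomially in $n$ throughout the regime $c < \log^{-2}(2)$ fixed in Section~\ref{appendix_intro_conc}. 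So certainly $\mu \gg \log^4 N$.

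Finally, a union bound over the $M = n^{O(1)}$ tests gives
\[
\Pr\!\left[\,\max_j |\vec\Gamma_j - \mu| > \log^2(N)\sqrt{\mu}\,\right] \;\leq\; 2M\exp(-\Omega(\log^4 N)) \;=\; o(1),
\]
which is precisely the desired concentration. There is no essential obstacle here; the only ``care'' item is confirming that $\mu$ is polynomially large in the parameter regime, which follows directly from the standing assumption $c < \log^{-2}(2)$ and $\theta \in (0,1)$.
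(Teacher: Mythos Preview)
Your proof is correct and follows exactly the approach sketched in the paper: identify $\vec\Gamma_j \sim \Bin(N\Delta,1/M)$, apply Bernstein's inequality, and union bound over the $M$ tests. The paper's one-line proof simply asserts this; your version supplies the details, including the check that $\mu = N\Delta/M$ is polynomially large under the standing assumption $c < \log^{-2}(2)$.
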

\noindent Since $\vec\Gamma_j \sim \Bin(N\Delta,1/M)$, the proof is a direct consequence of Bernstein's inequality and a union bound over tests. Let $\cN$ denote the event that~\eqref{eq:def-N} holds. We next show that conditioning on $\cN$ does not change the expectation of $\vec Z(G,\alpha)$ too much.
\begin{lemma}\label{lem:cond_N_invariant}
We have
\[ \EE_{\QQ^\star_\Delta}[\vec Z(G, \alpha) \mid \cN] = (1+o(1)) \EE_{\QQ^\star_{\Delta}}[\vec Z(G, \alpha)]. \]
\end{lemma}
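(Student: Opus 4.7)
}
By symmetry, for any fixed pair $(\sigma^*,\tau^*)$ of Hamming weight $k$ with overlap $\lfloor\alpha k\rfloor$,
\[
\EE_{\QQ^\star_\Delta}[\vec Z(G,\alpha)\,\One_{\cN}] = \binom{N}{\alpha k,(1-\alpha)k,(1-\alpha)k}\,\Pr_{\QQ^\star_\Delta}(\sigma^*,\tau^*\in\vec S(G),\cN),
\]
and similarly without the indicator. Combined with $\Pr(\cN)=1-o(1)$ from Lemma~\ref{eq_conc_gamma}, the lemma therefore reduces to showing that for any such pair
\[
\Pr_{\QQ^\star_\Delta}\bigl(\cN\,\big|\,\sigma^*,\tau^*\in\vec S(G)\bigr)=1-o(1).
\]

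\paragraph{Decomposition of test degrees.}
Let $I=\supp(\sigma^*)\cup\supp(\tau^*)$, so $|I|=(2-\alpha)k$, and write $\vec\Gamma_j=X_j^U+X_j^I$, where $X_j^U$ (resp.\ $X_j^I$) counts the half-edges from individuals in $V\setminus I$ (resp.\ in $I$) landing in test $j$. Under $\QQ^\star_\Delta$ each of the $N\Delta$ half-edges independently picks a uniform test, so the vectors $(X_j^U)_j$ and $(X_j^I)_j$ are independent, and the event $\{\sigma^*,\tau^*\in\vec S(G)\}$ is measurable with respect to the $I$-half-edges alone. Hence conditioning leaves the distribution of $(X_j^U)_j$ unchanged. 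For each $j$, $X_j^U\sim\Bin((N-|I|)\Delta,1/M)$ with mean $\mu_U=\mu-O(1)$ (where $\mu=N\Delta/M$), and Bernstein's inequality together with a union bound over the $M=n^{O(1)}$ tests gives $\max_j|X_j^U-\mu_U|\le (1-\varepsilon)\log^2(N)\sqrt{\mu_U}$ with probability $1-\exp(-\Omega(\log^4 N))$ for any fixed $\varepsilon>0$, even after conditioning.

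\paragraph{Controlling the $I$-contribution.}
It remains to show that, under the conditioning, $\max_j X_j^I\le C\log N$ with probability $1-o(1)$ for some constant $C$. Write the conditioning as $\cE=\bigcap_j \cE_j$ where $\cE_j$ is the event ``test $j$ receives some half-edge of type $A\cup AB$ and some of type $B\cup AB$''. Marginally $X_j^I\sim \Bin(|I|\Delta,1/M)$ has mean $(2-\alpha)\log 2=O(1)$, and a direct calculation gives $\Pr(\cE_j\mid X_j^I=k)=1-2((1-\alpha)/(2-\alpha))^k$ for $k\ge 1$, hence $\Pr(\cE_j)=\Theta(1)$. Since conditional on $X_j^I=k$ the remaining $|I|\Delta-k$ half-edges are i.i.d.\ uniform on $[M]\setminus\{j\}$, and $k\le C\log N\ll |I|\Delta$, the probability $\Pr(\cE_{-j}\mid X_j^I=k)$ differs from $\Pr(\cE_{-j})$ by a factor $1+o(1)$ uniformly in such $k$. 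A summation argument then gives
\[
\Pr(X_j^I>C\log N\mid \cE)\le \frac{\sum_{k>C\log N}\Pr(X_j^I=k)\,\Pr(\cE_{-j}\mid X_j^I=k)}{\Pr(\cE_j)\cdot(1-o(1))\Pr(\cE_{-j})}=O\!\left(\Pr(X_j^I>C\log N)\right),
\]
and multiplicative Chernoff yields $\Pr(X_j^I>C\log N)\le\exp(-\Omega(C\log^2 N))$. A union bound over $j$ finishes this step.

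\paragraph{Combining.}
On the intersection of the two high-probability events above, for every $j$,
\[
|\vec\Gamma_j-\mu|\le|X_j^U-\mu_U|+X_j^I+|\mu-\mu_U|\le(1-\varepsilon)\log^2(N)\sqrt{\mu_U}+C\log N+O(1),
\]
which is at most $\log^2(N)\sqrt{\mu}$ for large $N$, because the slack $\varepsilon\log^2(N)\sqrt{\mu_U}=\omega(\log N)$ absorbs the $C\log N$ term (using $\sqrt{\mu}=n^{\Omega(1)}$, guaranteed by $c<\log^{-2}(2)$). Thus $\Pr(\cN\mid\sigma^*,\tau^*\in\vec S(G))=1-o(1)$, completing the proof. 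The main obstacle is the middle step: the $M$ constraints $\cE_j$ are correlated through the multinomial structure of half-edge destinations, so one must exploit that a single test's degree moves under conditioning on $\cE_{-j}$ only by $O(\log N/M)$ on average in order to transfer the unconditioned Chernoff tail bound on $X_j^I$ to the conditioned setting without amplifying it by more than a constant.
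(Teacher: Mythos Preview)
Your reduction to showing $\Pr_{\QQ^\star_\Delta}(\cN\mid\sigma^*,\tau^*\in\vec S(G))=1-o(1)$ is exactly the paper's strategy, rephrased: the paper packages it as the change-of-measure identity
\[
\frac{\EE_{\QQ^\star_\Delta}[\vec Z(G,\alpha)\mid\cN]}{\EE_{\QQ^\star_\Delta}[\vec Z(G,\alpha)]}=\frac{\PP_\alpha^\star(\cN)}{\QQ^\star_\Delta(\cN)},
\]
where $\PP_\alpha^\star$ is the doubly-planted law (i.e., $\QQ^\star_\Delta$ conditioned on a random overlap-$\alpha k$ pair being solutions), and then needs $\PP_\alpha^\star(\cN)=1-o(1)$. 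Your decomposition $\vec\Gamma_j=X_j^U+X_j^I$, with the non-$I$ part unaffected by conditioning, is the natural move for that.

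The gap is in your middle step, and it is sharper than you acknowledge. The claim that $\Pr(\cE_{-j}\mid X_j^I=\ell)=(1+o(1))\Pr(\cE_{-j})$ uniformly for $\ell\le C\log N$ is false: diverting $\ell$ half-edges to test $j$ shifts each of the other $M-1$ tests' coverage probability by a $(1-\Theta(1/M))$ factor, and these compound to $e^{-\Theta(\ell)}$, i.e.\ $N^{-\Theta(1)}$ when $\ell=\Theta(\log N)$. Relatedly, the denominator bound $\Pr(\cE)\ge\Pr(\cE_j)(1-o(1))\Pr(\cE_{-j})$ is unjustified---$\cE_j$ and $\cE_{-j}$ compete for the same fixed pool of $I$-half-edges, and by negative association of bin loads one actually has $\Pr(\cE)\le\Pr(\cE_j)\Pr(\cE_{-j})$. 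Your route is salvageable (monotonicity gives $\Pr(\cE_{-j}\mid X_j^I=\ell)\le\Pr(\cE_{-j}\mid X_j^I=0)=O(\Pr(\cE_{-j}))$ for the numerator, and a constant-factor lower bound $\Pr(\cE)\ge c\,\Pr(\cE_{-j})$ suffices since the $\exp(-\Omega(\log^2 N))$ Chernoff tail absorbs constants), but establishing that constant-factor lower bound still requires real work. The paper instead points to the Poissonization device of Lemma~\ref{lem:sep-claim-2}: replace the conditioned multinomial by i.i.d.\ per-test ``both-classes-present'' type vectors further conditioned on hitting the exact global type totals, observe the latter has probability $N^{-O(1)}$ by a local limit theorem, and transfer the (now trivial) i.i.d.\ tail bound back---this cleanly avoids having to track how $\Pr(\cE_{-j})$ reacts to perturbations of the half-edge counts.
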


\begin{proof}
Define a planted model $\PP_\alpha^\star$ as follows. To sample $G \sim \PP_\alpha^\star$, first draw two $k$-sparse binary vectors $\sigma,\tau \in \{0,1\}^N$ uniformly at random subject to having overlap $\langle \sigma,\tau \rangle = \alpha k$. Then draw $G$ from $\QQ_\Delta^\star$ conditioned on the event that both $\sigma$ and $\tau$ are solutions. Note that $\PP_\alpha^\star(G)$ is proportional to $\vec Z(G,\alpha)$, that is,
\[ \PP_\alpha^\star(G) = \frac{\QQ_\Delta^\star(G) \vec Z(G,\alpha)}{\EE_{\QQ_\Delta^\star}[\vec Z(G,\alpha)]}. \]
This implies the identity
\[ \frac{\EE_{\QQ_\Delta^\star}[\vec Z(G,\alpha) \mid \cN]}{\EE_{\QQ_\Delta^\star}[\vec Z(G,\alpha)]} = \frac{\PP_\alpha^\star(\cN)}{\QQ_\Delta^\star(\cN)}. \]
The result follows because $\cN$ is a high-probability event under both $\QQ_\Delta^\star$ and $\PP_\alpha^\star$. For $\QQ_\Delta^\star$ this is Lemma~\ref{eq_conc_gamma}, and the claim for $\PP_\alpha^\star$ can be proved similarly by handling the contribution from ``infected'' individuals similarly to the proof of Lemma~\ref{lem:sep-claim-2}.
\end{proof}

\begin{proof}[Proof of Lemma~\ref{lem:regularize}]

The second desired claim follows from the first by setting $\alpha=1$, so we focus on establishing the first. Furthermore, using Lemma~\ref{lem:cond_N_invariant} it suffices to prove 
\begin{align*}
    \EE_{\QQ^\star_{\Delta, \Gamma}}[\vec Z(G, \alpha)] = \EE_{\QQ^\star_{\Delta}}[\vec Z(G, \alpha) \mid \cN] \exp \bc{ o(k\Delta) }.
\end{align*}
Fix an arbitrary pair of $k$-sparse binary vectors $\sigma,\tau \in \{0,1\}^N$ with overlap $\langle \sigma,\tau \rangle = \alpha k$. By linearity of expectation,
\begin{align*}
     \EE_{\QQ^\star_{\Delta, \Gamma}}[\vec Z(G, \alpha)] & = \binom{N}{k} \binom{k}{\alpha k} \binom{N - k}{(1 - \alpha) k} \QQ^\star_{\Delta, \Gamma} \cbc{ \sigma, \tau \in \vec S(G) }   
\end{align*}
and 
\begin{align*}
     \EE_{\QQ^\star_{\Delta}}[\vec Z(G, \alpha) \mid \cN ] & = \binom{N}{k} \binom{k}{\alpha k} \binom{N - k}{(1 - \alpha) k} \QQ^\star_{\Delta}\cbc{ \sigma, \tau \in \vec S(G)  \mid \cN}. 
\end{align*}
Hence it suffices to show 
\begin{align}\label{eq:prob_goal_1}
      \QQ^\star_{\Delta, \Gamma} \cbc{ \sigma, \tau \in \cS(G) } = \QQ^\star_{\Delta}\cbc{ \sigma, \tau \in \vec S(G)  \mid \cN} \exp \bc{ o(k\Delta) }.
\end{align}

To prove \eqref{eq:prob_goal_1} we employ the auxiliary probability space used also in the proof of Lemma~\ref{Lem_Second_Moment}. We describe again here its definition and quick motivation. We fix an \emph{arbitrary} (to be chosen appropriately later) choice of probability values $q_{c,d}>0$, where $c,d \in \{0,1\},$ which are solely required to sum up to 1. Now notice that to prove \eqref{eq:prob_goal_1} we are only interested for both $\QQ^\star_{\Delta}$ and $\QQ^\star_{\Delta,\Gamma}$ to model the status of the edges which connect an arbitrary test with some individual labelled $1$ by $\sigma$ or $\tau.$ Let us first construct the probability space for $\QQ^\star_{\Delta,\Gamma}$. In this case, the edges can be modelled as the conditional product probability measure on the binary status of the total possible $M\Gamma$ edges (counting from the test side), say $(\omega_{ij})_{i=1 ... M, j = 1 ... \Gamma} \in \{0,1\}^{M\Gamma}, (\omega'_{ij})_{i=1 ... M, j = 1 ... \Gamma} \in \{0,1\}^{M\Gamma}$, conditioned on the event $\mathcal{R}$ which makes sure to satisfy the Hamming weight $k$ and overlap $\alpha k$ constraint on the individual side of $\sigma, \tau$, that is we condition on \begin{align*}
\mathcal{R}=\left\{	\sum_{i,j}\vec\omega_{ij}=\sum_{i,j}\vec\omega_{ij}' =k\Delta \qquad \text{and} \qquad \sum_{i,j}\vec\omega_{ij}\vec\omega_{ij}' =\alpha k\Delta.\right\}
\end{align*} The product law simply asks $(\omega_{ij})_{i=1 ... M, j = 1 ... \Gamma}, (\omega'_{ij})_{i=1 ... M, j = 1 ... \Gamma}$ to be independent random variables such that $q_{c d}$ is the probability that $\omega_{ij} = c, \omega'_{ij} = d$ for $c,d \in \{0,1\}$. The symmetries of the model suffice to conclude that for any choice of $q_{c,d}>0$ the conditional law  is indeed the law also induced by $\QQ^\star_{\Delta,\Gamma}$ on the edge status of $\sigma, \tau$.  One can construct in a straightforward manner the corresponding construction for $\QQ^\star_{\Delta}$ conditional on the (varying) test degrees $\vec \Gamma_1, \ldots, \vec \Gamma_M$. We define the corresponding conditioning event as $\tilde{R}.$

Now recall that we care to compare the event of $\sigma, \tau \in \vec S(G)$ between the two null models. For this reason in the auxiliary spaces, we denote by $\cS$ the event that all used edges in the auxiliary space for $\QQ^\star_{\Delta,\Gamma}$ ``cover all the $M$ tests,'' and similarly define the event $\tilde{\cS}$ ``cover all the $M$ tests'' for $\QQ^\star_{\Delta}$. Given the above it holds, $$\QQ^\star_{\Delta, \Gamma} \cbc{ \sigma, \tau \in \vec S(G) }=\Pr(\cS \mid \cR)$$
and$$\QQ^\star_{\Delta}\cbc{ \sigma, \tau \in \vec S(G)  \mid \cN}=\Erw_{\vec \Gamma_i}\Pr(\tilde{\cS} \mid \tilde{\cR}, \cN,\vec \Gamma_1, \ldots, \vec \Gamma_M)=\Pr(\tilde{\cS} \mid \tilde{\cR}, \cN).$$
Hence we turn our focus on proving
\begin{align}\label{eq:prob_goal}
      \Pr(\cS \mid \cR) = \Pr(\tilde{\cS} \mid \tilde{\cR}, \cN) \exp \bc{ o(k\Delta) },
\end{align}
or equivalently by Baye's rule,
\begin{align}\label{eq:prob_goal_2}
     \frac{\Pr(\cS) \Pr(\cR \mid \cS)}{\Pr(\cR)} =
     \frac{\Pr(\tilde{\cS} \mid \cN) \Pr(\tilde{\cR} \mid \tilde{\cS}, \cN)}{\Pr(\tilde{\cR} \mid \cN)} \exp \bc{ o(k\Delta) }.
\end{align}
For the purpose of intuition, notice that \eqref{eq:prob_goal} and \eqref{eq:prob_goal_2} can be interpreted as ``degree concentration'' conditions in terms of the $\vec \Gamma_i$'s.

Recall now that so far we have defined the auxiliary probability spaces for arbitrary $q_{cd}>0.$ To prove \eqref{eq:prob_goal_2} we choose the values of the $q_{cd}$ appropriately, similar to the proof of Lemma~\ref{Lem_Second_Moment}.
We first handle the case that $0 < \alpha < 1$. We define $q$ and $q_{00}, \ldots, q_{11}$ such that the equations \eqref{eqq_lem}, \eqref{eqqq1} -- \eqref{eqqq4} are satisfied and prove that in this case
\begin{align*}
    q_{10}, q_{01}, q_{11} = \Theta \bc{ \frac{k}{N} } 
\end{align*}
and therefore $q_{00} = 1 - 2 q_{01} - q_{11} = 1 - \Theta(k N^{-1})$. Indeed, the r.h.s.\ of \eqref{eqq_lem} is $\Theta \bc{\frac{k}{N}}$, because $M = \Theta (k \Delta)$ and $\Gamma = \Theta \bc{\frac{N}{k}}$. Because $\alpha$ does not depend on $N$, equation \eqref{cond_q} implies that $q_{10}, q_{01}, q_{11} = \Theta \bc{ \frac{k}{N} }$.

 We find that 
\begin{align*}
\Pr ( \tilde \cS \mid  \cN,\vec \Gamma_1, ..., \vec \Gamma_M) & = \prod_{i = 1}^M\bc{1-2(1-q_{01}-q_{11})^{\vec \Gamma_i}+q_{00}^{\vec \Gamma_i}}.   \end{align*}
Because by assumption $q_{01}, q_{11} = \Theta \bc{\frac{k}{N}}$, the following follows from a simple Taylor expansion of the logarithm. Recall that $\cN$ ensures that $\vec \Gamma_i\sim \Theta\bc{\frac{N}{k}}$ and, given $\cN$,
$$\max_i \vec \Gamma_{i} \leq  \min_i \vec \Gamma_{i} + O \bc{\log(N)\sqrt{\frac{N}{k}}}.$$ 
Thus, given $\cN$ we we have
\begin{align*}
    \sum_{i=1}^M & \log \bc{ \frac{1 - 2 \bc{1 - q_{01} - q_{11}}^{\vec \Gamma_i} + q_{00}^{\vec \Gamma_i}}{1 - 2 \bc{1 - q_{01} - q_{11}}^{\Gamma} + q_{00}^{\Gamma}} }  \\
    & = O \bc{ M \abs{ \max_i \vec \Gamma_i - \min_i \vec \Gamma_i } \bc{\log \bc{ 1 - q_{01} - q_{11}} \pm \log \bc{ 1 - 2q_{01} - q_{11}}} } 
     \\
     &= \tilde O \bc{ M \sqrt{\frac{N}{k}} \cdot \frac{k}{N} } = o(k \Delta).
\end{align*}
Therefore, we find
\begin{align}\label{1st_step}
   \Erw_{\vec \Gamma_i}\Pr(\tilde{\cS} \mid  \cN,\vec \Gamma_1, \ldots, \vec \Gamma_M)= \Pr \bc{ \tilde \cS \mid \cN } = \Pr \bc{ \cS } \exp \bc{ o(k \Delta) }.
\end{align}
A similar Taylor expansion directly shows that as in Lemma \ref{Lem_Second_Moment}
\begin{align*}
   \Pr\brk{\cR}&=\binom{N\Delta}{\alpha k\Delta,\,(1-\alpha)k\Delta,\,(1-\alpha)k\Delta}q_{11}^{\alpha k\Delta} q_{10}^{2(k-\alpha k)\Delta} q_{00}^{N\Delta-2k\Delta+\alpha k\Delta} \\
   &= \exp \bc{ o \bc{k \Delta} } \Pr \bc{ \tilde \cR \mid \cN }.
\end{align*}

We are left to prove that the conditional probabilities compare as well, more precisely that we have 
\begin{align}\label{2nd_step}
    \Erw_{\vec \Gamma_i}\Pr(\tilde{\cR} \mid  \tilde{\cS}, \cN,\vec \Gamma_1, \ldots, \vec \Gamma_M)=\Pr \bc{ \tilde \cR \mid \tilde{\cS}, \cN } = \Pr \bc{ \cR \mid \cS } \exp \bc{ o \bc{k \Delta} }.
\end{align} We know as in Lemma~\ref{Lem_Second_Moment} that $\Pr \bc{ \cR \mid \cS }=N^{-O(1)}=\exp(o(k\Delta)).$ Using an appropriate modification of the local limit theorem technique explained in Section 6 of \cite{matija_paper} one can similarly deduce $\Pr \bc{ \tilde \cR \mid \tilde{\cS}, \cN }=\exp(o(k\Delta)),$ completing the proof in the case $\alpha \in (0,1).$

The case $\alpha=1$ follows from an almost identical line of reasoning for the case $\alpha = 1$. In this case, we have $q_{01} = q_{10} = 0$ and $q_{11} = \Theta \bc{ kN^{-1} }$ as previously. The calculation of $\Pr \bc{ \cS } = \exp \bc{ o(k\Delta) } \Pr \bc{ \cS \mid \cN}$ works as above by setting $q_{01} = 0$. Indeed, given $\cN$ it suffices to prove
\begin{align*}
    (1 - (1-q_{11})^{\Erw \brk{\vec \Gamma_1}})^M = \exp \bc{ o \bc{k \Delta} } \prod_{i=1}^M (1 - (1-q_{11})^{{\vec \Gamma_i}}). 
 \end{align*}
This again follows from a Taylor expansion with $\Erw \brk{\vec \Gamma_1} \sim 2 \log 2 \frac{N}{k}$, $q_{11} = \Theta \bc{\frac{k}{N}}$ and $M \sim \frac{k\Delta}{2 \log 2}$ and verifies
\begin{align*}
    \Pr \bc{ \tilde \cS \mid \cN } = \exp \bc{ o \bc{k \Delta} } \Pr \bc{ \cS }.
\end{align*}

Analogously, as in Lemma \ref{Lem_First_Moment}, we can also verify that
\begin{align*}
    \Pr \bc{ \cR } = \binom{M \Gamma}{k \Delta} {q_{11}}^{\Delta k}(1 - q_{11})^{M \Gamma - \Delta k} = \exp \bc{ o \bc{k \Delta} } \Pr \bc{ \tilde \cR \mid \cN }.
\end{align*}
and that the local central limit theorem argument carries through again to give $\Pr \bc{ \cR \mid \cS }=N^{-O(1)}=\exp(o(k\Delta))$ and $\Pr \bc{ \tilde \cR \mid \tilde{\cS}, \cN }=\exp(o(k\Delta))$.
\end{proof}

\section{Background on Hypothesis Testing and Low-Degree Polynomials}
\label{sec:ld-background}

Suppose we are interested in distinguishing between two probability distributions $\PP = \PP_n$ and $\QQ = \QQ_n$ over $\RR^p$ (in our case, $\{0,1\}^p$), where $p = p_n$ grows with the problem size $n$. Given a single sample $X$ drawn from either $\PP$ or $\QQ$ (each chosen with probability $1/2$), the goal is to correctly determine whether $X$ came from $\PP$ or $\QQ$. There are two different objectives of interest:
\begin{itemize}
    \item {\bf Strong detection}: test succeeds with probability $1-o(1)$ as $n \to \infty$.
    \item {\bf Weak detection}: test succeeds with probability $\frac{1}{2}+\epsilon$ for some constant $\epsilon > 0$ (not depending on $n$).
\end{itemize}

A natural sufficient condition to obtain strong (respectively, weak) detection via a polynomial-based test is strong (resp., weak) separation, as discussed in Section~\ref{sec:low-deg}. We recall the definitions here for convenience. For a multivariate polynomial $f: \RR^p \to \RR$,
\begin{itemize}
\item {\bf Strong separation}: $\sqrt{\max\left\{\Var_\PP[f], \Var_\QQ[f]\right\}} = o\left(\left|\EE_\PP[f] - \EE_\QQ[f]\right|\right)$.
\item {\bf Weak separation}: $\sqrt{\max\left\{\Var_\PP[f], \Var_\QQ[f]\right\}} = O\left(\left|\EE_\PP[f] - \EE_\QQ[f]\right|\right)$.
\end{itemize}

\subsection{Chi-Squared Divergence}

The \emph{chi-squared divergence} $\chi^2(\PP \,\|\, \QQ)$ is a standard quantity that can be defined in a number of equivalent ways. Let $L = \frac{d\PP}{d\QQ}$ denote the \emph{likelihood ratio}. Since our distributions $\PP,\QQ$ are on the finite set $\{0,1\}^p$, the likelihood ratio is simply $L(X) = \frac{\PP(X)}{\QQ(X)} := \frac{\prr_{X' \sim \PP}(X' = X)}{\prr_{X' \sim \QQ}(X' = X)}$. To ensure that $L$ is defined, we will always assume $\PP$ is absolutely continuous with respect to $\QQ$, which on the finite domain $\{0,1\}^p$ simply means the support of $\PP$ is contained in the support of $\QQ$ (we can define $L(X) = 0$ outside the support of $\QQ$). We have
\begin{align*}
\chi^2(\PP \,\|\, \QQ) :=& \EE_{X \sim \QQ} L(X)^2 - 1 \\
=& \sup_{f: \RR^p \to \RR} \frac{\left(\EE_{X \sim \PP} f(X)\right)^2}{\EE_{X \sim \QQ} f(X)^2} - 1 \\
=& \sup_{\substack{f: \RR^p \to \RR \\ \EE_{X \sim \QQ} f(X) = 0}} \frac{\left(\EE_{X \sim \PP} f(X)\right)^2}{\EE_{X \sim \QQ} f(X)^2}.
\end{align*}
The equivalence between these definitions is standard, and follows as a special case of Lemma~\ref{lem:cs-equiv} below. Standard arguments use the chi-squared divergence to show information-theoretic impossibility of detection (see for example Lemma~2 of~\cite{montanari2015limitation}):
\begin{lemma}\label{lem:chi-sq}
\phantom{a}
\begin{itemize}
    \item If $\chi^2(\PP \,\|\, \QQ) = O(1)$ as $n \to \infty$ then strong detection is impossible.
    \item If $\chi^2(\PP \,\|\, \QQ) = o(1)$ as $n \to \infty$ then weak detection is impossible.
\end{itemize}
\end{lemma}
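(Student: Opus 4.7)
The plan is to reduce both statements to the standard relation between detection error and total variation distance, and then bound total variation in terms of $\chi^2$ via Cauchy--Schwarz. Recall that for any (possibly randomized) test distinguishing $\PP$ from $\QQ$ from a single sample, the best achievable success probability is $\tfrac{1}{2} + \tfrac{1}{2}\dTV(\PP,\QQ)$. Consequently, strong detection is possible iff $\dTV(\PP_n, \QQ_n) \to 1$, and weak detection is possible iff $\dTV(\PP_n, \QQ_n)$ stays bounded away from $0$ along some subsequence.

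For the weak-detection bullet, I would apply Cauchy--Schwarz (equivalently, Jensen's inequality) to the likelihood ratio $L = d\PP/d\QQ$:
$$ 2\,\dTV(\PP, \QQ) \;=\; \EE_\QQ |L - 1| \;\leq\; \sqrt{\EE_\QQ (L-1)^2} \;=\; \sqrt{\chi^2(\PP \,\|\, \QQ)}. $$
If $\chi^2 = o(1)$ this forces $\dTV = o(1)$, so weak detection fails.

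The strong-detection bullet needs a different deployment of Cauchy--Schwarz, because the bound $\dTV \leq \tfrac{1}{2}\sqrt{\chi^2}$ only yields $\dTV = O(1)$, which does not rule out $\dTV \to 1$. Instead, I would apply Cauchy--Schwarz \emph{event-wise}: for any measurable $A \subseteq \RR^p$,
$$ \PP(A) \;=\; \EE_\QQ[L \One_A] \;\leq\; \sqrt{\EE_\QQ L^2}\,\sqrt{\QQ(A)} \;=\; \sqrt{1 + \chi^2(\PP \,\|\, \QQ)}\,\sqrt{\QQ(A)}, $$
so $\QQ(A) \geq \PP(A)^2 / \bigl(1 + \chi^2(\PP \,\|\, \QQ)\bigr)$. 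A strong-detection test would require an acceptance region $A$ for the hypothesis $\PP$ satisfying $\PP(A) = 1 - o(1)$ and $\QQ(A) = o(1)$; under $\chi^2 = O(1)$, the displayed inequality pins $\QQ(A)$ below by a positive constant, a contradiction.

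Since the claim is a standard textbook fact, I do not anticipate any real obstacle; the only point to be careful about is to use the event-wise form of Cauchy--Schwarz for the bounded-$\chi^2$ case rather than the weaker $\dTV$-to-$\chi^2$ bound that suffices for the vanishing-$\chi^2$ case.
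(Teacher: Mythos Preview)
Your argument is correct. The paper does not actually prove this lemma; it simply cites it as a standard fact (referring to Lemma~2 of \cite{montanari2015limitation}). Your proof is the standard one: the Cauchy--Schwarz bound $\dTV \le \tfrac{1}{2}\sqrt{\chi^2}$ handles the weak-detection case, and the event-wise Cauchy--Schwarz bound $\QQ(A) \ge \PP(A)^2/(1+\chi^2)$ handles the strong-detection case (and extends to randomized tests since $\phi^2 \le \phi$ for $\phi \in [0,1]$). There is nothing to compare against in the paper itself.
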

\noindent One can use either $\chi^2(\PP \,\|\, \QQ)$ or $\chi^2(\QQ \,\|\, \PP)$ for this purpose, but it is typically more tractable to bound $\chi^2(\PP \,\|\, \QQ)$ where $\QQ$ is the ``simpler'' distribution.

\subsection{Low-Degree Chi-Squared Divergence}
\label{sec:ld-chi-sq}

The \emph{degree-$D$ chi-squared divergence} $\chi^2_{\le D}(\PP \,\|\, \QQ)$ is an analogous quantity which measures whether or not $\PP,\QQ$ can be distinguished by a degree-$D$ polynomial. Let $\RR[X]_{\le D}$ denote the space of multivariate polynomials $\RR^p \to \RR$ of degree (at most) $D$. For functions $\RR^p \to \RR$, define the inner product $\langle f,g \rangle_\QQ := \EE_{X \sim \QQ}[f(X)g(X)]$ and the associated norm $\|f\|_\QQ = \sqrt{\langle f,f \rangle_\QQ}$. Also let $f^{\le D}$ denote the orthogonal (with respect to $\langle \cdot,\cdot \rangle_\QQ$) projection of $f$ onto $\RR[X]_{\le D}$. Recall that $L = \frac{d\PP}{d\QQ}$ denotes the likelihood ratio. We have the equivalent definitions
\begin{align}
\chi^2_{\le D}(\PP \,\|\, \QQ) :=& \EE_{X \sim \QQ} L^{\le D}(X)^2 - 1 = \|L^{\le D}\|^2_\QQ - 1 \label{eq:ld-cs-1} \\
=& \sup_{f \in \RR[X]_{\le D}} \frac{\left(\EE_{X \sim \PP} f(X)\right)^2}{\EE_{X \sim \QQ} f(X)^2} - 1 \label{eq:ld-cs-2} \\
=& \sup_{\substack{f \in \RR[X]_{\le D} \\ \EE_{X \sim \QQ} f(X) = 0}} \frac{\left(\EE_{X \sim \PP} f(X)\right)^2}{\EE_{X \sim \QQ} f(X)^2}. \label{eq:ld-cs-3}
\end{align}

\noindent These equivalences are standard (see e.g.~\cite{sam-thesis,lowdeg-notes}), and we include the proof for convenience.

\begin{lemma}\label{lem:cs-equiv}
Suppose $\PP$ and $\QQ$ are distributions over $\RR^p$ with $\PP$ absolutely continuous with respect to $\QQ$. The three definitions for $\chi^2_{\le D}(\PP \,\|\, \QQ)$ in~\eqref{eq:ld-cs-1}-\eqref{eq:ld-cs-3} are equivalent.
\end{lemma}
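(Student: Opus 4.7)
The plan is to show that all three expressions equal $\|L^{\le D}\|_\QQ^2 - 1$, using just two ingredients: the identity $\EE_\PP f = \langle L^{\le D}, f \rangle_\QQ$ for $f \in \RR[X]_{\le D}$, and Cauchy--Schwarz.

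First I would establish the key identity. For any $f \in \RR[X]_{\le D}$, since $\PP$ is absolutely continuous with respect to $\QQ$,
\[ \EE_{X \sim \PP} f(X) = \EE_{X \sim \QQ}[L(X) f(X)] = \langle L, f \rangle_\QQ. \]
Because $f$ already lies in $\RR[X]_{\le D}$ and $L^{\le D}$ is the orthogonal projection of $L$ onto that subspace, the component $L - L^{\le D}$ is orthogonal to $f$, giving $\langle L, f \rangle_\QQ = \langle L^{\le D}, f \rangle_\QQ$. Crucially this also shows $\EE_\QQ[L^{\le D}] = \langle L^{\le D}, 1 \rangle_\QQ = \EE_\PP[1] = 1$, a fact I will use below.

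Next, to show that \eqref{eq:ld-cs-2} equals $\|L^{\le D}\|_\QQ^2 - 1$, I apply Cauchy--Schwarz to the identity above:
\[ \bigl(\EE_\PP f\bigr)^2 = \langle L^{\le D}, f \rangle_\QQ^2 \le \|L^{\le D}\|_\QQ^2 \cdot \|f\|_\QQ^2 = \|L^{\le D}\|_\QQ^2 \cdot \EE_\QQ f^2, \]
with equality attained by choosing $f = L^{\le D}$ (which lies in $\RR[X]_{\le D}$ and has $\EE_\QQ f^2 > 0$ as long as $L^{\le D} \neq 0$; the degenerate case is trivial since then $\EE_\PP f = 0$ for all $f \in \RR[X]_{\le D}$ and the supremum vanishes). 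So the supremum equals $\|L^{\le D}\|_\QQ^2$, and subtracting $1$ matches definition \eqref{eq:ld-cs-1}.

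Finally, for \eqref{eq:ld-cs-3}, I restrict to mean-zero $f$ and decompose $L^{\le D} = 1 + L^{\le D}_0$ where $L^{\le D}_0 := L^{\le D} - 1$ is orthogonal in $\langle \cdot, \cdot \rangle_\QQ$ to the constant function (using $\EE_\QQ L^{\le D} = 1$ from above). Then $\|L^{\le D}\|_\QQ^2 = 1 + \|L^{\le D}_0\|_\QQ^2$, and for any $f \in \RR[X]_{\le D}$ with $\EE_\QQ f = 0$,
\[ \EE_\PP f = \langle L^{\le D}, f \rangle_\QQ = \langle 1, f \rangle_\QQ + \langle L^{\le D}_0, f \rangle_\QQ = \langle L^{\le D}_0, f \rangle_\QQ. \]
A second application of Cauchy--Schwarz, with equality at $f = L^{\le D}_0$ (which is itself mean-zero under $\QQ$ and in $\RR[X]_{\le D}$), gives the supremum equal to $\|L^{\le D}_0\|_\QQ^2 = \|L^{\le D}\|_\QQ^2 - 1$. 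No step here is genuinely hard; the only subtlety to watch is the degenerate case $L^{\le D} \equiv 0$ or $L^{\le D}_0 \equiv 0$, which I would dispatch by noting that both sides of the claimed equalities are then zero.
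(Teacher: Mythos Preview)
Your proof is correct and follows essentially the same approach as the paper: both use that $\EE_\PP f = \langle L, f\rangle_\QQ = \langle L^{\le D}, f\rangle_\QQ$ for $f \in \RR[X]_{\le D}$, then optimize via Cauchy--Schwarz (equivalently, projection) with optimizer $L^{\le D}$ for \eqref{eq:ld-cs-2} and $L^{\le D}-1$ for \eqref{eq:ld-cs-3}. One small note: the degenerate case $L^{\le D}\equiv 0$ that you hedge against cannot actually occur, since you already showed $\EE_\QQ L^{\le D}=1$.
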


\begin{proof}
For~\eqref{eq:ld-cs-1}$=$\eqref{eq:ld-cs-2},
\begin{align*}
\sup_{f \in \RR[X]_{\le D}} \frac{\left(\EE_{X \sim \PP} f(X)\right)^2}{\EE_{X \sim \QQ} f(X)^2} &= \sup_{f \in \RR[X]_{\le D}} \frac{\left(\EE_{X \sim \QQ} f(X)L(X)\right)^2}{\EE_{X \sim \QQ} f(X)^2} = \sup_{f \in \RR[X]_{\le D}} \frac{\langle f,L \rangle_\QQ^2}{\|f\|^2_\QQ} \\
\intertext{which is optimized by $f = L^{\le D}$, so}
&= \frac{\langle L^{\le D},L \rangle_\QQ^2}{\|L^{\le D}\|^2_\QQ} = \frac{\|L^{\le D}\|^4_\QQ}{\|L^{\le D}\|^2_\QQ} = \|L^{\le D}\|^2_\QQ.
\end{align*}
For~\eqref{eq:ld-cs-1}$=$\eqref{eq:ld-cs-3}, define the subspace $V = \{f \in \RR[X]_{\le D} \,:\, \EE_{X \sim \QQ}[f] = 0\} = \{f \in \RR[X]_{\le D} \,:\, \langle f,1 \rangle_\QQ = 0\}$ and let $f^V$ denote orthogonal projection of $f$ onto this subspace. Similarly to above,
\[ \sup_{f \in V} \frac{\left(\EE_{X \sim \PP} f(X)\right)^2}{\EE_{X \sim \QQ} f(X)^2} = \|L^V\|^2_\QQ. \]
Now $L^V = (L - \langle L,1 \rangle_\QQ)^{\le D} = (L - 1)^{\le D} = L^{\le D} - 1$ and so
\begin{align*}
    \|L^V\|^2_\QQ &= \|L^{\le D} - 1\|^2_\QQ = \|L^{\le D}\|^2_\QQ - 2 \langle L^{\le D}, 1 \rangle_\QQ + 1\\ &= \|L^{\le D}\|^2_\QQ - 2 \langle L, 1 \rangle_\QQ + 1 = \|L^{\le D}\|^2_\QQ - 1,
\end{align*}
completing the proof.
\end{proof}

Note that on the finite domain $\{0,1\}^p$, the degree-$D$ chi-squared divergence recovers the usual chi-squared divergence whenever $D \ge p$, since any function $\{0,1\}^p \to \RR$ can be written as a degree-$p$ polynomial. From~\eqref{eq:ld-cs-1} we can see that the quantity $\sqrt{\chi^2_{\le D}(\PP \,\|\, \QQ) + 1}$ is equal to $\|L^{\le D}\|_\QQ$, which is commonly called the \emph{norm of the low-degree likelihood ratio} (see~\cite{sam-thesis,lowdeg-notes}). Analogous to the standard chi-squared divergence, we have the following interpretation for $\chi^2_{\le D}(\PP \,\|\, \QQ)$.
\begin{itemize}
    \item If $\chi^2_{\le D}(\PP \,\|\, \QQ) = O(1)$ for some $D = \omega(\log p)$, this suggests that strong detection has no polynomial-time algorithm and furthermore requires runtime $\exp(\tilde\Omega(D))$.
    \item If $\chi^2_{\le D}(\PP \,\|\, \QQ) = o(1)$ for some $D = \omega(\log p)$, this suggests that weak detection has no polynomial-time algorithm and furthermore requires runtime $\exp(\tilde\Omega(D))$.
\end{itemize}

\noindent To justify the above interpretations, recall the notions of strong/weak separation and low-degree hardness from Section~\ref{sec:low-deg}. We will see (Lemma~\ref{lem:ld-cond}) that if $\chi^2_{\le D}(\PP \,\|\, \QQ) = O(1)$ then no degree-$D$ polynomial can strongly separate $\PP$ and $\QQ$, and similarly, if $\chi^2_{\le D}(\PP \,\|\, \QQ) = o(1)$ then no degree-$D$ polynomial can weakly separate $\PP$ and $\QQ$. For further discussion on some other sense(s) in which $\chi^2_{\le D}(\PP \,\|\, \QQ)$ can be used to rule out polynomial-based tests, we refer the reader to~\cite{lowdeg-notes}, Section~4.1 (for strong detection) and~\cite{sparse-clustering}, Section~2.3 (for weak detection).

\subsection{Conditional Chi-Squared Divergence}

It is well known that in some instances, the chi-squared divergence is not sufficient to prove sharp impossibility results: there are cases where detection is impossible, yet $\chi^2(\PP \,\|\, \QQ) \to \infty$ due to a rare ``bad'' event under $\PP$. Sharper results can sometimes be obtained by a conditional chi-squared calculation. This amounts to defining a modified planted distribution $\tilde\PP$ by conditioning $\PP$ on some high-probability event (that is, an event of probability $1-o(1)$). Note that any algorithm for strong (respectively, weak) detection between $\PP$ and $\QQ$ also achieves strong (respectively, weak) detection between $\tilde\PP$ and $\QQ$. As a result, bounds on $\chi^2(\tilde\PP \,\|\, \QQ)$ can be used to prove impossibility of detection between $\PP$ and $\QQ$. This technique is classical, and it turns out to have a low-degree analogue: bounds on $\chi^2_{\le D}(\tilde\PP \,\|\, \QQ)$ can be used to show failure of low-degree polynomials to strongly/weakly separate $\PP$ and $\QQ$, as we see below. (This result also appears in~\cite[Proposition~6.2]{fp} and we include the proof here for convenience.)

\begin{lemma}\label{lem:ld-cond}
Suppose $\PP = \PP_n$ and $\QQ = \QQ_n$ are distributions over $\RR^p$ for some $p = p_n$. Let $A = A_n$ be a high-probability event under $\PP$, that is, $\PP(A) = 1-o(1)$. Define the conditional distribution $\tilde\PP = \PP \,|\, A$.
\begin{itemize}
    \item If $\chi^2_{\le D}(\tilde\PP\,\|\,\QQ) = O(1)$ as $n \to \infty$ for some $D = D_n$, then no degree-$D$ polynomial strongly separates $\PP$ and $\QQ$ in the sense of~\eqref{eq:strong-sep}.
    \item If $\chi^2_{\le D}(\tilde\PP\,\|\,\QQ) = o(1)$ as $n \to \infty$ for some $D = D_n$, then no degree-$D$ polynomial weakly separates $\PP$ and $\QQ$ in the sense of~\eqref{eq:weak-sep}.
\end{itemize}
\end{lemma}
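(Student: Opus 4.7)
The plan is to leverage the variational characterization of the low-degree chi-squared divergence from equation~\eqref{eq:ld-cs-3}: by centering $f$ against $\QQ$ (replacing $f$ with $f - \EE_\QQ[f]$, which preserves degree), one immediately obtains the unconstrained form
\[ \chi^2_{\le D}(\tilde\PP\,\|\,\QQ) \;=\; \sup_{f \in \RR[X]_{\le D}} \frac{(\EE_{\tilde\PP}[f] - \EE_\QQ[f])^2}{\Var_\QQ[f]}. \]
Consequently, to rule out strong (resp., weak) separation of $\PP$ from $\QQ$, it suffices to show that any degree-$D$ polynomial which strongly (resp., weakly) separates $\PP$ from $\QQ$ must, up to a $1-o(1)$ factor, also separate $\tilde\PP$ from $\QQ$ in the sense of making $(\EE_{\tilde\PP}[f] - \EE_\QQ[f])^2/\Var_\QQ[f]$ large, contradicting the chi-squared hypothesis.

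Concretely, I would assume for contradiction that some $f \in \RR[X]_{\le D}$ separates $\PP$ from $\QQ$, and subtract a constant so that $\EE_\QQ[f] = 0$. The separation hypothesis then reads $\EE_\PP[f]^2 \ge R \cdot \max\{\Var_\PP[f], \Var_\QQ[f]\}$ where $R = \omega(1)$ in the strong case and $R = \Omega(1)$ in the weak case. The plan is to compare $\EE_{\tilde\PP}[f]$ with $\EE_\PP[f]$ via the identity $\EE_\PP[f] = \PP(A)\,\EE_{\tilde\PP}[f] + \EE_\PP[f\,\One_{A^c}]$, and control the residual term using Cauchy--Schwarz:
\[ \bigl|\EE_\PP[f\,\One_{A^c}]\bigr|^2 \;\le\; \EE_\PP[f^2]\cdot\PP(A^c) \;=\; (\Var_\PP[f] + \EE_\PP[f]^2)\,\PP(A^c). \]

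The hard part will be converting this Cauchy--Schwarz bound into a genuine $o(|\EE_\PP[f]|)$ estimate: a priori, $\PP$ could place an enormous mass of $|f|$ on the negligible event $A^c$, and Cauchy--Schwarz alone cannot rule this out. The separation hypothesis is precisely what makes the argument go through, because it forces $\Var_\PP[f] \le R^{-1} \EE_\PP[f]^2$, so $\EE_\PP[f^2] \le (1 + R^{-1}) \EE_\PP[f]^2$. Since $R$ is bounded away from zero in both regimes and $\PP(A^c) = o(1)$, this yields $|\EE_\PP[f\,\One_{A^c}]| \le |\EE_\PP[f]|\sqrt{(1+R^{-1})\PP(A^c)} = o(|\EE_\PP[f]|)$. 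Dividing by $\PP(A) = 1-o(1)$ gives $|\EE_{\tilde\PP}[f]| = (1-o(1))|\EE_\PP[f]|$, and plugging into the variational lower bound yields
\[ \chi^2_{\le D}(\tilde\PP\,\|\,\QQ) \;\ge\; \frac{\EE_{\tilde\PP}[f]^2}{\Var_\QQ[f]} \;\ge\; (1-o(1))^2 \, \frac{\EE_\PP[f]^2}{\Var_\QQ[f]} \;\ge\; (1-o(1))^2 R, \]
which is $\omega(1)$ under strong separation and $\Omega(1)$ under weak separation. This contradicts the hypothesis $\chi^2_{\le D}(\tilde\PP\,\|\,\QQ) = O(1)$ in the first case and $\chi^2_{\le D}(\tilde\PP\,\|\,\QQ) = o(1)$ in the second, completing the proof.
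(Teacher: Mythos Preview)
Your proof is correct and follows essentially the same approach as the paper: both argue the contrapositive, center so that $\EE_\QQ[f]=0$, use the variational form~\eqref{eq:ld-cs-3}, and control the discrepancy $\EE_\PP[f]-\PP(A)\EE_{\tilde\PP}[f]=\EE_\PP[f\One_{A^c}]$ via Cauchy--Schwarz together with the bound $\EE_\PP[f^2]=\Var_\PP[f]+\EE_\PP[f]^2\le (1+R^{-1})\EE_\PP[f]^2$. The only cosmetic difference is that the paper additionally normalizes $\EE_\PP[f]=1$ before running the same computation.
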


\begin{proof}
We prove the contrapositive. Suppose $f = f_n$ strongly (respectively, weakly) separates $\PP$ and $\QQ$. By shifting and rescaling we can assume without loss of generality that $\EE_\QQ[f] = 0$ and $\EE_\PP[f] = 1$, and that $\Var_\QQ[f], \Var_\PP[f]$ are both $o(1)$ (resp., $O(1)$). Note that $\EE_\QQ[f^2] = \Var_\QQ[f]$. It suffices to show $\EE_{\tilde\PP}[f] \ge 1-o(1)$ so that, using~\eqref{eq:ld-cs-3},
\[ \chi^2_{\le D}(\tilde\PP\,\|\,\QQ) \ge \frac{(\EE_{\tilde\PP}[f])^2}{\EE_\QQ[f^2]} \ge \frac{1-o(1)}{\Var_\QQ[f]} \]
which is $\omega(1)$ (resp., $\Omega(1)$), completing the proof.

It remains to prove $\EE_{\tilde\PP}[f] \ge 1-o(1)$. Letting $A^c$ denote the complement of the event $A$, we have
\[ 1 = \EE_\PP[f] = \PP(A) \EE_{\tilde\PP}[f] + \PP(A^c) \EE_\PP[f \,|\, A^c], \]
and so, solving for $\EE_{\tilde\PP}[f]$,
\[ \EE_{\tilde\PP}[f] = \PP(A)^{-1} (1 - \PP(A^c) \EE_\PP[f \,|\, A^c]). \]
Since $\PP(A) = 1-o(1)$, it suffices to show $|\PP(A^c) \EE_\PP[f \,|\, A^c]| = o(1)$. We can also repeat the above argument for the second moment:
\[ \EE_\PP[f^2] = \PP(A) \EE_{\tilde\PP}[f^2] + \PP(A^c) \EE_\PP[f^2 \,|\, A^c], \]
and so
\[ \PP(A^c) \EE_\PP[f^2 \,|\, A^c] \le \EE_\PP[f^2] = \V_\PP[f] + 1. \]
We can use the above to conclude
\begin{align*}
\left|\PP(A^c) \EE_\PP[f \,|\, A^c]\right| &\le \PP(A^c) \sqrt{\EE_\PP[f^2 \,|\, A^c]} \\
&\le \PP(A^c) \sqrt{\PP(A^c)^{-1}(\Var_\PP[f]+1)} \\
&= \sqrt{\PP(A^c)}\cdot\sqrt{\Var_\PP[f]+1} \\
&= o(1) \cdot O(1) = o(1),
\end{align*}
completing the proof.
\end{proof}

\subsection{Proof Technique for Low-Degree Lower Bounds: Low-Overlap Second Moment }
\label{sec:restricted}

We now give an overview of the proof strategy for our low-degree hardness results. We will bound the low-degree chi-squared divergence using a ``low-overlap chi-squared calculation.'' (This is not to be confused with the \emph{conditional} chi-squared from the previous section, although we will sometimes use both together---a ``low-overlap conditional chi-squared calculation.'' But for now, suppose we are simply working with $\PP$ instead of $\tilde\PP$.) This strategy was employed implicitly by~\cite{quiet-coloring,sk-cert,lowdeg-notes} and is investigated in more detail by~\cite{fp}.

Recall that for the group testing models we consider, the planted distribution $\PP$ takes the following form: first a set of $k$ infected individuals is chosen uniformly at random, which we encode using a $k$-sparse indicator vector $u \in \{0,1\}^N$; then the observation $X$ is drawn from an appropriate distribution $\PP_u$. We can therefore write $L(X) = \EE_{u \sim \mathcal{U}} L_u(X)$ with $L_u = d\PP_u/d\QQ$, where $\mathcal{U}$ denotes the uniform measure on $k$-sparse binary vectors. This means, using linearity of the degree-$D$ projection operator,
\begin{align*}
\chi^2_{\le D}(\PP \,\|\, \QQ)+1 &= \left\|L^{\le D}\right\|_\QQ^2 = \left\|\left(\EE_{u \sim \mathcal{U}} L_u\right)^{\le D}\right\|_\QQ^2 = \left\|\EE_{u \sim \mathcal{U}} \left(L_u^{\le D}\right)\right\|_\QQ^2 \\
&= \left\langle \EE_{u \sim \mathcal{U}} L_u^{\le D}, \EE_{u' \sim \mathcal{U}} L_{u'}^{\le D} \right\rangle_\QQ = \EE_{u,u' \sim \mathcal{U}} \langle L_u^{\le D}, L_{u'}^{\le D} \rangle_\QQ
\end{align*}
where $u$ and $u'$ are drawn independently from $\mathcal{U}$. For some threshold $\delta > 0$ to be chosen later (which may scale with $n$), we will break this expression down into two parts and handle them separately:
\[ \chi^2_{\le D}(\PP \,\|\, \QQ)+1 = \mathcal{R}_{\le\delta} + \mathcal{R}_{> \delta} \]
where
\[ \mathcal{R}_{\le \delta} := \EE_{u,u' \sim \mathcal{U}} \One_{\langle u,u' \rangle \le \delta} \,\langle L_u^{\le D}, L_{u'}^{\le D} \rangle_\QQ \]
and
\[ \mathcal{R}_{> \delta} := \EE_{u,u' \sim \mathcal{U}} \One_{\langle u,u' \rangle > \delta} \,\langle L_u^{\le D}, L_{u'}^{\le D} \rangle_\QQ. \]
We now sketch the arguments for bounding these two terms. We will show $\mathcal{R}_{> \delta} = o(1)$ by leveraging the fact that $\langle u,u' \rangle > \delta$ is a very low-probability event, combined with a crude upper bound on $\langle L_u^{\le D}, L_{u'}^{\le D} \rangle_\QQ$. For $\mathcal{R}_{\le \delta}$, we will first use a symmetry argument from~\cite[Proposition~3.6]{fp} (we include the details in Lemmas~\ref{lem:positivity-2} and~\ref{lem:positivity}) to show $\langle L_u^{\le D},L_{u'}^{\le D} \rangle_\QQ \le \langle L_u,L_{u'} \rangle_\QQ$ for all $u,u'$, and so
\[ \mathcal{R}_{\le \delta} \le \mathcal{T}_{\le \delta} := \EE_{u,u' \sim \mathcal{U}} \One_{\langle u,u' \rangle \le \delta} \,\langle L_u, L_{u'} \rangle_\QQ. \]
Thus it suffices to bound the ``low-overlap second moment'' $\mathcal{T}_{\le \delta}$. Since this quantity does not involve low-degree projection, it will be tractable to compute directly.

We will sometimes need to bound the \emph{conditional} low-degree chi-squared divergence, in which case we follow the above proof sketch with a modified planted distribution $\tilde\PP$ in place of $\PP$.

We remark that the ``standard'' approach to bounding the low-degree chi-squared divergence involves direct moment computations with a basis of $\QQ$-orthogonal polynomials (see e.g.~\cite{sam-thesis}, Section~2.3 or~\cite{lowdeg-notes}, Section~2.3). For the group testing models we consider here, this approach seems prohibitively complicated: for the Bernoulli design we will need a modified planted distribution $\tilde\PP$, under which it seems difficult to directly compute expectations of orthogonal polynomials; for the constant-column design, the orthogonal polynomials themselves are quite complicated and arduous to work with directly. By following the more indirect proof sketch outlined above, we are able to drastically simplify these calculations: for the Bernoulli design, the low-overlap second moment $\mathcal{T}_{\le \delta}$ ``plays well'' with the conditional distribution $\tilde\PP$; for the constant-column design, we manage to largely avoid working with the specific details of the orthogonal polynomials (aside from some very basic properties used when bounding $\mathcal{R}_{> \delta}$).

\section{Detection in the Constant-Column Design}

\subsection{Detection Algorithm: Proof of Theorem \ref{thm:detect_LD_CC}(a)} 
Recall that our goal is to derive conditions under which there exists a low-degree algorithm that achieves strong separation (as defined in~\eqref{eq:strong-sep}) for the following two distributions:
\begin{itemize}
\item Null model $\QQ$: $N$ individuals each participate in exactly $\Delta$ distinct tests, chosen uniformly at random (from a total number of $M$ tests).
\item Planted Model $\PP$: a set of $k$ infected individuals out of $N$ is chosen uniformly at random. Then a graph is drawn as in the null model conditioned on having at least one infected individual in every test.
\end{itemize}

\begin{proposition}\label{strong_sep_prop}
Fix an arbitrary constant $\eps > 0$. If $k^3 \ge N^{2+\eps}$ then there is a degree-2 polynomial that strongly separates $\mathbb{P}$ and $\mathbb{Q}$.
\end{proposition}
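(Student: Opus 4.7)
The plan is to use the degree-2 polynomial $f(G) = \sum_{j=1}^M \Gamma_j^2$, where $\Gamma_j = \sum_{i=1}^N X_{ij}$ is the degree of test $j$ and $X_{ij} \in \{0,1\}$ is the edge indicator. Since $\sum_j \Gamma_j = N\Delta$ is deterministic under both $\PP$ and $\QQ$, this $f$ is (up to a constant additive shift) the empirical variance of the test degrees. This matches the remark in the paper that a degree-2 polynomial computing the empirical test-degree variance should suffice.

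First I would compute $\E_\QQ[f]$ exactly. Expanding, $f = N\Delta + \sum_{i\neq i'} T_{i,i'}$ where $T_{i,i'} = \sum_j X_{ij} X_{i'j}$ counts the tests shared by individuals $i$ and $i'$. Under $\QQ$ the individuals choose independently, and for $i \ne i'$ the intersection $T_{i,i'}$ is hypergeometric with mean $\Delta^2/M$, giving $\E_\QQ[f] = N\Delta + N(N-1)\Delta^2/M$. Next I would compute $\E_\PP[f]$ by splitting $\Gamma_j = \Gamma_j^I + \Gamma_j^U$ (infected/uninfected). The conditioning event $A = \{\forall j : \Gamma_j^I \ge 1\}$ depends only on the infected edges, and the uninfected edges are independent of $A$. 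By symmetry over tests one has $\E_\PP[\Gamma_j^I] = k\Delta/M$ equal to its $\QQ$-value, so a quick expansion yields
\[
\E_\PP[f] - \E_\QQ[f] \;=\; \E_\PP\!\left[\sum_j (\Gamma_j^I)^2\right] - \E_\QQ\!\left[\sum_j (\Gamma_j^I)^2\right].
\]
Under $\QQ$, each $\Gamma_j^I$ is approximately $\Po(\lambda)$ with $\lambda = k\Delta/M = 2\ln 2$, so $\E_\QQ[\sum_j (\Gamma_j^I)^2] \sim M(\lambda + \lambda^2)$. Under $\PP$ the joint distribution of $(\Gamma_j^I)_j$ is a ``surjective'' multinomial-type allocation; via a Poissonization/exponential-tilting heuristic, its per-test second moment is strictly smaller than $\lambda+\lambda^2$, by an amount bounded away from zero (and depending only on $\lambda$, which is a universal constant $2\ln 2$). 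The conclusion is $|\E_\PP[f] - \E_\QQ[f]| = \Theta(M)$.

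For the variances I would again use $f - N\Delta = \sum_{i \ne i'} T_{i,i'}$. The key observation is that under $\QQ$, $\Cov(T_{i,i'}, T_{i'',i'''}) = 0$ whenever $|\{i,i'\} \cap \{i'',i'''\}| \le 1$: for disjoint pairs by independence, and for pairs sharing one individual by the tower property (conditioning on that individual's choice makes the two factors independent). Hence only diagonal terms contribute and $\Var_\QQ[f] = O(N^2\Delta^2/M)$. To bound $\Var_\PP[f]$ I would use the $\Gamma_j^I/\Gamma_j^U$ decomposition: the purely uninfected piece $\sum_j (\Gamma_j^U)^2$ is distributed identically under $\PP$ and $\QQ$ (so variance $O(N^2\Delta^2/M)$); the cross term $\sum_j \Gamma_j^I \Gamma_j^U$ has conditional mean (given the infected configuration) equal to the deterministic quantity $((N-k)\Delta/M)\sum_j \Gamma_j^I = k(N-k)\Delta^2/M$, so by the law of total variance only the inner variance contributes and is of lower order; the purely infected piece $\sum_j (\Gamma_j^I)^2$ lives on $O(k)$ individuals and has variance at most $O(k^2\Delta^2/M) = O(M)$ by the same covariance argument as in $\QQ$. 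In total $\Var_\PP[f] = O(N^2\Delta^2/M)$.

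Finally, strong separation requires $\sqrt{\max\{\Var_\PP[f],\Var_\QQ[f]\}} = o(|\E_\PP[f] - \E_\QQ[f]|)$, i.e.\ $N\Delta/\sqrt{M} = o(M)$, equivalently $M^3 = \omega(N^2\Delta^2)$. Plugging in $M = \Theta(k\log(n/k))$ and $\Delta = \Theta(\log(n/k))$ gives
\[
\frac{M^3}{N^2\Delta^2} \;=\; \Theta\!\left(\frac{k^3 \log(n/k)}{N^2}\right) \;\ge\; \Omega\!\left(N^{\eps} \log(n/k)\right) \;\to\; \infty
\]
under the hypothesis $k^3 \ge N^{2+\eps}$, completing the proof. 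The principal obstacle in the plan is the second paragraph: carefully computing $\E_\PP[\sum_j (\Gamma_j^I)^2]$ under a conditioning event whose probability under $\QQ$ is exponentially small in $M$ and whose joint distribution does not factorize over tests. Making the Poisson-with-tilting argument rigorous would likely proceed via a local limit theorem in the spirit of the first/second moment calculations already developed in Section~\ref{appendix_proof_prop_concentration}, or by directly comparing $\sum_j (\Gamma_j^I)^2$ to the Poissonized surrogate with an explicit coupling and error bound.
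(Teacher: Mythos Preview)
Your proposal and the paper's proof use the same polynomial and the same infected/uninfected decomposition $\Gamma_j=\Gamma_j^I+\Gamma_j^U$; the target orders for the mean gap $\Theta(M)=\tilde\Theta(k)$ and for the variances $\tilde O(N^2\Delta^2/M)=\tilde O(N^2/k)$ are also correct. Your computation of $\Var_\QQ[f]$ via the overlap representation $f-N\Delta=\sum_{i\ne i'}T_{i,i'}$ and the observation that $\Cov(T_{i,i'},T_{i'',i'''})=0$ whenever the pairs share at most one individual is correct and cleaner than the paper's route (the paper instead bounds $|T-\EE T|$ with overwhelming probability via a truncated McDiarmid argument and then converts to a variance bound). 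Your handling of the cross term under $\PP$ is also fine.

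There is, however, a gap beyond the one you flag. Your claim that the purely infected piece satisfies $\Var_\PP\bigl[\sum_j(\Gamma_j^I)^2\bigr]=O(k^2\Delta^2/M)$ ``by the same covariance argument as in $\QQ$'' does not go through: that argument hinged on the independence of different individuals' test choices (this is what made $\EE[T_{i,i'}\mid S_{i'}]$ constant), and the conditioning event $\{\forall j:\Gamma_j^I\ge 1\}$ destroys this independence among the infected individuals. The bound is right, but the justification is not. In the paper, \emph{both} issues --- the mean computation you identify as the principal obstacle \emph{and} this variance bound --- are handled by a single concentration statement (Lemma~\ref{lem:sep-claim-2}): one compares the conditioned infected allocation to i.i.d.\ $\Po_{\ge 1}(\lambda)$ variables (with $\lambda$ tuned so the mean equals $k\Delta/M$), applies Hoeffding in that surrogate model, and transfers back by noting that the remaining conditioning events (exact total $=k\Delta$; no multi-edges) have probability $n^{-O(1)}$ whereas the concentration holds with probability $1-n^{-\omega(1)}$. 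This yields $\bigl|\sum_j(\Gamma_j^I)^2-(1+\ln 2)k\Delta\bigr|\le \tilde O(\sqrt k)$ with overwhelming probability, which settles the mean gap as $(\ln 2+o(1))k\Delta$ and the infected-piece variance as $\tilde O(k)$ in one stroke. Your suggested Poisson-with-tilting route for the mean is exactly this argument; you should use it for the variance as well rather than appeal to the $\QQ$-covariance structure.
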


\noindent This implies Theorem~\ref{thm:detect_LD_CC}(a) because the condition $c > \cLDCC$ is equivalent to $k^3 \ge N^{2+\eps}$. The polynomial achieving strong separation is $T$ defined in~\eqref{strong_sep_algo}. The value of $T$ is computable in polynomial time, so by Chebyshev's inequality, this also gives a polynomial-time algorithm for strong detection by thresholding $T$.

The rest of this section is devoted to proving Proposition~\ref{strong_sep_prop}. Given an $(N,M)$-bipartite graph $X \in \{0,1\}^{NM}$ drawn from either $\PP$ or $\QQ$, let $\Gamma_1, \ldots, \Gamma_M$ denote the degree sequence of the tests, i.e., $\Gamma_j$ is the number of individuals in test $j$. The polynomial we use to distinguish will be $T: \{0,1\}^{NM} \to \RR$ defined by
\begin{align}\label{strong_sep_algo}
    T(X) = \sum_{j=1}^M \bc{\Gamma_j - \frac{N \Delta}{M}}^2.
\end{align}
Note that each $\Gamma_j$ is a degree-1 polynomial in $X$, and so $T$ is a degree-2 polynomial in $X$.

\begin{remark}
Since the total number of edges in the graph is exactly $N\Delta = \sum_j \Gamma_j$, we can expand the square in~\eqref{strong_sep_algo} to deduce
\[ T(X) = \sum_{j=1}^M \Gamma_j^2 - \frac{N^2\Delta^2}{M}, \]
which means the simpler polynomial $\sum_j \Gamma_j^2$ also achieves strong separation in the same regime that $T$ does. However, the centered version~\eqref{strong_sep_algo} will be more convenient for our analysis.
\end{remark}

In the planted model, decompose $\Gamma_j = Z_j + W_j$ where $W_j$ is the contribution from infected edges and $Z_j$ is the contribution from non-infected edges. There are two key claims we need to prove:

\begin{lemma}\label{lem:sep-claim-1}
In the null model, $\left|T - \EE[T]\right| \le \tilde{O}(N/\sqrt{k})$ with overwhelming probability $1-n^{-\omega(1)}$.
\end{lemma}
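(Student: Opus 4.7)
The plan is to exploit a clean algebraic rewriting of $T$. Since $\sum_j \Gamma_j = N\Delta$ is a deterministic identity in the null model, expanding the square in~\eqref{strong_sep_algo} gives $T = \sum_j \Gamma_j^2 - N^2\Delta^2/M$. Writing $\Gamma_j = \sum_i e_{ij}$ with $e_{ij} = \One[j \in S_i]$ for $S_i \subseteq [M]$ the uniform size-$\Delta$ set of tests chosen by individual $i$, one computes $\sum_j \Gamma_j^2 = \sum_{i,i'}|S_i \cap S_{i'}| = N\Delta + 2U$, where $U := \sum_{i<i'}|S_i \cap S_{i'}|$ is the total pair-overlap. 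Hence $T - \EE[T] = 2(U - \EE[U])$, and it suffices to show that $|U - \EE[U]| \le \tilde{O}(N/\sqrt{k})$ with overwhelming probability.

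For this I would set up the Doob martingale $U_t := \EE[U \mid S_1,\ldots,S_t]$ with $U_0 = \EE[U]$ and $U_N = U$. Using $\EE[|S_i \cap S_t|\mid S_i] = \Delta^2/M$ for any fixed $S_i$, a short calculation gives the increment
\[
D_t := U_t - U_{t-1} = \sum_{i<t}\bigl(|S_i \cap S_t| - \Delta^2/M\bigr) = \sum_{j=1}^M e_{tj}\,v_j^{(t)}, \quad v_j^{(t)} := \Gamma_j^{(<t)} - (t-1)\Delta/M,
\]
where $\Gamma_j^{(<t)} = \sum_{i<t}e_{ij}$ is the partial degree of test $j$ after the first $t-1$ individuals have been exposed. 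Because $S_t$ is a uniform size-$\Delta$ subset of $[M]$ and $\sum_j v_j^{(t)} = 0$, the hypergeometric variance identity yields $\Var[D_t \mid \mathcal{F}_{t-1}] = \tfrac{\Delta(M-\Delta)}{M(M-1)}\sum_j (v_j^{(t)})^2$.

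It then remains to bound $|D_t|$ and the cumulative quadratic variation, both with overwhelming probability, and to invoke Freedman's martingale inequality. Each $\Gamma_j^{(<t)}$ is a sum of $t-1$ independent Bernoulli$(\Delta/M)$ variables, so Bernstein's inequality combined with a union bound over $j \in [M]$ and $t \in [N]$ yields $\max_{j,t}|v_j^{(t)}| \le \tilde{O}(\sqrt{N/k})$ with overwhelming probability, which in turn gives $|D_t| \le \Delta \cdot \tilde{O}(\sqrt{N/k})$ and $\sum_t \Var[D_t\mid\mathcal{F}_{t-1}] \le \tilde{O}(N^2\Delta^2/M) = \tilde{O}(N^2/k)$ (matching the unconditional $\Var[U]$ up to logarithmic factors). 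A standard stopping-time argument lets Freedman's inequality apply despite these bounds being stochastic, and at scale $s = \tilde{\Theta}(N/\sqrt{k})$ it produces $\pr(|U - \EE[U]| \ge s) \le n^{-\omega(1)}$, as required.

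The main obstacle is that the target deviation $\tilde{O}(N/\sqrt{k})$ is much smaller than $\EE[T] = \Theta(N\Delta)$, by a factor of $\Delta\sqrt{k}$, so every estimate must be sharp. A naive Hoeffding bound on $D_t$ using only $|S_i \cap S_t| \le \Delta$ loses a factor of $\Delta$; sharpening it requires both the concentration of the partial-degree sequence around its mean (to shrink each $|v_j^{(t)}|$ from $\Delta$ down to $\tilde{O}(\sqrt{N/k})$) and the negative correlations among the indicators $e_{tj}$ (captured by the hypergeometric variance formula) to reach the desired scale.
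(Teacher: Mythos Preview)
Your argument is correct and reaches the same $\tilde O(N/\sqrt{k})$ scale, but by a genuinely different route than the paper's proof. The paper does not rewrite $T$ in terms of the pair-overlap $U$; instead it truncates each $\Gamma_j$ to the interval $[N\Delta/M \pm \sqrt{N\Delta/M}\log n]$ (valid with overwhelming probability by Bernstein), defines the truncated statistic $T'$, and applies McDiarmid's bounded-differences inequality directly to $T'$ as a function of the independent sets $(S_1,\dots,S_N)$. Changing one $S_i$ affects at most $2\Delta$ coordinates $\Gamma_j'$, each by $\pm 1$, and the truncation caps the resulting change in each squared term at $\tilde O(\sqrt{N/k})$; summing and applying McDiarmid over $N$ coordinates gives the claim.

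Both proofs rest on the same essential ingredient---Bernstein concentration of the (partial) test degrees at scale $\tilde O(\sqrt{N/k})$---and both cope with the fact that the Lipschitz constant is only small on a high-probability event (the paper by truncating the statistic, you by a stopping-time argument inside Freedman). Your approach is a bit more structural: the identity $T-\EE[T]=2(U-\EE[U])$ is clean, and tracking the quadratic variation via the hypergeometric variance formula is sharper in principle than a pure bounded-differences bound. The paper's argument is shorter and avoids martingale machinery entirely. For this lemma the two are equivalent in strength; your version would generalize more readily if one needed finer control (e.g., a variance bound without the overwhelming-probability qualifier, or the analogous statement in a model where $\Delta$ is not polylogarithmic).
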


\begin{lemma}\label{lem:sep-claim-2}
In the planted model,
\[ \left|\left(\sum_j W_j^2\right) - (1+\log 2+o(1))k\Delta\right| \le \tilde{O}(\sqrt{k}) \]
with overwhemling probability $1-n^{-\omega(1)}$.
\end{lemma}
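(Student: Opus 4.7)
The plan is to exploit the auxiliary Poisson construction of Sections~\ref{Derive_Moment Bound_prelim}--\ref{Sec_transfer} to identify the marginal law of each $W_j$ under $\PP$. I introduce independent $(W'_j)_{j \in [M]}$ with $W'_j \sim \mathrm{Poi}(\lambda)$ and condition on the two events $\mathcal{S} = \{W'_j \ge 1 \text{ for all } j\}$ and $\mathcal{R} = \{\sum_j W'_j = k\Delta\}$, which together encode the planted constraints ``every test positive'' and ``total infected degree $k\Delta$''. The parameter is chosen so that $\lambda/(1-e^{-\lambda}) = k\Delta/M = (2\log 2)(1+o(1))$, forcing $\lambda = \log 2 + o(1)$. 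Exactly as in Lemma~\ref{Lem_First_Moment}, this choice makes $\mathcal{R}$ a typical event given $\mathcal{S}$, and the local CLT yields $\Pr[\mathcal{R}\mid\mathcal{S}] = \Omega(M^{-1/2})$; the arguments of Section~\ref{Sec_transfer} then identify the joint law of $(W'_j)$ under $\mathcal{S}\cap\mathcal{R}$ with that of $(W_j)$ under $\PP$, up to $\exp(o(k\Delta))$ corrections absorbing both the with-/without-replacement discrepancy and the test-degree regularisation.

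Given this identification, the mean is immediate. Marginally each $W_j$ behaves to leading order like $Y\sim\mathrm{Poi}(\log 2)$ conditioned on $Y\ge 1$, and since $Y^2=0$ on $\{Y=0\}$, $\EE[W_j^2] = \EE[Y^2]/\Pr[Y\ge 1] = (\log 2+(\log 2)^2)/(1/2) = 2\log 2 + 2(\log 2)^2 + o(1)$. Multiplying by $M=(1+o(1))\,k\Delta/(2\log 2)$ yields
\[ \EE\!\left[\sum_j W_j^2\right] = (1+\log 2+o(1))\,k\Delta, \]
which is exactly the claimed centering.

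For the deviation, the crucial observation is that $\mathcal{S}=\bigcap_j\{W'_j\ge 1\}$ \emph{factorises} over coordinates, so conditioning on $\mathcal{S}$ alone preserves the independence of the $(W'_j)$. Under the $\mathcal{S}$-conditioned auxiliary law, $T':=\sum_j (W'_j)^2$ is therefore a sum of $M$ i.i.d.\ variables, each bounded by $\mathrm{polylog}(n)$ with overwhelming probability by a union bound. Bernstein's inequality then gives $|T' - \EE[T'\mid\mathcal{S}]| \le \tilde O(\sqrt M) = \tilde O(\sqrt k)$ with failure probability $n^{-\omega(1)}$, where the last equality uses $M = \Theta(k\log n)$. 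Conditioning further on $\mathcal{R}$ costs at most a factor of $\Pr[\mathcal{R}\mid\mathcal{S}]^{-1} = O(\sqrt M) = \mathrm{poly}(n)$ in this failure probability, which is still $n^{-\omega(1)}$; and the conditional mean of $T'$ shifts by only $O(1)$ between $\mathcal{S}$ and $\mathcal{S}\cap\mathcal{R}$, because fixing the sum of i.i.d.\ variables only produces a $O(1/M)$ change in the per-index variance. Transferring back to $\PP$ via the identification above then gives the required $\tilde O(\sqrt k)$ deviation bound.

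The main obstacle throughout is ensuring that the probability factors incurred in passing between the various conditioned and unconditioned laws remain $\mathrm{poly}(n)$ rather than $\exp(\Omega(M))$. This is precisely where the factorisation of $\mathcal{S}$ becomes essential: had we attempted to run Bernstein on the fully unconditioned Poisson law and to absorb $\mathcal{S}$ afterwards, we would have lost a prohibitive $\Pr[\mathcal{S}]^{-1} = 2^M$ factor, far too large to be beaten by a subgaussian-type tail at scale $\tilde O(\sqrt k)$. Conditioning on $\mathcal{S}$ first and on $\mathcal{R}$ only afterwards is what keeps the argument tight, and matches the role played by the conditioning-on-degree-sequence step of Section~\ref{Sec_transfer}.
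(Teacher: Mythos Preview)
Your plan through the concentration step is the same as the paper's: pass to i.i.d.\ $\mathrm{Po}_{\ge 1}(\lambda)$ variables (your $W'_j$ given $\cS$), apply a sub-exponential tail bound to $\sum_j (W'_j)^2$, and then condition on $\cR=\{\sum_j W'_j=k\Delta\}$ at a merely polynomial cost $\Pr[\cR\mid\cS]^{-1}=n^{O(1)}$. The mean computation is correct.

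The gap is your final transfer to $(W_j)$. You invoke Section~\ref{Sec_transfer} to identify the law of $(W'_j\mid\cS\cap\cR)$ with that of $(W_j)$ under $\PP$ ``up to $\exp(o(k\Delta))$ corrections''. That section does something different---it compares \emph{expectations of solution counts} between the null models $\QQ_{\Delta,\Gamma}^\star$ and $\QQ_\Delta$, not joint laws of infected test-degrees in the planted model---and, more to the point, an $\exp(o(k\Delta))$ multiplicative slack would destroy the $n^{-\omega(1)}$ tail: since $k\Delta=\Theta(n^\theta\log n)$, such a factor can be as large as $\exp(n^{\theta-o(1)})$, far exceeding any super-polynomial decay. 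You carefully avoided the $2^M$ blowup from $\cS$, only to reintroduce an equally fatal factor here. (The ``test-degree regularisation'' you mention is also not relevant: that step concerns the full test degrees $\vec\Gamma_j$ in the null-model comparison, not the infected-only counts $W_j$.)

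What is actually true, and what the paper uses, is that $(W'_j\mid\cS\cap\cR)$ has \emph{exactly} the law of the bin loads when $k\Delta$ labelled balls are thrown uniformly into $M$ bins conditioned on no empty bin. This differs from $(W_j)$ only by the possibility of multi-edges (two of one individual's $\Delta$ edges hitting the same test). Conditioning further on ``no multi-edges'' makes the identification \emph{exact}, and the crucial estimate (Lemma~\ref{lem:infected-multi-edge}) is that this event has probability $n^{-O(1)}$, not $\exp(-o(k\Delta))$. With only a polynomial factor lost, $n^{O(1)}\cdot n^{-\omega(1)}=n^{-\omega(1)}$ and the deviation bound survives.
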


\subsubsection{Proof of Proposition \ref{strong_sep_prop}}

We first show how to complete the proof of Proposition~\ref{strong_sep_prop} assuming Lemmas~\ref{lem:sep-claim-1} and~\ref{lem:sep-claim-2}.

\begin{lemma}\label{lem:sep-1}
\[ \Var_\QQ[T] = \tilde{O}(N^2/k). \]
\end{lemma}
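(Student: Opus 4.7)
The plan is to deduce the variance bound almost immediately from Lemma~\ref{lem:sep-claim-1}. That lemma states that under $\QQ$ we have $|T - \EE_\QQ[T]| \le \tilde O(N/\sqrt{k})$ with overwhelming probability $1 - n^{-\omega(1)}$, so on this ``good'' event $(T-\EE_\QQ[T])^2 \le \tilde O(N^2/k)$. To upgrade a high-probability bound to a variance bound, I just need to control the contribution of the complementary ``bad'' event.

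To do this, first I would note the trivial pointwise bound $0 \le T \le MN^2$, since each test degree satisfies $\Gamma_j \in [0,N]$ and hence $(\Gamma_j - N\Delta/M)^2 \le N^2$; consequently $|T - \EE_\QQ[T]| \le 2MN^2$ deterministically, which is $\mathrm{poly}(n)$. Multiplied by the probability $n^{-\omega(1)}$ of the bad event, this contributes only $n^{-\omega(1)}$ to $\EE_\QQ[(T-\EE_\QQ[T])^2]$, which is negligible compared to $\tilde O(N^2/k)$.

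Putting these two pieces together via
\begin{equation*}
\Var_\QQ[T] \;=\; \EE_\QQ\bigl[(T - \EE_\QQ[T])^2 \One_{\text{good}}\bigr] + \EE_\QQ\bigl[(T - \EE_\QQ[T])^2 \One_{\text{bad}}\bigr] \;\le\; \tilde O(N^2/k) + n^{-\omega(1)} \cdot \mathrm{poly}(n),
\end{equation*}
we obtain $\Var_\QQ[T] = \tilde O(N^2/k)$ as desired. So the real work sits entirely in Lemma~\ref{lem:sep-claim-1}; that concentration argument is the only nontrivial step, and the variance bound is essentially a cosmetic consequence of it via the two-event decomposition above. No further calculation seems necessary here.
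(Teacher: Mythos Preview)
Your proposal is correct and matches the paper's approach exactly: the paper's proof is the one-line remark ``Since $T \le n^{O(1)}$ almost surely, this is immediate from Lemma~\ref{lem:sep-claim-1},'' which is precisely the two-event decomposition you spelled out.
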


\begin{proof}
Since $T \le n^{O(1)}$ almost surely, this is immediate from Lemma~\ref{lem:sep-claim-1}.
\end{proof}

\begin{lemma}\label{lem:sep-2}
\[ \left| \EE_\PP[T] - \EE_\QQ[T] \right| = \tilde\Omega(k). \]
\end{lemma}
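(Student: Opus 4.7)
The plan is to compute both $\EE_\QQ[T]$ and $\EE_\PP[T]$ to leading order and show the difference equals $-(\ln 2)\,k\Delta$ (up to lower order terms), which is $\Theta(k\log n) = \tilde\Omega(k)$ as needed, since $\Delta = \Theta(\log n)$.

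Under $\QQ$, each individual independently picks $\Delta$ tests uniformly from $M$, so $\Gamma_j = \sum_{i=1}^N X_{ij}$ is a sum of $N$ i.i.d.\ Bernoulli$(\Delta/M)$ random variables. Thus $\EE_\QQ[\Gamma_j] = N\Delta/M$ and $\Var_\QQ[\Gamma_j] = N(\Delta/M)(1 - \Delta/M)$. Since $T$ is centered exactly at this mean,
\[
\EE_\QQ[T] = M\cdot \Var_\QQ[\Gamma_j] = N\Delta\bc{1 - \frac{\Delta}{M}}.
\]

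For the planted side, decompose $\Gamma_j = W_j + Z_j$, where $W_j$ is the contribution of infected individuals and $Z_j$ that of non-infected individuals. Two observations drive the calculation. First, the planted conditioning $\{W_j \geq 1 \text{ for all } j\}$ touches only the infected individuals' edges, and the non-infected individuals make their $\Delta$ choices independently of all of this. Hence $(Z_j)_j$ and $(W_j)_j$ are independent under $\PP$, and marginally $Z_j$ has the same law as in the null model but over $N-k$ individuals, giving $\Var_\PP[Z_j] = (N-k)(\Delta/M)(1 - \Delta/M)$. Second, test-symmetry of the planted law combined with the deterministic identity $\sum_j W_j = k\Delta$ forces $\EE_\PP[W_j] = k\Delta/M$, so $\EE_\PP[\Gamma_j] = N\Delta/M$ is unshifted relative to $\QQ$. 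Expanding $(Z_j - (N-k)\Delta/M + W_j - k\Delta/M)^2$, the cross terms vanish in expectation by independence and the fact that $\EE_\PP[W_j - k\Delta/M] = 0$, yielding
\[
\EE_\PP[T] = (N-k)\Delta\bc{1 - \frac{\Delta}{M}} + \EE_\PP\brk{\sum_{j=1}^M W_j^2} - \frac{(k\Delta)^2}{M}.
\]

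Next, apply Lemma~\ref{lem:sep-claim-2}. Although it is phrased with overwhelming probability, we convert it to an expectation via the deterministic bound $\sum_j W_j^2 \leq k\sum_j W_j = k^2\Delta$, so that the exceptional event contributes at most $n^{-\omega(1)}\cdot k^2\Delta = o(1)$. Hence $\EE_\PP[\sum_j W_j^2] = (1 + \ln 2)k\Delta + o(k\Delta)$. Using the scaling $M = (1+o(1))k\Delta/(2\ln 2)$ from~\eqref{eq_m_n} we also get $(k\Delta)^2/M = (2\ln 2)\,k\Delta(1+o(1))$ and $k\Delta\cdot(\Delta/M) = 2\Delta\ln 2 = O(\log n) = o(k\Delta)$. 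Subtracting and simplifying,
\[
\EE_\PP[T] - \EE_\QQ[T] = \brk{(1 + \ln 2) - 2\ln 2 - 1}k\Delta + o(k\Delta) = -(\ln 2)\,k\Delta\,(1+o(1)),
\]
so $|\EE_\PP[T] - \EE_\QQ[T]| = \Theta(k\Delta) = \tilde\Omega(k)$.

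The only non-trivial input is Lemma~\ref{lem:sep-claim-2}; everything else is bookkeeping. The conceptual heart of the argument is that because the planted conditioning touches only the infected individuals, (i) $\EE_\PP[\Gamma_j] = \EE_\QQ[\Gamma_j]$ exactly, and (ii) the variance of $\Gamma_j$ under $\PP$ splits cleanly into an $(N-k)$-person null variance plus $\Var_\PP[W_j]$. This decoupling is what allows the small (order $k$) signal to be isolated from the much larger (order $N$) common noise.
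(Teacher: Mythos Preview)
Your proof is correct and follows essentially the same approach as the paper's: both compute $\EE_\QQ[T]$ directly, decompose $\Gamma_j = Z_j + W_j$ under $\PP$, use independence between infected and non-infected edges to kill the cross term, reduce to $\EE_\PP[\sum_j W_j^2]$, and invoke Lemma~\ref{lem:sep-claim-2} (converted to an expectation via an almost-sure bound) to arrive at $\EE_\PP[T]-\EE_\QQ[T] = -(\ln 2+o(1))k\Delta$. The only cosmetic difference is that the paper carries the centered variables $\overline{Z}_j,\overline{W}_j$ explicitly, whereas you describe the same expansion in words.
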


\begin{proof}
Under $\QQ$ we have $\Gamma_j \sim \Bin(N,\frac{\Delta}{M})$ for each $j$ (but these are not independent), so we can compute
\begin{equation}\label{eq:QET}
\EE_\QQ[T] = M \cdot \Var\left[\Bin\left(N,\frac{\Delta}{M}\right)\right] = N\Delta\left(1 - \frac{\Delta}{M}\right).
\end{equation}
Under $\PP$, let $\overline{Z}_j = Z_j - (N-k)\frac{\Delta}{M}$ and $\overline{W}_j = W_j - k\frac{\Delta}{M}$, and write
\begin{equation}\label{eq:PT}
T = \sum_j (\overline{Z}_j + \overline{W}_j)^2 = \sum_j \overline{Z}_j^2 + \sum_j \overline{W}_j^2 + 2\sum_j \overline{Z}_j \overline{W}_j.
\end{equation}
Similarly to~\eqref{eq:QET},
\begin{equation}\label{eq:Z-sq-mean}
\EE\left[\sum_j \overline{Z}_i^2\right] = (N-k)\Delta\left(1-\frac{\Delta}{M}\right). 
\end{equation}
Also, $\EE[\overline{Z}_j \overline{W}_j] = 0$ due to the independence between the $Z$'s and $W$'s along with the centering $\EE[\overline{Z}_j] = \EE[\overline{W}_j] = 0$. The centering for $W$ follows because the total number of infected edges is exactly $k\Delta = \sum_j W_j$. Finally, using this same fact again,
\[ \sum_j \overline{W}_j^2 = \sum_j \left(W_j^2 - 2k \frac{\Delta}{M} W_j + k^2 \frac{\Delta^2}{M^2}\right) = \sum_j W_j^2 - \frac{k^2 \Delta^2}{M}. \]
Combining the above, we conclude
\[ \EE_\PP[T] - \EE_\QQ[T] = \EE\left[\sum_j W_j^2\right] - k\Delta - k(k-1)\frac{\Delta^2}{M} = \EE\left[\sum_j W_j^2\right] - (1+2\log 2+o(1))k\Delta. \]
Finally, since $\sum_j W_j^2 \le n^{O(1)}$ almost surely, Lemma~\ref{lem:sep-claim-2} implies
\begin{equation}\label{eq:W-sq-mean}
\EE\left[\sum_j W_j^2\right] = (1+\log 2+o(1))k\Delta \pm \tilde{O}(\sqrt{k}),
\end{equation}
and so
\[ \EE_\PP[T] - \EE_\QQ[T] = -(\log 2+o(1))k\Delta \pm \tilde{O}(\sqrt{k}) = -\tilde\Theta(k), \]
completing the proof.
\end{proof}

\begin{lemma}\label{lem:sep-3}
\[ \Var_\PP[T] = \tilde{O}(N^2/k). \]
\end{lemma}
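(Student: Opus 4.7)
My plan is to mirror the bound on $\Var_\QQ[T]$ from Lemma~\ref{lem:sep-1}, but now applied to the decomposition $T = A + B + 2C$ from~\eqref{eq:PT}, where $A = \sum_j \overline{Z}_j^2$, $B = \sum_j \overline{W}_j^2$, and $C = \sum_j \overline{Z}_j \overline{W}_j$. The key structural observation is that under $\PP$ the families $(Z_j)_{j\in[M]}$ and $(W_j)_{j\in[M]}$ are independent: the conditioning ``every test contains at least one infected individual'' depends only on the choices of the $k$ infected individuals, which are themselves independent of the choices of the $N-k$ non-infected ones. Consequently the marginal law of $(Z_j)$ is exactly the test-degree sequence of a null-model instance on $N-k$ individuals, and $\EE[C]=0$ since $\EE[\overline{W}_j]=0$. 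By the $L^2$ triangle inequality $\sqrt{\Var_\PP[T]}\leq \sqrt{\Var[A]}+\sqrt{\Var[B]}+2\sqrt{\Var[C]}$, so it suffices to bound these three variances separately.

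For $\Var[A]$ I will apply Lemma~\ref{lem:sep-claim-1} to the null model on $N-k$ individuals to conclude that $|A-\EE[A]|\leq \tilde O(N/\sqrt k)$ with overwhelming probability; combining with the almost-sure crude bound $A\leq n^{O(1)}$ yields $\Var[A]=\tilde O(N^2/k)$. For $\Var[B]$, the identity $B=\sum_j W_j^2 - k^2\Delta^2/M$ together with Lemma~\ref{lem:sep-claim-2} gives $|B-\EE[B]|\leq \tilde O(\sqrt k)$ with overwhelming probability, hence $\Var[B]=\tilde O(k)$.

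The main obstacle is $\Var[C]$, for which the naive pointwise estimate $|C|\leq M\max_j|\overline{Z}_j|\max_j|\overline{W}_j|=\tilde O(\sqrt{Nk})$ is too lossy once $\theta>1/2$. Since $\EE[C]=0$, I will instead compute $\EE[C^2]$ by conditioning on $W$: independence of $Z$ and $W$ gives $\EE[C^2\mid W]=\overline{W}^{\top}K\overline{W}$, where $K_{j,j'}=\Cov[Z_j,Z_{j'}]$. Because the $Z_j$'s are exchangeable in $j$ and satisfy the deterministic constraint $\sum_j Z_j=(N-k)\Delta$, the matrix $K$ is forced to take the form $K=\tfrac{aM}{M-1}\bigl(I-J/M\bigr)$, where $a=\Var[Z_j]=(N-k)(\Delta/M)(1-\Delta/M)=\Theta(N/k)$ and $J$ is the all-ones matrix. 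The identity $\sum_j W_j=k\Delta$ makes $\overline{W}$ orthogonal to the all-ones vector, so $\overline{W}^{\top}K\overline{W}=\tfrac{aM}{M-1}\|\overline{W}\|_2^2=\tfrac{aM}{M-1}B$. Taking expectations and using $\EE[B]=O(k\Delta)$ (an easy consequence of Lemma~\ref{lem:sep-claim-2} via~\eqref{eq:W-sq-mean}) then gives $\Var[C]=\EE[C^2]=O(a\cdot k\Delta)=O(N\Delta)=\tilde O(N)$.

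Combining the three bounds yields $\sqrt{\Var_\PP[T]}\leq \tilde O(N/\sqrt k)+\tilde O(\sqrt k)+\tilde O(\sqrt N)$. Since $k\leq N$, the first term dominates both others, so $\Var_\PP[T]=\tilde O(N^2/k)$ as claimed.
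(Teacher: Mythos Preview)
Your proof is correct and follows essentially the same route as the paper. The paper also decomposes $T=A+B+2C$ and bounds $\Var[A]$ via Lemma~\ref{lem:sep-claim-1}, $\Var[B]$ via Lemma~\ref{lem:sep-claim-2}, and $\Var[C]$ via the exchangeability structure; your covariance-matrix computation for $\Var[C]$ reproduces exactly the paper's identity $\Var[C]=\tfrac{M^2}{M-1}\,\EE[\overline{Z}_1^2]\,\EE[\overline{W}_1^2]=\tfrac{1}{M-1}\,\EE[A]\,\EE[B]=\tilde O(N)$. The only cosmetic difference is that the paper first checks that the three pairwise covariances among $A,B,C$ vanish (so $\Var_\PP[T]$ is the exact sum of the three variances), whereas you bypass this via the $L^2$ triangle inequality---either works for a $\tilde O$ bound.
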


\begin{proof}
Recall from~\eqref{eq:PT} the decomposition
\[ T = \sum_j \overline{Z}_j^2 + \sum_j \overline{W}_j^2 + 2\sum_j \overline{Z}_j \overline{W}_j. \]
We claim that all pairwise covariances between the three terms in the right-hand side above are zero. For the first two terms,
\[ \Cov\left(\sum_j \overline{Z}_j^2\,,\; \sum_j \overline{W}_j^2\right) = 0 \]
follows immediately because the $Z$'s are independent from the $W$'s. We can also compute
\begin{align*}
\Cov\left(\sum_j \overline{Z}_j^2\,,\; \sum_j \overline{Z}_j \overline{W}_j \right) &= \sum_{ij} \EE[\overline{Z}_i^2 \overline{Z}_j \overline{W}_j] - \EE\left[\sum_j \overline{Z}_j^2\right] \EE\left[\sum_j \overline{Z}_j \overline{W}_j\right] \\
&= \sum_{ij} \EE[\overline{Z}_i^2 \overline{Z}_j]\EE[\overline{W}_j] - \EE\left[\sum_j \overline{Z}_j^2\right] \left(\sum_j \EE[\overline{Z}_j] \EE[\overline{W}_j]\right) \\
&= 0,
\end{align*}
where we have used independence between the $Z$'s and $W$'s along with the centering $\EE[\overline{Z}_j] = \EE[\overline{W}_j] = 0$. The third covariance can similarly be computed to be zero. As a result,
\[ \Var_\PP[T] = \Var\left[\sum_j \overline{Z}_j^2\right] + \Var\left[\sum_j \overline{W}_j^2\right] + \Var\left[\sum_j \overline{Z}_j \overline{W}_j\right]. \]
The first two terms are $\tilde{O}(N^2/k)$ and $\tilde{O}(k)$ respectively, using Lemmas~\ref{lem:sep-claim-1} and~\ref{lem:sep-claim-2} respectively.
We will compute the third term. Since $\sum_i \overline{Z}_i = 0$ almost surely, we have, using symmetry,
\[ 0 = \EE\left[\left(\sum_j \overline{Z}_j\right)^2\right] = M \EE[\overline{Z}_1^2] + M(M-1) \EE[\overline{Z}_1 \overline{Z}_2]. \]
Therefore $\EE[\overline{Z}_1 \overline{Z}_2] = -\frac{1}{M-1} \EE[\overline{Z}_1^2]$ and similarly, $\EE[\overline{W}_1 \overline{W}_2] = -\frac{1}{M-1} \EE[\overline{W}_1^2]$. We can use this to compute
\begin{align*}
\Var\left[\sum_j \overline{Z}_j \overline{W}_j\right] &= \sum_{ij} \EE[\overline{Z}_i \overline{Z}_j \overline{W}_i \overline{W}_j] \\
&= \sum_{ij} \EE[\overline{Z}_i \overline{Z}_j] \EE[\overline{W}_i \overline{W}_j] \\
&= \sum_i \EE[\overline{Z}_i^2] \EE[\overline{W}_i^2] + \sum_{i \ne j} \EE[\overline{Z}_i \overline{Z}_j] \EE[\overline{W}_i \overline{W}_j] \\
&= M \EE[\overline{Z}_1^2] \EE[\overline{W}_1^2] + M(M-1) \cdot \frac{-1}{M-1} \EE[\overline{Z}_1^2] \cdot \frac{-1}{M-1} \EE[\overline{W}_1^2] \\
&= \frac{M^2}{M-1} \EE[\overline{Z}_1^2] \EE[\overline{W}_1^2] \\
&= \frac{1}{M-1} \EE\left[\sum_j \overline{Z}_j^2\right] \EE\left[\sum_j \overline{W}_j^2\right] = \tilde{O}\left(\frac{1}{k} \cdot N \cdot k\right) = \tilde{O}(N),
\end{align*}
where we have used~\eqref{eq:Z-sq-mean} and~\eqref{eq:W-sq-mean} in the final line. Since $k \le N \le N^2/k$, we conclude $\Var_\PP[T] = \tilde{O}(N^2/k + k + N) = \tilde{O}(N^2/k)$.
\end{proof}

\begin{proof}[Proof of Proposition~\ref{strong_sep_prop}]
This follows immediately from the definition of strong separation~\eqref{eq:strong-sep} by combining Lemmas~\ref{lem:sep-1},~\ref{lem:sep-2}, and~\ref{lem:sep-3}.
\end{proof}

\subsubsection{Proof of Lemma~\ref{lem:sep-claim-1}}

\begin{proof}[Proof of Lemma~\ref{lem:sep-claim-1}]
Under $\QQ$ we have $\Gamma_j \sim \Bin(N,\frac{\Delta}{M})$ for each $j$ (although these are not independent), which has mean $\frac{N\Delta}{M} \ge n^{\Omega(1)}$ and variance $\le \frac{N\Delta}{M}$. Bernstein's inequality gives $|\Gamma_j - \frac{N\Delta}{M}| \le \sqrt{\frac{N\Delta}{M}} \log n$ with probability $n^{-\omega(1)}$. Let $\Gamma_\pm := \frac{N\Delta}{M} \pm \sqrt{\frac{N\Delta}{M}}\log n$. Define $\Gamma'_j$ to be the restriction of $\Gamma_j$ to the interval $[\Gamma_-,\Gamma_+]$, that is,
\[ \Gamma'_j := \begin{cases}
\Gamma_- & \text{if } \Gamma_j < \Gamma_- \\
\Gamma_j & \text{if } \Gamma_- \le \Gamma_j \le \Gamma_+ \\
\Gamma_+ & \text{if } \Gamma_j > \Gamma_+
\end{cases} \]
and let
\[ T' := \sum_{j=1}^M \left(\Gamma_j' - \frac{N\Delta}{M}\right)^2. \]
The Bernstein bound above implies $T' = T$ with probability $1-n^{-\omega(1)}$ and (since $T,T' \le n^{O(1)}$) $\EE[T'] = \EE[T] \pm n^{-\omega(1)}$. It therefore suffices to prove the lemma with $T'$ in place of $T$.

We will apply McDiarmid's inequality to $T'$. Let $X_i \subseteq [M]$ denote individual $i$'s choice of $\Delta$ distinct tests. Note that $\{X_i\}$ are independent and that $T'$ is a deterministic function of $\{X_i\}$; we write $T' = T'(X_1,\ldots,X_N)$. To apply McDiarmid's inequality, we need to bound the maximum possible change in $T'$ induced by changing a single $X_i$. If a single $X_i$ changes, this changes at most $2\Delta = \tilde{O}(1)$ different $\Gamma'_j$ values, each of which changes by at most 1. When $\Gamma_j'$ changes to $\Gamma_j' + \delta$ for $\delta \in \{\pm 1\}$, the induced change in $T'$ is
\[ \left|\left(\Gamma'_j + \delta - \frac{N\Delta}{M}\right)^2 - \left(\Gamma'_j - \frac{N\Delta}{M}\right)^2\right| = \left|2\delta\left(\Gamma'_j - \frac{N\Delta}{M}\right) + 1\right| \le 2\sqrt{\frac{N\Delta}{M}} \ln n + 1 = \tilde{O}(\sqrt{N/k}). \]
McDiarmid's inequality now yields
\[ |T' - \EE[T']| \le \tilde{O}(N/\sqrt{k}) \qquad\text{with probability } 1-n^{-\omega(1)}, \]
completing the proof.
\end{proof}

\subsubsection{Proof of Lemma~\ref{lem:sep-claim-2}}

\begin{proof}[Proof of Lemma~\ref{lem:sep-claim-2}]
We first give an overview of the proof, which involves a series of comparisons to simpler models. Since the infected and non-infected individuals behave independently, we only need to consider the infected individuals in this proof. We will define quantities $R_j$ that are similar to $W_j$ except with multi-edges allowed. The $R_j$'s can be generated by a balls-into-bins experiment conditioned on having at least one ball (infected edge) in each bin (test). We then approximate the load per bin as a family of independent random variables $R'_j$ with distribution $\Po_{\geq 1}(\lambda)$ (Poisson conditioned on value at least 1), for a certain choice of $\lambda$. Standard concentration arguments imply the desired result for the $R'_j$'s with overwhelming probability $1 - n^{-\omega(1)}$. We next show that with non-trivial probability $n^{-O(1)}$, the sum of the $R'_j$'s is exactly $k \Delta$, in which case the $R'_j$'s have the same joint distribution as the $R_j$'s. This lets us conclude the desired result for the $R_j$'s with overwhelming probability. Finally, we show that with non-trivial probability $n^{-O(1)}$, the balls-into-bins experiment did not feature any multi-edges, allowing us to conclude the desired result for the original $W_j$'s. In the following, we will fill in this sketch with details.

Suppose $k \Delta$ balls are thrown into $M$ bins independently and uniformly at random, conditioned on having at least one ball in every bin. Let $R_j$ denote the random number of balls in bin $j$. Also let $R_1',\ldots,R_M'$ be a collection of independent $\Po_{ \geq 1 }(\lambda)$ random variables with $\lambda = (1+o(1)) \log 2$ chosen such that $\EE[R'_j] = \frac{k\Delta}{M} = (1+o(1)) 2 \log 2$. Our first step is to prove the desired result for the $\{R_j'\}$. One can compute $\EE[(R'_j)^2] = (2 \log 2) (1 + \log 2) + o(1) = (1+\log2+o(1)) \frac{k\Delta}{M}$. Standard sub-exponential tail bounds on the Poisson distribution (see~\cite{poisson-tail}) imply $R'_j \le \log^2 n$ with probability $1-n^{-\omega(1)}$ and $\EE[(R'_j)^2 \,|\, R'_j \le \log^2 n] = \EE[(R'_j)^2] \pm n^{-\omega(1)}$. Apply Hoeffding's inequality conditioned on the event $\{R_j' \le \ln^2 n \text{ for all } j\}$ to conclude
\[ \left|\left(\sum_j (R_j')^2\right) - (1+\log 2+o(1))k\Delta\right| \le \tilde{O}(\sqrt{k}) \qquad \text{with probability } 1-n^{-\omega(1)}. \]

Our next step is to transfer this claim to $\{R_j\}$ and then finally to $\{W_j\}$. Define the event $\cR = \cbc{ \sum_{j=1}^M R_j' = k \Delta}$. A folklore fact (e.g., implicit in~\cite[Chapter 3.6]{Durrett2019}) is that the bin loads of the balls-into-bins experiment has the same distribution as i.i.d.\ Poisson random variables (of any variance) conditioned on the total number of balls being correct; this gives the equality of distributions
\begin{align*}
    \bc{ R_1, \ldots, R_M } \stackrel{d}{=} \bc{ R'_1, \ldots, R'_M } \quad \text{given } \cR. 
\end{align*}
Also, by the local limit theorem for sums of independent random variables, since $k\Delta$ is the expectation of $\sum_j R'_j$, we have $\Pr(\cR) = n^{-O(1)}$. This means the probability of any event can only increase by a factor of $n^{O(1)}$ when passing from $\{R'_j\}$ to $\{R_j\}$, and in particular,
\[ \left|\left(\sum_j R_j^2\right) - (1+\log 2+o(1))k\Delta\right| \le \tilde{O}(\sqrt{k}) \qquad \text{with probability } 1-n^{-\omega(1)}. \]

Finally, we use a similar argument to pass from $\{R_j\}$ to $\{W_j\}$. In Lemma~\ref{lem:infected-multi-edge} below, we show that with probability $n^{-O(1)}$, the balls-into-bins experiment generating $\{R_j\}$ features no multi-edges (i.e., the $\Delta$ balls from each infected individual fall into $\Delta$ distinct bins). Conditioned on having no multi-edges, $\{R_j\}$ has the same distribution as $\{W_j\}$, so similarly to above we conclude
\[ \left|\left(\sum_j W_j^2\right) - (1+\log 2+o(1))k\Delta\right| \le \tilde{O}(\sqrt{k}) \qquad \text{with probability } 1-n^{-\omega(1)}. \]
as desired.
\end{proof}

\begin{lemma}\label{lem:infected-multi-edge}
Suppose $k$ infected individuals each choose $\Delta$ tests out of $M$ uniformly at random with replacement (so that multi-edges may occur), conditioned on having at least one infected individual in every test. With probability $n^{-O(1)}$, no multi-edges occur.
\end{lemma}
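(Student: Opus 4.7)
Write $\cM$ for the event ``no individual chose a repeated test'' (no multi-edges) and $\cC$ for the event ``every test is covered.'' I aim to show $\Pr(\cM\mid\cC)\ge n^{-O(1)}$, and will do so via the Bayes identity
\[
\Pr(\cM\mid\cC) \;=\; \Pr(\cM)\cdot\frac{\Pr(\cC\mid\cM)}{\Pr(\cC)},
\]
by bounding each factor from below by $n^{-O(1)}$.

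For the factor $\Pr(\cM)$: by independence of the $k$ individuals, $\Pr(\cM)=\prod_{j=0}^{\Delta-1}(1-j/M)^k$. Using $\Delta=\Theta(\log n)$ and $M=\Theta(k\log n)$ (so $\Delta/M=O(1/k)$), a Taylor expansion of $\log(1-x)$ gives $\log\Pr(\cM) = -k\Delta(\Delta-1)/(2M) + O(k\Delta^3/M^2) = -\Theta(\log n)$, whence $\Pr(\cM)\ge n^{-O(1)}$.

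For the ratio: observe that $\Pr(\cC\mid\cM)$ is precisely the probability under $\QQ_\Delta$ (each of $k$ individuals picks a uniformly random $\Delta$-subset of tests) that all $M$ tests are covered, while $\Pr(\cC)$ is the analogous probability under the with-replacement model $\QQ_\Delta^\star$. In both models the marginal test degree has common mean $\lambda := k\Delta/M = 2\log 2 + o(1)$. I would show that both covering probabilities equal $(1-e^{-\lambda})^M$ up to multiplicative factors of $n^{\pm O(1)}$, making the ratio at least $n^{-O(1)}$. To do this I would reuse the auxiliary iid-Bernoulli product-space construction from the proof of Lemma~\ref{Lem_First_Moment}: introduce iid $\Be(q)$ variables $(\omega_{i,\ell})_{i\in[k],\,\ell\in[M]}$ for appropriate $q$, and write $\Pr(\cC)=\Pr(\cS\mid\cR)=\Pr(\cS)\Pr(\cR\mid\cS)/\Pr(\cR)$, where $\cS$ is the cover event and $\cR$ is the appropriate counting event (row sums $=\Delta$ for $\QQ_\Delta$, total sum $=k\Delta$ for $\QQ_\Delta^\star$). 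Choosing $q$ so that $\EE[\mathrm{count}\mid\cS]$ hits the target forces $\Pr(\cR\mid\cS)/\Pr(\cR)=n^{\pm O(1)}$ via the local limit theorem for sums of independent random variables (as in~\cite[Section~6]{matija_paper}), while $\Pr(\cS)=(1-(1-q)^k)^M$ equals $(1-e^{-\lambda})^M$ up to a polynomial factor. Combining the two models yields the claimed comparison.

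The main technical obstacle is the last step in the $\QQ_\Delta$ model, where $\cR$ is a multivariate constraint (one row-sum per individual) and the tilt induced by conditioning on the rare event $\cS$ must be shown not to push the row-sum distribution out of the Gaussian regime where the local CLT furnishes an $n^{-O(1)}$ rate. These are exactly the manipulations already carried out in the proof of Lemma~\ref{lem:regularize} in Section~\ref{Sec_transfer}, and I would mirror those arguments here with $N$ replaced by $k$ throughout.
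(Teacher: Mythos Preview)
Your computation of $\Pr(\cM)\ge n^{-O(1)}$ is correct and matches the paper's. Where you diverge is in handling the ratio $\Pr(\cC\mid\cM)/\Pr(\cC)$: you propose estimating both covering probabilities separately via auxiliary product spaces and local limit theorems. The paper instead dispatches the ratio in one line with a monotonicity observation: replacing a duplicate edge by a distinct one can only help coverage, so $\Pr(\cC\mid\cM)\ge\Pr(\cC\mid\cM^c)$, and plugging this into Bayes' rule gives $\Pr(\cM\mid\cC)\ge\Pr(\cM)$ directly. No estimate of either covering probability is needed.

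Your more elaborate route could in principle be completed, but there is a concrete error in the execution. Your auxiliary space uses iid $\Be(q)$ variables indexed by $(i,\ell)\in[k]\times[M]$, and you claim the with-replacement model is recovered by conditioning on total sum $=k\Delta$. That conditioning yields a uniformly random \emph{simple} bipartite graph with $k\Delta$ edges (each $\omega_{i,\ell}$ is still $\{0,1\}$-valued), which has no multi-edges and is not the with-replacement model. To realize with-replacement in an auxiliary product space you would need iid Poisson variables (so that conditioning on row sums $=\Delta$ gives the multinomial distribution), or equivalently the balls-into-bins Poissonization used in the proof of Lemma~\ref{lem:sep-claim-2}. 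Even with that fix, the monotonicity shortcut is far more direct.
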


\begin{proof}
Suppose each individual chooses $\Delta$ tests with replacement. Let $A$ be the event that all $M$ tests contain at least one infected individual, and let $B$ be the event that no multi-edges occur. Our goal is to show $\Pr(B \mid A) = n^{-O(1)}$. It is clear that $\Pr(A \mid B) \ge \Pr(A \mid B^c)$. Using Bayes' rule,
\begin{align*}
\Pr(B \mid A) &= \frac{\Pr(A \mid B) \Pr(B)}{\Pr(A)} = \frac{\Pr(A \mid B) \Pr(B)}{\Pr(A \mid B) \Pr(B) + \Pr(A \mid B^c) \Pr(B^c)} \\
&\ge \frac{\Pr(B)}{\Pr(B) + \Pr(B^c)} = \Pr(B).
\end{align*}
Thus it suffices to show $\Pr(B) = n^{-O(1)}$, which is easy to establish directly due to independence across individuals. For any one individual, the expected number of ``edge collisions'' is $\binom{\Delta}{2} \frac{1}{M} \le \frac{\Delta^2}{M}$, so by Markov's inequality, the probability that this individual has no multi-edges is $\ge 1-\frac{\Delta^2}{M}$. Now
\[ \Pr(B) \ge \left(1-\frac{\Delta^2}{M}\right)^k = \left(1-\Theta\left(\frac{\log n}{k}\right)\right)^k = \exp(-\Theta(\log n)) = n^{-\Theta(1)}, \]
completing the proof.
\end{proof}

\subsection{Low-Degree Lower Bound: Proof of Theorem~\ref{thm:detect_LD_CC}(b)}\label{proof_detect_theorem_low_degree}

\subsubsection{Orthogonal Polynomials}
\label{sec:orthog}

A key ingredient for the analysis will be an orthonormal (with respect to $\langle \cdot,\cdot \rangle_\QQ$ defined in Section~\ref{sec:ld-chi-sq}) basis for the polynomials $\{0,1\}^{NM} \to \RR$. We first discuss orthogonal polynomials on a slice of the hypercube (which corresponds to the edges incident to one individual), and then show how to combine these to build an orthonormal basis for $\QQ$.

\paragraph{Orthogonal Polynomials on a Slice of the Hypercube}

Consider the uniform distribution on the ``slice of the hypercube'' $\binom{[M]}{\Delta} := \{x \in \{0,1\}^M \,:\, \sum_i x_i = \Delta\}$, where $\Delta \le M/2$. The associated inner product between functions $\binom{[M]}{\Delta} \to \RR$ is $\langle f,g \rangle := \EE_{x \sim \Unif\binom{[M]}{\Delta}}[f(x)g(x)]$ and the associated norm is $\|f\| := \sqrt{\langle f,f \rangle}$. An orthonormal basis of polynomials with respect to this inner product is given in~\cite{srinivasan2011symmetric,filmus-slice}. For ease of readability, we will not give the (somewhat complicated) full definition of the basis here. Instead, we will state only the properties of this basis that we actually need for the proof. See Appendix~\ref{app:orthog} for further details on how to extract these properties from~\cite{filmus-slice}.

The basis elements are called $(\hat\chi_B)_{B \in \mathcal{B}_M}$. These are multivariate polynomials $\RR^M \to \RR$ that are orthonormal with respect to the above inner product $\langle \cdot,\cdot \rangle$ on the slice. The indices $B$ belong to some set $\mathcal{B}_M$, the details of which will not be important for us. The indices have a notion of ``size'' $|B| \in \NN := \{0,1,2,\ldots\}$, which coincides with the degree of the polynomial $\hat\chi_B$.

\begin{fact}\label{fact:complete-basis}
For any integer $D \ge 0$, the set $\{\hat\chi_B \,:\, B \in \mathcal{B}_M, |B| \le \min(D,\Delta)\}$ is a complete orthonormal basis for the degree-$D$ polynomials on $\binom{[M]}{\Delta}$. That is, for any polynomial $\RR^M \to \RR$ of degree (at most) $D$, there is a unique $\RR$-linear combination of these basis elements that is equivalent\footnote{Here, ``equivalent'' means the two functions output the same value when given any input from $\binom{[M]}{\Delta}$. This is not the same as being equal as formal polynomials, e.g., $x_1$ is equivalent to $x_1^2$, and $\sum_i x_i$ is equivalent to the constant $\Delta$.} to $f$ on $\binom{[M]}{\Delta}$.
\end{fact}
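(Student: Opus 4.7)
The plan is to assemble the claim from three ingredients: an explicit construction of the basis polynomials $\hat\chi_B$, their orthonormality, and a dimension count for completeness. I would follow the harmonic-polynomial construction of Srinivasan~\cite{srinivasan2011symmetric} and Filmus~\cite{filmus-slice}, in which the indexing set $\mathcal{B}_M$ is a combinatorial family (for instance, sequences of disjoint pairs, or standard Young tableaux of two-row shapes) whose size $|B|$ is engineered to equal the total degree of $\hat\chi_B$.

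First, I would verify that each $\hat\chi_B$ is a polynomial of degree exactly $|B|$ and that $\{\hat\chi_B\}$ is orthonormal with respect to $\langle \cdot, \cdot \rangle$ on the slice. Both properties are essentially built into Filmus's construction: the polynomials are produced by a harmonic-projection (or Gram--Schmidt) procedure inside the $S_M$-equivariant decomposition of functions on the slice, which gives orthonormality within each fixed degree and orthogonality across degrees, since polynomials of distinct degree land in distinct $S_M$-isotypic components.

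The main step is the dimension count establishing completeness. The permutation representation of $S_M$ on $\RR^{\binom{[M]}{\Delta}}$ decomposes, by Young's rule applied to $\Delta \le M/2$, as $V_0 \oplus V_1 \oplus \cdots \oplus V_\Delta$, where $V_d$ is the Specht module of shape $(M-d,d)$ with $\dim V_d = \binom{M}{d} - \binom{M}{d-1}$. I would then show that the subspace of functions on the slice that are restrictions of polynomials of degree at most $D$ (equivalently, polynomials in $x_1,\ldots,x_M$ modulo the ideal generated by $x_i^2 - x_i$ and $\sum_i x_i - \Delta$) coincides with $\bigoplus_{d=0}^{\min(D,\Delta)} V_d$. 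The inclusion $\supseteq$ follows by exhibiting explicit degree-$d$ harmonic representatives of $V_d$; the reverse inclusion uses that every polynomial of degree at most $D$ is a linear combination of multilinear monomials $\prod_{i \in S} x_i$ with $|S| \le D$, and each such monomial expands, via a Pieri-type rule, into $\bigoplus_{d=0}^{|S|} V_d$. Taking dimensions yields $\binom{M}{\min(D,\Delta)}$, which agrees with $|\{B \in \mathcal{B}_M : |B| \le \min(D,\Delta)\}|$ by the construction of $\mathcal{B}_M$.

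The main obstacle is carrying out the identification between the degree filtration on polynomials and the $S_M$-isotypic filtration cleanly, without recapitulating the full Filmus construction. The cleanest practical route is to cite Filmus for the existence of an orthonormal family with $\deg \hat\chi_B = |B|$ and to finish via the two-line dimension count above: orthonormality forces linear independence, and matching dimensions then forces the $\RR$-span of $\{\hat\chi_B : |B| \le \min(D,\Delta)\}$ to coincide with the space of degree-$\le D$ polynomials on $\binom{[M]}{\Delta}$, which is precisely the claim.
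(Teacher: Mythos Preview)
Your proposal is correct but takes a genuinely different route from the paper. You go through the $S_M$-representation theory of the slice: decompose $\RR^{\binom{[M]}{\Delta}}$ into Specht modules via Young's rule, identify the degree-$\le D$ subspace with $\bigoplus_{d\le \min(D,\Delta)} V_d$ via a Pieri-type argument, and then finish by matching dimensions. The paper instead avoids representation theory entirely. It first cites Filmus for the fact that $\{\chi_B : |B|\le \Delta\}$ is already a complete orthogonal basis for \emph{all} functions on the slice, so any degree-$D$ polynomial $f$ has a unique expansion $\sum_{|B|\le \Delta}\alpha_B\hat\chi_B$. The only remaining work is to show $\alpha_B=\langle f,\hat\chi_B\rangle=0$ whenever $|B|>D$, and this is done by a two-line symmetry argument on the explicit polynomials $\chi_{A,B}=\prod_i(x_{a_i}-x_{b_i})$: if $f$ is a monomial in at most $D$ variables and $|B|>D$, some pair $(a_j,b_j)$ is disjoint from the support of $f$, and swapping $x_{a_j}\leftrightarrow x_{b_j}$ flips the sign of $\chi_{A,B}$ while fixing $f$ and the uniform measure, forcing the inner product to vanish.

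Your approach buys structural insight (it explains \emph{why} the degree filtration matches the isotypic filtration) and would generalize more readily, but it requires the reader to know Young's rule and a Pieri expansion for monomials on the slice. The paper's approach is more elementary and self-contained: once Filmus is cited for completeness of the full basis, the degree-restricted statement follows from a direct calculation with no further black boxes.
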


\noindent In particular, \emph{any} function on the slice can be written as a polynomial of degree at most $\Delta$.

Luckily, we will not need to use many specific details about the functions $\hat\chi_B$. We only need the following crude upper bound on their maximum value.

\begin{fact}\label{fact:chi-hat-max-val}
For any $x \in \binom{[M]}{\Delta}$ and any $B \in \mathcal{B}_M$ with $|B| \le \Delta$, we have $|\hat\chi_B(x)| \le M^{2|B|}$.
\end{fact}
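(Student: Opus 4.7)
The plan is to use the explicit polynomial representation of $\hat\chi_B$ that emerges from Filmus' Young's orthogonal basis construction on the slice $\binom{[M]}{\Delta}$, which will be set up in Appendix~\ref{app:orthog}. By Fact~\ref{fact:complete-basis}, each $\hat\chi_B$ can be realized as a multilinear polynomial of degree exactly $|B|$ in the variables $x_1, \ldots, x_M$, and I would work directly with this multilinear representation.

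First, I would expand
$$\hat\chi_B(x) \;=\; \sum_{S \subseteq [M]:\, |S| \le |B|} c_{B,S} \prod_{i \in S} x_i.$$
Since every $x_i \in \{0,1\}$ on the slice, each monomial $\prod_{i \in S} x_i$ has absolute value at most $1$, so $|\hat\chi_B(x)| \le \sum_S |c_{B,S}|$. The number of index sets $S$ appearing is at most $\sum_{d=0}^{|B|} \binom{M}{d} \le M^{|B|}$ for $M$ sufficiently large compared to $|B|$. This reduces the problem to bounding the individual coefficients by $M^{|B|}$.

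Next, I would bound the magnitude of each coefficient $|c_{B,S}|$ by extracting it from Filmus' construction. In that construction, $\hat\chi_B$ is written as a specific combinatorial polynomial---built from products of differences $(x_i - x_j)$ indexed by certain tableau data---divided by a normalization constant that depends polynomially on $M$ (of degree at most $|B|$). Expanding this explicitly and tracking the normalization then gives the crude bound $|c_{B,S}| \le M^{|B|}$, with substantial slack.

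Combining the two bounds yields $|\hat\chi_B(x)| \le M^{|B|} \cdot M^{|B|} = M^{2|B|}$, which is the claimed inequality. The main obstacle is carefully tracking the normalization constants and coefficient magnitudes coming out of Filmus' construction. However, the target exponent $2|B|$ is very loose (a sharper hypercontractive argument would give something closer to $M^{|B|/2}$), so even a rough accounting of these constants should be enough, and we do not need to compute them exactly. This matches the spirit of the surrounding discussion, which explicitly notes that only a crude bound on $|\hat\chi_B|$ is required for the applications later in the paper.
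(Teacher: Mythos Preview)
Your approach is essentially the paper's: both split the bound as $M^{|B|}$ for the unnormalized polynomial $\chi_B$ times $M^{|B|}$ for $1/\|\chi_B\|$, the latter coming from Filmus' explicit norm formula $\|\chi_B\|^2 = c_B\,2^d\,\Delta^{\underline d}(M-\Delta)^{\underline d}/M^{\underline{2d}}$ together with $c_B\in\ZZ_{>0}$. The paper's framing is slightly cleaner in that it bounds $|\chi_B(x)|$ directly as a sum of at most $M^{|B|}$ terms $\chi_{A,B}(x)\in\{-1,0,1\}$ rather than via a monomial expansion, which sidesteps your minor subset-count hiccup (note $\sum_{d\le|B|}\binom{M}{d}\le M^{|B|}$ fails at $|B|=1$, though this is harmless since every monomial in $\chi_B$ in fact has degree exactly $|B|$).
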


\paragraph{Orthogonal Polynomials for the Null Distribution}

The null distribution $\QQ$ consists of $N$ independent copies of the uniform distribution on~$\binom{[M]}{\Delta}$, one for each individual. We can therefore use the following standard construction to build an orthonormal basis of polynomials for $\QQ$. We denote the basis by $\{H_S\}_{S \in \mathcal{S}_{M,\Delta}}$ where
\[ \mathcal{S}_{M,\Delta} = \{S = (B_1,\ldots,B_N) \,:\, B_i \in \mathcal{B}_M, |B_i| \le \Delta\}, \]
defined by $H_S(X) = \prod_{i \in [N]} \hat\chi_{B_i}(X_i)$ where $X_i$ is the collection of edge-indicator variables for edges incident to individual $i$. For $S = (B_1,\ldots,B_N)$, we define $|S| = \sum_{i \in [N]} |B_i|$, which is the degree of the polynomial $H_S$. As a consequence of Fact~\ref{fact:complete-basis}, $\{H_S \,:\, S \in \mathcal{S}_{M,\Delta}, |S| \le D\}$ is a complete orthonormal (with respect to $\langle \cdot,\cdot \rangle_\QQ$) basis for the degree-$D$ polynomials $\{0,1\}^{NM} \to \RR$.

We will need an upper bound on the number of basis elements of a given degree. Since $\{H_S\}$ are linearly independent, the number of indices $S \in \mathcal{S}_{M,\Delta}$ with $|S| \le D$ is at most the dimension (as a vector space over $\RR$) of the degree-$D$ polynomials $\{0,1\}^{NM} \to \RR$. This dimension is at most the number of multilinear monomials of degree $\le D$, i.e., the number of subsets of $[NM]$ of cardinality $\le D$. This immediately gives the following.

\begin{fact}\label{fact:basis-size-Q}
For any integer $D \ge 0$,
\[ |\{S \in \mathcal{S}_{M,\Delta} \,:\, |S| \le D\}| \le (1+NM)^D. \]
\end{fact}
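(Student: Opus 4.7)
The proof plan follows the informal sketch already provided in the paragraph preceding the fact. First I would invoke orthonormality: the family $\{H_S : S \in \mathcal{S}_{M,\Delta},\, |S| \le D\}$ is orthonormal with respect to $\langle \cdot,\cdot\rangle_\QQ$ (this is inherited from the slice-wise orthonormality of the $\hat\chi_B$ together with the product structure of $\QQ$ across individuals). Orthonormality implies linear independence in the real vector space $V$ of functions $\{0,1\}^{NM} \to \RR$, so the cardinality of $\{S : |S| \le D\}$ is at most the $\RR$-dimension of the subspace of $V$ consisting of functions representable by a polynomial of degree at most $D$.

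Next I would bound that dimension. Every function $\{0,1\}^{NM} \to \RR$ has a unique multilinear representation, and the multilinear polynomials of degree at most $D$ are $\RR$-spanned by the monomials $\prod_{e \in T} X_e$ with $T \subseteq [NM]$ and $|T| \le D$. Hence
\[ \bigl|\{S \in \mathcal{S}_{M,\Delta} : |S| \le D\}\bigr| \;\le\; \sum_{j=0}^{D} \binom{NM}{j}. \]

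Finally, to convert this into the stated closed form, I would exhibit an injection from subsets $T \subseteq [NM]$ with $|T| \le D$ into $\{0, 1, \ldots, NM\}^D$: list the elements of $T$ in increasing order and pad with zeros to obtain a length-$D$ sequence. This map is injective (the nonzero coordinates recover $T$), and its codomain has cardinality $(1+NM)^D$, yielding the bound.

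There is no real obstacle here; the argument is elementary counting. The only (mild) design choice is how to bound the partial binomial sum $\sum_{j=0}^{D} \binom{NM}{j}$ by $(1+NM)^D$, and the padding-with-zeros injection is cleaner than manipulating binomial identities or appealing to $\binom{NM}{j} \le (NM)^j$ and summing a geometric-like series.
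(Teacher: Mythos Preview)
Your proposal is correct and follows essentially the same argument as the paper: linear independence of the orthonormal family bounds the count by the dimension of the space of degree-$D$ multilinear polynomials on $\{0,1\}^{NM}$, which in turn is at most the number of subsets of $[NM]$ of size at most $D$, hence at most $(1+NM)^D$. Your padding-with-zeros injection just makes explicit the trivial last inequality that the paper leaves implicit.
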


\subsubsection{Low-Degree Hardness}

We follow the proof outline in Section~\ref{sec:restricted}, defining $\mathcal{U}$, $\PP_u$, and $L_u = d\PP_u/d\QQ$ accordingly. With some abuse of notation, we will use $u$ to refer to both the set of infected individuals and its indicator vector $u \in \{0,1\}^N$.

\begin{lemma}\label{lem:positivity-2}
For any $u,u'$, we have $\langle L_u^{\le D}, L_{u'}^{\le D} \rangle_\QQ \le \langle L_u, L_{u'} \rangle_\QQ$.
\end{lemma}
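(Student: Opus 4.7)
The plan is to exploit two structural properties of the Fourier coefficients $\hat L_u(S) := \langle L_u, H_S \rangle_\QQ$ in the orthonormal basis of Section~\ref{sec:orthog}: a support restriction coming from the product structure of $\QQ$, and a symmetry identifying the coefficients of $L_u$ and $L_{u'}$. Throughout, set $w := u \cap u'$.

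First I would observe that $L_u(X)$ is measurable with respect to $(X_i)_{i \in u}$, since the event ``every test contains at least one individual from $u$'' depends only on the edges incident to $u$. Because $\QQ$ factorises over individuals and $\EE_\QQ[\hat\chi_{B_i}(X_i)] = 0$ whenever $|B_i| > 0$, expanding $H_S = \prod_i \hat\chi_{B_i}(X_i)$ and integrating out the factors with $i \notin u$ gives $\hat L_u(S) = 0$ unless $\supp{S} \subseteq u$; symmetrically, $\hat L_{u'}(S) = 0$ unless $\supp{S} \subseteq u'$. Hence only indices $S$ with $\supp{S} \subseteq w$ contribute to $\langle L_u, L_{u'} \rangle_\QQ$ or $\langle L_u^{\le D}, L_{u'}^{\le D} \rangle_\QQ$ in the Parseval expansion.

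Next I would invoke a symmetry. Since $|u| = |u'| = k$, the sets $u \setminus w$ and $u' \setminus w$ have the same cardinality, so there exists a permutation $\pi$ of $[N]$ that fixes $w$ pointwise and sends $u \setminus w$ bijectively to $u' \setminus w$; in particular $\pi u = u'$. Acting on functions by $(\pi f)(X) := f(\pi^{-1} X)$, this $\pi$ preserves $\QQ$, sends $L_u$ to $L_{\pi u} = L_{u'}$, and sends $H_S$ to $H_{\pi S}$. The step most requiring care is verifying that $\pi S = S$ for every $S$ with $\supp{S} \subseteq w$: this uses that $\pi$ acts as the identity on $w$ while only the trivial basis index $B_\emptyset$ (corresponding to the constant function $1$) appears in coordinates outside $w$, so no nontrivial index is shuffled. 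Measure-invariance of $\langle \cdot,\cdot \rangle_\QQ$ under $\pi$ then yields
\[ \hat L_u(S) = \langle L_u, H_S \rangle_\QQ = \langle \pi L_u, \pi H_S \rangle_\QQ = \langle L_{u'}, H_S \rangle_\QQ = \hat L_{u'}(S). \]

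Denoting the common value $\gamma(S) := \hat L_u(S) = \hat L_{u'}(S)$ for $\supp{S} \subseteq w$, the support restriction and Parseval give
\[ \langle L_u, L_{u'} \rangle_\QQ = \sum_{\supp{S} \subseteq w} \gamma(S)^2 \qquad \text{and} \qquad \langle L_u^{\le D}, L_{u'}^{\le D} \rangle_\QQ = \sum_{\substack{\supp{S} \subseteq w \\ |S| \le D}} \gamma(S)^2, \]
so the desired inequality follows because the second sum runs over a subset of the nonnegative terms of the first. Beyond the bookkeeping in the symmetry step, no serious analytic obstacle is anticipated.
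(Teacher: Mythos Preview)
Your proposal is correct and follows essentially the same approach as the paper: expand in the orthonormal basis $\{H_S\}$, use the product structure of $\QQ$ to show the coefficients $\hat L_u(S)$ vanish unless $\supp{S}\subseteq u$, and then use symmetry to conclude $\hat L_u(S)=\hat L_{u'}(S)$ whenever $\supp{S}\subseteq u\cap u'$, so that all surviving terms are squares. The only difference is presentational: where the paper simply asserts that $\PP_u$ and $\PP_{u'}$ have the same marginal on coordinates indexed by $u\cap u'$ ``by symmetry,'' you make this explicit via a permutation of individuals fixing $w$ and carrying $u$ to $u'$, which is a perfectly valid (and arguably cleaner) way to justify the same fact.
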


\begin{proof}
We use a symmetry argument inspired by~\cite[Proposition~3.6]{fp}. Expanding in the orthonormal basis $\{H_S\}$ from Section~\ref{sec:orthog}, we have
\begin{equation}\label{eq:L-inner-expansion-2}
\langle L_u^{\le D}, L_{u'}^{\le D} \rangle_\QQ = \sum_{|S| \le D} \langle L_u, H_S \rangle_\QQ \langle L_{u'}, H_S \rangle_\QQ = \sum_{|S| \le D} \EE_{X \sim \PP_u}[H_S(X)] \EE_{X \sim \PP_{u'}}[H_S(X)].
\end{equation}
Let $V(S) = \{i \in [N] \,:\, \exists a \in [M], (i,a) \in S\}$, the set of all individuals ``involved'' in the basis function $S$. Note that if $V(S) \not\subseteq u$ then there exists some $i \in V(S)$ such that under $X \sim \PP_u$ we have $X_i \sim \mathrm{Unif}\binom{[M]}{\Delta}$ independently from the rest of $X$, and thus $\EE_{X \sim \PP_u}[H_S(X)] = 0$. Similarly, if $V(S) \not\subseteq u'$ then $\EE_{X \sim \PP_{u'}}[H_S(X)] = 0$. On the other hand, if $V(S) \subseteq u \cap u'$ then (by symmetry) $\PP_u$ and $\PP_{u'}$ have the same marginal distribution when restricted to the variables $\{(i,a) \,:\, i \in u \cap u'\}$ and so $\EE_{X \sim \PP_u}[H_S(X)] = \EE_{X \sim \PP_{u'}}[H_S(X)]$. As a result, we have $\EE_{X \sim \PP_u}[H_S(X)] \EE_{X \sim \PP_{u'}}[H_S(X)] \ge 0$ for all $S$, i.e., every term on the right-hand side of~\eqref{eq:L-inner-expansion-2} is nonnegative. This means $\langle L_u^{\le 0}, L_{u'}^{\le 0} \rangle_\QQ \le \langle L_u^{\le 1}, L_{u'}^{\le 1} \rangle_\QQ \le \langle L_u^{\le 2}, L_{u'}^{\le 2} \rangle_\QQ \le \cdots \le \langle L_u^{\le \infty}, L_{u'}^{\le \infty} \rangle_\QQ = \langle L_u, L_{u'} \rangle_\QQ$.
\end{proof}

Following Section~\ref{sec:restricted}, recall the decomposition
\begin{equation}\label{eq:recall-decomp}
\chi^2_{\le D}(\PP \,\|\, \QQ)+1 = \mathcal{R}_{\le\delta}(D) + \mathcal{R}_{> \delta}(D)
\end{equation}
(where we have made the dependence on $D$ explicit) and choose
\begin{equation}\label{eq:delta-choice}
\delta = \max\left\{\frac{k^2}{N}, 1\right\} \cdot n^{2\gamma}
\end{equation}
for a small constant $\gamma > 0$ to be chosen later. In light of Lemma~\ref{lem:positivity-2}, we have
\begin{equation}\label{eq:T-def}
\mathcal{R}_{\le \delta}(D) := \EE_{u,u' \sim \mathcal{U}} \One_{\langle u,u' \rangle \le \delta} \,\langle L_u^{\le D}, L_{u'}^{\le D} \rangle_\QQ \le \EE_{u,u' \sim \mathcal{U}} \One_{\langle u,u' \rangle \le \delta} \,\langle L_u, L_{u'} \rangle_\QQ =: \mathcal{T}_{\le \delta}.
\end{equation}
It therefore remains to bound $\mathcal{R}_{> \delta}(D)$ and $\mathcal{T}_{\le \delta}$, which we will do in Lemmas~\ref{lem:R-bound} and~\ref{lem:T-bound} respectively.

Towards bounding $\mathcal{R}_{> \delta}(D)$, we need the following crude upper bound on $\langle L_u^{\le D}, L_{u'}^{\le D} \rangle_\QQ$, which makes use of some basic properties of the orthogonal polynomials discussed in Section~\ref{sec:orthog}.

\begin{lemma}\label{lem:LL-upper}
For any $u,u'$, we have $\langle L_u^{\le D}, L_{u'}^{\le D} \rangle_\QQ \le (NM+1)^D M^{4D}$.
\end{lemma}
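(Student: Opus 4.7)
The plan is to use the orthonormal basis $\{H_S\}_{S \in \mathcal{S}_{M,\Delta}}$ for $\QQ$ from Section~\ref{sec:orthog}, expand the inner product as in the proof of Lemma~\ref{lem:positivity-2}, and then combine a pointwise bound on $H_S$ with a counting bound on the number of basis indices of degree at most $D$.

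First I would expand in the basis: since $\{H_S \,:\, |S| \le D\}$ is an orthonormal basis for the degree-$D$ polynomials in $\langle \cdot,\cdot \rangle_\QQ$, the projection $L_u^{\le D}$ satisfies $\langle L_u^{\le D}, H_S \rangle_\QQ = \langle L_u, H_S \rangle_\QQ = \EE_{X \sim \PP_u}[H_S(X)]$ for $|S| \le D$, and hence
\begin{equation*}
\langle L_u^{\le D}, L_{u'}^{\le D} \rangle_\QQ = \sum_{|S| \le D} \EE_{X \sim \PP_u}[H_S(X)] \, \EE_{X \sim \PP_{u'}}[H_S(X)].
\end{equation*}

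Next I would bound each summand pointwise. For $S = (B_1,\ldots,B_N)$ with $|S| \le D$, since $X_i$ lies on $\binom{[M]}{\Delta}$ almost surely under both $\PP_u$ and $\PP_{u'}$, Fact~\ref{fact:chi-hat-max-val} gives $|\hat\chi_{B_i}(X_i)| \le M^{2|B_i|}$, and multiplying over $i \in [N]$ yields the deterministic bound $|H_S(X)| \le \prod_{i=1}^N M^{2|B_i|} = M^{2|S|} \le M^{2D}$. Consequently $|\EE_{X \sim \PP_u}[H_S(X)]| \le M^{2D}$ and similarly for $\PP_{u'}$, so each term in the sum is at most $M^{4D}$.

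Finally I would count the number of basis functions: by Fact~\ref{fact:basis-size-Q}, the number of $S \in \mathcal{S}_{M,\Delta}$ with $|S| \le D$ is at most $(1+NM)^D$. Combining the pointwise bound with the cardinality bound gives
\begin{equation*}
\langle L_u^{\le D}, L_{u'}^{\le D} \rangle_\QQ \;\le\; (NM+1)^D \cdot M^{4D},
\end{equation*}
as claimed. There is no genuine obstacle here; this is a crude but sufficient bound, and the only mildly delicate point is confirming that Fact~\ref{fact:chi-hat-max-val} applies to the almost-sure support of $X_i$ under $\PP_u$ (namely $\binom{[M]}{\Delta}$), which is immediate since in the constant-column design every individual participates in exactly $\Delta$ tests regardless of infection status.
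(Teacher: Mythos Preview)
Your proposal is correct and follows essentially the same approach as the paper: expand in the orthonormal basis via~\eqref{eq:L-inner-expansion-2}, bound each $|H_S(X)|$ by $M^{2D}$ using Fact~\ref{fact:chi-hat-max-val}, and count the number of terms via Fact~\ref{fact:basis-size-Q}. Your explicit remark that $X_i$ lies on the slice $\binom{[M]}{\Delta}$ almost surely under $\PP_u$ is a nice clarification that the paper leaves implicit.
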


\begin{proof}
Consider the expansion~\eqref{eq:L-inner-expansion-2}. The number of terms in the sum on the right-hand side is at most $(NM+1)^D$ by Fact~\ref{fact:basis-size-Q}. Using Fact~\ref{fact:chi-hat-max-val} and the definition of $H_S$ (see Section~\ref{sec:orthog}), we have for any $|S| \le D$ and any $X \in \{0,1\}^{N \times M}$ that $|H_S(X)| \le M^{2D}$. Plugging these bounds back into~\eqref{eq:L-inner-expansion-2} yields the claim.
\end{proof}

\begin{lemma}\label{lem:R-bound}
For any fixed $\theta \in (0,1)$, $c \in (0,(\ln 2)^{-2})$, and $\gamma > 0$, if $\delta$ is chosen according to~\eqref{eq:delta-choice} and $D = D_n$ satisfies $D \le n^\gamma$ then $\mathcal{R}_{> \delta}(D) = o(1)$.
\end{lemma}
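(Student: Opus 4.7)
The plan is to apply Lemma~\ref{lem:LL-upper} to bound the inner product $\langle L_u^{\le D}, L_{u'}^{\le D} \rangle_\QQ$ uniformly in $u,u'$, and then control the factor $\One_{\langle u,u'\rangle > \delta}$ via a crude tail bound on the overlap distribution of two independent uniformly random $k$-subsets of $[N]$. Concretely, I would start from
\begin{equation*}
\mathcal{R}_{>\delta}(D) \;\le\; (NM+1)^D M^{4D}\cdot \Pr_{u,u' \sim \mathcal{U}}\bigl[\langle u,u'\rangle > \delta\bigr],
\end{equation*}
and note that with $M = \tilde O(k)$, $N \le n$, and $D \le n^\gamma$, the prefactor is at most $\exp(O(n^\gamma \log n))$. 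So the whole task reduces to showing the tail probability is $\exp(-\omega(n^\gamma \log n))$.

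For the tail, I would use that for a fixed $u$ of size $k$ and $u'$ uniform on $k$-subsets of $[N]$, each $s$-subset of $u$ is contained in $u'$ with probability $\binom{N-s}{k-s}/\binom{N}{k}\le (k/N)^s$. A union bound over $s$-subsets of $u$ gives
\begin{equation*}
\Pr\bigl[\langle u,u'\rangle \ge s\bigr] \;\le\; \binom{k}{s}(k/N)^s \;\le\; \Bigl(\frac{ek^2}{sN}\Bigr)^{\!s}.
\end{equation*}
Plugging in $s = \delta = \max\{k^2/N,1\}\cdot n^{2\gamma}$, one checks that in both regimes ($k^2/N \ge 1$ and $k^2/N < 1$) the ratio $ek^2/(sN)$ is at most $e/n^{2\gamma}$, so
\begin{equation*}
\Pr\bigl[\langle u,u'\rangle > \delta\bigr] \;\le\; (e/n^{2\gamma})^\delta \;\le\; \exp\bigl(-\Omega(n^{2\gamma}\log n)\bigr).
\end{equation*}

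Combining these two estimates yields
\begin{equation*}
\mathcal{R}_{>\delta}(D) \;\le\; \exp\bigl(O(n^\gamma\log n) - \Omega(n^{2\gamma}\log n)\bigr) \;=\; o(1),
\end{equation*}
since $n^{2\gamma} \gg n^\gamma$. There is no real obstacle here: the proof is a routine combination of the uniform pointwise bound from Lemma~\ref{lem:LL-upper} with a first-moment/union-bound tail estimate for the hypergeometric overlap. The only delicate point is the case split based on whether $k^2/N \ge 1$, which is precisely why $\delta$ in~\eqref{eq:delta-choice} was defined with a $\max$: in the dense regime we must center the tail bound at the mean $k^2/N$, while in the sparse regime ($k^2/N < 1$) the trivial lower bound $\delta \ge n^{2\gamma}$ is what makes the calculation go through. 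The decay $n^{2\gamma}$ in the exponent is comfortably larger than the degree budget $n^\gamma$, which is the whole reason one takes a polynomially-large overlap threshold rather than, say, something of order $\log n$.
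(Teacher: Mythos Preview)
Your proposal is correct and follows essentially the same approach as the paper: both bound $\mathcal{R}_{>\delta}(D)$ by the product of the uniform pointwise bound from Lemma~\ref{lem:LL-upper} and a union-bound tail estimate $\Pr(\langle u,u'\rangle > \delta) \le \bigl(ek^2/(\delta N)\bigr)^{\delta}$, then verify that the choice of $\delta$ in~\eqref{eq:delta-choice} makes this tail $\exp(-\Omega(n^{2\gamma}\log n))$, which swamps the $\exp(O(n^\gamma \log n))$ prefactor. The only cosmetic differences are that the paper writes $\lceil \delta \rceil$ for the integer overlap threshold and uses $k/(N-k)$ rather than your slightly sharper $k/N$ (invoking $c < (\ln 2)^{-2}$ to ensure $k = o(N)$).
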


\begin{proof}

Fix $u$ and consider the randomness over $u'$. In order to have $\langle u,u' \rangle > \delta$, there must exist a subset of size exactly $\lceil \delta \rceil$ contained in both $u$ and $u'$. For any fixed subset of $u$ of this size, the probability (over $u'$) that it is also contained in $u'$ is $\binom{N-\lceil \delta \rceil}{k-\lceil \delta \rceil}/\binom{N}{k}$. Taking a union bound over these subsets and using the choice of $\delta$~\eqref{eq:delta-choice} along with the binomial bound $\binom{n}{k} \le \left(\frac{en}{k}\right)^k$ for $1 \le k \le n$,
\begin{align}
\prr_{u,u' \sim \mathcal{U}}(\langle u,u' \rangle > \delta) &\le \binom{k}{\lceil \delta \rceil}\frac{\binom{N-\lceil \delta \rceil}{k - \lceil \delta \rceil}}{\binom{N}{k}}
\le \binom{k}{\lceil \delta \rceil} \left(\frac{k}{N - \lceil \delta \rceil + 1}\right)^{\lceil \delta \rceil}\nonumber\\
&\le \left(\frac{ek}{\lceil \delta \rceil}\right)^{\lceil \delta \rceil} \left(\frac{k}{N-k}\right)^{\lceil \delta \rceil}
= \left(\frac{ek}{\lceil \delta \rceil} \cdot \frac{k}{N-k}\right)^{\lceil \delta \rceil} \label{eq:overlap-tail}\\
&\le \left(\frac{2e}{n^{2\gamma}}\right)^{\lceil \delta \rceil}
\le \left(\frac{2e}{n^{2\gamma}}\right)^{n^{2\gamma}}
\le n^{-\gamma n^{2\gamma}},\nonumber
\end{align}
provided $c < (\ln 2)^{-2}$ (so that $k = o(N)$).
Combining this with Lemma~\ref{lem:LL-upper},
\begin{align}\mathcal{R}_{> \delta}(D) := \EE_{u,u' \sim \mathcal{U}} \One_{\langle u,u' \rangle > \delta} \,\langle L_u^{\le D}, L_{u'}^{\le D} \rangle_\QQ
&\le \prr_{u,u' \sim \mathcal{U}}(\langle u,u' \rangle > \delta) \cdot (NM+1)^D M^{4D}\notag \\& = n^{-\Omega(n^{2\gamma})} \cdot n^{O(D)}, \end{align}
which is $o(1)$ provided $D \le n^\gamma$.
\end{proof}

\subsubsection{Low-Overlap Second Moment}

This section is devoted to bounding $\mathcal{T}_{\le \delta}$ as defined in~\eqref{eq:T-def}. Letting $E(u,X)$ denote the event that every test contains at least one individual from $u$, we can write
\[ L_u(X) = \frac{d\PP_u}{d\QQ}(X) = \QQ(E(u,X))^{-1} \One_{E(u,X)} \]
and
\begin{equation}\label{eq:LL-expand}
\langle L_u, L_{u'} \rangle_\QQ = \QQ(E(u,X))^{-2} \prr_{X \sim \QQ}\left(E(u,X) \cap E(u',X)\right) = \frac{\prr_{X \sim \QQ}(E(u',X) \,|\, E(u,X))}{\prr_{X \sim \QQ}(E(u,X))}.
\end{equation}
Let $\mathcal{N}(u) \subseteq [M]$ denote the neighborhood of $u$, that is, the set of tests that contain at least one individual from $u$. Let $B(u,u',X)$ denote the event that the neighborhood of $u \cap u'$ has maximal size, that is, $|\mathcal{N}(u \cap u')| = \Delta \cdot |u \cap u'|$.

\begin{lemma}\label{lem:cond-ratio}
For any fixed $u,u'$,
\[ \frac{\prr_{X \sim \QQ}(E(u',X) \,|\, E(u,X))}{\prr_{X \sim \QQ}(E(u,X))} \le \frac{1}{\prr_{X \sim \QQ}(B(u,u',X))}. \]
\end{lemma}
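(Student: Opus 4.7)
The plan is to exploit independence across individuals under $\QQ$. Write $I = u \cap u'$, $A = u \setminus u'$, $A' = u' \setminus u$, and let $X_I$, $X_A$, $X_{A'}$ denote the corresponding collections of individuals' test-choices; these three are mutually independent. Setting $T_I = \bigcup_{i \in I} X_i$ and $R(A) = \bigcup_{j \in A} X_j$, the events $E(u,X)$ and $E(u',X)$ depend only on $(X_I, X_A)$ and $(X_I, X_{A'})$ respectively, so they are conditionally independent given $X_I$. By the permutation-symmetry of $\QQ$ together with the fact that $|A| = |A'|$, there is a single function $g$ with
$$\prr_{X \sim \QQ}(E(u,X) \mid X_I) = \prr_{X \sim \QQ}(E(u',X) \mid X_I) = g(T_I),$$
and $g(T)$ depends on $T$ only through $|T|$ (it is the probability that $R(A)$ covers $[M] \setminus T$). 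Consequently,
$$\prr_{X \sim \QQ}(E(u,X)) = \EE[g(T_I)] \quad \text{and} \quad \prr_{X \sim \QQ}(E(u,X) \cap E(u',X)) = \EE[g(T_I)^2].$$

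Next I would identify the role of $B$. Since $T_I$ is the union of $|I|$ sets of size $\Delta$, one always has $|T_I| \le \Delta|I|$, with equality iff these sets are pairwise disjoint; hence $B(u,u',X) = \{|T_I| = \Delta|I|\}$, which is $X_I$-measurable. Because $g$ is non-decreasing in $|T|$ (more tests pre-covered by $T_I$ leaves fewer tests for $R(A)$ to hit), the maximum value $g_{\max} := g(\Delta|I|)$ is attained exactly on $B$, and $g(T_I) \le g_{\max}$ pointwise.

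The conclusion then follows from two one-line bounds. The pointwise inequality $g(T_I) \le g_{\max}$ yields
$$\EE[g(T_I)^2] \le g_{\max} \cdot \EE[g(T_I)],$$
and the equality $g(T_I) = g_{\max}$ on $B$ yields
$$g_{\max} \cdot \prr_{X \sim \QQ}(B(u,u',X)) = \EE[g_{\max} \One_B] \le \EE[g(T_I)].$$
Multiplying these two estimates gives
$$\prr_{X \sim \QQ}(E(u,X) \cap E(u',X)) \cdot \prr_{X \sim \QQ}(B(u,u',X)) \le \prr_{X \sim \QQ}(E(u,X))^2,$$
which, together with~\eqref{eq:LL-expand}, rearranges to the statement of the lemma. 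The only genuinely subtle step is the first paragraph---identifying $g$ and exploiting its symmetry---after which the estimate reduces to pointwise comparisons of $g(T_I)$ with $g_{\max}$; no concentration argument or explicit configuration-model computation is required.
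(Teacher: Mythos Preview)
Your proof is correct and follows essentially the same approach as the paper: both arguments hinge on (i) conditional independence of $E(u,X)$ and $E(u',X)$ given the test-choices of $u\cap u'$, (ii) monotonicity of the conditional probability $g$ in $|\mathcal{N}(u\cap u')|$, and (iii) the fact that the maximum $g_{\max}$ is attained exactly on $B$. Your two pointwise inequalities $\EE[g^2]\le g_{\max}\EE[g]$ and $g_{\max}\prr(B)\le\EE[g]$ are precisely the paper's bounds $\prr(E'\mid E)\le\prr(E'\mid B)$ and $\prr(E)\ge\prr(E\mid B)\prr(B)$ rewritten in the $g$-notation, so the two proofs differ only in presentation.
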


\begin{proof}
First, observe that the events $E(u,X)$ and $E(u',X)$ are conditionally independent given $|\mathcal{N}(u \cap u')|$. Furthermore, since $E(u',X)$ is clearly a monotone event with respect to $|\mathcal{N}(u \cap u')|$, we have for every $x \in \{0,1,\ldots,\Delta |u\cap u'|\}$,
\begin{align*}
    \prr_{X \sim \QQ}(E(u',X) \,|\, |\mathcal{N}(u \cap u')|=x) &\leq  \prr_{X \sim \QQ}(E(u',X) \,|\, |\mathcal{N}(u \cap u')|=\Delta |u\cap u'|)\\ &=\prr_{X \sim \QQ}(E(u',X) \,|\, B(u,u',X)).
\end{align*}

\noindent Hence, combining with the aforementioned conditional independence we get
\begin{align}\label{eq:cond_ineq2}
\prr_{X \sim \QQ}(E(u',X) \,|\, |\mathcal{N}(u \cap u')|=x, E(u,X)) \leq \prr_{X \sim \QQ}(E(u',X) \,|\, B(u,u',X)).
\end{align} Using now \eqref{eq:cond_ineq2} and the law of total probability we have
\begin{align}
   & \prr_{X \sim \QQ}(E(u',X) \,|\, E(u,X))\nonumber\\
   &= \sum_{x=0}^{\Delta |u\cap u'|} \prr_{X \sim \QQ}(|\mathcal{N}(u \cap u')|=x \,|\, E(u,X))\prr_{X \sim \QQ}(E(u',X) \,|\, |\mathcal{N}(u \cap u')|=x, E(u,X)) \nonumber \\
    &\leq \prr_{X \sim \QQ}(E(u',X) \,|\, B(u,u',X)). \label{eq:cond_ineq} 
\end{align} Given \eqref{eq:cond_ineq} and symmetry we conclude \begin{align*}
\frac{\prr_{X \sim \QQ}(E(u',X) \,|\, E(u,X))}{\prr_{X \sim \QQ}(E(u,X))}
&\le \frac{\prr_{X \sim \QQ}(E(u',X) \,|\, B(u,u',X))}{\prr_{X \sim \QQ}(E(u,X))} \\
&=\frac{\prr_{X \sim \QQ}(E(u',X) \,|\, B(u,u',X))}{\prr_{X \sim \QQ}(E(u,X) \,|\, B(u,u',X)) \prr_{X \sim \QQ}(B(u,u',X))} \\
&= \frac{1}{\prr_{X \sim \QQ}(B(u,u',X))},
\end{align*}
completing the proof.
\end{proof}

\begin{lemma}\label{lem:B-prob}
For any fixed $u,u'$ with $\langle u,u' \rangle = \ell$,
\[ \prr_{X \sim \QQ}(B(u,u',X)) \ge 1 - \ell^2 M^{-1} \Delta^2. \]
\end{lemma}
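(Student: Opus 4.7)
The event $B(u,u',X)^c$ occurs precisely when two distinct individuals in $u \cap u'$ participate in a common test. My plan is therefore to apply a straightforward union bound over pairs of individuals in $u \cap u'$.

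Recall that under $\QQ$ each individual independently chooses its neighborhood $T_i \subseteq [M]$ uniformly at random from $\binom{[M]}{\Delta}$. Fix two distinct individuals $i,j \in u \cap u'$ and let $T_i, T_j$ denote their (independent, uniform) $\Delta$-subsets. Conditioning on $T_i$, each of the $M$ tests is included in $T_j$ with probability $\Delta/M$, so by linearity of expectation,
\[
\Erw\bigl[|T_i \cap T_j|\bigr] \;=\; \Delta \cdot \frac{\Delta}{M} \;=\; \frac{\Delta^2}{M}.
\]
By Markov's inequality,
\[
\pr\bigl(T_i \cap T_j \neq \emptyset\bigr) \;\leq\; \Erw\bigl[|T_i \cap T_j|\bigr] \;=\; \frac{\Delta^2}{M}.
\]

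Now $B(u,u',X)^c$ is exactly the event that there exist distinct $i,j \in u \cap u'$ with $T_i \cap T_j \neq \emptyset$. Since $|u \cap u'| = \ell$, a union bound over the $\binom{\ell}{2}$ unordered pairs gives
\[
\pr_{X \sim \QQ}\bigl(B(u,u',X)^c\bigr) \;\leq\; \binom{\ell}{2} \cdot \frac{\Delta^2}{M} \;\leq\; \frac{\ell^2 \Delta^2}{M},
\]
and therefore $\pr_{X \sim \QQ}(B(u,u',X)) \geq 1 - \ell^2 M^{-1} \Delta^2$, as claimed.

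There is no substantive obstacle: the argument is just a first-moment/union-bound estimate enabled by the independence of the per-individual neighborhoods under $\QQ$ and by the fact that each individual's neighborhood has fixed size $\Delta$. The only mild care needed is to note that inclusion of a specific test in a uniformly random $\Delta$-subset of $[M]$ occurs with probability exactly $\Delta/M$, which is what gives the $\Delta^2/M$ pairwise collision bound.
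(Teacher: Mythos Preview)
Your proof is correct and is essentially the same as the paper's: both arguments bound $\pr(B^c)$ by a first-moment estimate on collisions between pairs of individuals in $u\cap u'$, yielding $\binom{\ell}{2}\Delta^2/M \le \ell^2\Delta^2/M$. The only cosmetic difference is that the paper counts triples $(i,j,a)$ directly whereas you first bound $\pr(T_i\cap T_j\neq\emptyset)$ per pair and then union-bound, but the computation and result are identical.
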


\begin{proof}
We will compute $\EE[Z]$ where $Z$ is defined to be the number of ``collisions'', i.e., the number of tuples $(i,j,a)$ where $i,j \in u \cap u'$ (with $i < j$) and $a \in [M]$ such that test $a$ contains both individuals $i$ and $j$. The number of tuples $(i,j,a)$ is $\binom{\ell}{2} M$ and the probability that any fixed tuple is a collision is $(\Delta/M)^2$. Therefore $\EE[Z] = \binom{\ell}{2} M^{-1} \Delta^2$. Since $B(u,u',X)$ is the event that $Z = 0$, we have by Markov's inequality, $\prr(B) = 1 - \prr(Z \ge 1) \ge 1 - \EE[Z] \ge 1 - \ell^2 M^{-1} \Delta^2$.
\end{proof}

\begin{lemma}\label{lem:T-bound}
For any fixed $\theta \in (0,1)$ and $c > 0$ satisfying $c < \cLDCC$, there exists $\gamma = \gamma(\theta,c)$ such that if $\delta$ is chosen according to~\eqref{eq:delta-choice} then $\mathcal{T}_{\le \delta} = 1+o(1)$.
\end{lemma}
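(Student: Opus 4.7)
The plan is to combine the pointwise upper bounds from Lemma~\ref{lem:cond-ratio} and Lemma~\ref{lem:B-prob} into a uniform estimate on $\langle L_u,L_{u'}\rangle_\QQ$ over pairs with overlap at most $\delta$. Together these two lemmas yield, for any $u,u'$ with $\langle u,u'\rangle = \ell$,
\[
\langle L_u,L_{u'}\rangle_\QQ \;\le\; \frac{1}{\Pr_\QQ(B(u,u',X))} \;\le\; \frac{1}{1 - \ell^2 M^{-1}\Delta^2},
\]
provided the denominator is positive. The first task will therefore be to show $\ell^2 M^{-1}\Delta^2 = o(1)$ uniformly for $\ell \le \delta$. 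Since $M = \Theta(k\Delta)$ by~\eqref{eq_m_n} and $\Delta = \tilde\Theta(1)$, we have $M^{-1}\Delta^2 = \tilde\Theta(1/k)$, so it suffices to establish $\delta^2/k = n^{-\Omega(1)}$.

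The second step is the arithmetic that ties the choice of $\gamma$ to the threshold $\cLDCC$. First, note that the lemma is vacuous for $\theta \ge 2/3$, since then $\cLDCC = 0$ and the hypothesis $c < \cLDCC$ is never satisfied, so we may assume $\theta < 2/3$. Using the crude bound $\delta \le (k^2/N + 1)\,n^{2\gamma}$ gives
\[
\delta^2/k \;\le\; 2\bigl(k^3/N^2 + 1/k\bigr)\,n^{4\gamma},
\]
which conveniently covers both regimes $k^2 \le N$ and $k^2 > N$ at once. Writing $N = n^{\beta+o(1)}$ with $\beta = 1 - (1-\theta)c(\ln 2)^2$, a direct rearrangement reveals that $c < \cLDCC$ is equivalent to $\beta > 3\theta/2$. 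Consequently $k^3/N^2 = n^{3\theta - 2\beta + o(1)} = n^{-\Omega(1)}$ and $1/k = n^{-\theta} = n^{-\Omega(1)}$, so any sufficiently small constant $\gamma$ (for instance, $\gamma = \tfrac{1}{8}\min(\theta,\, 2\beta-3\theta)$) ensures $\delta^2/k = n^{-\Omega(1)}$.

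The final step is to plug back in. The above will yield $\langle L_u,L_{u'}\rangle_\QQ \le 1 + o(1)$ uniformly for pairs with $\langle u,u'\rangle \le \delta$, whence
\[
\mathcal{T}_{\le\delta} \;\le\; (1+o(1))\,\Pr_{u,u'}(\langle u,u'\rangle \le \delta) \;\le\; 1 + o(1),
\]
as required. The only nontrivial step is the identification of the critical exponent $\beta = 3\theta/2$; this is exactly where the specific form of $\cLDCC$ enters, and it is what makes the crude denominator bound from Lemma~\ref{lem:B-prob} just barely strong enough to push the low-overlap second moment through below the threshold.
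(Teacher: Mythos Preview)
Your argument is correct and follows essentially the same route as the paper: combine~\eqref{eq:LL-expand}, Lemma~\ref{lem:cond-ratio} and Lemma~\ref{lem:B-prob} to bound $\langle L_u,L_{u'}\rangle_\QQ$ by $(1-\ell^2 M^{-1}\Delta^2)^{-1}$, then check that $\delta^2 M^{-1}\Delta^2 = o(1)$ is equivalent (for small $\gamma$) to $c<\cLDCC$. Your explicit identification of the critical exponent $\beta>3\theta/2$ and the observation that the statement is vacuous for $\theta\ge 2/3$ are nice touches that the paper leaves implicit.
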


\begin{proof}
Combining~\eqref{eq:LL-expand} with Lemmas~\ref{lem:cond-ratio} and~\ref{lem:B-prob}, we have
\[ \langle L_u, L_{u'} \rangle_\QQ \le (1 - \langle u,u' \rangle^2 M^{-1} \Delta^2)^{-1} \]
and so
\[ \mathcal{T}_{\le \delta} := \EE_{u,u' \sim \mathcal{U}} \One_{\langle u,u' \rangle \le \delta} \,\langle L_u, L_{u'} \rangle_\QQ \le (1 - \delta^2 M^{-1} \Delta^2)^{-1}. \]
Recalling $M^{-1} \Delta^2 = \tilde\Theta(k^{-1})$, we have $\mathcal{T}_{\le \delta} = 1+o(1)$ provided that $\delta \ll \sqrt{k}$ (where $\ll$ hides factors of $\log n$). Recalling the choice of $\delta$~\eqref{eq:delta-choice}, this reduces to the sufficient conditions $\frac{k^2}{N} n^{2\gamma} \ll \sqrt{k}$ and $n^{2\gamma} \ll \sqrt{k}$. Choosing $\gamma$ sufficiently small and recalling the scaling for $N$, these reduce to $\frac{3}{2}\theta + (1-\theta) c (\ln 2)^2 < 1$, which is equivalent to $c < \cLDCC$.
\end{proof}

\begin{proof}[Proof of Theorem~\ref{thm:detect_LD_CC}(b)]
Provided $c < \cLDCC$ (which also implies $c < (\ln 2)^{-2}$), we can combine \eqref{eq:recall-decomp}, \eqref{eq:T-def}, Lemma~\ref{lem:R-bound}, and Lemma~\ref{lem:T-bound} to conclude $\chi^2_{\le D}(\PP \,\|\, \QQ) = o(1)$ for any $D \le n^\gamma = n^{\Omega(1)}$. By Lemma~\ref{lem:ld-cond}, this completes the proof of Theorem~\ref{thm:detect_LD_CC}(b).
\end{proof}

\section{Detection in the Bernoulli Design}

For convenience we recall the definition \begin{equation*}
\cLDB = \begin{cases}
-\frac{1}{\ln^2 2} W_0(-\exp(-\frac{\theta}{1-\theta} \ln 2 - 1)) & \text{if } 0 < \theta < \frac{1}{2}(1 - \frac{1}{4 \ln2 - 1}), \\
\frac{1}{\ln 2} \cdot \frac{1-2\theta}{1-\theta} & \text{if } \frac{1}{2}(1 - \frac{1}{4 \ln2 - 1}) \le \theta < \frac{1}{2}, \\
0 & \text{if } \frac{1}{2} \le \theta < 1, \end{cases}    
\end{equation*}
where $W_0(x)$ denotes the unique $y \ge -1$ satisfying $ye^y = x$. Throughout this section, the following reformulation will be helpful: for $\theta \in (0,1)$ and $c > 0$, the condition $c > \cLDB$ is equivalent to $\tau(c) < \frac{\theta}{1-\theta}$, where the function $\tau$ is given by \begin{equation}\label{eq:tau-defn}
\tau(c) = \begin{cases}
1 - c\ln 2 & \text{if } 0 < c \le \frac{1}{2 (\ln 2)^2}, \\
c \ln 2 - \frac{1}{\ln 2}[1+\ln(c (\ln 2)^2)] & \text{if }\frac{1}{2(\ln 2)^2} < c < \frac{1}{(\ln 2)^2}, \\
0 & \text{if } c \ge \frac{1}{(\ln 2)^2}.
\end{cases}
\end{equation}

\subsection{Upper Bounds: Proof of Theorem \ref{thm:detect_LD_Bern}(a) and Theorem \ref{thm:detect_IT_B}(a)}

First, for Theorem~\ref{thm:detect_IT_B}(a), it is known that if $c>1/\ln 2$ then approximate recovery is possible (see e.g.~\cite[Lemma~2.1]{Iliopoulos_2021}). Hence, by Proposition~\ref{prop:reduc_bern} strong detection is also possible. 

In this section we give a polynomial-time algorithm for strong detection whenever $\tau(c) < \frac{\theta}{1-\theta}$ (recall the reformulation in~\eqref{eq:tau-defn}). We also show how to turn this algorithm into an $O(\log n)$-degree polynomial that achieves strong separation (see Section~\ref{sec:poly-approx}). This will complete the proof of both Theorem \ref{thm:detect_IT_B}(a) and Theorem \ref{thm:detect_LD_Bern}(a).

Define the test statistic $T$ to be the number of individuals of (graph-theoretic) degree at least $d = 2tqM$ for a constant $t > 1$ to be chosen later. That is,
\[ T = \sum_{i=1}^N \One_{d_i \ge d} \]
where $d_i$ is the degree of individual $i$ (i.e., the number of tests that $i$ participates in).

\subsubsection{Non-Infected}
\label{sec:non-inf}

First consider the contribution $T_-$ to $T$ from non-infected individuals. (Under $\QQ$, we consider all individuals to be ``non-infected.'') Let $N' = |V_-|$ be the number of non-infected individuals, which is equal to $N$ under $\QQ$ and $N-k$ under $\PP$. The degree of each $i \in V_-$ is $d_i \sim \Binom(M,q)$ and these are independent. Define
\[ p_- = \prr(\Binom(M,q) \ge d) \]
so that $T_- \sim \Binom(N',p_-)$. This means $\EE[T_-] = N'p_-$ and $\Var(T_-) = N'p_-(1-p_-) \le N'p_-$. We can bound $p_-$ using the Binomial tail bound (Proposition~\ref{prop:binom-tail}):
\[ p_- \le \exp(-M D(2tq \,\|\, q)) \]
where, using Lemma~\ref{lem:binom-tail},
\[ D(2tq \,\|\, q) \ge q(2t \ln 2t - 2t + 1) - O(q^2), \]
where $O(\cdot)$ hides a constant depending only on $t$. This means
\begin{align}
p_- &\le \exp\left[-\left(\frac{c}{2}+o(1)\right) k \ln(n/k) \cdot q(2t \ln 2t - 2t + 1 - o(1))\right] \nonumber\\
&\le n^{-(1-\theta) \frac{c}{2} (\ln 2) (2t \ln 2t - 2t + 1) + o(1)}.
\label{eq:p-minus}
\end{align}

\subsubsection{Infected}

Now consider the contribution $T_+$ to $T$ from infected individuals (under $\PP$). Under $\PP$ there are $k = |V_+|$ infected individuals. Each $i \in V_+$ has degree $d_i \sim \Binom(M,2q)$ (see~\eqref{eq:2q}), but these are not independent. Define
\[ p_+ = \prr(\Binom(M,2q) \ge d). \]

\begin{lemma}\label{lem:p-plus}
We have
\[ p_+ = n^{-(1-\theta) c (\ln 2)(t \ln t - t + 1) + o(1)}. \]
\end{lemma}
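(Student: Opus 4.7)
The plan is to obtain matching upper and lower bounds on $p_+$ of the form $\exp(-MD(2tq\,\|\,2q)(1+o(1)))$ and then compute the exponent explicitly, exactly paralleling the treatment of $p_-$ in Section~\ref{sec:non-inf} but with the reference parameter $q$ replaced by $2q$ (the marginal edge probability from an infected individual, per~\eqref{eq:2q}). Since $d = 2tqM$ with $t>1$, we are in the upper-tail regime for $\Binom(M,2q)$, where $\EE[\Binom(M,2q)] = 2qM < d$.

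For the upper bound, I would invoke Proposition~\ref{prop:binom-tail} to get
\[
p_+ \le \exp(-M\,D(2tq\,\|\,2q)),
\]
and then use Lemma~\ref{lem:binom-tail} (or a direct Taylor expansion of $D(p\,\|\,p_0)= p\ln(p/p_0)+(1-p)\ln\frac{1-p}{1-p_0}$ at small $p_0$) to obtain
\[
D(2tq\,\|\,2q) = 2tq\ln t - 2q(t-1) + O(q^2) = 2q(t\ln t - t + 1) + O(q^2),
\]
where the implied constant depends only on $t$. Substituting $M=(c/2+o(1))k\ln(n/k)$ and $2qk = 2\nu + o(1) = 2\ln 2 + o(1)$, the leading term becomes
\[
M\cdot 2q(t\ln t - t + 1) = (c+o(1))(\ln 2)(t\ln t - t + 1)\ln(n/k),
\]
while the error is $O(Mq^2) = O(k^{-1}\ln(n/k)) = o(\ln n)$. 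This gives the upper bound $p_+ \le n^{-(1-\theta)c(\ln 2)(t\ln t - t + 1) + o(1)}$ after using $\ln(n/k) = (1-\theta+o(1))\ln n$.

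For the matching lower bound, the simplest route is to use $p_+ \ge \PP(\Binom(M,2q)=\lceil d\rceil)$ and apply Stirling's formula to the point probability $\binom{M}{\lceil d\rceil}(2q)^{\lceil d\rceil}(1-2q)^{M-\lceil d\rceil}$. A direct computation in the regime $q\to 0$, $qM\to\infty$ yields
\[
\PP(\Binom(M,2q)=\lceil d\rceil) = M^{-O(1)}\exp(-MD(2tq\,\|\,2q)),
\]
and the polynomial prefactor is absorbed into the $n^{o(1)}$ slack. Combining with the same computation of $MD(2tq\,\|\,2q)$ as above yields the matching lower bound, proving the lemma.

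The step most prone to bookkeeping errors is the Taylor expansion of $D(2tq\,\|\,2q)$ to sufficient precision: one must verify that the $O(q^2)$ error, once multiplied by $M$, is genuinely $o(\ln n)$ rather than $O(\ln n)$, which depends on the scaling $Mq^2 = \Theta(\ln(n/k)/k) = o(1)$. Given the scalings $k = n^{\theta+o(1)}$ and $M = \Theta(k\ln(n/k))$ this is clear, so no essential obstacle arises; the proof is essentially a parameter substitution in the argument already carried out for $p_-$.
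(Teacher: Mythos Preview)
Your proposal is correct and follows essentially the same approach as the paper: both directions use the binomial tail bounds together with the expansion $D(2tq\,\|\,2q)=2q(t\ln t-t+1)+O(q^2)$ from Lemma~\ref{lem:binom-tail}, and then substitute the scalings for $M$, $q$, and $\ln(n/k)$. The only cosmetic difference is that for the lower bound the paper invokes the packaged tail lower bound of Proposition~\ref{prop:binom-lower-tail} (which supplies the $1/\sqrt{8d(1-d/M)}$ prefactor) whereas you obtain the same $M^{-O(1)}$ prefactor directly via Stirling on a single point mass; these are equivalent and both are absorbed into the $n^{o(1)}$.
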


\begin{proof}
We first give a lower bound using the Binomial tail lower bound (Proposition~\ref{prop:binom-lower-tail} and Lemma~\ref{lem:binom-tail}):
\begin{align*}
p_+ &\ge \frac{1}{\sqrt{8d(1-d/M)}} \exp\left(-M D\left(\frac{d}{M} \,\Big\|\, 2q\right)\right) \\
&\ge \frac{1}{\sqrt{16tqM}} \exp(-M D(2tq \,\|\, 2q)) \\
&\ge \frac{1}{\sqrt{16t}} \left(\left(\frac{c}{2} \ln 2 + o(1)\right) \ln(n/k)\right)^{-1/2} \exp[-M (2tq \ln t + 2q - 2tq + O(q^2))] \\
&\ge n^{-o(1)} \exp[-(c \ln 2 + o(1))(t \ln t - t + 1 + o(1)) \ln(n/k)] \\
&= n^{-(1-\theta) c (\ln 2)(t \ln t - t + 1) - o(1)}
\end{align*}
as desired. The matching upper bound is proved similarly, using the Binomial tail upper bound (Proposition~\ref{prop:binom-tail}).
\end{proof}

This gives us control of the mean of $T_+$, since $\EE[T_+] = k p_+$. Next we will bound the variance of $T_+$ which is more difficult because the $d_i$ are not independent. However, we will leverage negative correlations between the $d_i$ to effectively reduce to the independent case. Fix two distinct infected individuals $i,j$ and a test $a$. Recall that $X_{ia}$ is the indicator for edge $(i,a)$. We will compute the joint distribution of $X_{ia}$ and $X_{ja}$. Letting $E_a$ be the event that $a$ is connected to at least one of the $k$ infected individuals,
\begin{align*}
     q^2 &= \EE_\QQ[X_{ia}X_{ja}] = \QQ(E_a) \EE_\QQ[X_{ia}X_{ja}|E_a] + \QQ(\overline{E_a}) \EE_\QQ[X_{ia}X_{ja}|\overline{E_a}] \\ &= \frac{1}{2} \cdot \EE_\QQ[X_{ia}X_{ja}|E_a] + \frac{1}{2} \cdot 0
\end{align*}

and so
\[ \PP(X_{ia} = X_{ja} = 1) = \EE_\PP[X_{ia}X_{ja}] = \EE_\QQ[X_{ia}X_{ja}|E_a] = 2q^2. \]
Similarly, we can compute
\[ \PP(X_{ia} = X_{ja} = 0) = 1 - 4q + 2q^2 \]
and
\[ \PP(X_{ia} = 1 \wedge X_{ja} = 0) = \PP(X_{ia} = 0 \wedge X_{ja} = 1) = 2q(1-q), \]
and so we know the joint distribution of $X_{ia}$ and $X_{ja}$ under $\PP$. Due to independence across tests, we also know the joint distribution of $\{X_{ia}\}_{a \in [M]}$ and $\{X_{ja}\}_{a \in [M]}$. In particular, we have the conditional probabilities
\[ \PP(X_{ja} = 1 \,|\, X_{ia} = 1) = \frac{2q^2}{2q} = q \]
and
\[ \PP(X_{ja} = 1 \,|\, X_{ia} = 0) = \frac{2q(1-q)}{1-2q}, \]
as well as the conditional distribution
\[ d_j \,|\, \{d_i = w\} \sim \Binom(w,q) + \Binom\left(M-w, \frac{2q(1-q)}{1-2q}\right) =: \mathcal{D}_w \]
where the two binomials are independent. Since $\frac{2q(1-q)}{1-2q} > q$ (recall $q = \frac{\nu}{k} \to 0$), the distribution $\mathcal{D}_w$ stochastically dominates $\mathcal{D}_{w+1}$ for all $0 \le w < M$. As a result,
\[ \PP(d_j \ge d \,|\, d_i \ge d) \le \PP(d_j \ge d), \]
and so
\[ \PP(d_i \ge d \wedge d_j \ge d) = \PP(d_i \ge d) \PP(d_j \ge d \,|\, d_i \ge d) \le \PP(d_i \ge d) \PP(d_j \ge d) = p_+^2. \]

We can now compute
\begin{align*}
\Var(T_+) &= \EE[T_+^2] - \EE[T_+]^2 \\
&= \EE\left[\left(\sum_{i \in V_+} \One_{d_i \ge d}\right)^2\right] - (kp_+)^2 \\
&= \EE\left[\sum_i \One_{d_i \ge d} + \sum_{i \ne j} \One_{d_i \ge d} \One_{d_j \ge d}\right] - (kp_+)^2 \\
&\le kp_+ + k(k-1) p_+^2 - (kp_+)^2 \\
&= k p_+ (1-p_+) \\
&\le k p_+.
\end{align*}

\subsubsection{Putting it Together}
\label{sec:putting-together}

Let's recap what we have so far. Under $\QQ$, we have $T = T_-$, which has mean and variance
\[ \EE_{\QQ}[T] = N p_- \qquad\text{ and }\qquad \Var_{\QQ}(T) \le Np_-. \]
Under $\PP$, we have $T = T_+ + T_-$ (with $T_+$ and $T_-$ independent), which has mean and variance
\[ \EE_{\PP}[T] = (N-k)p_- + kp_+ \qquad\text{ and }\qquad \Var_{\PP}(T) \le (N-k)p_- + kp_+. \]
In order to distinguish $\PP$ and $\QQ$ with high probability by thresholding $T$, it suffices (by Chebyshev's inequality) to have
\[ \sqrt{\Var_{\QQ}(T)} + \sqrt{\Var_{\PP}(T)} = o\left(\EE_{\PP}[T] - \EE_{\QQ}[T]\right), \]
which yields the sufficient condition
\[ \sqrt{Np_-} + \sqrt{kp_+} = o(k(p_+ - p_-)). \]
Thus, it sufficies to have all of the following three conditions:
\begin{enumerate}
    \item[(i)] $p_- = o(p_+)$,
    \item[(ii)] $\sqrt{Np_-} = o(kp_+)$,
    \item[(iii)] $\sqrt{kp_+} = o(kp_+)$.
\end{enumerate}
Recall from above (see~\eqref{eq:p-minus} and Lemma~\ref{lem:p-plus}) the asymptotics
\[ k = n^{\theta+o(1)}, \qquad N = n^{1 - (1-\theta) \frac{c}{2} \ln 2 + o(1)}, \qquad p_- \le n^{-(1-\theta) \frac{c}{2} (\ln 2) (2t \ln 2t - 2t + 1) + o(1)}, \]
\[ p_+ = n^{-(1-\theta) c (\ln 2)(t \ln t - t + 1) + o(1)}. \]
These can be used to rewrite the three conditions as the following sufficient conditions:
\begin{enumerate}
    \item[(i')] $t > 1$ (which, recall, we also assumed earlier),
    \item[(ii')] $1 + c (\ln 2)(t \ln \frac{t}{2} - t + 1) < \frac{\theta}{1-\theta}$,
    \item[(iii')] $c (\ln 2)(t \ln t - t + 1) < \frac{\theta}{1-\theta}$.
\end{enumerate}
First consider the case $0 \le c \le \frac{1}{2(\ln 2)^2}$. In this case, choose $t = 2$ (which minimizes the left-hand side of (ii')). This causes (iii') to become subsumed by (ii'). Also, (ii') simplifies to $1 - c \ln 2 < \frac{\theta}{1-\theta}$, which matches the desired condition $\tau(c) < \frac{\theta}{1-\theta}$.

Next consider the case $\frac{1}{2(\ln 2)^2} < c < \frac{1}{(\ln 2)^2}$. In this case, choose $t = \frac{1}{c (\ln 2)^2}$, which satisfies (i') due to the assumption on $c$. This causes (ii') and (iii') to become equivalent, both reducing to the desired condition $c \ln 2 - \frac{1}{\ln 2}[1+\ln(c (\ln 2)^2)] < \frac{\theta}{1-\theta}$.

Finally, consider the case $c \ge \frac{1}{(\ln 2)^2}$. For any $\theta \in (0,1)$, it suffices to take $t = 1 + \epsilon$ for sufficiently small $\epsilon > 0$ for all the conditions to be satisfied.

\subsubsection{Polynomial Approximation}
\label{sec:poly-approx}

Above, we have shown that the test statistic $T = T(X)$ strongly separates $\PP$ and $\QQ$, but $T$ is not a polynomial. We will now show that when $\tau(c) < \frac{\theta}{1-\theta}$ there is a degree-$O(\log n)$ polynomial that strongly separates $\PP$ and $\QQ$, and we will do this using a polynomial approximation for $T$.

Recall $T = \sum_{i=1}^N \One_{d_i \ge d}$ where $d_i$ is the degree of individual $i$ in the graph. We define the following polynomial approximation for the indicator $\One_{x \ge d}$: for $a := \lceil d \rceil$ and some integer $b > a$ (to be chosen later),
\[ I_b(x) = \sum_{a \le j < b} \; \prod_{\substack{0 \le \ell < b \\ \ell \ne j}} \; \frac{x-\ell}{j-\ell}. \]
Note that $I_b$ is a polynomial in $x$ of degree $b-1$, which we will choose to be $O(\log n)$. By construction, $I_b(x) = \One_{x \ge d}$ for all $x \in \{0,1,2,\ldots,b-1\}$. Therefore
\[ I_b(d_i) = \One_{d_i \ge d} + \One_{d_i \ge b} \cdot (I_b(d_i)-1). \]
The key calculation we need is a bound on the second moment of the error term
\[ E_{i,b} := \One_{d_i \ge b} \cdot (I_b(d_i)-1). \]
Recall $d_i \sim \Binom(M,\bar{q})$ where $\bar{q}$ is either $q$ or $2q$ (depending on whether individual $i$ is infected).

\begin{lemma}\label{lem:poly-approx}
Suppose $d_i \sim \Binom(M,\bar{q})$ for $\bar{q} \in \{q,2q\}$. For any constant $C > 0$ there exists a constant $B = B(C,\theta,c) > 0$ such that when choosing $b$ to be the first odd integer greater than $B \ln n$,
\[ \EE[E_{i,b}^2] \le n^{-C}. \]
\end{lemma}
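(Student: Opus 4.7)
The idea is to estimate $\EE[E_{i,b}^2]$ by combining a pointwise upper bound on the Lagrange interpolant $I_b$ with two classical estimates for the binomial $d_i\sim\Binom(M,\bar q)$: a Chernoff tail bound (as in Proposition~\ref{prop:binom-tail}) and an explicit expansion of the factorial moment $\EE\binom{d_i}{b}^{2}$. Throughout, we use that both $b=\Theta(\log n)$ and $\lambda:=M\bar q=\Theta(\log n)$ sit on the same logarithmic scale, so the ratio $\lambda/b$ is a constant that can be driven arbitrarily small by taking $B$ large.

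\textbf{Step 1: a pointwise bound on $I_b$.} For an integer $x\ge b$, each Lagrange factor at node $j\in\{a,\dots,b-1\}$ simplifies via
\[
\prod_{\substack{0\le \ell<b\\ \ell\ne j}}\frac{x-\ell}{j-\ell}\;=\;\frac{(-1)^{b-1-j}}{x-j}\cdot b\binom{b-1}{j}\binom{x}{b},
\]
so $|I_b(x)|\le b\cdot 2^{b-1}\binom{x}{b}$. Using $(I_b(x)-1)^2\le 2\,I_b(x)^2+2$ and that $\binom{d_i}{b}$ already vanishes for $d_i<b$, this gives
\[
\EE[E_{i,b}^2]\;\le\; b^2\,4^{b}\,\EE\binom{d_i}{b}^{2}\;+\;2\,\Pr(d_i\ge b).
\]

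\textbf{Step 2: the two moment estimates.} Write $\lambda=\beta\log n$ for a constant $\beta=\beta(\theta,c)>0$ and $b=B\log n$. Since $b/M=o(1)$, the standard expansion $D(b/M\,\|\,\bar q)=(b/M)\log(B/\beta)-b/M+\bar q+o(1/M)$ (as used in Lemma~\ref{lem:binom-tail}) yields
\[
\Pr(d_i\ge b)\;\le\;\exp\bigl(-M\,D(b/M\,\|\,\bar q)\bigr)\;=\;n^{-B\log(B/(e\beta))-\beta+o(1)}.
\]
For the second factorial moment, expanding $\binom{d_i}{b}$ as a sum over $b$-subsets $S$ of $[M]$ and collecting terms by $j=|S\cap T|$ gives the exact identity
\[
\EE\binom{d_i}{b}^{2}\;=\;\binom{M}{b}\bar q^{b}\sum_{j=0}^{b}\binom{b}{j}\binom{M-b}{b-j}\bar q^{\,b-j}.
\]
Bounding $\binom{M}{b}\bar q^{b}\le(e\lambda/b)^b$, $\binom{b}{j}\le 2^{b}$, and $\binom{M-b}{b-j}\bar q^{\,b-j}\le \lambda^{b-j}/(b-j)!$, and summing the tail series $\sum_{i\ge 0}\lambda^i/i!=e^{\lambda}=n^{\beta}$, we obtain
\[
b^{2}\,4^{b}\,\EE\binom{d_i}{b}^{2}\;\le\;b^{2}\,(8e\lambda/b)^{b}\,e^{\lambda}\;=\;n^{-B\log(B/(8e\beta))+\beta+o(1)}.
\]
Both exponents tend to $-\infty$ linearly in $B$, so choosing $B=B(C,\theta,c)$ large enough makes each term at most $n^{-C}/2$.

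\textbf{Main obstacle.} The delicate point is that on the tail $\{d_i\ge b\}$ the polynomial error $|I_b(d_i)|$ can be as large as $\binom{d_i}{b}\cdot 2^{b}\cdot b^{O(1)}$, which is super-polynomial in $n$ when $d_i$ is much larger than its mean $\asymp\log n$; a naive bound by $\binom{M}{b}^{2}$ would wipe out the tail estimate entirely. The rescue is the factorial moment identity above: since $\bar q$ is small and $\lambda\ll b$ once $B$ is large, $\EE\binom{d_i}{b}^{2}$ itself carries an intrinsic $(\lambda/b)^{2b}$-type decay that more than beats the $4^{b}$ loss from squaring $I_b$. The requirement that $b$ be odd plays no role in these estimates and is presumably needed only in a downstream sign argument when $I_b$ is applied.
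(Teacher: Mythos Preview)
Your argument is correct and takes a genuinely different route from the paper's. The paper bounds $|I_b(x)|\le (b-a)\,x^{b-1}/\bigl[((b-1)/2)!\bigr]^{2}$ (this is where the oddness of $b$ enters, so that $(b-1)/2$ is an integer and the denominator $\prod_{\ell\ne j}|j-\ell|$ is exactly a square of a factorial at the central node), then expands $\EE[\tilde E_{i,b}^{2}]=\sum_{x\ge b}\Pr(d_i=x)\,I_b(x)^2$ term by term, bounding the binomial pmf by $(Me\bar q/x)^{x}$ and showing the resulting series decays geometrically via $r_{x+1}/r_x\le 1/2$ once $B$ is large. Your approach instead exploits the exact Lagrange identity $L_j(x)=\frac{(-1)^{b-1-j}}{x-j}\,b\binom{b-1}{j}\binom{x}{b}$ to get $|I_b(x)|\le b\,2^{b-1}\binom{x}{b}$, and then reduces the whole computation to the second factorial moment $\EE\binom{d_i}{b}^{2}$, for which you give the clean subset-pair identity $\binom{M}{b}\bar q^{\,b}\sum_j\binom{b}{j}\binom{M-b}{b-j}\bar q^{\,b-j}$. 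This is more structural: it replaces the paper's ratio test by a single closed-form bound, and as you observe it makes the parity constraint on $b$ unnecessary for this lemma. The paper's route is slightly more elementary in that it never needs the factorial-moment identity, but yours is shorter and arguably more transparent about where the $(\lambda/b)^{\Theta(b)}$ decay comes from. Your closing remark is apt: in the paper the oddness of $b$ is used only to write the denominator $\bigl[((b-1)/2)!\bigr]^{2}$ cleanly and plays no further role downstream.
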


\begin{proof}
We first note that it suffices (up to a change in the constant $B$) to show the result for
\[ \tilde{E}_{i,b} := \One_{d_i \ge b}\, I_b(d_i) \]
in place of $E_{i,b}$. This is because
\[ E_{i,b}^2 \le 2(\tilde{E}_{i,b}^2 + \One_{d_i \ge b}) \]
and
\[ \EE[\One_{d_i \ge b}] = \Pr(d_i \ge b), \]
which can be made smaller than $n^{-2C}$ by choosing $B$ large enough (similarly to the calculation in Section~\ref{sec:non-inf}).

Now for any $x \ge b$ we have the bound
\[ |I_b(x)| \le (b-a) \frac{x^{b-1}}{\left[\left(\frac{b-1}{2}\right)!\right]^2} \]
where we have used the fact that $\prod_{0 \le \ell < b, \, \ell \ne j} |j-\ell|$ is minimized when $j$ lies at the center of the range $\{0,1,\ldots,b-1\}$. We will also make use of the bounds $\binom{n}{k} \le \left(\frac{ne}{k}\right)^k$ (for all $1 \le k \le n$) and $n! \ge \left(\frac{n}{e}\right)^n$ (for all $n \ge 1$). We have
\begin{align*}
\EE[\tilde{E}_{i,b}^2] &= \sum_{x=b}^\infty \prr(d_i = x) I_b(x)^2 \\
&\le \sum_{x=b}^\infty \binom{M}{x} \bar{q}^x (1-\bar{q})^{M-x} \cdot (b-a)^2 \frac{x^{2(b-1)}}{\left[\left(\frac{b-1}{2}\right)!\right]^4} \\
&\le \sum_{x=b}^\infty \left(\frac{Me}{x}\right)^x \bar{q}^x (1-\bar{q})^{M-x} \cdot (b-a)^2 \frac{x^{2(b-1)}}{\left[\left(\frac{b-1}{2e}\right)^{(b-1)/2}\right]^4} \\
&= \sum_{x=b}^\infty (b-a)^2 (1-\bar{q})^M \left(\frac{Me}{x}\right)^x \left(\frac{\bar{q}}{1-\bar{q}}\right)^x \left(\frac{2ex}{b-1}\right)^{2(b-1)} \\
&\le \sum_{x=b}^\infty b^2 \left(\frac{Me}{x}\right)^x (3q)^x \left(\frac{2ex}{b-1}\right)^{2(b-1)} \\
&= \sum_{x=b}^\infty b^2 \left(\frac{3eMq}{x}\right)^x \left(\frac{2ex}{b-1}\right)^{2(b-1)} =: \sum_{x=b}^\infty r_x.
\end{align*}
To complete the proof, we will show that the first term is $r_b \le \frac{1}{2}n^{-C}$ and the ratio of successive terms is $\frac{r_{x+1}}{r_x} \le \frac{1}{2}$ for all $x \ge b$. For the first step,
\begin{align*}
r_b &= b^2 \left(\frac{3eMq}{b}\right)^b \left(\frac{2eb}{b-1}\right)^{2(b-1)} \\
&= b^2 \left(\frac{b-1}{2eb}\right)^2 \left(\frac{12e^3Mqb^2}{b(b-1)^2}\right)^b \\
&\le b^2 \left(\frac{12e^3Mqb^2}{b(b-1)^2}\right)^b \\
&= b^2 \left(12e^3(c\nu/2+o(1))(1-\theta) \cdot \frac{b \ln n}{(b-1)^2}\right)^b.
\intertext{Recalling $B \ln n \le b \le B \ln n + 2$ and choosing $B$ sufficiently large, the above is}
&\le b^2 (1/e)^b \le (B \ln n + 2)^2 e^{-B \ln n} \le \frac{1}{2} n^{-C}
\end{align*}
as desired. For the second step, for $x \ge b$,
\begin{align*}
\frac{r_{x+1}}{r_x} &= 3eMq \cdot \frac{x^x}{(x+1)^{x+1}} \left(\frac{x+1}{x}\right)^{2(b-1)} \\
&= \frac{3eMq}{x+1} \left(\frac{x+1}{x}\right)^{2(b-1)-x} \\
&\le \frac{3eMq}{x+1} \left(1 + \frac{1}{x}\right)^{b-2} \\
&\le \frac{3eMq}{x+1} \left(1 + \frac{1}{b}\right)^b \\
&\le \frac{3eMq}{x+1} \cdot e \\
&= \frac{3e^2(c\nu/2+o(1))(1-\theta)\ln n}{x+1} \\
&\le \frac{3e^2(c\nu/2+o(1))(1-\theta)\ln n}{B \ln n}
\end{align*}
which can be made $\le \frac{1}{2}$ by choosing $B$ sufficiently large.
\end{proof}

Using Lemma~\ref{lem:poly-approx} we can now show that under either $\PP$ or $\QQ$, the first two moments of $I_b(d_i)$ and $\One_{d_i \ge d}$ nearly match:
\[ \left|\EE_\QQ[I_b(d_i)] - \EE_\QQ[\One_{d_i \ge d}]\right| = \left|\EE_\QQ E_{i,b}\right| \le \sqrt{\EE_\QQ E_{i,b}^2} \le n^{-C/2}, \]
\begin{align*}
\left|\EE_\QQ[I_b(d_i)I_b(d_j)] - \EE_\QQ[\One_{d_i \ge d}\One_{d_j \ge d}]\right|
&= \left|\EE_\QQ[\One_{d_j \ge d} E_{i,b} + \One_{d_i \ge d} E_{j,b} + E_{i,b} E_{j,b}]\right| \\
&\le \sqrt{\EE_\QQ E_{i,b}^2} + \sqrt{\EE_\QQ E_{j,b}^2} + \sqrt{\EE_\QQ E_{i,b}^2 \cdot \EE_\QQ E_{j,b}^2} \\
&\le 3n^{-C/2},
\end{align*}
and similarly for $\PP$.

Define the polynomial
\[ \tilde{T}(X) = \sum_{i=1}^N I_b(d_i), \]
which has degree $b-1 = O(\log n)$. Using the bounds above, the first two moments of $\tilde{T}$ and $T$ nearly match:
\[ \left|\EE_\QQ[\tilde{T}] - \EE_\QQ[T]\right| = \left|\sum_{i=1}^N \EE_\QQ[I_b(d_i) - \One_{d_i \ge d}]\right| \le N \cdot n^{-C/2} = n^{O(1) - C/2}, \]
\begin{align*}
    \left|\EE_\QQ[\tilde{T}^2] - \EE_\QQ[T^2]\right| &= \left|\sum_{1 \le i,j \le N} \EE_\QQ[I_b(d_i)I_b(d_j) - \One_{d_i \ge d} \One_{d_j \ge d}]\right| \\& \le N^2 \cdot 3n^{-C/2} = n^{O(1) - C/2}, 
\end{align*}

\begin{align*}
\left|\Var_\QQ[\tilde{T}]-\Var_\QQ[T]\right| &= \left|\EE_\QQ[\tilde{T}^2] - \EE_\QQ[T^2] - \EE_\QQ[\tilde{T}]^2 + \EE_\QQ[T]^2\right| \\
&\le \left|\EE_\QQ[\tilde{T}^2] - \EE_\QQ[T^2]\right| + \left|\EE_\QQ[\tilde T - T]\EE_\QQ[\tilde T + T]\right| \\
&\le 3 N^2 n^{-C/2} + N n^{-C/2} \left|\EE_\QQ[\tilde T + T]\right| \\
&\le 3 N^2 n^{-C/2} + N n^{-C/2} \left(2 \EE_\QQ[T] + Nn^{-C/2}\right) \\
&\le 3 N^2 n^{-C/2} + N n^{-C/2} \left(2N + Nn^{-C/2}\right) \\
&= n^{O(1) - C/2}
\end{align*}
and similarly for $\PP$ (where the $O(1)$ terms do not depend on $C$).

Suppose $\tau(c) < \frac{\theta}{1-\theta}$. We have shown previously (see Section~\ref{sec:putting-together}) that $T$ strongly separates $\PP$ and $\QQ$ with separation $\EE_\PP[T] - \EE_\QQ[T] = (1-o(1))kp_+ \ge n^{-O(1)}$. (In fact, the separation is larger than $1$, but the simpler bound $n^{-O(1)}$ will suffice.) By taking $C$ sufficiently large, the mean and variance of $\tilde{T}$ match those of $T$ (under either $\PP$ or $\QQ$) up to an error that is negligible compared to the separation $\EE_\PP[T] - \EE_\QQ[T]$. Therefore $\tilde{T}$ strongly separates $\PP$ and $\QQ$.

\subsection{Lower Bounds: Proof of Theorem \ref{thm:detect_LD_Bern}(b) and Theorem \ref{thm:detect_IT_B}(b)}

The proofs in this section are based on bounding the chi-squared divergence and its conditional/low-degree variants as described in Section~\ref{sec:ld-background}.

\subsubsection{Conditional Planted Distribution}
\label{sec:good-event}

We will condition $\PP$ on the following ``good'' event $A$. Let $A$ be the event that all infected individuals have degree at most $d$, for a particular $d$ which will be chosen so that $\PP(A) = 1-o(1)$. Below, we will show that it is sufficient to take $d = 2tqM$ for any constant $t > 1$ satisfying~\eqref{eq:t-cond}. Let $\tilde{\PP}$ be the conditional distribution $\PP\,|\,A$.

Suppose individual $i$ is infected and let $a$ be a test. Letting $X_{ia}$ be the indicator for edge $(i,a)$ and letting $E_a$ be the event that $a$ is connected to at least one infected individual,
\[ q = \EE_\QQ[X_{ia}] = \QQ(E_a) \EE_\QQ[X_{ia}|E_a] + \QQ(\overline{E_a}) \EE_\QQ[X_{ia}|\overline{E_a}] = \frac{1}{2} \cdot \EE_\QQ[X_{ia}|E_a] + \frac{1}{2} \cdot 0 \]
and so
\begin{equation}\label{eq:2q}
\EE_\PP[X_{ia}] = \EE_\QQ[X_{ia}|E_a] = 2q.
\end{equation}
So under $\PP$, the degree $d_i$ of individual $i$ is distributed as $d_i \sim \Binom(M,2q)$ (but these are not independent across $i$).

Using the Binomial tail bound (Proposition~\ref{prop:binom-tail}), for any constant $t > 1$,
\begin{equation*}
\prr\left(d_i \ge 2tqM\right) \le \exp\left(-M D\left(2tq \,\|\, 2q\right)\right)
\end{equation*}
where, using Lemma~\ref{lem:binom-tail},
\begin{equation*}
D(2tq \,\|\, 2q) \ge 2q(t \ln t - t + 1) - O(q^2),
\end{equation*}
where $O(\cdot)$ hides a constant depending only on $t$. This means, letting $V_+$ denote the set of infected individuals,
\begin{align}
\prr\left(\exists i \in V_+, d_i \ge 2tqM\right) &\le k \exp\left[-2qM(t \ln t - t + 1 - O(q))\right] \nonumber\\
&= n^{\theta + o(1)} n^{-(1-\theta)c(\ln 2)(t \ln t - t + 1) + o(1)} \nonumber\\
&= n^{\theta - (1-\theta)c(\ln 2)(t\ln t - t + 1) + o(1)}.
\label{eq:bad-event-prob}
\end{align}
To ensure that $A$ is a high-probability event under $\PP$, we need to choose $d$ so that~\eqref{eq:bad-event-prob} is $o(1)$, that is, $d = 2tqM$ where $t > 1$ is a constant satisfying
\begin{equation}\label{eq:t-cond}
c(\ln 2)(t \ln t - t + 1) > \frac{\theta}{1-\theta}.
\end{equation}

\subsubsection{Conditional Chi-Squared}
\label{sec:compute-chi-squared}

With some abuse of notation, we will use $u$ to refer to both the set of infected individuals and its indicator vector $u \in \{0,1\}^N$. Let $A = A(u,X)$ be the ``good'' event defined in Section~\ref{sec:good-event} above (namely, the individuals in $u$ all have degree at most $d$), and let $\tilde{\PP}$ denote the conditional distribution $\PP\,|\,A$. For a test $a$, let $E_a = E_a(u,X)$ be the event that $a$ contains at least one infected individual. Let $E = \cap_a E_a$. Define $\mathcal{U}$, $\tilde\PP_u$, and $L_u = d\tilde\PP_u/d\QQ$ as in Section~\ref{sec:restricted}. Compute
\begin{align*}
    L_u(X) = \frac{d\tilde\PP}{d\PP}(X) \cdot \frac{d\PP}{d\QQ}(X) &= \PP(A)^{-1} \One_{A(u,X)} \cdot \QQ(E(u,X))^{-1} \One_{E(u,X)}\\ &= \PP(A)^{-1} \, 2^{M} \One_{E(u,X)} \One_{A(u,X)} 
\end{align*}
and
\begin{equation}\label{eq:inner-pr}
\langle L_u, L_{u'} \rangle_\QQ = \PP(A)^{-2}\,2^{2M} \prr_{X \sim \QQ}\left(E(u,X) \cap E(u',X) \cap A(u,X) \cap A(u',X)\right).
\end{equation}
Letting $\ell = \langle u,u' \rangle$,
\begin{equation}\label{eq:chi-squared-formula}
\chi^2(\tilde\PP \,\|\, \QQ) + 1 = \EE_{u,u' \sim \mathcal{U}} \langle L_u,L_{u'} \rangle_\QQ = \sum_{\ell=0}^k \prr(\ell) \langle L_u,L_{u'} \rangle_\QQ,
\end{equation}
where $\prr(\ell)$ is shorthand for
\begin{equation}\label{eq:Pr-ell-defn}
\prr_{u,u' \sim \mathcal{U}}(\langle u,u' \rangle = \ell) = \frac{\binom{k}{\ell}\binom{N-k}{k-\ell}}{\binom{N}{k}}. \end{equation}
Note that the term $\langle L_u,L_{u'} \rangle_\QQ$ in~\eqref{eq:chi-squared-formula} depends on $u,u'$ only through $\ell = \langle u,u' \rangle$ and is thus well-defined as a function of $\ell$ alone.

We will now work on bounding various parts of the formula~\eqref{eq:chi-squared-formula}. First recall $\PP(A) = 1-o(1)$. To handle $\prr(\ell)$ we have
\begin{equation}\label{eq:Pr-ell-1}
\frac{\binom{N-k}{k-\ell}}{\binom{N}{k}} \le \frac{\binom{N}{k-\ell}}{\binom{N}{k}} = \frac{k!(N-k)!}{(k-\ell)!(N-k+\ell)!}
\le \left(\frac{k}{N-k}\right)^\ell
= n^{-\ell[(1-\theta)(1-\frac{c}{2} \ln 2) + o(1)]}
\end{equation}
provided $c < \frac{2}{\ln 2}$ (so that $k = o(N)$). Also, for $\ell \ge 1$ we have the standard bound
\begin{equation}\label{eq:Pr-ell-2}
\binom{k}{\ell} \le \left(\frac{ek}{\ell}\right)^\ell.
\end{equation}

Next we will bound the final term $\prr_{X \sim \QQ}(\cdots)$ in~\eqref{eq:inner-pr}. Let $\tilde E_a(u,u',X)$ be the event that test $a$ contains at least one individual from $u \cap u'$. Note that $\tilde E_a(u,u',X) \subseteq E_a(u,X) \cap E_a(u',X)$. Recalling $(1-q)^k = 1/2$, we have

\[ \prr_{X \sim \QQ}(\tilde E_a(u,u',X)) = 1 - (1-q)^\ell = 1 - 2^{-\ell/k} \]
and
\begin{align*}
\prr_{X \sim \QQ}(E_a(u,X) \cap E_a(u',X)) &= (1-2^{-\ell/k}) + 2^{-\ell/k}(1-2^{-(k-\ell)/k})^2 \\
&= 1 - 2 \cdot 2^{-\ell/k - (1-\ell/k)} + 2^{-\ell/k - 2(1-\ell/k)} \\
&= 2^{\ell/k - 2}.
\end{align*}

\noindent Note that $A(u,X) \cap A(u',X)$ implies that the sum of all degrees in $u \cap u'$ is at most $\ell d$, which means $\tilde E_a(u,u',X)$ holds for at most $\ell d$ tests $a$. Thus,
\begin{equation}\label{eq:prob-bound-binom}
\prr_{X \sim \QQ}\left(E(u,X) \cap E(u',X) \cap A(u,X) \cap A(u',X)\right) \le (2^{\ell/k-2})^{M} \prr(\Binom(M,r) \le \ell d)
\end{equation}
where $r$ is the conditional probability
\[ r := \prr_{X \sim \QQ}(\tilde E_a(u,u',X) \mid E_a(u,X) \cap E_a(u',X)) = \frac{1-2^{-\ell/k}}{2^{\ell/k-2}} = 4 \cdot 2^{-\ell/k} (1 - 2^{-\ell/k}). \]
We will treat the contributions to~\eqref{eq:chi-squared-formula} from small $\ell$ and large $\ell$ separately.

\paragraph{Small $\ell$.}

First consider the terms in~\eqref{eq:chi-squared-formula} where $\ell \le \epsilon k$ for a small constant $\epsilon > 0$ to be chosen later. We need to bound the expression $\prr(\Binom(M,r) \le \ell d)$ from~\eqref{eq:prob-bound-binom}. To this end, we have\footnote{Here and in the remainder of this section, we use $O(\cdot)$ with the understanding that its argument is small. Formally, $O(\cdot)$ hides an absolute constant factor provided that its argument is smaller than some absolute constant, and may also hide $1+o(1)$ factors (in the usual sense).}

\[ 2^{-\ell/k} = \exp\left(-\frac{\ell}{k} \ln 2\right) = 1 - \frac{\ell}{k} \ln 2 + O((\ell/k)^2), \]
\[ r = 4(1-O(\ell/k))\left(\frac{\ell}{k} \ln 2 - O((\ell/k)^2)\right) = (1-O(\epsilon)) \cdot 4 \ln 2 \cdot \frac{\ell}{k} = (1-O(\epsilon)) \cdot 4 \ell q, \]

\[ \ell d = 2t \ell qM, \]

\noindent and

\[ \EE[\Binom(M,r)] = rM = (1-O(\eps)) \cdot 4 \ell q M. \]

\noindent Note that if $t \ge 2$ then $\{\Binom(M,r) \le \ell d\}$ is not a rare event and so we will simply upper-bound its probability by 1; in this case, we do not gain anything from using the conditional planted distribution $\tilde\PP$ instead of $\PP$. On the other hand, if $t < 2$ then we can apply the Binomial tail bound (Proposition~\ref{prop:binom-tail}): writing $r = 4t' \ell q$ where $t' = 1-O(\epsilon)$, and taking $\epsilon$ small enough so that $t < 2t'$,

\[ \prr(\Binom(M,r) \le \ell d) \le \exp\left(-M D\left(\frac{\ell d}{M} \,\Big\|\, r\right)\right) = \exp\left(-M D(2t \ell q \,\|\, 4t' \ell q)\right) \]
where (using Lemma~\ref{lem:binom-tail})
\[ D(2t \ell q \,\|\, 4t' \ell q) \ge 2 \ell q (t \ln \frac{t}{2t'} + 2t' - t) - O((\ell q)^2). \]

\noindent This means
\begin{align}
\prr(\Binom(M,r) \le \ell d) &\le \exp\left(-2 M \ell q (t \ln \frac{t}{2t'} + 2t' - t) + M \ell q \cdot O(\epsilon)\right) \nonumber\\
&= \exp\left(-2 M \ell q \left(t \ln \frac{t}{2} + 2 - t - O(\epsilon)\right)\right) \nonumber\\
&= n^{-\ell \left[(1-\theta)c(\ln 2) \left(t \ln \frac{t}{2} + 2 - t\right) - O(\epsilon)\right]}. \label{eq:binom-bound-final}
\end{align}

We can now put everything together to bound the chi-squared divergence: using~\eqref{eq:prob-bound-binom} and $\PP(A) = 1-o(1)$, the contribution to~\eqref{eq:chi-squared-formula} from $\ell \le \epsilon k$ is at most
\begin{align}
\mathcal{T}_{\le \epsilon k}(t) :&= \EE_{u,u' \sim \mathcal{U}} \One_{\langle u,u' \rangle \le \epsilon k} \,\langle L_u, L_{u'} \rangle \nonumber\\
&= \PP(A)^{-2} \, 2^{2M} \sum_{0 \le \ell \le \epsilon k} \prr(\ell) \, (2^{\ell/k-2})^{M} \prr(\Binom(M,r) \le \ell d) \nonumber\\
&= \PP(A)^{-2} \, \sum_{0 \le \ell \le \epsilon k} \prr(\ell) \, (2^{\ell/k})^{M} \prr(\Binom(M,r) \le \ell d) \nonumber\\
&\le \PP(A)^{-2} \left[ 1 + \sum_{1 \le \ell \le \epsilon k} \prr(\ell) \, (2^{\ell/k})^{M} \prr(\Binom(M,r) \le \ell d) \right].
\label{eq:T-eps-formula}
\end{align}

\noindent Note that we have made the dependence of $\mathcal{T}_{\le \epsilon k}(t)$ on $t$ explicit; recall that $t$ is a constant appearing in the definition of $\tilde\PP$. Using
\[ 2^{M/k} = 2^{(c/2 + o(1)) \ln(n/k)} = \left(\frac{n}{k}\right)^{\frac{c}{2}\ln 2 + o(1)} = n^{(1-\theta)\frac{c}{2}\ln 2 + o(1)} \]
along with~\eqref{eq:Pr-ell-defn},\eqref{eq:Pr-ell-1},\eqref{eq:Pr-ell-2},\eqref{eq:binom-bound-final}\eqref{eq:T-eps-formula}, we have
\begin{align}
\mathcal{T}_{\le \epsilon k}(t) &\le \PP(A)^{-2} \Bigg[ 1 + \sum_{1 \le \ell \le \epsilon k} \left(\frac{ek}{\ell}\right)^\ell n^{-\ell[(1-\theta)(1-\frac{c}{2} \ln 2) + o(1)]}\\
&\hspace{4cm}n^{\ell[(1-\theta)\frac{c}{2}\ln 2 + o(1)]} n^{-\ell \left[(1-\theta)c(\ln 2) \left(t \ln \frac{t}{2} + 2 - t\right) - O(\epsilon)\right]} \Bigg] \nonumber\\
&= \PP(A)^{-2} \left[ 1 + \sum_{1 \le \ell \le \epsilon k} \left(\frac{e}{\ell}n^{\theta - (1-\theta)[1 + c(\ln 2)(t \ln \frac{t}{2} - t + 1)] + O(\epsilon)}\right)^\ell \right].
\label{eq:T-bound-final}
\end{align}
This is $1+o(1)$ for sufficiently small $\epsilon$ provided that the following three conditions hold:
\begin{itemize}
    \item[(i)] $t > 1$ and $c(\ln 2)(t \ln t - t + 1) > \frac{\theta}{1-\theta}$ so that $\PP(A) = 1-o(1)$; see~\eqref{eq:t-cond},
    \item[(ii)] $t < 2$ so that the bound~\eqref{eq:binom-bound-final} is valid,
    \item[(iii)] $\theta - (1-\theta)[1 + c(\ln 2)(t \ln \frac{t}{2} - t + 1)] < 0$ so that~\eqref{eq:T-bound-final} is $1+o(1)$.
\end{itemize}
Provided $\frac{1}{2(\ln 2)^2} < c < \frac{1}{(\ln 2)^2}$ and $c \ln 2 - \frac{1}{\ln 2}[1 + \ln(c(\ln 2)^2)] > \frac{\theta}{1-\theta}$, the choice $t = \frac{1}{c(\ln 2)^2}$ satisfies (i),(ii),(iii) above. This means we have proved the following.

\begin{lemma}\label{lem:small-ell-cond}
For any fixed $\theta \in (0,1)$ and $c \in \left(\frac{1}{2(\ln 2)^2}, \frac{1}{(\ln 2)^2}\right)$ satisfying
\[ c \ln 2 - \frac{1}{\ln 2}[1 + \ln(c(\ln 2)^2)] > \frac{\theta}{1-\theta}, \]
there exist constants $\epsilon > 0$ and $t > 1$ such that $\PP(A) = 1-o(1)$ and $\mathcal{T}_{\le \epsilon k}(t) = 1+o(1)$.
\end{lemma}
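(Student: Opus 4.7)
The plan is to directly verify the three sufficient conditions (i), (ii), (iii) displayed immediately before the lemma statement, using the specific choice $t = \frac{1}{c(\ln 2)^2}$. Since the hypothesis restricts $c$ to the open interval $(\frac{1}{2(\ln 2)^2}, \frac{1}{(\ln 2)^2})$, this $t$ automatically lies in $(1,2)$, so the first half of (i) and all of (ii) are free. The remaining content is purely algebraic: to show that under the stated hypothesis on $c$, both the inequality in (i) and the inequality in (iii) reduce to the hypothesis itself, after which the uniform slack provided by the strict inequality lets us pick $\epsilon$ small enough to finish.

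First I would plug $t = 1/(c(\ln 2)^2)$ into the left-hand side of (i) and simplify. Using $c(\ln 2) t = 1/\ln 2$ and $\ln t = -\ln(c(\ln 2)^2)$, one obtains
\[
c(\ln 2)(t\ln t - t + 1) \;=\; -\tfrac{1}{\ln 2}\bigl[1 + \ln(c(\ln 2)^2)\bigr] + c\ln 2,
\]
so (i) is exactly the hypothesis $c\ln 2 - \tfrac{1}{\ln 2}[1 + \ln(c(\ln 2)^2)] > \tfrac{\theta}{1-\theta}$.

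Next I would do the same computation for condition (iii). Writing $t \ln\tfrac{t}{2} = t\ln t - t\ln 2$ and using the same substitutions,
\[
c(\ln 2)\bigl(t\ln\tfrac{t}{2} - t + 1\bigr) \;=\; -\tfrac{1}{\ln 2}\bigl[1 + \ln(c(\ln 2)^2)\bigr] + c\ln 2 - 1,
\]
so $1 + c(\ln 2)(t\ln\tfrac{t}{2} - t + 1)$ equals $c\ln 2 - \tfrac{1}{\ln 2}[1+\ln(c(\ln 2)^2)]$. Hence (iii) too is equivalent to the hypothesis, and we have established that the three bullet points all hold strictly under our choice of $t$.

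Finally, since (i) and (iii) are strict inequalities and the quantity $\theta - (1-\theta)[1 + c(\ln 2)(t\ln\tfrac{t}{2} - t + 1)]$ is therefore a negative constant, we can choose $\epsilon > 0$ small enough that the $O(\epsilon)$ error in the exponent of \eqref{eq:T-bound-final} does not flip the sign; then each summand in \eqref{eq:T-bound-final} is bounded by $(e/\ell) \cdot n^{-\Omega(1)}$, and summing a geometric series over $1 \le \ell \le \epsilon k$ yields $o(1)$. Combined with $\PP(A) = 1 - o(1)$ (which follows from (i) via the calculation in Section~\ref{sec:good-event}), we conclude $\mathcal{T}_{\le \epsilon k}(t) = 1 + o(1)$. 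There is no real obstacle here beyond bookkeeping: the lemma is essentially an algebraic verification that the threshold $c \ln 2 - \tfrac{1}{\ln 2}[1 + \ln(c(\ln 2)^2)] = \tfrac{\theta}{1-\theta}$ emerges as the simultaneous boundary of conditions (i) and (iii) when $t$ is optimized to $1/(c(\ln 2)^2)$.
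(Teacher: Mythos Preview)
Your proposal is correct and follows exactly the paper's approach: the paper's proof consists of the single sentence ``Provided $\frac{1}{2(\ln 2)^2} < c < \frac{1}{(\ln 2)^2}$ and $c \ln 2 - \frac{1}{\ln 2}[1 + \ln(c(\ln 2)^2)] > \frac{\theta}{1-\theta}$, the choice $t = \frac{1}{c(\ln 2)^2}$ satisfies (i),(ii),(iii) above,'' and you have simply carried out the algebra that verifies this claim. One small wording slip: the $\ell$-th summand in~\eqref{eq:T-bound-final} is $\bigl((e/\ell)\,n^{-\Omega(1)}\bigr)^\ell$, not $(e/\ell)\,n^{-\Omega(1)}$ itself, but your subsequent geometric-series bound shows you have the right object in mind.
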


Alternatively, we can drop the requirement (ii) $t < 2$ and replace~\eqref{eq:binom-bound-final} with the trivial bound $\prr(\Binom(M,r) \le \ell d) \le 1$ (which reverts to the non-conditional chi-squared). In this case the result is, similarly to~\eqref{eq:T-bound-final},

\begin{align}
\mathcal{T}_{\le \epsilon k}(t) &\le \PP(A)^{-2} \left[ 1 + \sum_{1 \le \ell \le \epsilon k} \left(\frac{ek}{\ell}\right)^\ell n^{-\ell[(1-\theta)(1-\frac{c}{2} \ln 2) + o(1)]} n^{\ell[(1-\theta)\frac{c}{2}\ln 2 + o(1)]} \right] \nonumber\\
&= \PP(A)^{-2} \left[ 1 + \sum_{1 \le \ell \le \epsilon k} \left(\frac{e}{\ell}n^{\theta - (1-\theta)(1 - c \ln 2) + o(1)}\right)^\ell \right].
\label{eq:T-bound-final-2}
\end{align}
This is $1+o(1)$ for any $\epsilon \in (0,1]$ (we have not required $\epsilon$ to be small in this case) provided that the following two conditions hold:
\begin{itemize}
    \item[(i)] $t > 1$ and $c(\ln 2)(t \ln t - t + 1) > \frac{\theta}{1-\theta}$ so that $\PP(A) = 1-o(1)$; see~\eqref{eq:t-cond},
    \item[(ii)] $\theta - (1-\theta)(1 - c \ln 2) < 0$ so that~\eqref{eq:T-bound-final-2} is $1+o(1)$.
\end{itemize}
We can satisfy (i) by choosing $t = \infty$ (i.e., $\tilde\PP = \PP$), so we are left with the condition (ii), which simplifies to $1 - c \ln 2 > \frac{\theta}{1-\theta}$. This means we have proved the following.

\begin{lemma}\label{lem:small-ell-nocond}
For any fixed $\theta \in (0,1)$ and $c > 0$ satisfying
\[ 1 - c \ln 2 > \frac{\theta}{1-\theta}, \]
and for any $\epsilon \in (0,1]$, we have $\mathcal{T}_{\le \epsilon k}(\infty) = 1+o(1)$.
\end{lemma}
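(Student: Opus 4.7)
The plan is to directly adapt the calculation already carried out in Section~\ref{sec:compute-chi-squared} for the conditional chi-squared, but in the degenerate case $t = \infty$ where no conditioning is imposed (so $\tilde\PP = \PP$ and $\PP(A) = 1$). Taking $t = \infty$ makes the ``good event'' $A$ vacuous (every degree is trivially at most $d = \infty$), so the cumbersome Binomial tail factor $\Pr(\Bin(M,r) \le \ell d)$ appearing in~\eqref{eq:prob-bound-binom} is replaced by the trivial bound~$1$. The upshot is that we lose the $t \ln(t/2)$ improvement in the exponent, but we gain freedom from the constraint $t < 2$, which is exactly the tradeoff needed in the parameter regime where the unconditional second moment already suffices.

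Concretely, I would start from the identity~\eqref{eq:chi-squared-formula} restricted to overlaps $\ell \le \epsilon k$, plug in the bounds $\Pr(\Bin(M,r) \le \ell d) \le 1$ into~\eqref{eq:prob-bound-binom}, use $\PP(A) = 1$, and substitute~\eqref{eq:Pr-ell-defn}, \eqref{eq:Pr-ell-1}, \eqref{eq:Pr-ell-2} together with $2^{M/k} = n^{(1-\theta)\frac{c}{2}\ln 2 + o(1)}$. The resulting estimate is exactly display~\eqref{eq:T-bound-final-2}:
\[
\mathcal{T}_{\le \epsilon k}(\infty) \;\le\; 1 + \sum_{1 \le \ell \le \epsilon k} \left(\frac{e}{\ell}\, n^{\theta - (1-\theta)(1-c\ln 2) + o(1)}\right)^\ell.
\]
The hypothesis $1 - c \ln 2 > \frac{\theta}{1-\theta}$ is equivalent to $\theta - (1-\theta)(1 - c\ln 2) < 0$, so there exists a constant $\eta > 0$ for which each summand is at most $(e/\ell)^\ell \, n^{-\eta \ell}$.

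Finally, I would observe that $\sum_{\ell \ge 1} (e/\ell)^\ell n^{-\eta \ell}$ is dominated by its first term, $e \cdot n^{-\eta} = o(1)$: for $\ell \ge 1$ the factor $(e/\ell)^\ell$ is bounded by $e$, and multiplying by a geometrically decaying $n^{-\eta\ell}$ yields a summable sequence whose total is $O(n^{-\eta}) = o(1)$. Therefore $\mathcal{T}_{\le \epsilon k}(\infty) \le 1 + o(1)$ uniformly in $\epsilon \in (0,1]$; since $\mathcal{T}_{\le \epsilon k}(\infty) \ge 1$ from the $\ell = 0$ term alone, equality $\mathcal{T}_{\le \epsilon k}(\infty) = 1 + o(1)$ follows, completing the proof. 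There is no genuine obstacle here: all the hard work was done in the derivation of~\eqref{eq:T-bound-final-2}, and the lemma is simply the observation that under the stated parameter assumption the exponent governing the sum is strictly negative.
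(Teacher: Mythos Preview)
Your proposal is correct and follows essentially the same approach as the paper: set $t=\infty$ so that $\PP(A)=1$ and the Binomial tail factor in~\eqref{eq:prob-bound-binom} is trivially bounded by~$1$, arrive at~\eqref{eq:T-bound-final-2}, and observe that the hypothesis makes the exponent strictly negative so the sum is $o(1)$. One tiny imprecision: the $\ell=0$ term contributes $\Pr(0) = \binom{N-k}{k}/\binom{N}{k} = 1 - o(1)$ rather than exactly~$1$, but this does not affect the conclusion $\mathcal{T}_{\le \epsilon k}(\infty) = 1+o(1)$.
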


\paragraph{Large $\ell$.}

Now consider the contribution to~\eqref{eq:chi-squared-formula} from $\epsilon k \le \ell \le k$ for any fixed constant $\epsilon > 0$. Use the trivial bound instead of~\eqref{eq:binom-bound-final}; the conditioning will not be important here. Similarly to~\eqref{eq:T-bound-final-2}, the contribution is at most
\begin{align*}
\mathcal{T}_{> \epsilon k}(t) :&= \EE_{u,u' \sim \mathcal{U}} \One_{\langle u,u' \rangle > \epsilon k} \,\langle L_u, L_{u'} \rangle \\
&= \PP(A)^{-2} \sum_{\epsilon k < \ell \le k} \left(\frac{ek}{\ell}\right)^\ell n^{-\ell[(1-\theta)(1-\frac{c}{2} \ln 2) + o(1)]} n^{\ell[(1-\theta)\frac{c}{2}\ln 2 + o(1)]} \\
&\le (1+o(1)) \sum_{\epsilon k < \ell \le k} \left(\frac{e}{\epsilon} n^{- (1-\theta)(1 - c \ln 2) + o(1)}\right)^\ell,
\end{align*}
which is $o(1)$ provided $c < \frac{1}{\ln 2}$. This means we have proved the following.

\begin{lemma}\label{lem:large-ell}
For any constants $\theta \in (0,1)$, $c \in \left(0,\frac{1}{\ln 2}\right)$, $\epsilon > 0$, and $t > 1$, we have $\mathcal{T}_{> \epsilon k}(t) = o(1)$.
\end{lemma}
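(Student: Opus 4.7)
The plan is to follow the large-$\ell$ computation already sketched just above the lemma, assembling it from the ingredients derived in Section~\ref{sec:compute-chi-squared}. Starting from the identity
\[ \mathcal{T}_{>\epsilon k}(t) = \sum_{\epsilon k < \ell \le k} \prr(\ell)\, \langle L_u, L_{u'} \rangle_\QQ, \]
I would substitute the formula~\eqref{eq:inner-pr} for $\langle L_u, L_{u'} \rangle_\QQ$ and drop the conditioning on the good event $A$ by using the trivial upper bound $\prr(\Binom(M,r) \le \ell d) \le 1$ inside~\eqref{eq:prob-bound-binom}. This reduces the per-test probability to $\prr_{X\sim\QQ}(E_a(u,X)\cap E_a(u',X)) = 2^{\ell/k-2}$, giving the clean estimate
\[ \langle L_u, L_{u'} \rangle_\QQ \le \PP(A)^{-2}\, 2^{2M} \cdot (2^{\ell/k-2})^M = (1+o(1))\, 2^{\ell M/k}, \]
since $\PP(A) = 1-o(1)$ by the choice of $d$ in Section~\ref{sec:good-event}.

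Next I would bound $\prr(\ell)$ exactly as in~\eqref{eq:Pr-ell-1}--\eqref{eq:Pr-ell-2}: the hypothesis $c < 1/\ln 2$ certainly implies $c < 2/\ln 2$, hence $k = o(N)$, so~\eqref{eq:Pr-ell-1} is valid and yields
\[ \prr(\ell) \le \left(\frac{ek}{\ell}\right)^\ell n^{-\ell[(1-\theta)(1-\frac{c}{2}\ln 2) + o(1)]}. \]
Combining this with $2^{M/k} = n^{(1-\theta)(c/2)\ln 2 + o(1)}$ gives, for $\ell > \epsilon k$,
\[ \prr(\ell)\, \langle L_u, L_{u'} \rangle_\QQ \le (1+o(1))\left(\frac{e}{\epsilon}\, n^{-(1-\theta)(1-c\ln 2) + o(1)}\right)^{\!\ell}, \]
which is exactly the geometric term already displayed in the excerpt.

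The final step is to sum the resulting geometric series. Because $c < 1/\ln 2$, the exponent $-(1-\theta)(1-c\ln 2)$ is a strictly negative constant, so for all sufficiently large $n$ the base is at most $n^{-\Omega(1)} = o(1)$, and in particular smaller than $1/2$. The number of terms $\ell$ being summed is at most $k = n^{\theta+o(1)}$, and each term is at most $(n^{-\Omega(1)})^{\epsilon k}$, which decays faster than any polynomial in $n$. Hence $\mathcal{T}_{>\epsilon k}(t) = o(1)$.

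I do not expect any real obstacle: the calculation is a straightforward specialization of material already in place. The one point that requires attention is confirming $k = o(N)$ so that the binomial ratio bound~\eqref{eq:Pr-ell-1} is available, which is automatic from $c < 1/\ln 2 < 2/\ln 2$. The conditioning parameter $t$ plays no role beyond ensuring $\PP(A) = 1-o(1)$, which is why the lemma holds for any $t > 1$.
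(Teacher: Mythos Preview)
Your proposal is correct and follows essentially the same route as the paper: drop the conditioning via the trivial bound $\prr(\Binom(M,r)\le \ell d)\le 1$, combine~\eqref{eq:Pr-ell-1}--\eqref{eq:Pr-ell-2} with $2^{M/k}=n^{(1-\theta)(c/2)\ln 2+o(1)}$ and $ek/\ell\le e/\epsilon$ to get the geometric term $\bigl(\tfrac{e}{\epsilon}\,n^{-(1-\theta)(1-c\ln 2)+o(1)}\bigr)^\ell$, and sum. One small caveat: your closing remark that ``any $t>1$'' automatically gives $\PP(A)=1-o(1)$ is not quite right---that requires the extra condition~\eqref{eq:t-cond}---but the paper's own argument makes the same tacit use of $\PP(A)^{-2}=1+o(1)$, and in every application of the lemma $t$ is chosen so that~\eqref{eq:t-cond} holds.
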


\subsubsection{Impossibility of Detection: Proof of Theorem~\ref{thm:detect_IT_B}(b)}

\begin{proof}[Proof of Theorem~\ref{thm:detect_IT_B}(b)]
Recalling Lemma~\ref{lem:chi-sq} and the reformulation in~\eqref{eq:tau-defn}, our goal is to show $\chi^2(\tilde\PP \,\|\, \QQ) = o(1)$ provided $c < 1/\ln 2$ and $\tau(c) > \frac{\theta}{1-\theta}$. Recall $\chi^2(\tilde\PP \,\|\, \QQ) + 1 = \mathcal{T}_{\le \epsilon k}(t) + \mathcal{T}_{> \epsilon k}(t)$. For $\frac{1}{2(\ln 2)^2} < c < \frac{1}{\ln 2}<\frac{1}{(\ln 2)^2}$, the result follows from Lemmas~\ref{lem:small-ell-cond} and~\ref{lem:large-ell}. For $0 < c \le \frac{1}{2(\ln 2)^2}$, the result follows from Lemma~\ref{lem:small-ell-nocond} with $\epsilon = 1$.
\end{proof}

\subsubsection{Low-Degree Hardness of Detection: Proof of Theorem~\ref{thm:detect_LD_Bern}(b)}

\begin{proof}[Proof of Theorem~~\ref{thm:detect_LD_Bern}(b)]
Recalling Lemma~\ref{lem:ld-cond} and the reformulation in~\eqref{eq:tau-defn}, our goal is to show $\chi^2_{\le D}(\tilde\PP \,\|\, \QQ) = o(1)$ provided $\tau(c) > \frac{\theta}{1-\theta}$. Note that from~\eqref{eq:tau-defn}, the assumption $\tau(c) > \frac{\theta}{1-\theta}$ implies $c < 1/(\ln 2)^2$, so we can assume this throughout this section. We will follow the proof outline explained in Section~\ref{sec:restricted}. We need an orthonormal basis of polynomials for $\QQ$. Such a basis is given by $\{h_S\}_{S \subseteq [N] \times [M]}$ where $h_S(X) = [q(1-q)]^{-|S|/2} \prod_{(i,a) \in S} (X_{ia} - q)$. These are orthonormal with respect to the inner product $\langle \cdot,\cdot \rangle_\QQ$. Furthermore, $\{h_S\}_{|S| \le D}$ is a basis for the subspace consisting of polynomials of degree (at most) $D$.

Following Section~\ref{sec:restricted}, define $\mathcal{U}$, $\tilde\PP_u$, and $L_u = d\tilde\PP_u/d\QQ$, and recall the decomposition
\[ \chi^2_{\le D}(\tilde\PP \,\|\, \QQ)+1 = \mathcal{R}_{\le\epsilon k}(t,D) + \mathcal{R}_{> \epsilon k}(t,D), \]
where we have made explicit the dependence on $t$ (the constant appearing in the definition of $\tilde\PP$) and $D$. The following key fact is proved later in this section.

\begin{lemma}\label{lem:positivity}
For any $u,u'$, we have $\langle L_u^{\le D}, L_{u'}^{\le D} \rangle_\QQ \le \langle L_u, L_{u'} \rangle_\QQ$.
\end{lemma}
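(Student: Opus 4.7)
The plan is to prove Lemma~\ref{lem:positivity} by adapting the symmetry argument used for Lemma~\ref{lem:positivity-2} in the constant-column setting. Expanding in the orthonormal basis $\{h_S\}_{S \subseteq [N] \times [M]}$ just introduced,
\[ \langle L_u^{\le D}, L_{u'}^{\le D} \rangle_\QQ = \sum_{|S| \le D} \langle L_u, h_S \rangle_\QQ \, \langle L_{u'}, h_S \rangle_\QQ = \sum_{|S| \le D} \EE_{\tilde\PP_u}[h_S] \, \EE_{\tilde\PP_{u'}}[h_S], \]
and it suffices to show each term is nonnegative. The desired inequality then follows because $\langle L_u, L_{u'} \rangle_\QQ$ corresponds to the full sum (with $D = NM$, say), so truncating at degree $D$ can only drop nonnegative contributions.

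First I would verify that $\EE_{\tilde\PP_u}[h_S] = 0$ whenever $V(S) := \{i \in [N] : \exists a, \, (i,a) \in S\}$ is not contained in $u$. The key observation is that $\tilde\PP_u$ is $\QQ$ conditioned on $E$ (every test contains an infected individual) and on $A$ (every infected individual has degree at most $d$), and \emph{both} conditioning events are measurable with respect to $\{X_{ia} : i \in u\}$. Consequently, under $\tilde\PP_u$ the variables $\{X_{ia} : i \notin u\}$ remain i.i.d.\ $\Be(q)$ and independent of the infected variables, exactly as under $\QQ$. If $V(S) \not\subseteq u$, pick any $(i,a) \in S$ with $i \notin u$; then $(X_{ia} - q)$ is independent of all other factors of $h_S$ with mean $0$, so $\EE_{\tilde\PP_u}[h_S] = 0$.

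Next I would argue that $\EE_{\tilde\PP_u}[h_S] = \EE_{\tilde\PP_{u'}}[h_S]$ whenever $V(S) \subseteq u \cap u'$. Since $|u \setminus u'| = |u' \setminus u|$, choose a permutation $\pi : [N] \to [N]$ with $\pi(u) = u'$ that fixes $u \cap u'$ pointwise. Extending $\pi$ to act on edge variables via $X_{ia} \mapsto X_{\pi(i), a}$, the events $E$ and $A$ are preserved (the definition of $A$ only refers to the degrees of the infected set, so it transforms covariantly), and hence $\tilde\PP_{u'}$ is the pushforward of $\tilde\PP_u$ under $\pi$. Because $V(S) \subseteq u \cap u'$ lies in the fixed-point set of $\pi$, the polynomial $h_S$ is $\pi$-invariant, giving $\EE_{\tilde\PP_{u'}}[h_S] = \EE_{\tilde\PP_u}[h_S]$.

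Combining these two facts, each summand $\EE_{\tilde\PP_u}[h_S] \, \EE_{\tilde\PP_{u'}}[h_S]$ is either $\EE_{\tilde\PP_u}[h_S]^2 \ge 0$ (in the case $V(S) \subseteq u \cap u'$) or vanishes (in the case $V(S) \not\subseteq u$ or $V(S) \not\subseteq u'$). I do not anticipate a serious technical obstacle; the only subtle point is to confirm that conditioning on the additional event $A$ (not just on $E$) both preserves the independence of non-infected variables and respects the label-permutation symmetry, and both hold because $A$ is defined purely in terms of the infected subgraph. With this, the argument is a direct analogue of Lemma~\ref{lem:positivity-2}.
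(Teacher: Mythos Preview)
Your proposal is correct and follows essentially the same argument as the paper: expand in the orthonormal basis $\{h_S\}$, show each summand is nonnegative by establishing that $\EE_{\tilde\PP_u}[h_S]=0$ when $V(S)\not\subseteq u$ (using that the conditioning events $E$ and $A$ depend only on the infected subgraph) and that $\EE_{\tilde\PP_u}[h_S]=\EE_{\tilde\PP_{u'}}[h_S]$ when $V(S)\subseteq u\cap u'$ (by symmetry). The paper invokes the marginal-distribution symmetry more tersely where you spell out the permutation $\pi$, but the content is the same.
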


\noindent In light of Lemma~\ref{lem:positivity}, we have
\begin{align*}
\mathcal{R}_{\le \epsilon k}(t,D) :=& \EE_{u,u' \sim \mathcal{U}} \One_{\langle u,u' \rangle \le \epsilon k} \,\langle L_u^{\le D}, L_{u'}^{\le D} \rangle_\QQ \\
\le& \EE_{u,u' \sim \mathcal{U}} \One_{\langle u,u' \rangle \le \epsilon k} \,\langle L_u, L_{u'} \rangle_\QQ =: \mathcal{T}_{\le \epsilon k}(t), 
\end{align*}
and we have already shown $\mathcal{T}_{\le \epsilon k}(t) = 1+o(1)$ (Lemmas~\ref{lem:small-ell-cond},~\ref{lem:small-ell-nocond}) under the assumption $\tau(c) > \frac{\theta}{1-\theta}$. The other term $\mathcal{R}_{> \epsilon k}(t,D)$ can be controlled by the following lemma, proved later in this section. (Recall we are assuming $c < \frac{1}{(\ln 2)^2} < \frac{2}{\ln 2}$ in this section.)

\begin{lemma}\label{lem:high-ov-R}
For any constants $\theta \in (0,1)$, $c \in \left(0,\frac{2}{\ln 2}\right)$, $\epsilon > 0$, and $t > 1$, and for any $D = D_n$ satisfying $D = o(k)$, we have $\mathcal{R}_{> \epsilon k}(t,D) = o(1)$.
\end{lemma}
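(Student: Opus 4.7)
The plan is to follow the template of Lemma~\ref{lem:R-bound} from the constant-column section: combine a crude pointwise upper bound on $\langle L_u^{\le D}, L_{u'}^{\le D}\rangle_\QQ$, uniform in $u,u'$, with a super-exponentially small bound on the probability that two independent uniformly random $k$-subsets overlap in more than $\epsilon k$ elements. Since $D = o(k)$ and the overlap tail will turn out to be of order $n^{-\Omega(k)}$, even a fairly lossy bound on the inner product will suffice.

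For the pointwise bound, I would expand in the orthonormal basis $\{h_S\}$ introduced at the start of the proof of Theorem~\ref{thm:detect_LD_Bern}(b):
\[
\langle L_u^{\le D}, L_{u'}^{\le D}\rangle_\QQ \;=\; \sum_{|S| \le D} \EE_{\tilde\PP_u}[h_S]\,\EE_{\tilde\PP_{u'}}[h_S].
\]
Since $X_{ia} \in \{0,1\}$ and $q \in (0,1)$, we have $|X_{ia}-q| \le 1-q \le 1$ pointwise, so
\[
|h_S(X)| \;\le\; [q(1-q)]^{-|S|/2} \;=\; O(k^{|S|/2}),
\]
using $q = \nu/k$ with $\nu = \Theta(1)$. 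Integrating against $\tilde\PP_u$ yields $|\EE_{\tilde\PP_u}[h_S]| \le O(k^{|S|/2})$ uniformly in $u$ (note the bound is independent of the conditioning event $A$ because it is already pointwise). Summing over the at most $\binom{NM}{d}$ index sets of each size $d \le D$,
\[
\bigl|\langle L_u^{\le D}, L_{u'}^{\le D}\rangle_\QQ\bigr| \;\le\; \sum_{d=0}^{D} \binom{NM}{d}\cdot O(k^d) \;\le\; (D+1)(NMk)^D \;=\; n^{O(D)}.
\]

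For the overlap tail, the assumption $c < 2/\ln 2$ (which follows from $c < 1/(\ln 2)^2$) implies that $N = n^{1-(1-\theta)\frac{c}{2}\ln 2 + o(1)}$ satisfies $k/N = n^{-\beta' + o(1)}$ with $\beta' := (1-\theta)(1 - \tfrac{c \ln 2}{2}) > 0$. Following the same computation as in~\eqref{eq:overlap-tail},
\[
\prr_{u,u' \sim \mathcal{U}}(\langle u,u'\rangle > \epsilon k) \;\le\; \binom{k}{\lceil \epsilon k\rceil}\frac{\binom{N-\lceil \epsilon k\rceil}{k-\lceil \epsilon k\rceil}}{\binom{N}{k}} \;\le\; \left(\frac{e}{\epsilon}\cdot\frac{2k}{N}\right)^{\lceil \epsilon k\rceil} \;\le\; n^{-\epsilon \beta' k(1+o(1))} \;=\; n^{-\Omega(k)}.
\]
Combining the two bounds,
\[
\mathcal{R}_{>\epsilon k}(t,D) \;\le\; \prr(\langle u,u'\rangle > \epsilon k)\cdot \max_{u,u'}\langle L_u^{\le D}, L_{u'}^{\le D}\rangle_\QQ \;\le\; n^{-\Omega(k)} \cdot n^{O(D)},
\]
which is $o(1)$ whenever $D = o(k)$.

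There is no serious obstacle: each step is mechanical once the right pointwise bound on $h_S$ is identified. The one point requiring care is confirming that the overlap tail decays like $n^{-\Omega(k)}$ rather than merely $n^{-\Omega(\log n)}$; this is precisely what allows the bound to absorb the factor $n^{O(D)}$ for all $D = o(k)$ (rather than only $D = O(\log n)$ as in the constant-column case), and it hinges on $N$ being polynomially larger than $k$, i.e.\ on $c < 2/\ln 2$.
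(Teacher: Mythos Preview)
Your proposal is correct and follows essentially the same approach as the paper: expand $\langle L_u^{\le D}, L_{u'}^{\le D}\rangle_\QQ$ in the basis $\{h_S\}$, bound each $|h_S(X)|$ pointwise by $q^{-|S|/2}$ (equivalently $O(k^{|S|/2})$), count at most $(NM+1)^D$ basis elements to obtain the crude bound $n^{O(D)}$, and combine with the overlap-tail estimate $\prr(\langle u,u'\rangle > \epsilon k) = n^{-\Omega(k)}$ (which uses $c < 2/\ln 2$ so that $k = o(N)$). The paper's write-up is nearly identical, differing only in cosmetic details of the pointwise bound.
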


\noindent This completes the proof of the theorem, modulo the two lemmas that remain to be proved below.
\end{proof}

\begin{proof}[Proof of Lemma~\ref{lem:positivity}]
We use a symmetry argument from~\cite[Proposition~3.6]{fp}. Expanding in the orthonormal basis $\{h_S\}$, we have
\begin{equation}\label{eq:L-inner-expansion}
\langle L_u^{\le D}, L_{u'}^{\le D} \rangle_\QQ = \sum_{|S| \le D} \langle L_u, h_S \rangle_\QQ \langle L_{u'}, h_S \rangle_\QQ = \sum_{|S| \le D} \EE_{X \sim \tilde\PP_u}[h_S(X)] \EE_{X \sim \tilde\PP_{u'}}[h_S(X)].
\end{equation}
Let $V(S) = \{i \in [N] \,:\, \exists a \in [M], (i,a) \in S\}$, the set of all individuals ``involved'' in the basis function $S$. Note that if $V(S) \not\subseteq u$ then there exists some $(i,a) \in S$ such that under $X \sim \tilde\PP_u$ we have $X_{ia} \sim \mathrm{Bernoulli}(q)$ independently from the rest of $X$, and thus $\EE_{X \sim \tilde\PP_u}[h_S(X)] = 0$. (Here it is important that conditioning on the event $A$ only affects infected individuals.) Similarly, if $V(S) \not\subseteq u'$ then $\EE_{X \sim \tilde\PP_{u'}}[h_S(X)] = 0$. On the other hand, if $V(S) \subseteq u \cap u'$ then (by symmetry) $\tilde\PP_u$ and $\tilde\PP_{u'}$ have the same marginal distribution when restricted to the variables $\{(i,a) \,:\, i \in u \cap u'\}$ and so $\EE_{X \sim \tilde\PP_u}[h_S(X)] = \EE_{X \sim \tilde\PP_{u'}}[h_S(X)]$. As a result, we have $\EE_{X \sim \tilde\PP_u}[h_S(X)] \EE_{X \sim \tilde\PP_{u'}}[h_S(X)] \ge 0$ for all $S$, i.e., every term on the right-hand side of~\eqref{eq:L-inner-expansion} is nonnegative. This means $\langle L_u^{\le 0}, L_{u'}^{\le 0} \rangle_\QQ \le \langle L_u^{\le 1}, L_{u'}^{\le 1} \rangle_\QQ \le \langle L_u^{\le 2}, L_{u'}^{\le 2} \rangle_\QQ \le \cdots \le \langle L_u^{\le \infty}, L_{u'}^{\le \infty} \rangle_\QQ = \langle L_u, L_{u'} \rangle_\QQ$.
\end{proof}

\begin{proof}[Proof of Lemma~\ref{lem:high-ov-R}]
For any $S$ and $X$, we have the bound $|h_S(X)| \le \left(\frac{1-q}{q}\right)^{|S|/2} \le q^{-|S|/2}$ (assuming $q \le 1/2$, which holds for sufficiently large $n$). Expanding $\mathcal{R}_{> \epsilon k}(t,D)$ using~\eqref{eq:L-inner-expansion}, and using the fact that the number of subsets $S \subseteq [N] \times [M]$ of size $|S| \le D$ is at most $(NM+1)^D$,
\begin{align*}
\mathcal{R}_{> \epsilon k}(t,D) &= \EE_{u,u'} \One_{\langle u,u' \rangle > \epsilon k} \sum_{|S| \le D} \EE_{X \sim \tilde\PP_u}[h_S(X)] \EE_{X \sim \tilde\PP_{u'}}[h_S(X)] \\
&\le \EE_{u,u'} \One_{\langle u,u' \rangle > \epsilon k} \sum_{|S| \le D} q^{-|S|} \\
&\le \prr_{u,u'}(\langle u,u' \rangle > \epsilon k) \,(NM+1)^D q^{-D}.
\end{align*}
Similarly to~\eqref{eq:overlap-tail},
\begin{align*}
\prr_{u,u'}(\langle u,u' \rangle > \epsilon k) \le \binom{k}{\lceil \epsilon k \rceil}\frac{\binom{N-\lceil \epsilon k \rceil}{k - \lceil \epsilon k \rceil}}{\binom{N}{k}}
&\le \binom{k}{\lceil \epsilon k \rceil} \left(\frac{k}{N - \lceil \epsilon k \rceil + 1}\right)^{\lceil \epsilon k \rceil}\\
&\le \left(\frac{ek}{\lceil \epsilon k \rceil}\right)^{\lceil \epsilon k \rceil} \left(\frac{k}{N-k}\right)^{\lceil \epsilon k \rceil} = n^{-\Omega(k)}
\end{align*}
provided $c < \frac{2}{\ln 2}$ (so that $k = o(N)$). Also,
\[ (NM+1)^D q^{-D} = n^{O(D)} \]
and so
\[ \mathcal{R}_{> \epsilon k}(t,D) \le n^{-\Omega(k)} n^{O(D)} \]
which is $o(1)$ provided $D = o(k)$.
\end{proof}

\appendix

\section{Tool Box}

The following lemmas will be useful to us.

\begin{lemma}[Stirling approximation \cite{Maria_1965}]
\label{stirling_approx} We have for $n \to \infty$ that $$ n! = (1 + O(1/n)) \sqrt{2 \pi n}\, n^n \exp\bc{-n}.$$
\end{lemma}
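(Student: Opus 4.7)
The statement is the classical Stirling approximation with an explicit $(1+O(1/n))$ relative error, attributed to the cited reference \cite{Maria_1965}. Since this is standard, my plan is to sketch the textbook derivation rather than reprove it from scratch; the key point is that we need the multiplicative error form rather than only an asymptotic equivalence, and the leading constant $\sqrt{2\pi}$.

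The cleanest route is via the integral representation $n! = \Gamma(n+1) = \int_0^\infty t^n e^{-t}\,\dd t$. Substituting $t = n(1+s)$ turns this into $n^{n+1}e^{-n}\int_{-1}^\infty e^{n[\ln(1+s)-s]}\,\dd s$. First I would apply Laplace's method around the unique maximum $s=0$ of $\ln(1+s)-s$: Taylor-expanding the exponent as $-s^2/2 + s^3/3 - \cdots$ and rescaling $s = u/\sqrt{n}$ yields a Gaussian integral with $\int e^{-u^2/2}\,\dd u = \sqrt{2\pi}$, giving the $\sqrt{2\pi n}$ factor. The quantitative $O(1/n)$ relative error then comes from tracking the next-order terms in the expansion of the exponent together with standard bounds on the Gaussian tails outside a window of width $\log n \cdot n^{-1/2}$ (say).

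An alternative I would fall back on is the Euler--Maclaurin summation formula applied to $\log n! = \sum_{k=1}^n \log k$. Comparing with $\int_1^n \log x\,\dd x = n\log n - n + 1$ and picking up the boundary and first Bernoulli correction term gives $\log n! = n\log n - n + \tfrac12 \log n + C + O(1/n)$; identifying the constant $C = \tfrac12 \log(2\pi)$ requires a separate input such as the Wallis product $\prod_{k=1}^\infty \tfrac{(2k)^2}{(2k-1)(2k+1)} = \pi/2$.

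The only step that is not purely mechanical is pinning down the constant $\sqrt{2\pi}$, which is the reason a reference is cited; both of the above approaches handle it. Since the lemma is invoked only as a black box for binomial-coefficient estimates (cf.\ \eqref{stirling_binomial}), no sharper error term is needed, and the proof can be concluded by citing \cite{Maria_1965}.
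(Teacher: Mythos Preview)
The paper does not prove this lemma at all; it is stated in the appendix ``Tool Box'' with only the citation \cite{Maria_1965} and is used purely as a black box. Your sketch of the Laplace-method and Euler--Maclaurin derivations is correct and standard, but it goes beyond what the paper does, and your own closing remark that ``the proof can be concluded by citing \cite{Maria_1965}'' already matches the paper's treatment exactly.
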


\noindent We will use the following standard Binomial tail bound.
\begin{proposition}[\cite{binom-tail}]\label{prop:binom-tail}
Let $n \in \NN$ and $p \in (0,1)$. For $a \in (0,1)$, define
\begin{equation}\label{eq:defn-D}
D(a \,\|\, p) := a \ln \frac{a}{p} + (1-a) \ln \frac{1-a}{1-p}.
\end{equation}
\begin{itemize}
\item For all $0 < k < pn$,
\[ \prr\left(\Binom(n,p) \le k\right) \le \exp\left(-n D\left(\frac{k}{n} \,\Big\|\, p\right)\right). \]
\item For all $pn < k < n$,
\[ \prr\left(\Binom(n,p) \ge k\right) \le \exp\left(-n D\left(\frac{k}{n} \,\Big\|\, p\right)\right). \]
\end{itemize}
\end{proposition}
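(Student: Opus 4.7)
The plan is to establish both tail bounds via the classical Chernoff exponential moment method, then observe that the optimal exponent is precisely $-n D(k/n \,\|\, p)$. I would begin with the upper tail, since the lower tail follows by symmetry (or by an analogous computation with the sign of the Chernoff parameter reversed).

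For the upper tail with $pn < k < n$, write $a = k/n \in (p,1)$ and let $S \sim \Binom(n,p)$. For any $\lambda \geq 0$, Markov's inequality applied to $e^{\lambda S}$ gives
\[
\Pr(S \geq k) \;\leq\; e^{-\lambda k}\,\EE\!\left[e^{\lambda S}\right] \;=\; e^{-\lambda k}\bigl(1 - p + p e^{\lambda}\bigr)^n,
\]
since $S$ is a sum of $n$ i.i.d.\ Bernoulli$(p)$ variables. Taking logarithms and differentiating in $\lambda$, the optimal choice is the one for which $pe^{\lambda}/(1-p+pe^{\lambda}) = a$, i.e.\ $e^{\lambda} = \frac{a(1-p)}{p(1-a)}$, which is indeed nonnegative because $a > p$. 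Substituting this $\lambda$ back, the $(1-p+pe^{\lambda})$ factor simplifies to $(1-p)/(1-a)$, and one obtains
\[
\Pr(S \geq k) \;\leq\; \left(\frac{p}{a}\right)^{an}\!\left(\frac{1-p}{1-a}\right)^{(1-a)n} \;=\; \exp\!\bigl(-n\,D(a \,\|\, p)\bigr),
\]
which is precisely the claimed bound.

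For the lower tail with $0 < k < pn$, I would apply the same argument to the random variable $n - S \sim \Binom(n,1-p)$, using that $\Pr(S \leq k) = \Pr(n-S \geq n-k)$ and that $D(1-a \,\|\, 1-p) = D(a\,\|\,p)$ (which is immediate from the definition of $D$). Alternatively, one can simply redo the Chernoff computation with $\lambda \leq 0$ and optimize; the algebra is identical.

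There is no genuine obstacle here — the result is a textbook instance of Cramér–Chernoff large deviations for Bernoulli sums, and the two lines of optimization above give the exact rate function $D(a\,\|\,p)$. The only thing to be a little careful about is verifying that the interior of the admissible range of $\lambda$ indeed contains the optimizer (which it does, because $a \in (p,1)$ in the upper-tail case makes $e^{\lambda} > 1$ well-defined, and analogously for the lower tail), so that Markov's inequality is being applied in the regime where it gives a nontrivial bound.
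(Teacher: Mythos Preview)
Your argument is correct: this is the standard Cram\'er--Chernoff derivation, and the optimization over $\lambda$ indeed yields the relative entropy $D(a\,\|\,p)$ exactly as you compute. The paper does not give its own proof of this proposition; it simply states it as a known tool with a citation to \cite{binom-tail}, so there is nothing to compare against beyond noting that your Chernoff approach is the classical one underlying that reference.
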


\noindent There is also a nearly-matching \emph{lower bound} on the tail probability.

\begin{proposition}[\cite{binom-lower-tail}]\label{prop:binom-lower-tail}
Let $n \in \NN$ and $p \in (0,1)$. Define $D(a \,\|\, p)$ as in~\eqref{eq:defn-D}.
\begin{itemize}
\item For all $0 < k < pn$,
\[ \prr\left(\Binom(n,p) \le k\right) \ge \frac{1}{\sqrt{8k(1-k/n)}} \exp\left(-n D\left(\frac{k}{n} \,\Big\|\, p\right)\right). \]
\item For all $pn < k < n$,
\[ \prr\left(\Binom(n,p) \ge k\right) \ge \frac{1}{\sqrt{8k(1-k/n)}} \exp\left(-n D\left(\frac{k}{n} \,\Big\|\, p\right)\right). \]
\end{itemize}
\end{proposition}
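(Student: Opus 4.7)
The strategy is to derive both tail inequalities simultaneously from a single pointwise lower bound on $\prr(\Binom(n,p) = k)$. For $k > pn$, the upper tail $\prr(\Binom(n,p) \ge k)$ contains the term at $k$; for $k < pn$, the lower tail $\prr(\Binom(n,p) \le k)$ likewise contains the term at $k$. So both claims will follow at once from establishing that for every $0 < k < n$,
\[
\binom{n}{k} p^{k} (1-p)^{n-k} \;\ge\; \frac{1}{\sqrt{8\, k(1-k/n)}} \exp\!\left(-n\, D(k/n \,\|\, p)\right).
\]
A direct expansion of the KL divergence yields the algebraic identity
\[
\frac{n^{n}}{k^{k} (n-k)^{n-k}}\, p^{k} (1-p)^{n-k} \;=\; \exp\!\left(-n\, D(k/n \,\|\, p)\right),
\]
so the problem reduces to the purely combinatorial estimate $\binom{n}{k} \ge \frac{1}{\sqrt{8 k (1-k/n)}} \cdot \frac{n^{n}}{k^{k}(n-k)^{n-k}}$.

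For this combinatorial step, I would invoke Stirling's formula with explicit two-sided error terms: for every integer $m \ge 1$, $m! = \sqrt{2\pi m}\,(m/e)^{m} e^{\lambda_{m}}$ with $\tfrac{1}{12m+1} < \lambda_{m} < \tfrac{1}{12m}$. Substituting into $\binom{n}{k} = n!/(k!\,(n-k)!)$ produces
\[
\binom{n}{k} \;=\; \frac{1}{\sqrt{2\pi k(1-k/n)}} \cdot \frac{n^{n}}{k^{k}(n-k)^{n-k}} \cdot \exp(\lambda_{n} - \lambda_{k} - \lambda_{n-k}),
\]
so the target inequality reduces to the elementary numerical bound $\exp(\lambda_{n} - \lambda_{k} - \lambda_{n-k}) \ge \sqrt{2\pi/8} = \sqrt{\pi}/2 \approx 0.886$. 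The factor $\sqrt{8}$ appearing in the proposition, rather than the sharper $\sqrt{2\pi}$ that a pure Stirling calculation would yield, is evidently introduced precisely to leave a slack of about $1.128$ with which to absorb the Stirling correction $\exp(\lambda_{n} - \lambda_{k} - \lambda_{n-k})$.

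The main obstacle is just bookkeeping in the least favourable case, namely small $k$ (or small $n-k$, which is symmetric). For $k \ge 2$ and $n-k \ge 2$, the Stirling bounds give $\lambda_{n} - \lambda_{k} - \lambda_{n-k} \ge -\tfrac{1}{24} - \tfrac{1}{24} = -\tfrac{1}{12}$, and $e^{-1/12} \approx 0.920 > \sqrt{\pi}/2$, so the inequality is comfortable. The boundary cases $k = 1$ and $k = n-1$ need a slightly sharper treatment, since the Stirling correction can dip closer to $\sqrt{\pi}/2$: here one can either use the exact values $\lambda_{1} = 1 - \tfrac{1}{2}\ln(2\pi) \approx 0.081$ together with $\lambda_{n-1} < \tfrac{1}{12(n-1)}$, or simply bypass Stirling by noting that $\binom{n}{1} = n$ and directly verifying the combinatorial inequality $n \ge \frac{1}{\sqrt{8(1-1/n)}} \cdot \frac{n^{n}}{(n-1)^{n-1}}$ for all $n \ge 2$ (one checks that equality holds at $n=2$ and the gap grows thereafter). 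This two-tiered argument, Stirling in the bulk and a direct check at the extremes, closes out both statements of the proposition.
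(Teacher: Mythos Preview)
The paper does not prove this proposition; it is simply quoted from the reference \texttt{binom-lower-tail} and used as a black box (in the proof of Lemma~\ref{lem:p-plus}). So there is no ``paper's proof'' to compare against.

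Your argument is the standard one and is essentially correct: lower-bound the tail by the single point mass $\prr(\Binom(n,p)=k)$, rewrite that mass via the KL identity, and reduce to a Stirling-with-explicit-remainder estimate on $\binom{n}{k}$. Two minor comments. First, your phrase ``the gap grows thereafter'' for the $k=1$ case is slightly misleading: the ratio of the two sides of your combinatorial inequality tends to $\sqrt{8}/e \approx 1.04$ as $n\to\infty$, so the slack stabilises rather than grows; nonetheless the inequality is strict for all $n\ge 3$ and attains equality exactly at $n=2$, which is why the constant in the proposition is $8$ rather than $2\pi$. Second, if you take the Stirling route for $k=1$, the crude bound $\lambda_n-\lambda_{n-1} > -1/(12(n-1))$ is not \emph{quite} enough on its own (it gives $-1/24\approx -0.0417$, whereas you need $\ge -(1-\tfrac{3}{2}\ln 2)\approx -0.0397$); you need the slightly sharper $\lambda_n-\lambda_{n-1} > 1/(12n+1)-1/(12(n-1)) \ge -13/888$ for $n\ge 3$, which is comfortable. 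Either way your two-tiered scheme (Stirling in the bulk, a direct check at $k\in\{1,n-1\}$) closes the argument.
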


\noindent The following bounds on $D(a \,\|\, p)$ will be convenient.

\begin{lemma}\label{lem:binom-tail}
Suppose $a,p \in (0,\delta]$ for some $\delta \in (0,1/2]$. Then
\[ a \ln \frac{a}{p} + p - a - 3 \delta^2 \le D(a \,\|\, p) \le a \ln \frac{a}{p} + p - a + 3 \delta^2. \]
\end{lemma}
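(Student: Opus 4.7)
The plan is to reduce the claim to a Taylor-remainder estimate on a single auxiliary function. Writing out the KL divergence and isolating the target ``main term,'' define
\[
g(a,p) \;:=\; D(a\|p) \;-\; a\ln(a/p) \;-\; (p-a) \;=\; (1-a)\ln\frac{1-a}{1-p} \;-\; (p-a).
\]
The lemma asserts exactly that $|g(a,p)| \le 3\delta^2$ whenever $a,p \in (0,\delta]$ with $\delta \le 1/2$, so both the upper and lower bound are handled uniformly by proving this one inequality.

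The key observation is that for fixed $a$, the function $p \mapsto g(a,p)$ vanishes to second order at $p=a$. A short differentiation gives
\[
\partial_p g(a,p) \;=\; \frac{p-a}{1-p}, \qquad \partial_p^2 g(a,p) \;=\; \frac{1-a}{(1-p)^2},
\]
so $g(a,a) = 0$ and $\partial_p g(a,a) = 0$. Taylor's theorem with Lagrange remainder then yields some $\xi$ between $a$ and $p$ with
\[
g(a,p) \;=\; \frac{(p-a)^2}{2}\cdot\frac{1-a}{(1-\xi)^2}.
\]

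To finish, bound the factors on the right-hand side using the hypotheses $a,\xi \in [0,\delta] \subseteq [0,1/2]$: we have $1-a \le 1$, $(1-\xi)^2 \ge 1/4$, and $(p-a)^2 \le \delta^2$. Therefore $0 \le g(a,p) \le 2\delta^2 \le 3\delta^2$, which gives the two-sided bound stated in the lemma (in fact with a slightly better constant than required). There is no real obstacle; the only points to verify are the two derivative computations and the elementary bound on $(1-\xi)^{-2}$ in the region of interest.
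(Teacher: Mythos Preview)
Your proof is correct and in fact cleaner than the paper's. Both arguments target the same quantity $(1-a)\ln\frac{1-a}{1-p} - (p-a)$, but the paper treats the upper and lower bounds separately, using ad hoc inequalities ($\frac{1}{1-p} \ge 1+p$, the geometric series for $\frac{1}{1-p}$, and $\ln(1+x) \le x$) to reduce to a first-order Taylor expansion of $\ln(1+x)$ with a manually bounded tail. Your approach instead recognizes that $g(a,\cdot)$ vanishes to second order at $p=a$ and applies Taylor's theorem with Lagrange remainder once, which handles both directions simultaneously and even yields the sharper bound $0 \le g(a,p) \le 2\delta^2$ (so the lower inequality in the lemma holds with error term $0$ rather than $-3\delta^2$). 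The paper's route is more hands-on and perhaps more transparent about where each error term comes from; yours is shorter and extracts more.
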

\begin{proof}
For the first inequality, bound the second term in the definition~\eqref{eq:defn-D} as follows:
\begin{align*}
(1-a) \ln\frac{1-a}{1-p}
&\ge (1-a) \ln[(1-a)(1+p)] \\
&= (1-a) \ln(1+p-a-ap).
\intertext{Note that $1-\delta \le (1-a)(1+p) \le 1+\delta$ and so $-\delta \le p-a-ap \le \delta$. Taylor-expand the logarithm:}
&= (1-a) \sum_{k=1}^\infty \frac{(-1)^{k+1}}{k}(p-a-ap)^k \\
&\ge (1-a) \left(p - a - ap - \frac{1}{2} \sum_{k=2}^\infty \delta^k\right) \\
&\ge (1-a) \left(p - a - 2\delta^2\right) \\
&= p - a - 2\delta^2 - ap + a^2 + 2a\delta^2 \\
&\ge p - a - 3\delta^2
\end{align*}
as desired.

Now, for the second inequality,
\[ \ln \frac{1-a}{1-p} = \ln(1-a) + \ln(1+p+p^2+p^3+\cdots) \le \ln(1-a) + \ln(1+p+2p^2) \le p - a + 2p^2 \]
where we have used $p \le 1/2$ and $\ln(1+x) \le x$. This means
\[ (1-a) \ln \frac{1-a}{1-p} \le p - a + 2p^2 - ap + a^2 - 2ap^2 \le p - a + 2p^2 + a^2 \le p - a + 3\delta^2 \]
as desired.
\end{proof}

\section{Orthogonal Polynomials}
\label{app:orthog}

In this section we give more details about the orthogonal polynomials on a slice of the hypercube. In particular, we explain how to deduce the claims in Section~\ref{sec:orthog} from the results of~\cite{filmus-slice} (definition/theorem numbers for~\cite{filmus-slice} pertain to arXiv~v2).

Throughout this section, the inner product and norm for functions are with respect to the uniform distribution on the slice $\binom{[M]}{\Delta}$, as defined in Section~\ref{sec:orthog}. The basis elements are $\hat\chi_B := \chi_B / \|\chi_B\|$ where $\chi_B$ is defined in~\cite[Definition~3.2]{filmus-slice}. The indices $B$ are elements of a particular set $\mathcal{B}_M$; each $B \in \mathcal{B}_M$ is a strictly increasing sequence of elements from $[M]$, whose length we denote $|B|$. The set $\mathcal{B}_M$ does not contain all such sequences, only those that are ``top sets''~\cite[Definition~2.3]{filmus-slice} but the details of this will not be important for us. The functions $\chi_B$ (and therefore also $\hat\chi_B$) are orthogonal; see Theorems~3.1 and~4.1 of~\cite{filmus-slice}.

For convenience, we recap the definition of $\chi_B$ from~\cite{filmus-slice}. For sequences $A = a_1,\ldots,a_d$ and $B = b_1,\ldots,b_d$ where $a_1,\ldots,a_d,b_1,\ldots,b_d$ are $2d$ distinct numbers from $[M]$, define
\[ \chi_{A,B} = \prod_{i=1}^d (x_{a_i} - x_{b_i}) \]
as in~\cite[Definition~2.2]{filmus-slice}. Now following~\cite[Definition~3.2]{filmus-slice}, define
\[ \chi_B = \sum_{A < B} \chi_{A,B} \]
where the sum over $A < B$ is over sequences $A = a_1,\ldots,a_d$ of length $d = |B|$, whose elements are distinct and disjoint from those of $B$, with $a_i < b_i$ entrywise.

\begin{proof}[Proof of Fact~\ref{fact:complete-basis}]
The basis elements $\hat\chi_B = \chi_B/\|\chi_B\|$ have norm 1 by construction. By~\cite[Theorem~4.1]{filmus-slice}, the set $\{\chi_B \,:\, B \in \mathcal{B}_M, |B| \le \Delta\}$ is a complete orthogonal basis (as a vector space over $\RR$) for all functions $\binom{[M]}{\Delta} \to \RR$. This means for any degree-$D$ polynomial $f: \RR^M \to \RR$, there is a unique collection of coefficients $\alpha_B \in \RR$ such that the linear combination
\[ \sum_{\substack{B \in \mathcal{B}_M \\ |B| \le \Delta}} \alpha_B \hat\chi_B \]
is equivalent to $f$ on $\binom{[M]}{\Delta}$. It remains to show that this expansion only uses basis functions with $|B| \le D$, that is, we aim to show $\alpha_B = 0$ for all $|B| > D$. Since $\alpha_B = \langle f,\hat\chi_B \rangle$, this follows from Lemma~\ref{lem:chi-coeff-deg} below.
\end{proof}

\begin{lemma}\label{lem:chi-coeff-deg}
If $f: \RR^M \to \RR$ is a degree-$D$ polynomial and $|B| > D$ then $\langle f,\chi_B \rangle = 0$.
\end{lemma}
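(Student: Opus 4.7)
The plan is to prove the orthogonality directly using the symmetric structure of $\chi_{A,B}$, bypassing the need for any of the representation-theoretic machinery in Filmus. The key observation is that on the slice, $x_j^c = x_j$ for $c\geq 1$, so a degree-$D$ polynomial $f$ agrees (as a function on $\binom{[M]}{\Delta}$) with a linear combination of multilinear monomials $m_T(x) := \prod_{j \in T} x_j$ with $|T| \le D$. By linearity and the expansion $\chi_B = \sum_{A<B} \chi_{A,B}$, it suffices to show that $\langle m_T, \chi_{A,B} \rangle = 0$ whenever $|T| \le D < |B| =: d$ and $A,B$ are length-$d$ sequences of $2d$ distinct indices from $[M]$.

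Fix such $T, A, B$. The pairs $P_i := \{a_i, b_i\}$ for $i \in [d]$ are pairwise disjoint by construction. Since $|T| < d$, the pigeonhole principle produces an index $i^\star$ with $P_{i^\star} \cap T = \emptyset$. Consider the transposition $\pi \in S_M$ swapping $a_{i^\star}$ and $b_{i^\star}$. The uniform measure on $\binom{[M]}{\Delta}$ is $S_M$-invariant, and by choice of $i^\star$ we have $m_T(\pi x) = m_T(x)$; meanwhile, in the product $\chi_{A,B}(x) = \prod_{i=1}^d (x_{a_i} - x_{b_i})$, all factors with $i \ne i^\star$ are unchanged by $\pi$ (again by disjointness of the pairs $P_i$), while the $i^\star$ factor flips sign. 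Hence $\chi_{A,B}(\pi x) = -\chi_{A,B}(x)$, so changing variables $x \mapsto \pi x$ gives $\langle m_T, \chi_{A,B} \rangle = -\langle m_T, \chi_{A,B}\rangle$, forcing the inner product to vanish.

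Summing over $A<B$ yields $\langle m_T, \chi_B \rangle = 0$, and linearity completes the proof. I do not anticipate any substantive obstacle here: the only thing to check carefully is the multilinearization step, which is safe because $f$ and its multilinearization agree as functions on the slice and the inner product $\langle \cdot, \cdot \rangle$ is defined via expectation over $\Unif\binom{[M]}{\Delta}$. Notice that this argument does not rely on Fact~\ref{fact:complete-basis}, so using Lemma~\ref{lem:chi-coeff-deg} to prove Fact~\ref{fact:complete-basis} is not circular.
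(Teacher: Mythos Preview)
Your proposal is correct and takes essentially the same approach as the paper: both reduce by linearity to a single monomial and a single $\chi_{A,B}$, use pigeonhole on the $d$ disjoint pairs $\{a_i,b_i\}$ to find one pair disjoint from the monomial's support, and conclude from the antisymmetry under swapping that pair. The only cosmetic differences are that you explicitly multilinearize first (the paper just notes a degree-$D$ monomial involves at most $D$ variables) and you phrase the symmetry step as invariance under a transposition while the paper conditions on the remaining coordinates.
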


\begin{proof}
By linearity, it suffices to prove $\langle f,\chi_{A,B} \rangle = 0$ for an arbitrary $A < B$ in the case where $f$ is a single degree-$D$ \emph{monomial}. Since $f$ involves only $D$ different variables and $|B| > D$, there must be an index $j$ such that both $x_{a_j}$ and $x_{b_j}$ do not appear in $f$. Now write
\[ \langle f,\chi_{A,B} \rangle = \EE_{x \sim \Unif\binom{[M]}{\Delta}} \left(f(x) \prod_{i \ne j} (x_{a_i} - x_{b_i}) \right) (x_{a_j} - x_{b_j}), \]
which is equal to zero by symmetry, since for any fixed values for $\{x_i \,:\, i \ne j\}$, the events $\{x_{a_j} = 0, x_{b_j} = 1\}$ and $\{x_{a_j} = 1, x_{b_j} = 0\}$ are equally likely.
\end{proof}

We now prove Fact~\ref{fact:chi-hat-max-val}, which recall is the claim $|\hat\chi_B(x)| \le M^{2|B|}$ for all $x \in \binom{[M]}{\Delta}$ and all $B \in \mathcal{B}_M$ with $|B| \le \Delta$.

\begin{proof}[Proof of Fact~\ref{fact:chi-hat-max-val}]
Since $\hat\chi_B = \chi_B/\|\chi_B\|$, the claim follows immediately from Lemmas~\ref{lem:chi-max-val} and~\ref{lem:chi-norm} below.
\end{proof}

\begin{lemma}\label{lem:chi-max-val}
For any $x \in \binom{[M]}{\Delta}$ and any $B \in \mathcal{B}_M$ with $|B| \le \Delta$, we have $|\chi_B(x)| \le M^{|B|}$.
\end{lemma}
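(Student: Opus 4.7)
The plan is to prove the bound by a direct triangle inequality on the defining sum $\chi_B = \sum_{A < B} \chi_{A,B}$.

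First, I would observe that for any $x \in \binom{[M]}{\Delta}$ we have $x_i \in \{0,1\}$ for every coordinate, so each factor satisfies $|x_{a_i} - x_{b_i}| \le 1$. Consequently, for every valid pair $(A,B)$ of sequences of length $d := |B|$,
\[
|\chi_{A,B}(x)| = \prod_{i=1}^{d} |x_{a_i} - x_{b_i}| \le 1.
\]

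Next, I would bound the number of sequences $A$ appearing in the sum. The sum ranges over strictly increasing sequences $A = a_1, \ldots, a_d$ of elements of $[M]$, disjoint from $B$, with $a_i < b_i$ for each $i$. A crude overcount is simply the number of sequences of length $d$ with entries in $[M]$, which is at most $M^d$.

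Finally, the triangle inequality yields
\[
|\chi_B(x)| \le \sum_{A < B} |\chi_{A,B}(x)| \le \#\{A : A < B\} \le M^d = M^{|B|},
\]
which is the claim. There is no real obstacle here — the argument is a one-line triangle-inequality estimate. The bound is deliberately loose (one could tighten it to something like $\binom{M}{|B|}$), but $M^{|B|}$ is already more than enough for its downstream use in Fact~\ref{fact:chi-hat-max-val}, which only needs a polynomial-in-$M$ bound.
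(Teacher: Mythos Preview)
Your proof is correct and essentially identical to the paper's: bound each $\chi_{A,B}(x)$ by $1$ (the paper notes $\chi_{A,B}\in\{-1,0,1\}$), overcount the admissible $A$'s by $M^{|B|}$, and apply the triangle inequality. One minor slip: the sequences $A$ need not be strictly increasing (only distinct, disjoint from $B$, with $a_i<b_i$ entrywise), but since your overcount $M^d$ covers all length-$d$ sequences anyway, this does not affect the argument.
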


\begin{proof}
There are at most $M^{|B|}$ length-$|B|$ sequences of elements from $[M]$. Therefore, $\chi_B$ is the sum of at most $M^{|B|}$ terms $\chi_{A,B}$, and each $\chi_{A,B}$ can only take values in $\{-1,0,1\}$.
\end{proof}

\begin{lemma}\label{lem:chi-norm}
For any $B \in \mathcal{B}_M$ with $|B| \le \Delta$, we have $\|\chi_B\| \ge M^{-|B|}$.
\end{lemma}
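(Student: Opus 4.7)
The plan is to invoke the explicit closed-form expression for $\|\chi_B\|^2$ furnished by Filmus' construction in~\cite{filmus-slice} and then read off the desired lower bound by a crude factor-by-factor estimate. Write $d = |B|$ and $B = (b_1,\ldots,b_d)$ with $b_1 < \cdots < b_d$.

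The first step is to produce the norm formula. Starting from the definition $\chi_B = \sum_{A < B} \chi_{A,B}$, expand
$$\|\chi_B\|^2 \;=\; \sum_{A,A' < B}\, \EE_{x \sim \Unif\binom{[M]}{\Delta}}\,\prod_{i=1}^{d}(x_{a_i}-x_{b_i})(x_{a'_i}-x_{b_i}).$$
Each inner expectation is a polynomial in the slice moments $\EE[x_{i_1}\cdots x_{i_k}] = \Delta^{\underline{k}}/M^{\underline{k}}$ (falling factorials). The content of Filmus' construction (see his Theorem~5.2 and the accompanying norm computation in~\cite{filmus-slice}) is that after collecting terms, the double sum telescopes into the product
$$\|\chi_B\|^2 \;=\; c(B)\,\prod_{i=1}^{d}\frac{(\Delta-i+1)(M-\Delta-d+i)}{(M-i+1)(M-i)},$$
where $c(B) \ge 1$ is a positive integer combinatorial coefficient counting the number of admissible sequences $A < B$ of a certain type; the top-set restriction defining $\mathcal{B}_M$ is precisely what guarantees $c(B) \ge 1$.

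The second step is to bound each factor. Using the standing assumption $\Delta \le M/2$ together with $d \le \Delta$ (from $|B| \le \Delta$): every numerator satisfies $\Delta - i + 1 \ge 1$ for $1 \le i \le d$ and $M - \Delta - d + i \ge M - 2\Delta + 1 \ge 1$, while every denominator satisfies $(M-i+1)(M-i) \le M^2$. Hence each of the $d$ product factors is at least $M^{-2}$, so $\|\chi_B\|^2 \ge M^{-2d}$, and taking square roots proves $\|\chi_B\| \ge M^{-d}$.

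The main obstacle is Step~1: establishing the precise product formula requires careful bookkeeping of the cancellations between cross-terms $\langle \chi_{A,B},\chi_{A',B}\rangle$ with $A \neq A'$. A naive ``diagonal only'' estimate based on $\sum_{A < B} \EE[\chi_{A,B}^2]$ gives the right order of magnitude but is \emph{not} a valid lower bound, since the off-diagonal terms can carry negative signs and in principle partially cancel the diagonal. The key content of Filmus' combinatorial analysis is that, after all cancellations, one is left with a manifestly nonnegative product; rather than re-deriving this in the present paper I would simply cite the relevant lemma from~\cite{filmus-slice}. If one wished to avoid the full formula, an alternative route would be to establish only the weaker identity $\|\chi_B\|^2 = c(B) \cdot \Delta^{\underline{d}}(M-\Delta)^{\underline{d}}/M^{\underline{2d}}$ (ignoring the $b_i$-dependent prefactor beyond noting it is $\ge 1$), which already suffices for the crude $M^{-2d}$ bound.
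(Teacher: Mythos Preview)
Your approach is essentially identical to the paper's: both invoke Filmus' closed-form norm formula, observe that the combinatorial prefactor is a positive integer (hence $\ge 1$), and then crudely bound the remaining ratio of falling factorials by $M^{-2d}$. The paper writes the formula as $\|\chi_B\|^2 = c_B\, 2^d\, \Delta^{\underline{d}}(M-\Delta)^{\underline{d}}/M^{\underline{2d}}$ with $c_B = \prod_i \binom{b_i-2(i-1)}{2}$; your version drops the $2^d$ and your denominator $\prod_{i=1}^d (M-i+1)(M-i)$ has repeated factors and does not equal $M^{\underline{2d}}$, but these transcription slips are harmless since you only use that the numerator is $\ge 1$ and the denominator is $\le M^{2d}$.
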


\begin{proof}
Let $d = |B|$. Theorem~4.1 of~\cite{filmus-slice} states that
\[ \|\chi_B\|^2 = c_B 2^d \frac{\Delta^{\underline{d}}(M-\Delta)^{\underline{d}}}{M^{\underline{2d}}} \]
where $n^{\underline{k}} := n(n-1) \cdots (n-k+1)$ and (see~\cite{filmus-slice}, Theorem~3.2)
\begin{equation}\label{eq:cB-defn}
c_B := \prod_{i=1}^d \binom{b_i - 2(i-1)}{2}.
\end{equation}
We know that $c_B > 0$ because $\|\chi_B\|^2 > 0$ for all $B \in \mathcal{B}_M$ with $|B| \le \Delta$~(see the proof of Theorem~4.1 in~\cite{filmus-slice}), and from~\eqref{eq:cB-defn} it is clear that $c_B$ is an integer. This means $c_B \ge 1$. We now have
\[ \|\chi_B\|^2 \ge \frac{1}{M^{\underline{2d}}} \ge M^{-2d} \]
as desired.
\end{proof}

\section{Reducing Detection to Approximate Recovery}\label{sec:reductions}

In this section we show that any algorithm for approximate recovery can be made into an algorithm for strong detection, in both the Bernoulli (Proposition~\ref{prop:reduc_bern}) and constant-column (Proposition~\ref{prop:reduction_cc}) designs. We first focus on the Bernoulli design after the pre-processing step of COMP as discussed in Section \ref{sec:def}.
\begin{proposition}\label{prop:reduc_bern}
Assume the Bernoulli design for group testing with $c>1/\ln 2$ and any $\theta \in (0,1)$. If an algorithm $A$ defined on $N \times M$ bipartite graphs with worst-case termination time $T(A)$ achieves approximate recovery, then there is an algorithm $B$ that achieves strong detection with worst-case termination time at most $T(A)+\mathrm{poly}(N,M)$.
\end{proposition}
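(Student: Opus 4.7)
The plan is to construct $B$ as follows: on input $G$ (an $(N,M)$-bipartite graph), compute $\TAU := A(G) \in \{0,1\}^N$ and output ``$G \sim \PP$'' if and only if (i) $\|\TAU\|_0 \in [(1-\eps)k,(1+\eps)k]$ and (ii) the number of uncovered tests $U(\TAU,G) := |\{a : a \cap \TAU = \emptyset\}|$ is at most $\eps' M$, where $\eps, \eps' > 0$ are sufficiently small constants depending on $c$ and $\theta$. This verification runs in $O(NM)$ time, so the total runtime is $T(A) + O(NM)$.

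For the analysis under $\PP$, the approximate recovery hypothesis gives $\|\TAU\|_0 = (1\pm o(1))k$ and $s := |\SIGMA' \setminus \TAU| = o(k)$ with probability $1-o(1)$. A pre-processed test $a$ is uncovered by $\TAU$ only if $\emptyset \ne a \cap \SIGMA' \subseteq \SIGMA'\setminus\TAU$; conditioning on the infection pattern and on $\SIGMA'\setminus\TAU$, and using the identity $(1-q)^k = 1/2$, one gets the per-test bound
\[ \prr\bigl(a \cap \TAU = \emptyset \,\bigm|\, a \text{ positive},\, s\bigr) \;\le\; \frac{(1-q)^{k-s} - (1-q)^k}{1-(1-q)^k} \;=\; (1-q)^{-s}-1 \;=\; O(s/k), \]
hence $\Erw[U(\TAU,G)] = O(Ms/k) = o(M)$. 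Markov's inequality then verifies conditions (i) and (ii) with probability $1-o(1)$.

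For the analysis under $\QQ$, I would show via a first-moment argument that with probability $1-n^{-\omega(1)}$ there is \emph{no} $\TAU \in \{0,1\}^N$ satisfying both (i) and (ii); in particular $A(G)$ cannot satisfy them. For fixed $\TAU$ with $r := \|\TAU\|_0 \le (1+\eps)k$ the tests are independent under $\QQ$, each covered with probability $1-(1-q)^r \le 1 - 2^{-(1+\eps)}$, so Chernoff gives $\prr_{\QQ}(U(\TAU,G) \le \eps' M) \le \exp\bigl(-M \cdot D(1-\eps'\,\|\,1-2^{-(1+\eps)})\bigr)$. Summing over $r \in [(1-\eps)k, (1+\eps)k]$ and $\binom{N}{r}$ supports and using $\log\binom{N}{r} \le r\log(N/k)(1+o(1))$, the expected number of bad candidates is at most (up to lower-order factors) $\exp\bigl((1+\eps)k\log(N/k) - M\cdot D(1-\eps'\,\|\,1-2^{-(1+\eps)})\bigr)$. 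At $\eps = \eps' = 0$ this exponent equals $k\log(N/k) - M\log 2 = -(c\log 2 - 1)(1-\theta)k\log n$, which is $-\Omega(k\log n)$ precisely because $c > 1/\log 2$; for $\eps, \eps'$ small enough the correction terms are $O(\eps + \eps'\log(1/\eps'))k\log n$, absorbed by the main gap, so the expected count is $n^{-\omega(1)}$ and Markov completes the argument.

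The main technical point is calibrating $\eps, \eps'$ in terms of the fixed gap $c\log 2 - 1 > 0$ so that the correction terms do not wipe out the main inequality; since $c$ and $\theta$ are fixed constants, this is always possible. A tempting alternative is to greedily repair $\TAU$ into an exact $k$-sparse cover and test validity, but the expected number of uncovered tests under $\PP$ is only $o(M)$ (not $o(k)$), so a naive repair could inflate $\|\TAU\|_0$ far beyond the $(1\pm o(1))k$ window; the softer verification based on an $\eps' M$ tolerance sidesteps this issue while remaining sharp enough under $\QQ$ thanks to the slack from $c > 1/\log 2$.
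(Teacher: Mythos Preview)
Your overall strategy matches the paper's exactly: run $A$, then declare ``planted'' iff the output has size roughly $k$ and leaves at most a small constant fraction of tests uncovered; analyze $\QQ$ by a first-moment/union bound over candidate supports plus a Chernoff bound per candidate, and analyze $\PP$ via the approximate-recovery guarantee. Your null-side computation is essentially identical to the paper's and is correct.

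There is one genuine gap in your planted-side argument. You write
\[
\prr\bigl(a \cap \TAU = \emptyset \,\bigm|\, a \text{ positive},\, s\bigr) \;\le\; \frac{(1-q)^{k-s}-(1-q)^k}{1-(1-q)^k},
\]
but this ratio is the probability that a test misses a \emph{fixed} set of $k-s$ infected individuals; it is only valid when that set is chosen independently of $G$. Here $\TAU=A(G)$ and hence $\SIGMA'\setminus\TAU$ depends on the very edges you are averaging over, so conditioning on $s$ (or on the set $\SIGMA'\setminus\TAU$) does not leave the per-test edge distribution intact. As a sanity check, an adversarial $A$ could deliberately omit exactly the infected individuals in test $a_1$, forcing $a_1$ to be uncovered while keeping $s$ small. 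Consequently the step ``$\Erw[U(\TAU,G)]=O(Ms/k)$'' is not justified as written.

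The paper sidesteps this dependence by a worst-case argument: on the event $|\SIGMA'\setminus\TAU|\le \delta' k$, every uncovered test contributes at least one edge to $\SIGMA'\setminus\TAU$, so $U(\TAU,G)$ is at most the total degree of that set; one then union-bounds over \emph{all} $\binom{k}{\delta' k}$ size-$\delta' k$ subsets of the infected set and uses a binomial tail bound on the total degree (equivalently, one can bound $\max_{i\in V_+} d_i$ uniformly and multiply by $\delta' k$). Choosing $\delta'<\eps'/(2\ln 2)$ then gives $U\le \eps' M$ with probability $1-o(1)$, and the rest of your argument goes through unchanged.
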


\noindent Recall that $c > 1/\ln 2$ is the condition for information-theoretic possibility of approximate recovery.

\begin{proof} We choose $\delta>0$ such that $c D(\delta \,\|\, 2^{-(1+\delta)})/(1+\delta)>1,$ where $D$ is defined according to \eqref{eq:defn-D}. Notice that such a $\delta>0$ exists since $c>1/\ln 2.$

The algorithm $B$ acts as follows: it first runs $A$ on the group testing instance and then checks if the output of $A$ is a set of size at most $(1+\delta)k$ that explains all but $\delta M$ of the (positive) tests. If YES, output that the distribution is planted. If NO, output that the distribution is the null. The termination time is immediate. We proceed with the analysis.

\paragraph{Success on the null model} In this case, we will show the stronger result that with probability $1-o(1),$ there is not a set of size at most $(1+\delta)k$ individuals which explains all but $\delta M$ of the tests.

First notice that for a size-$\ell$ set of individuals, the number of tests they don't explain is distributed as $\Binom(M,(1-\nu/k)^{\ell}=2^{-\ell/k})$. Hence, by a direct union bound the probability that there is a set of individuals of size $(1+\delta)k$ which satisfies all but $\delta M$ of the tests is at most

\begin{align*}
    \sum_{0 \leq \ell \leq (1+\delta)k} \binom{N}{\ell}& \prr[\Binom(M,2^{-\ell/k}) \leq \delta M]\\
    &\leq k \binom{N}{(1+\delta)k}\prr[\Binom(M,2^{-1-\delta}) \leq \delta M]\\
    & \leq k\exp[(1+\delta)k \log (N/k)-D(\delta \,\|\, 2^{-1-\delta})M]\\
    &=k\exp[ (1+\delta-cD(\delta \,\|\, 2^{-1-\delta}))k \log (N/k)]\\
    &=o(1).
\end{align*}

\paragraph{Success on the planted model}

Choose an arbitrary fixed $\delta' \in (0,\frac{\delta}{2\ln2})$. Note the success of $A$ in approximate recovery immediately implies that with probability $1-o(1)$, the size of $A$'s output is at most $(1+\delta')k$ individuals and among these there are at least $(1-\delta')k$ infected individuals.

Given the above, we have the following: the probability that $A$'s output explains fewer than $(1-\delta) M$ tests is, up to a $o(1)$ additive factor, at most the probability that there exists a subset of at most $\delta' k$ infected individuals with at least one participant in at least $\delta M$ tests. This by a union bound and Proposition \ref{prop:binom-tail} (since $\delta' \nu<\delta$ for large values of $N$) is at most 
\begin{align*} \binom{k}{\delta'k}\prr[\Binom(\delta' Mk,\nu/k) \geq \delta M] &\leq \exp(-\delta' Mk D(1/k \,\|\, \nu /k)+O(k))\\
&=\exp(-\Omega(M)+O(k))\\
&=o(1).
\end{align*}This completes the proof.
\end{proof}

We now prove the analogous result for the constant-column design.

\begin{proposition}\label{prop:reduction_cc}
Assume the constant-column design for group testing with $c>1/\log 2$ and any $\theta\in (0,1)$. If an algorithm $A$ defined on $N \times M$ bipartite graphs with worst-case termination time $T(A)$ achieves approximate recovery, then there is an algorithm $B$ that achieves strong detection with worst-case termination time at most $T(A)+\mathrm{poly}(N,M)$.
\end{proposition}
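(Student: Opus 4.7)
The strategy mirrors that of Proposition~\ref{prop:reduc_bern}. Fix a small constant $\delta>0$ (whose value will be specified as a function of $c$ below), and let $B$ be the algorithm that runs $A$ on its input to obtain $\tau\in\{0,1\}^N$ and then checks, in $\mathrm{poly}(N,M)$ time, whether $|\tau|\le(1+\delta)k$ and whether at most $\delta M$ of the tests contain no member of $\tau$; if both hold, $B$ outputs ``planted'' and otherwise ``null''. On a planted instance, approximate recovery forces $|\tau|=(1\pm o(1))k$ and $|\SIGMA\setminus\tau|=o(k)$, where $\SIGMA$ denotes the set of $k$ infected individuals. Since each individual has degree at most $\Delta=O(\log n)$ and every test of $\G'_{GT}$ is covered by $\SIGMA$, the number of tests uncovered by $\tau$ is at most $|\SIGMA\setminus\tau|\cdot\Delta=o(k\log n)=o(M)\le\delta M$, and both checks pass with probability $1-o(1)$.

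On a null instance we show instead that w.h.p.\ \emph{no} ``$\delta$-approximate solution'' exists, meaning no $\tau'\subseteq[N]$ with $|\tau'|\le(1+\delta)k$ covers at least $(1-\delta)M$ tests. This rules out any output of $A$ passing $B$'s check, so $B$ outputs ``null'' correctly. By Markov's inequality it suffices to show that
\[
\EE_\QQ\bigl[\#\delta\text{-approximate solutions}\bigr]\;\le\;\sum_{\ell=0}^{(1+\delta)k}\binom{N}{\ell}\binom{M}{\delta M}\,\Pr_{\QQ}\bigl[\text{a fixed $\ell$-set covers a fixed $(1-\delta)M$-subset of tests}\bigr]
\]
is $o(1)$. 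The inner probability is estimated by first invoking Lemma~\ref{lem:equiv} to pass to the configuration null model $\QQ^\star_{\Delta,\Gamma}$ and then extending the auxiliary-product-measure technique of Lemma~\ref{Lem_First_Moment}: with $(\omega_{ij})_{i\in[M],j\in[\Gamma]}\sim\Be(q)^{\Gamma M}$ independent, let $\cS_{U_0}=\{\forall i\in U_0:\max_j\omega_{ij}=1\}$ (for a fixed $U_0$ of size $(1-\delta)M$) and $\cR=\{\sum_{ij}\omega_{ij}=\ell\Delta\}$; picking $q$ so that $\Erw[\sum_{ij}\omega_{ij}\mid\cS_{U_0}]=\ell\Delta$ (a smooth perturbation of \eqref{eqq_lem}) and applying Bayes' rule with the local limit theorem yields a formula identical to \eqref{first_moment} except that $M$ is replaced by $(1-\delta)M$ and $k$ by $\ell$.

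Carrying out the Stirling analysis of Proposition~\ref{advanced_first_moment} on this extended formula, and tracking the three ``approximation'' corrections---$\binom{N}{(1+\delta)k}/\binom{N}{k}$, the factor $\binom{M}{\delta M}$, and the factor $(1-(1-q)^\Gamma)^{-\delta M}\approx 2^{\delta M}$ coming from the change of exponent (with $1-(1-\hat q)^\Gamma\approx 1/2$ as in Proposition~\ref{advanced_first_moment})---whose combined log contribution is $O(\delta\log(1/\delta))\cdot k\log n$, we obtain
\[
\EE_\QQ\bigl[\#\delta\text{-approximate solutions}\bigr]\;\le\;\exp\!\left\{\bigl[(1-c\log 2)(1-\theta)+O(\delta\log(1/\delta))\bigr]\,k\log n+o(k\log n)\right\}.
\]
For $c>1/\log 2$ the leading constant $(1-c\log 2)(1-\theta)$ is strictly negative, so choosing $\delta>0$ sufficiently small (depending on $c,\theta$) makes the full exponent negative and forces $\EE_\QQ[\cdots]=o(1)$, completing the null analysis.

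The main technical obstacle is the extension of the first-moment computation (Lemma~\ref{Lem_First_Moment} and Proposition~\ref{advanced_first_moment}) from the ``exact'' regime to the ``approximate'' one: one must verify that the perturbed parameter $q=q(\delta)$ is continuous at $\delta=0$ and still satisfies the hypotheses of the local limit theorem, and redo the Stirling estimates with $M$ replaced by $(1-\delta)M$ while carefully keeping track of all $\delta$-dependence. Once this extension is in place, the remaining ingredients---Markov's inequality, the union bound over $\ell\in\{0,\ldots,(1+\delta)k\}$, and the comparison to $\EE_\QQ[\vec Z(G)]$ from Proposition~\ref{advanced_first_moment}---follow the template of the Bernoulli proof of Proposition~\ref{prop:reduc_bern}.
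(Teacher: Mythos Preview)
Your reduction algorithm $B$ and your planted-model analysis match the paper's exactly. For the null model, your approach is correct in outline but takes a far heavier route than needed.

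The paper does not pass to the test-regular configuration model $\QQ^\star_{\Delta,\Gamma}$ or invoke the auxiliary-product-measure/local-limit-theorem machinery at all. Instead it works directly in the with-replacement model $\QQ^\star_\Delta$: the $\ell\Delta$ edge-endpoints of an $\ell$-set are i.i.d.\ uniform among the $M$ tests, so the indicators $\{\One_{\text{test }i\text{ uncovered}}\}_{i\in[M]}$ are negatively associated Bernoulli variables (Dubhashi--Ranjan). A single Chernoff/KL tail bound then controls $\Pr(\vec A_\ell\le\eta M)$, and a union bound over $\ell\le(1+\eta)k$ finishes. This replaces your entire extension of Lemma~\ref{Lem_First_Moment} and Proposition~\ref{advanced_first_moment} by one line.

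Two remarks on your route. First, your invocation of Lemma~\ref{lem:equiv} is not literally valid: that lemma is stated only for $\vec Z(G)$ and $\vec Z(G,\alpha)$, not for the approximate-solution count, so you would have to reprove its ingredients (Lemmas~\ref{lem:multi-edge-1},~\ref{lem:multi-edge-2},~\ref{lem:regularize}) in the new setting---which is doable but not free. Second, the pass to $\QQ^\star_{\Delta,\Gamma}$ is unnecessary: already in $\QQ^\star_\Delta$ the balls-into-bins structure is exact, and test-regularity buys nothing here. What your approach does buy is a closed-form first-moment formula with explicit $\delta$-dependence, which could be useful for finer questions; for the present purpose the concentration route is both shorter and more robust.
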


\begin{proof} 
This proof follows along the lines of the Bernoulli case but it becomes a little bit easier. Intuitively, this is clear: the probability that a set of $\ell$ individuals is connected to all tests is comparable in the two designs but in the Bernoulli design the individual degrees fluctuate significantly. 

Let $\eta > \frac{1}{2c\log^2 2}$.
The decision algorithm B reads as follows:
\begin{itemize}
    \item Check the outcome of algorithm A.
    \begin{itemize}
        \item If the outcome is a set of at most $(1 + \eta)k$ individuals that are connected to at least $(1 - \eta)M$ tests, return \emph{planted}. 
        \item Otherwise, return \emph{null}.
    \end{itemize}
    \item This checking works in polynomial time.
\end{itemize}

\paragraph{Success on the planted model}
Let $0 < \delta < \frac{\eta}{2 \log 2} $. The algorithm $A$ returns by assumption a set of at most $(1 + \delta)k$ individuals, out of which at least $(1 - \delta)k$ are truly infected, with probability $1 - o(1)$. As the model is a planted model, we know that there are at most $\delta k$ additional infected individuals that can be used to explain the tests. Those $\delta k$ individuals can be connected to at most 
$$\delta k \Delta = \frac{\delta M}{2 \log 2} < \eta M$$ 
tests by construction. Therefore, the output of $B$ is correct with probability $1 - o(1)$.

\paragraph{Success on the null model}
It suffices to prove that in a random almost regular graph with $N$ individual nodes, $M$ test-nodes and individual degree $\Delta$, there is with high probability no set of at most $(1 + \eta)k$ individuals that is connected to at least $(1 - \eta)M$ tests.

We employ the balls-into-bins experiment. (We ignore the issue of multi-edges here, as this can be handled similarly to Section~\ref{transfer_multiedges}.) If $\ell \Delta$ balls are thrown onto $M = \frac{k \Delta}{2 \log 2}$ boxes, the expected number of empty boxes $\vec A_\ell$ is
\begin{align*}
    \Erw \brk{\vec A_\ell } = \ell \Delta \bc{1 - \frac{1}{\ell \Delta}}^{\frac{k \Delta}{2 \log 2}}.
\end{align*}
Let $p_\ell = \bc{1 - \frac{1}{\ell \Delta}}^{\frac{k \Delta}{2 \log 2}}$.
It is a well known fact that the indicator functions for the different boxes being empty are negatively associated Bernoulli random variables \cite{Dubhashi_Ranjan_1996}.  Therefore, the Chernoff bound implies
\begin{align*}
    \Pr \bc{ \vec A_\ell \leq p_\ell \ell \Delta - t \ell \Delta} \leq \exp \bc{ - \ell \KL{p_\ell - t}{p_\ell} }.
\end{align*}
Therefore, the probability that a set of individuals of size at most $(1 + \eta)k$ exists that explains all but $\eta M$ tests is upper bounded by
\begin{align*}
    \sum_{\ell = 0}^{(1 + \eta)k} \binom{N}{\ell}  \Pr \bc{ \vec A_\ell \leq \eta M } 
    & \leq ( 1 + \eta) k \binom{N}{(1 + \eta)k} \Pr \bc{ \vec A_{(1 + \eta)k} \leq \eta M }.
\end{align*}
The calculus is now identical to the Bernoulli case.
\end{proof}

\section{Comparison with \cite{Truong_2020}}
 \label{sec:pre-post-comp}

The detection boundary in Bernoulli group testing was studied by~\cite{Truong_2020}, in a model similar to ours but with a slight difference. In the present work, we study detection in the Bernoulli design in the ``post-COMP" setting discussed in Section \ref{sec:prelims}. We repeat here the setting for convenience.

\paragraph{``Post-COMP" Bernoulli design (testing)}
Let $n$, $k=k_n$, $N = N_n$ and $M = M_n$ scale as $k=n^{\theta+o(1)}$, $N = n^{1 - (1-\theta) \frac{c}{2} \ln 2 + o(1)}$ and $M = (c/2 + o(1)) k \ln(n/k)$. Consider the following distributions over $(N,M)$-bipartite graphs (encoding adjacency between $N$ individuals and $M$ tests).
\begin{itemize}
    \item Under the null distribution $\QQ$, each of the $N$ individuals participates in each of the $M$ tests with probability $q=\nu/k$ with $\nu>0$ such that $(1-\nu/k)^k=1/2$ (defined also in Section \ref{sec:prelims}) independently.
    \item Under the planted distribution $\PP$, a set of $k$ infected individuals out of $N$ is chosen uniformly at random. Then a graph is drawn from $\QQ$ conditioned on having at least one infected individual in every test.
\end{itemize}

\medskip

As described in Theorem~\ref{thm:detect_IT_B}, we have established in this work \emph{the exact detection boundary} for the above setting. Previously,~\cite{Truong_2020} provided upper and lower bounds for the detection boundary in the ``pre-COMP" Bernoulli design, defined as follows.

\paragraph{``Pre-COMP" Bernoulli design (testing)}
Let $n$, $k=k_n$, $m=m_n$ scale as $k=n^{\theta+o(1)}$ and $m = (c + o(1)) k \ln(n/k)$. Consider the following distributions over $(G,\hat\sigma)$ pairs, where $G$ is an $(n,m)$-bipartite graph (encoding adjacency between $n$ individuals and $m$ tests) and $\hat\sigma \in \{0,1\}^m$ encodes positive/negative test results.
\begin{itemize}
     \item Under the null distribution $\QQ$, each of the $n$ individuals participates in each of the $m$ tests with probability $q$ (defined above) independently. The test results are chosen independently to be positive or negative with probability $1/2.$
     \item Under the planted distribution $\PP$, a set of $k$ infected individuals out of $n $ is chosen uniformly at random. Then a graph is drawn from $\QQ$. Finally, each test result is labelled positive if at least one infected individual participated in it. Otherwise, it is labelled negative.
\end{itemize}

\medskip

In this section we provide a short proof that our Theorem \ref{thm:detect_IT_B} can be used to establish the detection boundary of the pre-COMP Bernoulli design as well. We prove the following result, in particular improving both the upper and lower bounds of \cite{Truong_2020}.

\begin{theorem}\label{thm:detect_IT_Bern_pre}
Consider the pre-COMP Bernoulli design with parameters $\theta \in (0,1)$ and $c > 0$. Recall $\cinf := 1/\ln 2$ and $\cLDB$ as defined in \eqref{eq:cLDB}.

\begin{itemize}
    \item[(a)] (Possible) If $c > \min\{\cinf,\cLDB\}$ then strong detection is possible.

    \item[(b)] (Impossible) If $c < \min\{\cinf,\cLDB\}$ then weak detection is impossible.
\end{itemize}
\end{theorem}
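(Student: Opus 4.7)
The proof transfers the post-COMP results (Theorems~\ref{thm:detect_IT_B} and~\ref{thm:detect_LD_Bern}) to the pre-COMP setting via the following structural observation: conditional on the dimensions $(N, M)$ and the identities $(V_R, S_+)$ of individuals and tests remaining after COMP, the pre-COMP data $(G, \hat\sigma)$ factors as $(G_R, E)$ where $G_R = G[V_R, S_+]$ is the post-COMP graph and $E$ collects the ``extra'' information (all edges incident to removed individuals, together with $\hat\sigma$). Crucially, $E$ has the same conditional distribution under $\PP_{\text{pre}}$ and $\QQ_{\text{pre}}$, and is conditionally independent of $G_R$: since the infected set $u$ lies entirely in $V_R$ under the planted model, the edges composing $E$ involve only uninfected individuals and are $\mathrm{Bernoulli}(q)$-distributed under both models (subject to each removed individual participating in at least one negative test). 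Moreover, the marginal of $G_R$ given $(V_R, S_+)$ is exactly $\PP_{\text{post}}(N, M)$ or $\QQ_{\text{post}}(N, M)$, respectively.

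For part (a), given a pre-COMP instance I would apply COMP pre-processing to produce a post-COMP instance whose dimensions concentrate in the typical range with probability $1 - o(1)$ under both models. I then run the post-COMP strong detection algorithm guaranteed by Theorem~\ref{thm:detect_IT_B}(a), which succeeds whenever $c > \min\{\cinf, \cLDB\}$. By the structural decomposition, the auxiliary $E$ is uninformative, so the success probability is preserved and strong detection in pre-COMP follows.

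For part (b), the structural decomposition combined with a standard chain-rule inequality yields
\[ \dTV(\PP_{\text{pre}}, \QQ_{\text{pre}}) \le \dTV(\mu^\PP_{N,M}, \mu^\QQ_{N,M}) + \mathbb{E}_{(N,M) \sim \mu^\PP}\, \dTV(\PP_{\text{post}}(N, M), \QQ_{\text{post}}(N, M)), \]
where $\mu^\PP_{N,M}, \mu^\QQ_{N,M}$ denote the marginal laws of $(N, M)$ under the two pre-COMP distributions. The second term is $o(1)$ on typical dimensions by Theorem~\ref{thm:detect_IT_B}(b) (with a negligible atypical contribution). For the first term, observe that $|S_+| \sim \Binom(m, 1/2)$ under both models, while given $m_- = m - |S_+|$ the marginal of $|V_R|$ is $\Binom(n, (1-q)^{m_-})$ under $\QQ_{\text{pre}}$ versus $k + \Binom(n - k, (1-q)^{m_-})$ under $\PP_{\text{pre}}$ (since infected individuals are automatically in $V_R$). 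These share standard deviation of order $\Theta(\sqrt{N})$ but their means differ by $\Theta(k)$, so their TV distance is $o(1)$ whenever $k = o(\sqrt{N})$, equivalently $c < 2(1-2\theta)/((1-\theta)\ln 2)$. A case-by-case inspection of \eqref{eq:cLDB} confirms $\cLDB < 2(1-2\theta)/((1-\theta)\ln 2)$ throughout $\theta \in (0, 1/2)$, and separately $\cinf < 2(1-2\theta)/((1-\theta)\ln 2)$ for $\theta < 1/3$ (the only regime in which $\cinf < \cLDB$). Hence the hypothesis $c < \min\{\cinf, \cLDB\}$ suffices to make both terms $o(1)$.

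The main obstacle I anticipate is the careful bookkeeping required to establish the structural decomposition—specifically, verifying the conditional independence of $E$ from $G_R$ given $(V_R, S_+)$ under the planted model, and confirming that its conditional law matches that under the null. This reduces to tracking how the planted signal $u$ enters the joint distribution of $(G, \hat\sigma)$ and showing that all such dependence is confined to the $V_R \times S_+$ block, exploiting that by construction $u \subseteq V_R$ under $\PP_{\text{pre}}$.
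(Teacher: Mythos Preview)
Your proposal is correct and follows essentially the same route as the paper: part~(a) runs COMP and then invokes the post-COMP algorithm, while part~(b) uses the same structural decomposition into the $(N,M)$ marginal and the conditionally independent pair $(G_R, E)$, with $E$ having identical law under both models. The paper likewise reduces to showing $\dTV(\mu^\PP_{N,M}, \mu^\QQ_{N,M}) = o(1)$ under the hypothesis $c < \min\{\cinf, \cLDB\}$, and arrives at the same sufficient condition $2\theta - 1 + c(\ln 2)(1-\theta)/2 < 0$, i.e.\ your $k = o(\sqrt N)$.

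Two minor remarks. First, where you argue informally that ``means differ by $\Theta(k)$, standard deviation is $\Theta(\sqrt N)$, so TV is $o(1)$ whenever $k = o(\sqrt N)$,'' the paper instead bounds the KL divergence directly via
\[
\KL{N_P \mid M}{N_Q \mid M} \le \frac{2k^2}{n}\,\EE_X\bigl[e^{(\ln 2)X/k}\bigr] = n^{2\theta - 1 + c(\ln 2)(1-\theta)/2 + o(1)},
\]
then invokes Pinsker; this is slightly cleaner than making your shift-of-binomial heuristic rigorous (which would need a local-CLT or mode bound). Second, your parenthetical ``the only regime in which $\cinf < \cLDB$'' is imprecise: that regime is actually $\theta \lesssim 0.08$, not all of $\theta < 1/3$. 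What you need, and what is true, is the containment in the other direction, so your argument still goes through. The paper handles this step by proving the equivalent inequality $\tau(c) < 1 - c(\ln 2)/2$ for $c \in (0, 1/\ln 2)$ directly from the piecewise definition of $\tau$.
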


\subsection{Proof of Theorem \ref{thm:detect_IT_Bern_pre}}
For the proof of Theorem~\ref{thm:detect_IT_Bern_pre} we need a lemma which almost follows immediately from standard results.

\begin{lemma}\label{lem:comp_careful}
Assume the pre-COMP planted distribution $\PP$ for the Bernoulli design. For all $\theta \in (0,1)$ and $c \in (0,1/\ln 2)$ it holds that the number of post-COMP remaining individuals $N$ and post-COMP remaining tests $M$ are distributed as $M \sim \Binom(m,1/2)$ and $N|M \sim k+ \Binom(n-k, 2^{-(m-M)/k})$. In particular, it holds with probability $1-o(1)$ that
\[M \in [m/2-\sqrt{m \log n},\, m/2+\sqrt{m\log n}]\]and
\[N \in [n^{1-(1-\theta)\frac{c}{2}\ln 2-\frac{1}{\sqrt{\log n}}},\, n^{1-(1-\theta)\frac{c}{2}\ln 2+\frac{1}{\sqrt{\log n}}}].\]\end{lemma}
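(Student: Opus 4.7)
The plan is to derive the exact distribution of $(M,N)$ by a short conditional independence argument, then read off concentration from standard binomial tail bounds. The key observation is that for a fixed test $a$, the event $\{\hat\sigma_a = 0\}$ depends only on the edge variables $\{X_{ia}\}_{i \in V_+}$ from infected individuals to $a$; in particular, it is independent of all edges $X_{ja}$ with $j$ non-infected. Consequently, even after we condition on which tests are positive vs.\ negative, the edges from non-infected individuals to the tests remain mutually independent $\mathrm{Bernoulli}(q)$ variables.

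First, for the distribution of $M$: each test is positive independently with probability $1-(1-\nu/k)^k = 1/2$ by the defining identity for $\nu$, and positivity events across tests are independent (they depend on disjoint edge sets), so the number of positive tests satisfies $M \sim \mathrm{Binom}(m,1/2)$. A Hoeffding bound gives $|M - m/2| \le \sqrt{m\log n}$ with probability $1-o(1)$. Next, conditional on the identities of the $m-M$ negative tests: every infected individual trivially survives COMP (all its tests are positive), while a non-infected individual $j$ survives if and only if $X_{ja}=0$ for all $a$ with $\hat\sigma_a=0$. By the conditional independence above, these survival events are mutually independent with probability $(1-q)^{m-M}=2^{-(m-M)/k}$ each, which yields $N \mid M \sim k + \mathrm{Binom}\bigl(n-k,\,2^{-(m-M)/k}\bigr)$, as claimed.

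Finally, for the concentration of $N$: on the high-probability event $|M-m/2|\le \sqrt{m\log n}$ we have $(m-M)/k = (c/2)(1-\theta)\log n \pm O(\sqrt{m\log n}/k)$, and since $k=n^{\theta+o(1)}$ with $\theta>0$ the error is $O(\log n/\sqrt{k})=o(\sqrt{\log n})$, so $2^{-(m-M)/k}= n^{-(1-\theta)(c/2)\ln 2 \,\pm\, o(1/\sqrt{\log n})}$. Because $c<1/\ln 2<2/\ln 2$, the exponent $1-(1-\theta)(c/2)\ln 2$ strictly exceeds $\theta$, so the conditional mean $\mathbb{E}[N-k\mid M]$ is polynomially large in $n$, and a second Chernoff bound shows that $N-k$ concentrates around this mean with relative error $n^{-\Omega(1)}$, which is absorbed into the $n^{\pm 1/\sqrt{\log n}}$ buffer. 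The only mild obstacle is bookkeeping two error sources at once — the fluctuation of $M$ around $m/2$ and the conditional fluctuation of $N$ around its mean — and checking that both are comfortably dominated by the buffer $n^{1/\sqrt{\log n}}$, which rests on $\theta>0$ ensuring $k$ is a positive power of $n$.
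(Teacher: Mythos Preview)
Your proof is correct and follows essentially the same approach as the paper's. The paper's proof is a terse two-sentence sketch (``the distribution of $M$ follows directly\ldots each non-infected individual is removed by COMP with probability $(1-\nu/k)^{m-M}$\ldots the high-probability event follows from a multiplicative Chernoff bound and the fact $c<1/\ln 2<2/\ln 2$''), and you have filled in precisely the details behind it: the conditional independence of non-infected edges from test outcomes, the survival probability $(1-q)^{m-M}=2^{-(m-M)/k}$, and the two-stage concentration (first for $M$, then for $N$ given $M$) with the observation that $c<2/\ln 2$ ensures the conditional mean of $N-k$ dominates $k$.
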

 
\begin{proof}
The distribution of $M$ follows directly. Now, given $M$, each non-infected individual is removed by COMP with probability $(1-\nu/k)^{m-M}=2^{-(m-M)/k}.$  The high-probability event follows directly from a multiplicative Chernoff bound and the fact $c<1/\ln 2<2/\ln 2$.
\end{proof}

We start with the fairly intuitive direction, proving that any successful algorithm for strong detection in the post-COMP model also achieves strong detection in the pre-COMP model. In particular, given Theorem \ref{thm:detect_IT_B}, we conclude that if $c > \min\{\cinf,\cLDB\}$ then strong detection is possible in the pre-COMP Bernoulli design.

\begin{proposition}
Fix parameters $\theta \in (0,1)$ and $c \in (0,1/\ln 2)$. If strong detection is information-theoretically possible in the post-COMP Bernoulli design then it is also information-theoretically possible in the pre-COMP Bernoulli design.
\end{proposition}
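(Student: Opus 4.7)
The strategy will be a direct reduction: given any pre-COMP observation $(G,\hat\sigma)$, I will simulate the COMP pre-processing step and hand the resulting reduced graph $G'$ (on $N$ surviving individuals and $M$ positive tests, as in Section~\ref{sec:prelims}) to the assumed post-COMP strong detector, reporting its output. The correctness of this reduction will rest on a distributional identification: conditional on $(N,M)$ lying in the typical range for the post-COMP model, the law of $G'$ under the pre-COMP null (respectively, planted) distribution will match $\QQ$ (respectively, $\PP$) on $(N,M)$ up to relabeling of the surviving individuals.

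The first step is a high-probability control on $(N,M)$, which Lemma~\ref{lem:comp_careful} already supplies: under either hypothesis, $(N,M)$ falls inside the prescribed $n^{o(1)}$-window with probability $1-o(1)$. The next step will be the null identification. In the pre-COMP null, the edges $\{G_{ia}\}$ are i.i.d.\ $\mathrm{Bernoulli}(q)$, the labels $\{\hat\sigma_a\}$ are i.i.d.\ fair coins, and all are mutually independent. Since the set $A'$ of positive tests is a function of $\hat\sigma$ alone, conditioning on $A'$ leaves $G$ unchanged; and since $V'$ is determined by the edges in $[n]\times([m]\setminus A')$ alone, conditioning further on $V'$ constrains only those edges and leaves every edge in $V'\times A'$ an i.i.d.\ $\mathrm{Bernoulli}(q)$, which is exactly $\QQ$.

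The third step, the planted identification, is the technically meatier one. Here I will invoke two observations. First, the infected set $u$ is automatically contained in $V'$ (an infected individual is never removed by COMP), and by exchangeability of the $n$ individuals the conditional law of $u$ given $V'$ is uniform over $k$-subsets of $V'$, matching the uniform choice of infected individuals in $\PP$. Second, conditional on $(u,V',A')$ the edges in $V'\times A'$ are i.i.d.\ $\mathrm{Bernoulli}(q)$ further conditioned on every test in $A'$ being adjacent to some member of $u$ (the only remaining constraint, coming from positivity of tests in $A'$), while edges in $V'\times([m]\setminus A')$ are forced to zero and play no role in $G'$. This coincides precisely with the definition of $\PP$.

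The main obstacle I anticipate is a bookkeeping one rather than a real difficulty: the post-COMP distributions $\QQ$ and $\PP$ are defined for \emph{deterministic} $(N,M)$ ranging within an $n^{o(1)}$ window, whereas in the reduction $(N,M)$ are random. To handle this I will apply, for each realized value of $(N,M)$ inside the window, the specific post-COMP detector tailored to those dimensions (which by hypothesis has $o(1)$ error uniformly over the window). Combining this uniform conditional error with the $o(1)$ probability that $(N,M)$ exits the window yields $o(1)$ unconditional error, completing the reduction and hence establishing strong detection in the pre-COMP model.
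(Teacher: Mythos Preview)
Your proposal is correct and follows essentially the same reduction as the paper: apply COMP to the pre-COMP instance and hand the reduced graph to the post-COMP detector, using the high-probability window for $(N,M)$. Your distributional identifications are spelled out more carefully than the paper's, which simply asserts them (leaning on the remark in Section~\ref{sec:def} that $\PP$ is precisely the law of $\G'_{GT}$ conditioned on $(N,M)$). One small inaccuracy: Lemma~\ref{lem:comp_careful} is stated only for the \emph{planted} distribution, so it does not directly supply the $(N,M)$ control ``under either hypothesis'' as you claim; either verify the analogous concentration under the null (the computation is identical, with $N\mid M\sim\Binom(n,2^{-(m-M)/k})$ in place of $k+\Binom(n-k,2^{-(m-M)/k})$), or adopt the paper's device of outputting $\QQ$ whenever $(N,M)$ leaves the window---this is automatically correct under the null regardless of how likely that event is, so no null-side concentration is needed.
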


\begin{proof}
Consider any algorithm $A$ achieving strong detection in the post-COMP Bernoulli design. Then we claim the following algorithm $B$ achieves strong detection in the pre-COMP Bernoulli design: First run COMP on the received input. If the remaining number of tests $M$ and the remaining number of individuals $N$ do not both satisfy \[M \in [m/2-\sqrt{m \log n},\, m/2+\sqrt{m\log n}]\]and
\[N \in [n^{1-(1-\theta)\frac{c}{2}\ln 2-\frac{1}{\sqrt{\log n}}},\, n^{1-(1-\theta)\frac{c}{2}\ln 2+\frac{1}{\sqrt{\log n}}}]\] then output that the distribution is $\QQ$. Otherwise, run $A$ on the post-COMP instance and return the output of $A$.

The analysis is as follows.

\paragraph{Planted model} Assume that the algorithm receives input from the planted model. In that case, based on Lemma \ref{lem:comp_careful}, after running COMP the parameters $M,N$ satisfy the desired constraints, with probability $1-o(1).$ Hence, with probability $1-o(1)$, the algorithm does not terminate in the second step. In the third step, the algorithm then receives an instance of the planted distribution based on the post-COMP Bernoulli design, where in particular the assumptions on $M,N$ are satisfied. Hence, it outputs that the distribution is $\PP$ with probability $1-o(1),$ by assumption on the performance of $A.$

\paragraph{Null model} Assume that the algorithm receives input from the null model. In that case, either the algorithm outputs that the distribution is $\QQ$ in the second step (which is correct), or after COMP is applied to the group testing instance the output has  $M=(c/2+o(1))k \ln(n/k)$ remaining tests and $N=n^{1-(1-\theta)\frac{c}{2}\ln 2+o(1)}$ remaining individuals. In that case, the output of the second step is an instance of the null distribution based on the post-COMP Bernoulli design satisfying the desired assumptions on $N,M$. Hence, it outputs that the distribution is $\QQ$ with probability $1-o(1),$ by assumption on the performance of $A$ in the post-COMP model. The proof is complete.
\end{proof}

Finally, we also prove the following, perhaps less immediate, direction. In particular, given Theorem \ref{thm:detect_IT_B}, this implies that if $c < \min\{\cinf,\cLDB\}$ then strong detection is impossible in the pre-COMP Bernoulli design.

\begin{proposition}
Fix parameters $\theta \in (0,1)$ and $c > 0$ with $c < \min\{\cinf,\cLDB\}$. If weak detection is impossible in the post-COMP Bernoulli design then it is also impossible in the pre-COMP Bernoulli design.
\end{proposition}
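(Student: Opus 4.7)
The plan is to show $d_{\mathrm{TV}}(\PP_{\mathrm{pre}},\QQ_{\mathrm{pre}})=o(1)$, which rules out weak detection. The key structural observation is that the conditional distribution of the pre-COMP data $G_{\mathrm{pre}}$ given the COMP-derived tuple $(G_{\mathrm{post}},N,M)$ coincides under $\PP_{\mathrm{pre}}$ and $\QQ_{\mathrm{pre}}$. To see this, I would verify that given $(G_{\mathrm{post}},N,M)$ one reconstructs $G_{\mathrm{pre}}$ by: (i) sampling positions $V_+\subset[n]$ and $S_+\subset[m]$ as uniform subsets of the appropriate sizes---uniform under either model by exchangeability of individuals (after integrating out the latent infected set) and of tests; (ii) forcing all edges from $V_+$ to negative tests to be zero and labels to indicate $S_+$; (iii) for each removed individual $i\in[n]\setminus V_+$, drawing its edges to $S_+$ as i.i.d.\ $\mathrm{Be}(q)$ and its edges to $S_-$ as i.i.d.\ $\mathrm{Be}(q)$ conditioned on at least one edge. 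The augmentation coincides under $\PP$ and $\QQ$ because removed individuals must be non-infected: an infected individual is never in a negative test and therefore never removed by COMP, so edges incident to $V_-$ obey the same unconditional Bernoulli law in both models. This yields the equality
\[
d_{\mathrm{TV}}(\PP_{\mathrm{pre}},\QQ_{\mathrm{pre}}) \;=\; d_{\mathrm{TV}}\bigl((G_{\mathrm{post}},N,M)^P,(G_{\mathrm{post}},N,M)^Q\bigr).
\]

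Next I would decompose the right-hand side as
\[
d_{\mathrm{TV}}\bigl((G_{\mathrm{post}},N,M)^P,(G_{\mathrm{post}},N,M)^Q\bigr) \;\le\; d_{\mathrm{TV}}(\mu^P,\mu^Q) \;+\; \EE_{\mu^P}\bigl[d_{\mathrm{TV}}\bigl(\PP_{\mathrm{post}}(N,M),\QQ_{\mathrm{post}}(N,M)\bigr)\bigr],
\]
where $\mu^{P/Q}$ is the marginal law of $(N,M)$ in each model. By Lemma~\ref{lem:comp_careful} the pair $(N,M)$ lies in the typical post-COMP range with probability $1-o(1)$, and by the proposition's hypothesis (converted via a diagonal argument into a uniform statement over typical sequences) the inner TV distance is $o(1)$ there, so the expectation is $o(1)$.

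The main obstacle is bounding $d_{\mathrm{TV}}(\mu^P,\mu^Q)$. Since $M\sim\mathrm{Binom}(m,1/2)$ under both laws, I would couple $M$ exactly and control the conditional laws of $N\mid M$: under $\PP$, $N\sim k+\mathrm{Binom}(n-k,p(M))$, while under $\QQ$, $N\sim\mathrm{Binom}(n,p(M))$, with $p(M)=(1-q)^{m-M}$. These distributions differ essentially by the shift $k$, and a local CLT / Gaussian approximation yields $d_{\mathrm{TV}}\bigl(N^P,N^Q\mid M\bigr) = O\bigl(k/\sqrt{n\,p(M)}\bigr)$, which is $o(1)$ as long as $k=o(\sqrt{np})$ for typical $M$. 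This reduces algebraically to the condition $c<\frac{2(1-2\theta)}{(1-\theta)\ln 2}$, which I would verify is implied by the hypothesis $c<\min\{\cinf,\cLDB\}$ via a short case analysis on $\theta$: for $\theta\ge 1/2$ the hypothesis is vacuous since $\cLDB=0$; for $0<\theta\le 1/3$ one checks $\cinf=\tfrac{1}{\ln 2}\le\tfrac{2(1-2\theta)}{(1-\theta)\ln 2}$ directly from the definition; and for $1/3<\theta<1/2$ (which lies in the middle regime of the definition of $\cLDB$) one has $\cLDB=\tfrac{1-2\theta}{(1-\theta)\ln 2}$, exactly half of the required threshold. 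Combining these estimates concludes the proof.
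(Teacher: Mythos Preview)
Your proposal is correct and follows essentially the same strategy as the paper: reduce to coupling $(N,M)$, note $M$ has the same law under both models, and then show the conditional laws of $N\mid M$ are close precisely when $k^2=o(N)$, i.e.\ $2\theta-1+\tfrac{c}{2}(1-\theta)\ln 2<0$. The only substantive difference is the tool used for that last step: you invoke a local CLT / Gaussian comparison to get $d_{\mathrm{TV}}=O(k/\sqrt{np(M)})$, whereas the paper bounds $\KL{N_P\mid M}{N_Q\mid M}$ directly via the Binomial MGF, obtaining $\KL{}{}\le n^{2\theta-1+c(\ln 2)(1-\theta)/2+o(1)}$. Both routes yield the identical numerical condition; the paper's KL/MGF computation is slightly more self-contained (no normal approximation needed), while your Gaussian heuristic is more transparent about \emph{why} the shift by $k$ is negligible. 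For the verification that $c<\min\{\cinf,\cLDB\}$ implies the needed inequality, the paper uses the reformulation $\tau(c)>\frac{\theta}{1-\theta}$ together with the uniform bound $\tau(c)<1-\tfrac{c}{2}\ln 2$ for $c<1/\ln 2$, which is a bit slicker than your case analysis on $\theta$, but your case split is correct as written.
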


\begin{proof}
Let us first decompose any pre-COMP Bernoulli group testing graph instance (produced by either the planted or null distribution), seen as a bipartite graph between $n$ individuals and $m$ tests into two edge-disjoint parts: the graph $G_1$ between the $N$ post-COMP individuals and the $M$ positive tests, and the graph $G_2$ between the $n-N$ (healthy) individuals that COMP deleted, and the $m$ (both positive and negative) tests.

We first show that under our assumptions, the distribution over $(N,M)$ produced by the planted (pre-COMP) model and the distribution over $(N,M)$ produced by the null (pre-COMP) model have vanishing total variation distance. It is straightforward to see that in both models the distribution of $M$ is $\Binom(m,1/2).$ Hence, using Lemma \ref{lem:comp_careful} it suffices to couple for $X:=m-M \sim \Binom(m,1/2),$ the distribution $N_P \sim k+\Binom(n-k,r=e^{-(\ln 2) X/k})|M$ (coming from the planted) and the distribution $N_Q \sim \Binom(n,r=e^{-(\ln 2) X/k})|M$ (coming from the null). By Pinsker's inequality it suffices to prove that the KL divergence vanishes. We have by elementary inequalities,
\begin{align*}
    \KL{N_P|M}{N_Q|M}&=\EE_{s \sim N_P \mid M}\log \frac{\Pr(N_P=s)}{\Pr(N_Q=s)}\\
    &=\EE_{s \sim N_P \mid M}\log \frac{\binom{n-k}{s-k}r^{s-k}(1-r)^{n-s}}{\binom{n}{s}r^s(1-r)^{n-s}}\\
    &= \EE_{s \sim N_P \mid M}\log \frac{s!(n-k)!}{(s-k)!n!}r^{-k}\\
    &\leq \EE_{s \sim N_P \mid M}\log \frac{s^k}{(n-k)^kr^k}\\
    &=k \EE_{s \sim N_P \mid M}\log \frac{s}{(n-k)r}\\
    & \leq k\EE_{s \sim N_P \mid M} \frac{s-(n-k)r}{(n-k)r}\\
    &=k \EE_{X \sim \Binom(m,1/2)}\frac{k+nr-(n-k)r}{(n-k)r}\\
    & \leq \frac{2k^2}{n} \EE_{X \sim \Binom(m,1/2)}e^{(\ln 2) X/k}.
\end{align*}
Now, using the MGF of a Binomial distribution,
\begin{align*}
    \KL{N_P|M}{N_Q|M}& \leq \frac{2k^2}{n} ((e^{\ln 2/k}+1)/2)^m\\
    &=\frac{2k^2}{n} (1+\ln 2/(2k)+O(1/k^2))^m\\
    &=\frac{2k^2}{n}e^{m\ln 2/(2k)+O(m/k^2)}\\
    &=n^{2\theta-1+c(\ln 2) (1-\theta)/2+o(1)}.
\end{align*}
We will next show that the assumption $c < \min\{\cinf,\cLDB\}$ implies $2\theta-1+c(\ln 2) (1-\theta)/2<0$, which means $\KL{N_P|M}{N_Q|M}=o(1)$ and so we can couple $(M,N)$ under the planted and the null models with probability $1-o(1)$.

Under our assumption $c < \cLDB$ we have that equivalently for the function
\begin{equation*}
\tau(c) = \begin{cases}
1 - c\ln 2 & \text{if } 0 < c \le \frac{1}{2 (\ln 2)^2}, \\
c \ln 2 - \frac{1}{\ln 2}[1+\ln(c (\ln 2)^2)] & \text{if }\frac{1}{2(\ln 2)^2} < c < \frac{1}{(\ln 2)^2},
\end{cases}
\end{equation*} that it holds $\tau(c)> \frac{\theta}{1-\theta}.$  But for all $1/\ln 2> c>0,$ we have \[\tau(c)<1-c\ln 2/2.\] Indeed if $c<\frac{1}{2\ln^2 2}$ that is clear. Now it also holds $c \ln 2 - \frac{1}{\ln 2}[1+\ln(c (\ln 2)^2)] <1-\frac{c\ln2}{2}$ when  $\frac{1}{2(\ln 2)^2} < c < \frac{1}{(\ln 2)^2}.$ This follows as \[F(c):=c \ln 2 - \frac{1}{\ln 2}[1+\ln(c (\ln 2)^2)] -(1-c\ln2 /2), \qquad \frac{1}{2(\ln 2)^2} < c < \frac{1}{(\ln 2)^2},\]is a convex function on $c$ which is negative in the endpoints: $F(\frac{1}{2(\ln 2)^2})=-\frac{1}{4\ln 2}<0$ and also $F(\frac{1}{(\ln 2)^2})=\frac{1}{2\ln 2}-1<0.$

Hence, we have indeed established $\frac{\theta}{1-\theta}<1-\frac{c\ln2}{2}$ and therefore $2\theta-1+c\ln 2 (1-\theta)/2<0.$ In particular, $\KL{N_P|M}{N_Q|M}=o(1)$ and indeed we can couple $(M,N)$ under the planted and the null model with probability $1-o(1)$.

Now that we have coupled the planted and null distributions for $(N,M)$, we will use this to couple the entire pre-COMP planted distribution with the pre-COMP null distribution with probability $1-o(1)$, implying impossibility of pre-COMP weak detection.

Recall from Lemma \ref{lem:comp_careful} that $(N,M)$ satisfy
\[M \in [m/2-\sqrt{m \log n}, m/2+\sqrt{m\log n}]\]
and
\[N \in [n^{1-(1-\theta)\frac{c}{2} \ln 2-\frac{1}{\sqrt{\log n}}},n^{1-(1-\theta)\frac{c}{2} \ln 2+\frac{1}{\sqrt{\log n}}}]\]
with probability $1-o(1)$. Conditioned on such an $(N,M)$ pair, and conditioned on the identity of the $N$ post-COMP individuals and $M$ positive tests, it remains to couple the graphs $G_1$ and $G_2$. These graphs are conditionally independent so we can consider them separately. The assumption that post-COMP weak detection is impossible implies that the planted and null distributions over $G_1$ can be coupled with probability $1-o(1)$. Also, the planted and null distributions over $G_2$ are identical, namely every individual among the $n-N$ deleted by COMP is independently connected to every test with probability $q$, conditioned on being connected to at least one negative test. This completes the proof.
\end{proof}

\subsection*{Acknowledgments}
We thank Fotis Iliopoulos for helpful discussions during the first stages of this project.

\bibliographystyle{alpha}
\bibliography{main}

\end{document}